\pdfoutput=1
\documentclass{article} %
\usepackage[utf8]{inputenc}
\usepackage[T1]{fontenc}
\usepackage{iclr2027_conference,times}

\usepackage{amsmath,amssymb,amsthm,mathtools}
\usepackage{mathptmx}
\usepackage{bm}
\usepackage{hyperref}
\hypersetup{hypertexnames=false,colorlinks=true,
  linkcolor=[rgb]{0.55,0.00,0.00},   %
  citecolor=[rgb]{0.00,0.00,0.75},   %
  urlcolor=[rgb]{0.00,0.00,0.75}}
\hypersetup{pdftitle={Continuous-Time Motion Planning and Steering with Global Convergence},pdfauthor={Mohsen Amiri, Sima Didari, Davoud Ataee Tarzanagh, Hankyu Moon, Yeongdae Kwon}}
\usepackage{url}
\usepackage{xcolor}
\usepackage[most]{tcolorbox}
\tcbset{highlight math style={enhanced,colback=blue!3!white,colframe=blue!50!black,
  boxrule=0.6pt,arc=2pt,left=6pt,right=6pt,top=3pt,bottom=3pt}}
\usepackage{colortbl}
\usepackage{graphicx}
\usepackage{algorithm}
\usepackage{algpseudocode}
\algrenewcommand{\algorithmiccomment}[1]{\hfill{\footnotesize\itshape\color{black!60}#1}}
\usepackage{booktabs}
\usepackage{placeins}
\usepackage{enumitem}
\usepackage{float}
\usepackage{wrapfig}
\definecolor{todocol}{RGB}{180,60,40}
\definecolor{rowsmooth}{RGB}{30,110,200}\definecolor{rowcoll}{RGB}{192,57,43}
\definecolor{rowsteer}{RGB}{192,122,42}\definecolor{rowbound}{RGB}{63,138,104}

\newcommand{\m}[1]{\mathbf{#1}}          %
\newcommand{\g}[1]{\boldsymbol{#1}}      %

\newcommand{\R}{\mathbb{R}}

\newcommand{\tr}{\operatorname{tr}}
\newcommand{\Gm}{\g{\Gamma}}
\newcommand{\dist}{\operatorname{dist}}

\newcommand{\free}{\mathrm{free}}
\newcommand{\trajspace}{\mathcal{H}}
\newcommand{\feasset}{\mathcal{F}}
\newcommand{\affset}{\mathcal{A}_{\mathrm{bc}}}

\newtheorem{theorem}{Theorem}
\newtheorem{proposition}[theorem]{Proposition}
\newtheorem{lemma}[theorem]{Lemma}
\newtheorem{corollary}[theorem]{Corollary}

\newtheorem{assumption}{Assumption}

\theoremstyle{definition}

\theoremstyle{definition}\newtheorem{problem}{Problem}

\theoremstyle{plain}
\newtheorem{maintheorem}[theorem]{Theorem}

\title{Continuous-Time Motion Planning\\ and Steering with Global Convergence}

\author{Mohsen Amiri$^{\dagger}$, Sima Didari$^{\dagger}$, Davoud Ataee Tarzanagh$^{\dagger}$, Hankyu Moon, Yeongdae Kwon \\
{\normalfont Samsung SDS Research America, Mountain View, CA, USA}\hspace{7.5em}%
\raisebox{-0.85cm}[0pt][0pt]{\href{https://globalmotionplanning.github.io}{\textbf{Project Page}}}}

\iclrfinalcopy %
\makeatletter
\renewcommand*\l@subsection{\@dottedtocline{2}{1.5em}{3.0em}}
\renewcommand*\l@subsubsection{\@dottedtocline{3}{4.5em}{3.6em}}
\makeatother

\begin{document}
\addtocontents{toc}{\protect\setcounter{tocdepth}{-1}}%

\maketitle
{\renewcommand{\thefootnote}{$\dagger$}\footnotetext{Equal contribution.}}
\fancyhead{}\renewcommand{\headrulewidth}{0pt}%
\FloatBarrier

\begin{abstract}
Motion planning is central to robotics, and it is increasingly used to
correct the motions proposed by learned planners and policies. In both uses, constraints must hold at every
instant of the motion, not only at sampled times. To our knowledge, no
algorithm provably converges to a globally optimal motion once a
smoothness cost is combined with such constraints. We cast planning and
steering as a single continuous-time program: robots avoid one another and
static or moving obstacles, and a learned proposal is steered by keeping the
motion close to it. The program is elementary to state, yet nonconvex. We
propose GlobMP (Global Motion Planning), a gradient projection method over
piecewise-polynomial curves, on which each constraint is one polynomial per piece that must
stay nonnegative over the whole motion; it alternates cost reduction with
feasibility restoration. Under
mild assumptions, we show that, from a fine grid start, its cost approaches the global optimum as the number of polynomial pieces grows, with every constraint met to any chosen tolerance, or exactly on stable instances. We
demonstrate its superior performance over strong baselines in real and
synthetic experiments, including teams of up to $200$ robots, multi-robot
planning on the Robotarium testbed and an extended benchmark, maze
navigation, and steering tactile-reactive policies on a bimanual humanoid.
\end{abstract}

\section{Introduction}
\label{sec:intro}
\suppressfloats[t]

\begin{figure}[t]
\centering
\includegraphics[width=\linewidth]{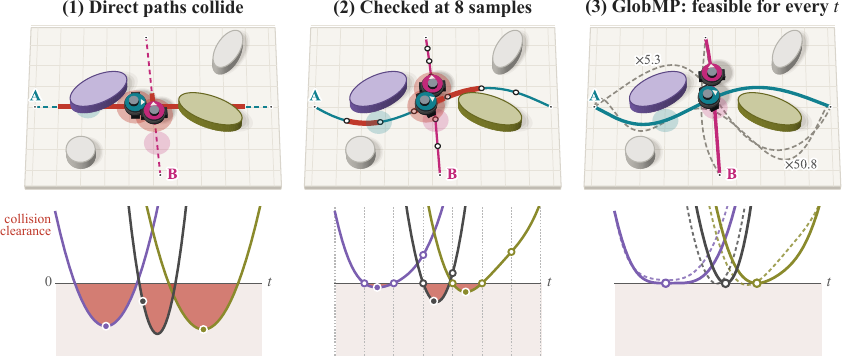}
\vspace{-.5cm}
\caption{Two robots, A and B, planned jointly among ellipsoidal obstacles;
each obstacle and the other robot enter the program as a
\textcolor{rowcoll}{collision} constraint (ellipses through the weighted norm
of Appendix~\ref{app:variants}). Top: each robot's path in its colour
(A teal, B magenta), with both robots drawn at the time of the dark-grey
A$-$B mark and a faded disc where each was earlier; red marks a collision.
Bottom: robot A's clearance to each coloured obstacle and the A$-$B
clearance (dark grey) over time; below zero is a collision. (1) The direct
(dashed) paths collide. (2) Checking the constraints at $8$ sample times (open
dots) misses collisions between them. (3) The final solid curves keep every
clearance nonnegative on $[0,1]$; coloured dashed: an unrestored cost step;
grey dashed: numerically refined alternatives, with cost relative to GlobMP's.}
\label{fig:pipeline}
\end{figure}

Motion planning is a fundamental problem in robotics: it asks for a trajectory that connects a given initial state to a goal while satisfying motion constraints, such as avoiding collisions \citep{lavalle1998rapidly}. It underlies navigation, manipulation
\citep{sundaralingam2019relaxed,gao2024confined,amice2024certifying},
multi-robot coordination \citep{li2021eecbs,tang2025stgcs,shaoul2025mmd},
and, increasingly, the \emph{steering} of learned planners and policies
\citep{janner2022diffuser,carvalho2023mpd,chi2023diffusionpolicy,direct2026}. 

Motion planners approach this problem in several ways, each with a gap.
Sampling-based planners explore many routes \citep{karaman2011sampling}, yet
smoothing their waypoints can introduce collisions; optimization-based
planners improve a motion locally
\citep{zucker2013chomp,drake}, with TrajOpt \citep{schulman2014trajopt} also supporting
continuous collision checks. Constraints enforced
at sampled times do not ensure feasibility between them
(\citealp{hargraves1987direct}; Figure~\ref{fig:pipeline}). Convex-region planners search a decomposition
of free space \citep{deits2015computing,marcucci2023gcs}, and semidefinite
relaxations certify a polynomial motion as globally optimal only when the
relaxation is tight \citep{graesdal2026sdp}. Even shortest-path complexity depends on
the geometry: polynomial-time in the real-RAM model among planar discs
\citep{chang2005shortest}, NP-hard among polyhedra
in $\R^3$ \citep{canny1987lower}, and still open in $\R^3$ even among disjoint unit discs or axis-aligned unit cubes
\citep{mitchell2017shortest}.

Our setting is a point robot in $\R^n$ that travels between prescribed
boundary states around obstacles while minimizing a smoothness cost; steering a learned planner or policy is the
same problem with one added constraint that keeps the robot near its proposal. Although
elementary to state, this setting already contains the core difficulty of
avoiding collisions; to
our knowledge, no algorithm converges to its global optimum once
a derivative cost such as acceleration or snap is combined with collision
constraints on the whole time interval. We introduce GlobMP and show that, from a fine grid start, its cost approaches the
global optimum as the number of pieces grows, within any constraint tolerance or exactly on stable instances.

\begin{samepage}
GlobMP combines exact whole-interval constraints, gradient-based refinement
and a systematic search over initial curves. We consider
\begin{equation}
\tag{$\mathrm{P}_\infty$}
\label{eq:Pinf}
\begin{alignedat}{3}
\min_{\g\gamma\in\trajspace^k}\quad
& \int_0^1\big\|\g\gamma^{(k)}(t)\big\|^2\,dt
&& &\qquad& \textcolor{rowsmooth}{\text{smoothness}}\\[2pt]
\text{subject to}\quad
& \big\|\g\gamma(t)-\m c_j(t)\big\|\ge r_j, &\quad& \forall t\in[0,1],\ j=1,\ldots,p, && \textcolor{rowcoll}{\text{collision}}\\
& \big\|\g\gamma(t)-\g\rho(t)\big\|\le\bar r, && \forall t\in[0,1], && \textcolor{rowsteer}{\text{steering}}\\
& \g\gamma^{(s)}(0)=\g\gamma_0^{(s)},\ \g\gamma^{(s)}(1)=\g\gamma_1^{(s)}, && s=0,\ldots,\ell, && \textcolor{rowbound}{\text{boundary}}
\end{alignedat}
\end{equation}
\end{samepage}%
Here $\g\gamma:[0,1]\to\R^n$ is the robot's trajectory, with time
normalized to $[0,1]$, and $\trajspace^k=\trajspace^k([0,1];\R^n)$ denotes
the Sobolev space of curves whose derivatives through order $k$ are
square-integrable: the objective is finite exactly on $\trajspace^k$, and its
curves have $k-1$ continuous derivatives, so the constraints and boundary
values are defined at every instant. The objective
measures smoothness through the $k$th derivative: $k=2$ penalizes
acceleration and $k=4$ snap \citep{mellinger2011minimum}. The collision
constraints keep the robot at distance at least $r_j>0$ from each of $p$
obstacle centres $\m c_j(t)$, which may move, and the steering constraint
keeps it within $\bar r>0$ of a reference $\g\rho(t)$, both at every instant.
The boundary conditions fix the position and its derivatives through order
$\ell$ at both ends. Unlike \citet{graesdal2026sdp}, whose spheres are
static, we let the centres move in time; this single extension covers
several robots, each a moving centre for the others, and steering, where the
robot must stay near the moving centre $\g\rho(t)$.

Several robots are planned
by stacking their curves into one $\g\gamma$ and keeping robots $a$
and $b$ (radii $r_a,r_b$) apart: $\|\g\gamma_a(t)-\g\gamma_b(t)\|\ge r_a+r_b$
\citep{shaoul2025mmd}. The reference $\g\rho$ can be the waypoints of a
learned planner \citep{janner2022diffuser} or the action chunk of a policy
\citep{chi2023diffusionpolicy}, which the steering constraint corrects rather
than replaces; with $\bar r=\infty$ it is dropped
and \eqref{eq:Pinf} is plain motion planning. Boundary orders $s\ge k$ bind only our polynomial curves
(Assumption~\ref{assum:data}). Ellipsoids, walls,
corridors and other variants are in Appendix~\ref{app:variants}.

Our contributions address two difficulties of \eqref{eq:Pinf}: constraints must hold at every
instant, and improving one feasible curve need not find the best route.

\begin{enumerate}[leftmargin=2em,label=\textbf{C\arabic*.},labelsep=0.35em,itemsep=1pt,topsep=1pt]
\item We enforce collision constraints \emph{exactly in continuous time}. At a
  fixed polynomial degree, each constraint becomes a univariate polynomial
  that must be nonnegative on $[0,1]$ \citep{powers2000positive}, so it holds
  between sample times too; collision, robot separation and steering share
  one block formulation (\S\ref{sec:setup}).
\item We propose \emph{GlobMP}, a gradient projection algorithm that alternates
  a cost step with feasibility restoration (\S\ref{sec:gradient-projection}).
  Cone projections give coefficient corrections for violated blocks, which
  Jacobians turn into control-point updates, block by block and with the
  boundary states kept.
\item We prove \emph{global convergence} (Theorem~\ref{thm:globalinf}):
  GlobMP keeps every curve feasible without increasing its
  cost and, from a fine grid start with $N$ polynomial pieces, approaches the
  best continuous-time motion as $N$ grows, within any constraint tolerance
  and exactly on stable instances.
\item We \emph{deploy} GlobMP on real robots and show superior performance
  over strong baselines: multi-robot teams on the Robotarium testbed
  (\S\ref{sec:exp-mrmp}), and steering state-of-the-art expert policies on a
  bimanual humanoid with dexterous hands in tactile manipulation tasks
  (\S\ref{sec:exp-tactile}). The appendix adds an extended multi-robot
  benchmark, maze navigation and end-effector obstacle
  avoidance on D3IL (Appendix~\ref{app:extexp}).
\end{enumerate}

\section{Polynomial formulation and gradient projection}
\label{sec:setup}\label{sec:blocks}
We optimize the control points of a polynomial curve, alternating a
gradient step on the cost with restoration of feasibility; because the
curve is a polynomial, each constraint is checked over the whole motion.
We represent the curve by a polynomial of degree $d$ (piecewise when the
horizon is split) in the \emph{Bernstein basis}
$[\m b_d(t)]_i=\binom{d}{i}t^i(1-t)^{d-i}$, $i=0,\ldots,d$, whose weights are nonnegative
and sum to one on $[0,1]$:
\begin{subequations}
\begin{equation}
\label{eq:bez}
  \g{\gamma}(t)=\Gm\,\m{b}_d(t),\qquad \Gm\in\R^{n\times(d+1)}.
\end{equation}
The columns of $\Gm$ are the \emph{control points}, whose convex hull
contains the curve. Substituting this representation into the objective gives
\begin{equation}
\label{eq:cost}
\mathrm{cost}(\Gm)
:=\int_0^1\big\|\Gm\m b_d^{(k)}(t)\big\|^2\,dt
=\tr(\m Q\Gm^\top\Gm),
\end{equation}
where $\tr$ is the trace and $\m Q:=\int_0^1\m b_d^{(k)}\m b_d^{(k)\top}\,dt$ is precomputed
exactly, so each cost step uses a matrix gradient without reevaluating the
integral. Since $\m Q\succeq0$ (positive semidefinite), the
objective is convex; the constraints carry the nonconvexity.

Let constraint $j$ be $g_j(\g\gamma(t),t)\ge 0$, where
$g_j(\m x,t):=\|\m x-\m c_j(t)\|^2-r_j^2$ for the collision constraints
$j\le p$, and $g_{p+1}(\m x,t):=\bar r^2-\|\m x-\g\rho(t)\|^2$ for the steering
constraint when $\bar r<\infty$. With several robots, each robot's own position $\m x_a$ replaces $\m x$, and each pair of robots forms an additional collision block, with $\m x_a-\m x_b$ and $r_a+r_b$ in place of $\m x-\m c_j$ and $r_j$; from here on, $p$ counts both the obstacle blocks and the pair blocks. For polynomial obstacle centres and a
polynomial reference, the value along the curve is a polynomial in time,
\begin{equation}
\label{eq:generalblocks}
g_j^{\Gm}(t):=g_j\big(\Gm\m{b}_d(t),t\big)
 =\g{\pi}_j(\Gm)^{\!\top}\m{b}_{m_j}(t),
\end{equation}
with degree bound $m_j=2d$, or $2\max(d,d_j)$ for a centre moving on a
polynomial path of degree $d_j$ (the reference plays the role of a centre
for $j=p+1$); piecewise-polynomial data are handled interval by interval
(Appendix~\ref{app:scope}). Its coefficient vector $\g\pi_j(\Gm)$
is quadratic in $\Gm$. The constraint holds at every instant exactly when
this vector belongs to the closed convex cone
\begin{equation}
\label{eq:Kdef}
\mathcal K_j:=\{\g\zeta\in\R^{m_j+1}:\g\zeta^{\!\top}\m{b}_{m_j}(t)\ge0
 \text{ for every }t\in[0,1]\}.
\end{equation}
This tests the entire polynomial: $(2t-1)^2$ has Bernstein coefficients
$(1,-1,1)$ yet is nonnegative, so a coefficient-sign test would wrongly
reject it. The coefficients are quadratic in $\Gm$, so the set of control
points that satisfy a block need not be convex.
Integral budgets, an optional extension beyond \eqref{eq:Pinf},
fit the same notation as scalar blocks,
\begin{equation}
\label{eq:intblock}
\pi_j(\Gm)
:=\int_0^1 \psi_j\big(\Gm\m b_d(t),t\big)\,dt
\in\mathcal K_j:=\R_+,
\end{equation}
where $\psi_j$ encodes the signed budget margin; for uniform notation we also write $\g\pi_j:=\pi_j$ for budgets. Let $\bar p$ denote the
number of pointwise blocks ($\bar p=p+1$ with the steering block, $\bar p=p$
without); the $q$ budgets are blocks $j=\bar p+1,\ldots,\bar p+q$. We collect the constraints as
\begin{equation}
\label{eq:stack}
\g\pi(\Gm):=\big(\g\pi_1(\Gm),\ldots,\g\pi_{\bar p}(\Gm),
 \pi_{\bar p+1}(\Gm),\ldots,\pi_{\bar p+q}(\Gm)\big),\qquad
\mathcal K:=\mathcal K_1\times\cdots\times\mathcal K_{\bar p}\times\R_+^{\,q}.
\end{equation}
Now, substituting \eqref{eq:bez} into the objective and the constraints of \eqref{eq:Pinf} gives \eqref{eq:cost} and \eqref{eq:generalblocks}--\eqref{eq:stack}; with the boundary conditions written as linear equations $\Gm\m A=\m B$ in the control points, we obtain the degree-$d$ reformulation of \eqref{eq:Pinf} as
\begin{equation}
\label{eq:Pd}
\inf_{\Gm\in\R^{n\times(d+1)}}\ \mathrm{cost}(\Gm)
\quad\text{s.t.}\quad
\g\pi(\Gm)\in\mathcal K,\qquad \Gm\m A=\m B.
\end{equation}
\end{subequations}

\label{sec:feasset}
Let $\affset:=\{\Gm:\Gm\m A=\m B\}$ denote the control points that
satisfy them. We fix one such point $\Gm_{\mathrm{aff}}$ and allow only
changes that keep the equations. When $2(\ell+1)\le d$, every
$\Gm\in\affset$ is $\Gm=\Gm_{\mathrm{aff}}+\Gm_\free\m N^\top$, where the
columns of $\m N$ are an orthonormal basis of the null space of
$\m A^\top$ and $\Gm_\free\in\R^{n\times d_f}$, $d_f=d+1-2(\ell+1)$,
collects the free coordinates (Lemma~\ref{lem:affchart}, Appendix~\ref{app:prelim}); distances
between free and full control points agree. In these free coordinates
\begin{equation}
\tag{P}
\label{eq:P}
\begin{aligned}
\inf_{\Gm_\free}\quad & \mathrm{cost}\big(\Gm_{\mathrm{aff}}+\Gm_\free\m N^\top\big)\\[2pt]
\text{subject to}\quad & \Gm_\free\in\feasset:=\Big\{\Gm_\free:\ \g\pi\big(\Gm_{\mathrm{aff}}+\Gm_\free\m N^\top\big)\in\mathcal K\Big\}.
\end{aligned}
\end{equation}
Every reconstructed curve meets the boundary conditions; for piecewise
curves the junction conditions join the linear equations. Fixing the
degree restricts the curves but neither discretizes time nor weakens the
whole-interval constraints.

\label{sec:gradient-projection}\label{sec:alg1}
Write $\g\pi_j(\Gm_\free)$ for
$\g\pi_j(\Gm_{\mathrm{aff}}+\Gm_\free\m N^\top)$. Restoration,
Algorithm~\ref{alg:proj}, takes gradient steps on the summed squared distances of the
blocks to their cones, where $P_{\mathcal K_j}$ maps a vector to its nearest
point in $\mathcal K_j$,
\begin{equation}
\label{eq:restoration-merit}
V(\Gm_\free):=\sum_{j=1}^{\bar p+q}
\big\|\g\pi_j(\Gm_\free)-P_{\mathcal K_j}\big(\g\pi_j(\Gm_\free)\big)\big\|^2.
\end{equation}
The squared distance to a closed convex cone is differentiable, with
gradient twice the residual $\m e_j=\g\pi_j-P_{\mathcal K_j}(\g\pi_j)$,
which vanishes exactly when block $j$ is satisfied. The chain
rule gives $2D\g\pi_j^{\!\top}\m e_j$, where $D\g\pi_j$ is the Jacobian (matrix of first derivatives) in
free coordinates and the gradient is reshaped as a control-point matrix.
Only violated blocks contribute
(lines~\ref{alg1:viol}--\ref{alg1:grad}). GlobMP (Algorithm~\ref{alg:pgd})
alternates a cost step along $-2\Gm\m Q\m N$ with restoration
(lines~\ref{alg2:grad}--\ref{alg2:repair}), and rebuilding $\Gm$ from the
free coordinates preserves the boundary states. $P_\feasset$ denotes
Algorithm~\ref{alg:proj}, which need not return an exactly feasible or
nearest curve in finitely many steps; the theory analyses the repair
below.

\label{sec:distcone}\label{sec:cone-correction}
Each cone projection is a small convex problem. Evaluating a polynomial at
time $t$ is an inner product of its coefficients with $\m b_{m_j}(t)$, so the
vectors $\m b_{m_j}(t)$, $t\in[0,1]$, generate the dual cone $\mathcal K_j^*$, and by
Moreau's decomposition \citep{moreau1962decomposition} the projection adds to $\g\pi_j$ the point of
$\mathcal K_j^*$ closest to $-\g\pi_j$. We build $\g\omega^\star$ greedily as a nonnegative combination $\g\omega$ of
$\m b_{m_j}(t)$ over a growing set of selected times, with $\g\xi$ the part of $-\g\pi_j$ still to be corrected:
\begin{equation}
\label{eq:main-cone-oracle}
\begin{aligned}
P_{\mathcal K_j}(\g\pi_j)&=\g\pi_j+\g\omega^\star,\qquad
\g\omega^\star=\operatorname*{arg\,min}_{\g\omega\in\mathcal K_j^*}\|\g\omega+\g\pi_j\|,\\
t_*&\in\operatorname*{arg\,max}_{t\in[0,1]}
\frac{\langle\g\xi,\m b_{m_j}(t)\rangle}{\|\m b_{m_j}(t)\|},\qquad \g\xi:=-\g\pi_j-\g\omega.
\end{aligned}
\end{equation}
Each step adds the time $t_*$, where the corrected polynomial
$\g\pi_j+\g\omega$ is most negative relative to its scale, and refits
nonnegative weights on all selected times (Algorithm~\ref{alg:pk};
\citealp{locatello2017cone}), with error bounds in
Appendix~\ref{app:innerrate}; scalar budgets use $u\mapsto\max\{u,0\}$. The
selected times locate the violation rather than fix a list of checks, and
blocks are projected independently, so the work per iteration grows
linearly with the number of blocks at fixed degree and cone-solve iterations.

\begin{figure}[t]
\begin{minipage}[t]{0.485\linewidth}
\begin{algorithm}[H]
\caption{GlobMP}
\label{alg:pgd}
\begin{minipage}[t][10.5\baselineskip][t]{\linewidth}
\begin{algorithmic}[1]
\Require anchor $\Gm_{\mathrm{aff}}$ and basis $\m N$,
  start $\Gm_\free$, step $\eta$, iterations $T$
\State $\Gm_\free \gets P_\feasset(\Gm_\free)$\label{alg2:init} \Comment{restore start}
\For{$\nu = 1,\dots,T$}
  \State $\Gm\gets\Gm_{\mathrm{aff}}+\Gm_\free\m{N}^{\!\top}$\label{alg2:full}
    \Comment{reconstruct curve}
  \State $\g\Delta \gets 2\,\Gm\m Q\m{N}$\label{alg2:grad} \Comment{cost gradient}
  \State $\m Y\gets\Gm_\free - \eta\, \g\Delta$\label{alg2:step} \Comment{cost step}
  \State $\Gm_\free \gets P_\feasset(\m Y)$\label{alg2:repair} \Comment{restore, Alg.~\ref{alg:proj}}
\EndFor
\State \Return $\Gm_\free$ \Comment{candidate}
\end{algorithmic}
\end{minipage}
\end{algorithm}
\end{minipage}\hfill
\begin{minipage}[t]{0.485\linewidth}
\begin{algorithm}[H]
\caption{Feasibility restoration for \eqref{eq:P}}
\label{alg:proj}
\begin{minipage}[t][10.5\baselineskip][t]{\linewidth}
\begin{algorithmic}[1]
\Require free coordinates $\Gm_\free$, blocks $\g\pi_j$, tolerance
  $\mathrm{tol}$, step $\eta_{\mathrm{proj}}$, iterations $T_{\mathrm{proj}}$
\For{$\nu = 1,\dots,T_{\mathrm{proj}}$}
  \State $\m e_j\gets(\m I-P_{\mathcal K_j})\,\g\pi_j(\Gm_\free)$\label{alg1:res}
    \Comment{all blocks}
  \State \textbf{if} $\max_j\|\m e_j\|^2\le\mathrm{tol}$ \textbf{then break}\label{alg1:stop}
  \State $\mathcal V\gets\{j:\m e_j\ne\m 0\}$\label{alg1:viol} \Comment{violated blocks}
  \State $\g\Delta\gets2\sum_{j\in\mathcal V}D\g\pi_j^{\!\top}\m e_j$\label{alg1:grad}
    \Comment{merit gradient}
  \State $\Gm_\free \gets \Gm_\free - \eta_{\mathrm{proj}}\, \g\Delta$\label{alg1:step}
    \Comment{restoration step}
\EndFor
\State \Return $\Gm_\free$ \Comment{candidate}
\end{algorithmic}
\end{minipage}
\end{algorithm}
\end{minipage}
\end{figure}

For the analysis we use a feasible version of the repair step. Let $\m z$
denote the free coordinates $\Gm_\free$ stacked into one vector and
$f(\m z)$ the cost; we keep the notation $\g\pi_j(\m z)$, $D\g\pi_j(\m z)$ and $\feasset$ for the
blocks, their Jacobians and the feasible set.
Each collision block is convex in the control points, so its linearization
at any $\m z$ lies below it at every instant. For a sphere the linearized
constraint is a half-space that excludes the whole obstacle, so each step
keeps every robot behind a moving plane that separates it from each
obstacle and from each other robot. Let $\mathcal D$ denote the points that satisfy the steering block exactly,
a speed limit (Appendix~\ref{app:variants}) when present, and the optional budgets; steering and speed sets are convex, since balls are convex
and the curve depends affinely on the free control points; the budgets are assumed convex
($\mathcal D$ is all points when none is present). Let
\begin{equation}
\label{eq:innerset}
\mathcal C(\m z):=\big\{\m w\in\mathcal D:\
\g\pi_j(\m z)+D\g\pi_j(\m z)(\m w-\m z)\in\mathcal K_j,\quad j=1,\ldots,p\big\}.
\end{equation}
Then $\mathcal C(\m z)$ is convex and contained in $\feasset$, and it contains
$\m z$ when $\m z$ is feasible. Line~\ref{alg2:repair} of Algorithm~\ref{alg:pgd}
becomes the convex solve
\begin{equation}
\label{eq:inner-step}
\m z^+=\operatorname*{arg\,min}_{\m w\in\mathcal C(\m z)}
\tfrac12\|\m w-(\m z-\eta\nabla f(\m z))\|^2.
\end{equation}
Every iterate is feasible, and $\mathcal C(\m z)$ is rebuilt after each step,
so this is not a convex reformulation of \eqref{eq:P}; it needs a feasible
start and an exact solve. In the analysis, GlobMP starts from the cheapest feasible point of a grid of free
control points, each tested exactly over the whole interval. For \eqref{eq:Pinf} it adds
pieces (Theorem~\ref{thm:globalinf}) or raises the degree
(Algorithm~\ref{alg:refinement}, under a condition on configurations).

\section{Theory}
\label{sec:theory}
We state one theorem, for \eqref{eq:Pinf}; its fixed-degree counterpart and
all proofs are in Appendix~\ref{app:theory}. Let $E^\star_\infty$ denote
the optimal value of \eqref{eq:Pinf}.

\begin{assumption}[problem data]
\label{assum:data}
The cost order is $k\ge1$ and the endpoints fix derivatives of orders
$0,\dots,\ell$ (if $\ell\ge k$, \eqref{eq:Pinf} keeps those of order below $k$); the obstacle centres and the reference are continuous and piecewise polynomial in $t$; the
endpoints are strictly feasible; and \eqref{eq:Pinf} is feasible.
\end{assumption}

\begin{assumption}[enough boundary conditions]
\label{assum:curv}
The two endpoints fix at least as many values per coordinate as the order
of the cost: $2(\ell+1)\ge k$.
\end{assumption}

A change of the curve that keeps the boundary values and has zero
$k$th-derivative cost is a polynomial of degree
below $k$ with a root of order $\ell+1$ at each endpoint, which is zero
under Assumption~\ref{assum:curv}. Hence $f$ is strongly
convex, with curvature at most
$\beta=2\lambda_{\max}(\m H)$, $\m H=\m N^\top\m Q\m N$.
For $|\tilde{r}|\le\tilde{r}_{\max}$, $\tilde{r}_{\max}>0$ below every radius and half the endpoint
clearance~(Appendix~\ref{app:thm1}), let $(\mathrm P^{\tilde{r}}_\infty)$ denote \eqref{eq:Pinf} with every
obstacle radius raised by $\tilde{r}$, every pair separation by $2\tilde{r}$ and $\bar r$
lowered by $\tilde{r}$ ($\tilde{r}<0$, a tolerance, loosens them), and
$E^\star_\infty(\tilde{r})$ its optimal value.

Let $\mathcal S_N$ denote the curves of $N$ polynomial pieces of a fixed degree
$d\ge\max\{2k-1,\ell\}$ on equal intervals, joined with $k-1$ continuous derivatives. Let
$\m z_h$ denote a cheapest point of the grid of spacing $h$ in their
free control points that is feasible for the problem being solved; grid points can be checked exactly by root
isolation for rational data.

\begin{maintheorem}[global convergence for \eqref{eq:Pinf}]
\label{thm:globalinf}
Under Assumptions~\ref{assum:data} and~\ref{assum:curv}, run GlobMP (Algorithm~\ref{alg:pgd}, line~\ref{alg2:repair} solved exactly by \eqref{eq:inner-step}) over
$\mathcal S_N$ on $(\mathrm P^{\tilde{r}}_\infty)$ with a step $0<\eta<2/\beta$ ($\beta$ computed for $\mathcal S_N$).
\begin{enumerate}[leftmargin=1.6em,label=\textup{\textbf{(\roman*)}},itemsep=1pt,topsep=2pt]
\item From a feasible start, every iterate is feasible for $(\mathrm P^{\tilde{r}}_\infty)$
at every instant, with nonincreasing cost.
\item If a curve of cost at most $U$ is feasible for $(\mathrm P^{\tilde{r}+\mu}_\infty)$,
$\mu>0$, $\tilde{r}+\mu\le\tilde{r}_{\max}$, and $\delta_N:=C\,N^{-r}\le\mu$, $r=\min\{k-1/2,1\}$, with $C$
depending on $U$, $\tilde{r}_{\max}$ and the data, then for small $h$, $\m z_h$ exists and
every cost from it lies in $[E^\star_\infty(\tilde{r}),E^\star_\infty(\tilde{r}+\delta_N)+\delta_N]$.
\item For $\tilde{r}=-\mu\ge-\tilde{r}_{\max}$ and $U\ge E^\star_\infty$, these curves violate no constraint of
\eqref{eq:Pinf} by more than $\mu$ ($2\mu$ for a separation) and cost at
most $E^\star_\infty+\delta_N$; as $\mu\to0$ (so $N\to\infty$), the costs tend to
$E^\star_\infty$ and the curves have subsequences converging uniformly to global
minimizers of \eqref{eq:Pinf}.
\end{enumerate}
\end{maintheorem}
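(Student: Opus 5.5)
Part (i) follows from the structure of the inner step \eqref{eq:inner-step}. Each collision block $g_j(\m x,t)=\|\m x-\m c_j(t)\|^2-r_j^2$ is convex in $\m x$, and the curve on $\mathcal S_N$ is affine in $\m z$. So for every $t$ the polynomial $g_j^{\Gm}(t)$ is a convex function of $\m z$ and lies above its linearization. Hence $\mathcal C(\m z)\subseteq\feasset$ pointwise in $t$, and membership in $\mathcal K_j$ of the linearized coefficient vector certifies the true block on all of $[0,1]$. The pair blocks $\|\m x_a-\m x_b\|^2$ are convex in the stacked position, so the same argument applies. $\mathcal D$ is feasible by construction. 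If $\m z$ is feasible, then $\m z\in\mathcal C(\m z)$, and $\m z^+$ is a projected gradient step onto a convex set containing $\m z$. The standard descent lemma for $\beta$-smooth $f$ and the projection inequality $\langle \m z-\eta\nabla f(\m z)-\m z^+,\m z-\m z^+\rangle\le0$ give
\[f(\m z^+)\le f(\m z)-\big(\tfrac1\eta-\tfrac\beta2\big)\|\m z^+-\m z\|^2,\]
which is nonincreasing when $\eta<2/\beta$. Induction from the feasible start completes (i).

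For (ii), the lower bound $E^\star_\infty(\tilde r)$ is immediate: every iterate is feasible for $(\mathrm P^{\tilde r}_\infty)$ and $\mathcal S_N\subset\trajspace^k$. For the upper bound I would use an approximation lemma. Let $\g\gamma$ be feasible for $(\mathrm P^{\tilde r+\mu}_\infty)$ with cost at most $U$, and near-optimal for $(\mathrm P^{\tilde r+\delta_N}_\infty)$. I would construct $\g s_N\in\mathcal S_N$ with the same boundary data, cost at most $\mathrm{cost}(\g\gamma)+\delta_N$, and $\|\g s_N-\g\gamma\|_\infty\le\delta_N$. The candidate is a spline quasi-interpolant (or $\trajspace^k$-orthogonal projection onto $\mathcal S_N$ with the boundary constraints). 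The condition $d\ge\max\{2k-1,\ell\}$ allows Hermite interpolation of the boundary jets and $C^{k-1}$ joins. The rate $N^{-r}$, $r=\min\{k-1/2,1\}$, comes from two facts: projection in the $k$-th derivative seminorm does not increase cost, and the sup-norm error is bounded by $h^{k-1/2}\|\g\gamma^{(k)}\|_{L^2}$ via Sobolev embedding on each piece, capped at first order for the cost perturbation. Strict feasibility of the endpoints and the margin $\mu$ absorb the boundary correction.

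The uniform closeness of at most $\delta_N\le\mu$ moves each clearance by at most $\delta_N$, so $\g s_N$ is strictly feasible for $(\mathrm P^{\tilde r}_\infty)$ with slack $\mu-\delta_N$ if needed. The free-coordinate chart (Lemma~\ref{lem:affchart}) preserves distances. Continuity of the coefficient map $\g\pi$ then shows that grid points within $O(h)$ of $\g s_N$ stay feasible for small $h$, with cost within $O(h)$. So $\m z_h$ exists, and its cost is at most $E^\star_\infty(\tilde r+\delta_N)+\delta_N$ after absorbing $O(h)$. By (i), every later cost is no larger. I expect this approximation lemma to be the main obstacle: getting simultaneous cost and sup-norm control with the fixed boundary jets, with constants depending only on $U$, $\tilde r_{\max}$ and the data.

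For (iii), set $\tilde r=-\mu$ and take $\g\gamma$ to be a minimizer of \eqref{eq:Pinf}, which is feasible for $(\mathrm P^{0}_\infty)$ with $U\ge E^\star_\infty$. Part (ii) with $\tilde r+\mu=0$ gives costs at most $E^\star_\infty(-\mu+\delta_N)+\delta_N\le E^\star_\infty+\delta_N$, using monotonicity of $E^\star_\infty(\cdot)$. Feasibility for $(\mathrm P^{-\mu}_\infty)$ bounds the violations by $\mu$, or $2\mu$ for a separation. As $\mu\to0$, the costs are bounded. Strong convexity of the cost on the boundary-constrained affine space (from Assumption~\ref{assum:curv}) bounds the $\trajspace^k$ norm. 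By Rellich and Arzelà--Ascoli, subsequences converge weakly in $\trajspace^k$ and uniformly. The limit satisfies all constraints of \eqref{eq:Pinf}, since the violations tend to zero, and by weak lower semicontinuity its cost is at most $\liminf\le E^\star_\infty$. The limit is therefore a global minimizer, and the costs converge to $E^\star_\infty$.
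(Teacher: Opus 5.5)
Your part (i), and part (iii) given (ii), follow the paper's argument and are fine. The gap is in (ii) when $\ell\ge k$, which is exactly the case behind the exponent $1$ in $r=\min\{k-1/2,1\}$.

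In that case $(\mathrm P^{\tilde r}_\infty)$ keeps only the boundary derivatives of order below $k$, but every curve of $\mathcal S_N$ must also carry the data of orders $k,\dots,\ell$. A curve of $\trajspace^k$ has no pointwise derivatives of those orders at the endpoints, so there is no jet to Hermite-interpolate. What must be proved is that an end correction costs only $O(1/N)$, with constants depending only on $U$ and the data.

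For a merely near-optimal curve this fails. On the first piece the Hermite interpolant's $k$th derivative is the $L^2$-projection of $\g\gamma^{(k)}$ onto polynomials of degree below $k$. Its derivatives of order $s\ge k$ at $0$ are therefore only bounded by a multiple of $N^{s-k+1/2}\|\g\gamma^{(k)}\|_{L^2(0,1/N)}$, which need not stay bounded. Even with bounded missing jets, the cross term $2\int\m q^{(k)}\cdot\m p^{(k)}$ of a correction $\m p$ on $[0,2/N]$ is only $O(\sqrt U\,N^{-1/2})$ uniformly. Your remarks that the rate is ``capped at first order'' and that the endpoint clearance ``absorbs the boundary correction'' control the displacement, not this cost.

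The missing idea is regularity of an exact minimizer near the ends (Lemma~\ref{lem:margin}(c)):
\begin{itemize}
\item Take $\g\gamma$ minimizing $(\mathrm P^{\tilde r+\delta_N}_\infty)$; it exists by coercivity and weak lower semicontinuity.
\item Curves of cost at most $U$ are equicontinuous. With the endpoint clearance, every constraint is then inactive on $[0,\tau_0]\cup[1-\tau_0,1]$, with $\tau_0$ depending only on $U$ and the data.
\item The first variation makes $\g\gamma$ a polynomial of degree at most $2k-1$ there, with bounded coefficients.
\item For $N\ge2/\tau_0$ the Hermite interpolant reproduces it on the end pieces, so the missing jets $\m a_s$ are bounded independently of $N$.
\item The rescaled correction $N^{-k}\m P(Nt)$ supplies them, moving the curve by $O(N^{-k})$ and adding $O(1/N)$ to the cost.
\end{itemize}

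A smaller bookkeeping problem affects both cases. The curve to approximate must be feasible for, and (near-)optimal in, $(\mathrm P^{\tilde r+\delta_N}_\infty)$. The given $U$-curve only shows, via $\delta_N\le\mu$, that this problem is feasible with value at most $U$. Asking one curve to be feasible for $(\mathrm P^{\tilde r+\mu}_\infty)$ and near-optimal for $(\mathrm P^{\tilde r+\delta_N}_\infty)$ is impossible in general, since $E^\star_\infty(\tilde r+\mu)$ can exceed $E^\star_\infty(\tilde r+\delta_N)$. Your ``slack $\mu-\delta_N$'' is also zero when $\delta_N=\mu$.

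The grid-ball step needs two strict inequalities so that the $O(h)$ perturbation can be absorbed:
\begin{itemize}
\item strict feasibility for $(\mathrm P^{\tilde r}_\infty)$, i.e.\ a displacement strictly below $\delta_N$;
\item a spline cost strictly below $E^\star_\infty(\tilde r+\delta_N)+\delta_N$.
\end{itemize}
The paper secures the first by $C\ge1+2\sqrt U$, so that $\|\m q-\g\gamma\|_\infty\le\sqrt U\,N^{-(k-1/2)}<\delta_N$. For $\ell\ge k$ it secures the second by $C\ge2C_d$, so that $C_d/N<\delta_N$.

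With these adjustments, your Hermite-interpolation, orthogonality and Taylor argument is exactly Lemma~\ref{lem:margin}(a),(b). It settles the case $\ell<k$ with no end correction.
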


At $\tilde{r}=0$ the costs converge to $E^\star_\infty$ when the optimal value is
stable, $E^\star_\infty(\tilde{r})\to E^\star_\infty$ as $\tilde{r}\downarrow0$, that is, when
the robust-optimum analogue of sampling-based planners \citep{solovey2020revisiting} is
the optimum; this holds at all but countably many $\tilde{r}$ in place of $0$;
if $E^\star_\infty(\tilde{r})\le E^\star_\infty+c\,\tilde{r}$, the rate is $(c+1)\delta_N$. Methods that keep a
positive clearance never reach $E^\star_\infty$ on unstable instances
(Appendix~\ref{app:scope}).

The proof (Appendix~\ref{app:gpa-global}) has three steps. First,
$\mathcal C(\m z)$ in \eqref{eq:innerset} is convex and, collision constraints being convex in position,
inside $\feasset$; the repair
\eqref{eq:inner-step} lowers $f$. Second, interpolating a curve
of $(\mathrm P^{\tilde{r}+\delta_N}_\infty)$ with $N$ pieces never raises its cost and
moves it by less than $\delta_N$, so a ball around it, which the grid hits, is
feasible for $(\mathrm P^{\tilde{r}}_\infty)$; if $\ell\ge k$, an end correction costs
$O(N^{-1})$. Third, curves of bounded cost have uniform limits, which are
optimal as $\mu\to0$.

\begin{corollary}
\label{cor:eps}
Under Assumptions~\ref{assum:data} and~\ref{assum:curv}, \textup{\textbf{(i)}} if
$\ell<k$, Theorem~\ref{thm:globalinf}(ii) holds with $r=k-1/2$ and
$C=1+2\sqrt U$; \textup{\textbf{(ii)}} for either $(r,C)$, $0<\mu\le\tilde{r}_{\max}$,
$\varepsilon>0$ and $N\ge(C/\min\{\varepsilon,\mu\})^{1/r}$, the curves of
Theorem~\ref{thm:globalinf}(iii), within tolerance $\mu$, cost at most $E^\star_\infty+\varepsilon$.
\end{corollary}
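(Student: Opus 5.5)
For part (i) we need to make the constant of Theorem~\ref{thm:globalinf}(ii) explicit when $\ell<k$. In that case no end correction is needed, so we redo the interpolation step of the proof sketch. Take a curve $\g\gamma$ feasible for $(\mathrm P^{\tilde r+\mu}_\infty)$ with cost at most $U$. Let $\g\gamma_N\in\mathcal S_N$ be its piecewise interpolant of degree $d\ge 2k-1$ on the $N$ equal intervals: on each piece it matches values and derivatives of orders $0,\dots,k-1$ at both ends. Since $\ell<k$, this Hermite-type interpolant matches all boundary data, and it is $C^{k-1}$ at the junctions.

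The interpolant has two properties we use.
\begin{itemize}
\item \emph{It does not raise the cost.} The standard orthogonality property of the spline interpolant of degree $2k-1$ gives $\int\|\g\gamma_N^{(k)}\|^2\le\int\|\g\gamma^{(k)}\|^2\le U$. The error $\m e=\g\gamma-\g\gamma_N$ vanishes with its first $k-1$ derivatives at every knot, so integrating by parts $k$ times against $\g\gamma_N^{(k)}$ kills the cross term; this works piecewise since $\g\gamma_N^{(2k-1)}$ is piecewise constant for $d=2k-1$. For larger $d$ we can choose $\g\gamma_N$ as the minimizer of the $k$th-derivative energy among degree-$d$ pieces with the same knot data, and the same orthogonality holds.
\item \emph{It moves the curve by little.} On each interval of length $1/N$, $\m e$ has a root of order $k$ at both ends. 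Applying Rolle repeatedly and Cauchy--Schwarz on $\m e^{(k-1)}$, then integrating back down, gives
\[
\|\m e\|_\infty\le N^{-(k-1/2)}\Big(\int\|\m e^{(k)}\|^2\Big)^{1/2}\le N^{-(k-1/2)}\cdot 2\sqrt U,
\]
using $\|\m e^{(k)}\|_{L^2}\le\|\g\gamma^{(k)}\|_{L^2}+\|\g\gamma_N^{(k)}\|_{L^2}\le 2\sqrt U$ (orthogonality in fact gives $\sqrt U$).
\end{itemize}

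The extra $1$ in $C=1+2\sqrt U$ covers the grid step. The ball argument of the proof needs a margin $\delta_N-\|\m e\|_\infty$ in which a grid point of spacing $h$ lies, and it needs the cost bound $+\delta_N$ to absorb the cost change from rounding $\g\gamma_N$ to the grid. Both go to zero as $h\to0$, so on the scale $N^{-r}$ they are dominated by $N^{-r}\cdot1$ for small $h$. We then invoke Theorem~\ref{thm:globalinf}(ii) with $\delta_N=(1+2\sqrt U)N^{-(k-1/2)}$.

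Part (ii) is arithmetic on Theorem~\ref{thm:globalinf}(iii). If $N\ge(C/\min\{\varepsilon,\mu\})^{1/r}$, then $\delta_N=CN^{-r}\le\min\{\varepsilon,\mu\}$. The bound $\delta_N\le\mu$ is exactly the hypothesis of (ii) and (iii). We take $\tilde r=-\mu$ and a feasible curve for $(\mathrm P^{0}_\infty)$ with cost $U\ge E^\star_\infty$, which exists by Assumption~\ref{assum:data}. Then (iii) gives violations at most $\mu$ (or $2\mu$ for a separation) and cost at most $E^\star_\infty+\delta_N\le E^\star_\infty+\varepsilon$.

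The main obstacle is the rate constant in (i): checking that the spline interpolant really does not raise the cost for a general degree $d$ with $C^{k-1}$ junctions, and getting the sup-norm bound with constant exactly $2\sqrt U$ (or less). If the per-piece Rolle argument gives a worse constant, I would use the crude bound $\|\m e^{(k)}\|_{L^2}\le2\sqrt U$ together with the Sobolev embedding on each piece with $k$-fold endpoint roots. That embedding yields a constant at most $1$ when the interval length is $1/N\le1$.
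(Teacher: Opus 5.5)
Your route is the paper's. For $\ell<k$ you use the piecewise Hermite interpolant: your energy minimizer among degree-$d$ pieces is that same degree-$(2k-1)$ curve, since for any admissible competitor $p=\m q+\m w$ the cross term $\int \m q^{(k)}\cdot\m w^{(k)}=\pm\int \m q^{(2k)}\cdot\m w$ vanishes. The ingredients are then the orthogonality $E(\m q)\le E(\g\gamma)$, the Taylor/Cauchy--Schwarz bound $\|\m e\|_\infty\le N^{-(k-1/2)}\|\m e^{(k)}\|_{L^2}$, and the feasible ball that the grid hits. Part (ii) is the same arithmetic on Theorem~\ref{thm:globalinf}(iii) that the paper does. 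Your cruder factor $2\sqrt U$ is harmless: the $1$ in $C=1+2\sqrt U$ still leaves a strict margin of at least $N^{-(k-1/2)}$, and at fixed $N$ that is all the grid step needs.

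One step needs correcting: the choice of curve you interpolate. You interpolate the given curve, feasible for $(\mathrm P^{\tilde r+\mu}_\infty)$ with cost at most $U$. That only produces a feasible grid point of cost below $U+\delta_N$, not the upper bound $E^\star_\infty(\tilde r+\delta_N)+\delta_N$ that Theorem~\ref{thm:globalinf}(ii) asserts. Since the whole content of part (i) is the value of $C$, you cannot recover that bound by simply invoking the theorem.

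The paper instead interpolates a minimizer $\g\gamma$ of $(\mathrm P^{\tilde r+\delta_N}_\infty)$. This minimizer exists by the existence step, which uses Assumption~\ref{assum:curv}, because the given curve is feasible for this looser problem once $\delta_N\le\mu$. It costs $E^\star_\infty(\tilde r+\delta_N)\le U$, so your estimates apply to it verbatim with the same constant. They give $E(\m q)\le E^\star_\infty(\tilde r+\delta_N)$ and strict feasibility of $\m q$ for $(\mathrm P^{\tilde r}_\infty)$. The $+\delta_N$ then absorbs the cost change from rounding $\m q$ to the grid.
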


In minimum-snap planning ($k=4$) with position through jerk fixed at both
ends ($\ell=3$), accuracy $\varepsilon=\mu$ takes order $\varepsilon^{-2/7}$ pieces
($\varepsilon^{-2}$ and $\varepsilon^{-2/3}$ for $k=1,2$, $\ell<k$). Fixing the snap too ($\ell=k$) adds
cost of order $1/N$ at the end pieces, hence order $\varepsilon^{-1}$ pieces.

\section{Experiments}
\label{sec:experiments}
We evaluate GlobMP on synthetic teams of $2$ to $200$ robots among spheres
(\S\ref{sec:exp-synthetic}), a multi-robot benchmark on the Robotarium
testbed (\S\ref{sec:exp-mrmp}) and tactile-reactive manipulation
(\S\ref{sec:exp-tactile}). Algorithm~\ref{alg:proj} restores from
heuristic seeds (route classes, prioritized plans, the executing curve).
\subsection{Synthetic experiments}
\label{sec:exp-synthetic}

The synthetic study (Problem~\ref{prob:B}, Appendix~\ref{app:exp-synthetic-details}) has robots exchange places in
pairs past static spheres, under the minimum-snap cost with piecewise
degree-$9$ polynomials; each pair or sphere can be passed on either side, so
an instance has many route classes. A curve is \emph{feasible} if it
makes no contact at any $t$; GlobMP keeps a $0.5$\,cm margin. GlobMP runs
Algorithms~\ref{alg:pgd}--\ref{alg:proj} from one waypoint path per route
class and keeps the cheapest feasible result. The baselines are RRT-Connect
\citep{kuffner2000rrt}, RRT$^*$ \citep{karaman2011sampling} and GCS
\citep{marcucci2023gcs} on IRIS regions \citep{deits2015computing} in the
joint configuration space, whose paths are fitted by the cheapest
collision-free (else least-colliding) curve in GlobMP's class \citep{richter2016polynomial},
and SDP-MP of \citet{graesdal2026sdp} on the same program
(Appendix~\ref{app:exp-synthetic-details}).

Each baseline imposes the constraints off the curve it returns (along a
sampled path, inside convex regions, or on a relaxed variable), keeps
centimetres to decimetres of slack, and fails more often as robots are added;
SDP-MP collides or times out on every instance
(Figure~\ref{fig:synthetic}a,b). GlobMP is feasible at every $t$ on every
instance and ends at the margin. Feasible baseline curves cost from
about twice to hundreds of times more, with a gap that mostly grows with team
size (Figure~\ref{fig:synthetic}c); where SDP-MP returns a lower bound,
GlobMP is within $6\%$ of it at the median. Theorem~\ref{thm:globalinf} covers this curve class ($\ell=k$, rate
$N^{-1}$) but not its seeds, so the lowest cost found is not a
proven optimum. From $2$ to $200$ robots GlobMP is feasible at every $t$ on
every seed, at $200$ robots ($82{,}000$ blocks) in two minutes on CPU
and $28$\,s on one GPU from straight lines (Figure~\ref{fig:prob-b}a,b), with
time per iteration at most linear in the blocks
(Figure~\ref{fig:prob-b}c); the baselines are feasible on at most $2$ of $5$
seeds at eight robots and on none from $16$.

To check the rates of Theorem~\ref{thm:globalinf} and
Corollary~\ref{cor:eps}, we solve one two-robot instance again and again with
more pieces $N$ and track how far GlobMP's best cost stays above the limit
(Figure~\ref{fig:rate}). When the ends fix every derivative up to the one in
the cost, the extra cost halves each time $N$ doubles, for velocity, acceleration,
jerk and snap: this is the cost of the end pieces in the discussion after
Corollary~\ref{cor:eps}, and the bound is tight for $k\ge2$. With one derivative
fewer fixed and $k\ge2$, the extra cost falls much faster than the corollary's bound and
is below the numerical precision by $N=8$ for jerk, $N=10$ for snap and
$N=20$ for acceleration, so the bound holds but is loose here. For $k\ge2$ the cost grows linearly with the obstacle margin, so the rate $(c+1)\delta_N$ after
Theorem~\ref{thm:globalinf} fits.

\begin{figure}[t]
\centering
\setlength{\abovecaptionskip}{3pt}
\includegraphics[width=\linewidth]{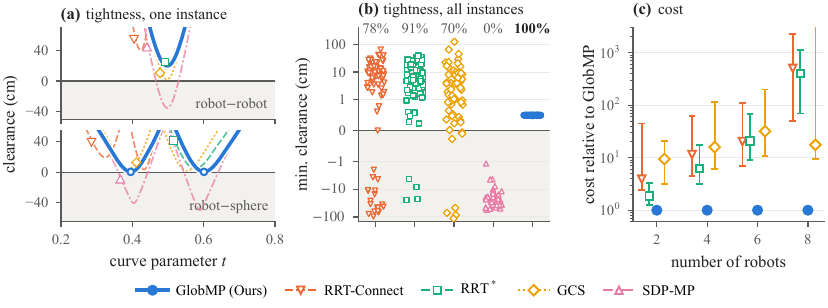}
\vspace{-8pt}
\caption{Tightness and cost, $20$ instances each with $2$, $4$, $6$ and $8$
robots. (a) Clearance over time on one $2$-robot instance, between the robots
(top) and to the spheres (bottom). (b) Minimum clearance of each returned
curve, with the share of runs feasible at every $t$. (c) Cost of feasible
curves relative to GlobMP's (median and interquartile range).}
\label{fig:tightness}\label{fig:synthetic}
\end{figure}

\begin{figure}[t]
\centering
\setlength{\abovecaptionskip}{3pt}
\includegraphics[width=\linewidth]{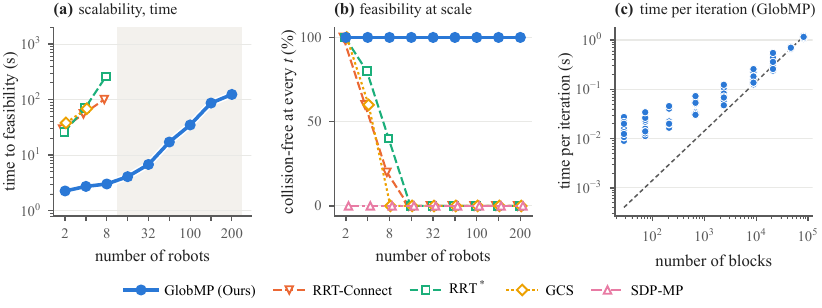}
\vspace{-8pt}
\caption{Scalability from $2$ to $200$ robots ($10$ runs per size for GlobMP,
$5$ per baseline). (a) Median CPU time to the first collision-free trajectory, setup
included; where shaded, no baseline finds one. (b) Share of runs
collision-free at every instant. (c) Algorithm~\ref{alg:proj}'s time per iteration against the
number of blocks; dashed: linear growth.}
\label{fig:prob-b}
\end{figure}

\begin{figure}[t]
\centering
\setlength{\abovecaptionskip}{3pt}
\includegraphics[width=\linewidth]{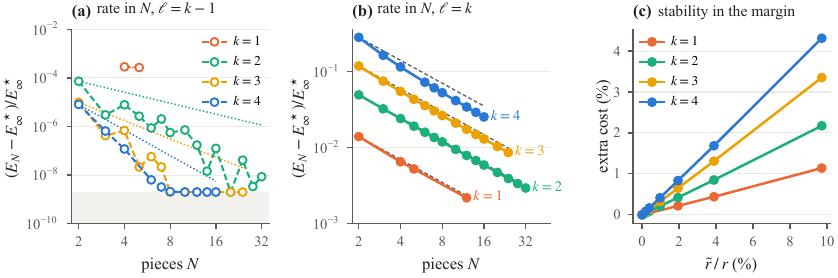}
\vspace{-8pt}
\caption{Relative extra cost over the limit as the number of pieces grows (one
two-robot instance), for velocity to snap ($k=1,\dots,4$). (a) One derivative fewer fixed at the ends; dotted: rates of
Corollary~\ref{cor:eps}; shaded: numerical precision. (b) The cost's own
derivative also fixed; dashed: halving per doubling. (c) Extra cost against
the margin $\tilde r/r$, with $r$ the smaller sphere radius.}
\label{fig:rate}
\end{figure}

\subsection{Multi-robot motion planning}
\label{sec:exp-mrmp}

Multi-robot motion planning coordinates the simultaneous motion of several
robots toward assigned goals while avoiding collisions with obstacles and with
one another. We evaluate GlobMP on five multi-robot environments, \texttt{Empty}, \texttt{Conveyor}, \texttt{Cross-Room},
\texttt{Drop-Region} and \texttt{Corridor-Bay}, from \citet{shaoul2025mmd}
(\texttt{Cross-Room} and \texttt{Corridor-Bay} reconstructed from \citealt{liang2025smd}), following the test-time velocity
constraint and collision predicate of \citet{direct2026} on the Georgia Tech
Robotarium~\citep{wilson2020robotarium}, with teams of $N_a\in\{3,6,9,12\}$ robots. The environments are fitted to the
$3.2\times2.0$\,m Robotarium arena, with the common velocity limit selected
for the Robotarium setting and inflated robot radius $r=0.13$\,m. For each (map, team-size) pair, we use ten
start--goal instances: nine sampled from the benchmark distribution and one
``ring'' instance across the workspace (\texttt{Conveyor} below $N_a=12$: ten sampled). Hardware details of the Robotarium setup are provided in
Appendix~\ref{app:robot-system}.

GlobMP needs no training: at test time it solves each instance directly as
one \eqref{eq:P}, with the robots' curves stacked, a collision constraint
at radius $2r$ for every pair and affine blocks for boxes
(Appendix~\ref{app:affine}), so its plan is collision-free at every $t$.
The baselines are CD, PCD-LB and PCD-SHD \citep{luan2026pcd}, Final~Proj.\ and
DiRecT \citep{direct2026} and MMD-CBS \citep{shaoul2025mmd}. Each method is evaluated on the same start--goal
instances. The sampling baselines draw $64$ candidates, whereas GlobMP returns one
trajectory.

Figure~\ref{fig:robotarium-sr-one} reports the success of a single returned
plan across the five environments and team sizes. On \texttt{Conveyor} and
\texttt{Cross-Room}, baseline performance drops rapidly as the number
of robots increases: every baseline reaches zero on \texttt{Conveyor} at
$N_a=12$ and stays at or below $0.50$ on \texttt{Cross-Room} from $N_a=9$,
whereas GlobMP's single plan succeeds on every instance except two of the ten on \texttt{Conveyor} at $N_a=12$. The stronger
projection baselines remain more reliable on \texttt{Empty},
\texttt{Drop-Region}, and \texttt{Corridor-Bay}, but most also degrade with team size. Figure~\ref{fig:robotarium-gallery} presents
representative $12$-robot ring trajectories on \texttt{Cross-Room} and
\texttt{Conveyor}. Appendix~\ref{app:exp-multirobot} extends the evaluation
to eight environments, teams of up to $21$ robots, three velocity caps, and
a larger set of start--goal instances.

\subsection{Steering tactile-reactive dexterous policies}
\label{sec:exp-tactile}

A Dexmate Vega-1 with two XHAND1 hands with fingertip force sensing
(Appendix~\ref{app:robot-system}) performs four bimanual tasks learned from
teleoperation (Figure~\ref{fig:steer}a). The expert policies are two state-of-the-art
vision-language-action models, GR00T N1.7~\citep{nvidia2026groot17}, built on the GR00T N1 architecture \citep{bjorck2025groot}, and
$\pi_{0.5}$~\citep{black2025pi05}, each with and without tactile input. Both
command absolute joint targets for both arms and hands ($n=38$) at $30$\,Hz
in chunks of $16$ executed steps.

Deployed, the policies fail geometrically (they replay memorized targets),
temporally (consecutive chunks disagree by up to $0.5$\,rad) and at contact,
mostly because of observations outside the training distribution, which no
motion constraint can observe; so we separate \emph{deciding} which
constraints apply from \emph{enforcing} them (Figure~\ref{fig:steer}b). A
\emph{meta-reasoner}, the vision-language model Claude
Opus~5.5~\citep{anthropic2026opus}, defines the constraints once, at reset, from
the images, the task and the measured state, for example the keep-out, keep-in
and grip of Figure~\ref{fig:steer}b. A \emph{certifier} then solves
\eqref{eq:P} per chunk, with the policy's chunk as the reference $\g\rho$, so
the executed curve meets every active constraint at every $t$. Keep-out and
keep-in blocks act on hand positions, which depend on the joints through forward
kinematics; the certifier enforces them by linearizing the kinematics at the
current curve and re-solving, so for these blocks feasibility at every $t$
holds up to that linearization, while blocks on the joints
themselves, such as the grip, are exact. The
meta-reasoner only chooses constraints and never solves them, so a wrong
choice changes \emph{which} feasible curve is returned, never \emph{whether} it
is feasible, unlike model-written programs~\citep{huang2024rekep}. If the
policy slows, a demonstrated time law rescales the reference
\citep{kant1986toward}.

\begin{figure}[t]
\centering
\includegraphics[width=\linewidth]{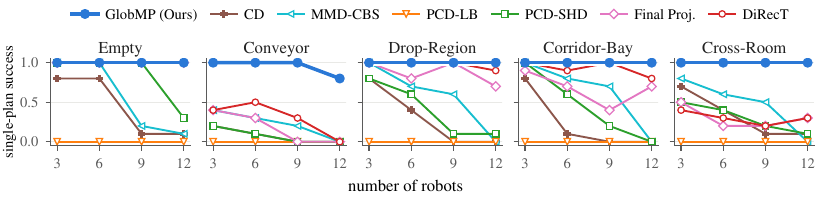}
\vspace{-8pt}
\caption{Success of a \emph{single} returned plan on the Robotarium arena for $N_a\in\{3,6,9,12\}$ robots,
one panel per map; for GlobMP, the one plan it returns, and for a sampling baseline, the first of its $64$ candidates. PCD-LB is at zero throughout: its log-barrier pushes agents into the arena boundary. MMD-CBS returns a single plan, with $60$\,s per instance.}
\label{fig:robotarium-sr-one}

\vspace{6pt}
\centering
\includegraphics[width=\linewidth]{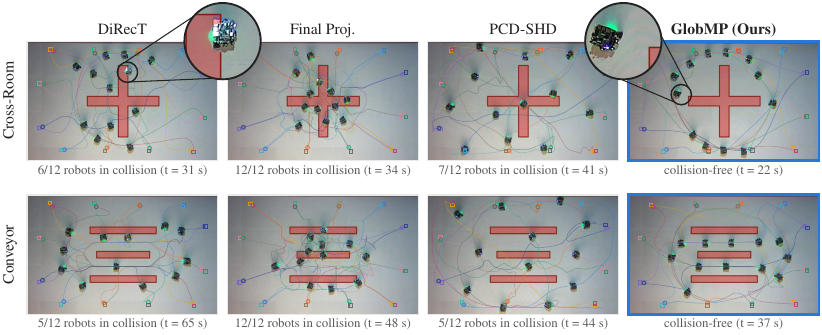}
\vspace{-8pt}
\caption{Executed Robotarium trajectories for the $12$-robot ring instance on \texttt{Cross-Room}
and \texttt{Conveyor}. Obstacle-hit counts span the whole run; circles magnify highlighted regions.}
\label{fig:robotarium-gallery}
\end{figure}

\begin{figure}[t]
\centering
\setlength{\abovecaptionskip}{2pt}
\includegraphics[width=\linewidth]{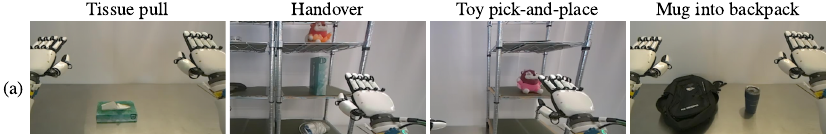}\\[2pt]
\includegraphics[width=\linewidth]{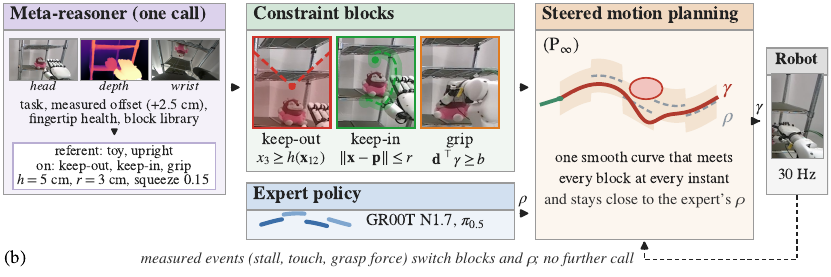}\\[8pt]
\includegraphics[width=\linewidth]{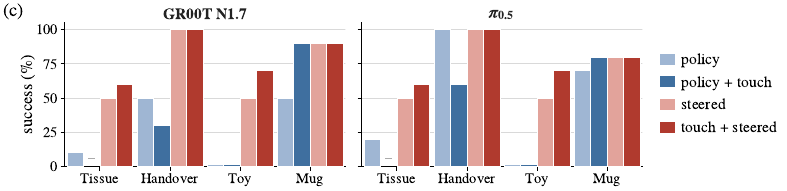}
\vspace{-5pt}
\caption{(a) The initial head-camera frame of each task. (b) Steered motion planning on one real toy trial: one meta-reasoner call at reset sets the active blocks and their
parameters; \eqref{eq:P} takes the policy's chunk as $\g\rho$ and returns a curve that is smooth, collision-free and
within $\bar r$ of $\g\rho$ at every $t$; measured events then switch blocks. (c) Real-robot success of GR00T and $\pi_{0.5}$ per task ($9$ or $10$ trials per bar): the policy without and with touch input, and steered without and with touch; dash: not run.}
\label{fig:steer}
\end{figure}

The baselines are the expert policies alone, with and without touch
(none with touch on the tissue), and every condition run has $9$ or $10$ real trials per task.
Steering matches or raises every task (Figure~\ref{fig:steer}c): it turns
the tissue, which the policies alone rarely complete, into a mostly
successful task, and completes the handover in every trial. Touch pays off
where the hand must regulate contact, as in placing the toy, pulling the
tissue and stowing the mug, and adds little where the task is guided by
vision, as in the handover, where it even hurts the policies alone. In a
closed-loop simulator replaying failed real tissue deployments, the pinch
lands where and when demonstrated only with steering.

\raggedbottom
\section{Conclusion}
\label{sec:conclusion}
We developed GlobMP, a gradient projection method with convex feasibility
restoration for continuous-time motion planning over polynomial curves; it
enforces every collision constraint at every instant, with planning and
steering in one program. Under mild assumptions and from a fine grid start,
its cost approaches the continuous-time optimum as the number of pieces
grows, within any constraint tolerance and exactly on stable instances. In
real and synthetic experiments, from teams of up to $200$ robots to the
Robotarium testbed and tactile manipulation on a bimanual humanoid, it
outperforms strong baselines. Extending the guarantee to speed limits and
corridor constraints is a natural next step.

\FloatBarrier
\section*{AI use statement}
We disclose the use of large language models
(LLMs) to aid and polish the writing, editing the text for
clarity and length; for retrieval and discovery, helping find related work
and check citations; and to generate the synthetic datasets, whose
instances are produced by LLM-assisted scripts from fixed seeds
(Appendix~\ref{app:exp-synthetic-details}). The authors take full
responsibility for all content.

\section*{Reproducibility statement}
We will release all code: the GlobMP solver (restoration and projected
gradient steps), the scripts and configurations for every experiment and
figure, and the steering and deployment code for learned policies on the
Vega-1 with XHAND1 hands (certifier, grounding, time law, policy servers),
together with the teleoperation demonstrations and fine-tuned policy
checkpoints of \S\ref{sec:exp-tactile}. The project page, \url{https://GlobMP.github.io/}, collects the videos and results.

\clearpage
\bibliography{references}
\bibliographystyle{iclr2027_conference}

\clearpage
\appendix
\makeatletter
\setlength{\@fptop}{0pt}\setlength{\@fpsep}{12pt plus 2fil}\setlength{\@fpbot}{0pt plus 1fil}
\makeatother
\raggedbottom

\renewcommand{\thesection}{A\arabic{section}}
\setcounter{secnumdepth}{3}
\addtocontents{toc}{\protect\setcounter{tocdepth}{3}}
{\noindent\Huge\bfseries Appendix\par}
\vspace{1.2em}
\tableofcontents
\vspace{1.5em}

\section{Addendum to Section~\ref{sec:intro}: related work and Figure~\ref{fig:pipeline}}

\subsection{Extended related work}
\label{sec:related}

\noindent\textbf{Relation to existing formulations.}
Collocation replaces each block by inequalities at sampled times
\citep{hargraves1987direct}; the control-polygon rule asks every
coefficient to be nonnegative \citep{gao2018online}, which is sufficient
for $\mathcal K_j$ but conservative; convex decompositions
\citep{deits2015computing,marcucci2023gcs} are exact for the decomposition
rather than for the free space; semidefinite relaxations
\citep{lasserre2001global,parrilo2003semidefinite,graesdal2026sdp} certify
a trajectory only when tight. We keep the original variables and free
space, test the exact condition on each block, and approach the optimum of \eqref{eq:Pinf} as the class is refined, within any constraint tolerance and exactly on stable instances (Theorem~\ref{thm:globalinf}); under a condition on configurations (Assumption~\ref{assum:mfcq}) we also reach global minimizers over each fine enough spline class (Proposition~\ref{prop:globalC}).

\noindent\textbf{Sampling-based planning and trajectory optimization.}
The two classical families both act on the nonconvex problem itself, and
neither certifies global optimality. Sampling-based planners
\citep{lavalle1998rapidly,karaman2011sampling} are probabilistically
complete and, for RRT*, asymptotically optimal, but in probability and only to the robust optimum \citep{solovey2020revisiting}, the least cost approached with a positive clearance, which can exceed the optimum. Trajectory optimizers instead discretize and hand the
result to local nonlinear programming: direct collocation
\citep{hargraves1987direct}, covariant gradient descent
\citep{zucker2013chomp}, sequential convexification
\citep{schulman2014trajopt}, and differentiable collision geometry
\citep{tracy2023differentiable}; this pipeline is fast and widely deployed but
converges to a stationary point with no test for whether it is global. We
keep this setting and add a proof of global convergence.

\noindent\textbf{Convex decomposition and graphs of convex sets.} A second
line makes the problem tractable by convexifying the \emph{domain}. IRIS
inflates large obstacle-free convex regions
\citep{deits2015computing}, then used for mixed-integer planning \citep{deits2015efficient}; shortest-paths-in-graphs-of-convex-sets
then plans over that decomposition as a mixed-integer convex program, solved through its convex relaxation and rounding
\citep{marcucci2024shortest,marcucci2023gcs}, with subsequent work on scalability
\citep{marcucci2025biconvex,werner2024faster,werner2025superfast},
on space-time variants for multiple robots \citep{tang2025stgcs}, and on
using the convex solution to warm-start nonconvex trajectory optimization
\citep{vonwrangel2024gcs}. These methods bound their optimality gap
\emph{for the decomposition}: the regions are a modelling choice made
before optimizing, and a curve is admissible only if it respects them. Our
guarantee is stated on the original free space, with no decomposition.

\noindent\textbf{Learning-based planners.} Diffusion models have recently
been trained to sample trajectories directly, for single
\citep{carvalho2023mpd} and multiple \citep{shaoul2025mmd} robots, with
constraints imposed during sampling by projection
\citep{luan2026pcd} or by receding-horizon denoising
\citep{direct2026}. These amortize planning into fast learned sampling and
represent several routes, but they enforce feasibility at sampled waypoints
at best and produce no optimality certificate. We compare against
them in \S\ref{sec:experiments}; our contribution is a guarantee rather
than a speedup, and the two can be combined (Appendix~\ref{sec:steering}).

\noindent\textbf{Convex relaxations of planning and control.} Beyond \citet{graesdal2026sdp}, several works apply
semidefinite or moment-hierarchy relaxations
\citep{shor1987quadratic,lasserre2001global,parrilo2003semidefinite} to
planning and control problems and mostly report tight solutions empirically,
rarely analysing when or why tightness holds. \citet{teng2023convex}
formulate kinodynamic motion planning on Lie groups as polynomial
optimization and apply the moment hierarchy; \citet{haring2024trajectory}
study minimum-energy unicycle control with a semidefinite relaxation;
\citet{vega2025convex} apply the Shor relaxation to spacecraft manoeuvre
planning with an ellipsoidal, probability-of-collision keep-out constraint; \citet{mahajan2026tinysdp} use
it for real-time predictive control around ball-shaped obstacles on
embedded hardware. Control problems with free timing
\citep{yang2025new,dong2026fast} and contact-rich
planning \citep{graesdal2024towards,kang2025global} have been
relaxed in the same way, and a sum-of-squares surrogate guides contact-rich
search \citep{wei2026global}. Outside planning, such relaxations underlie
certifiable estimation, for example registration and SLAM
\citep{yang2020teaser,rosen2019se,papalia2024certifiably}, which we do not
review. In contrast to the planning and control work
above, which mostly observes tightness empirically, we prove that a feasible
descent method on the original nonconvex program, started from a fine
grid, approaches the optimal cost of the original program as the class is
refined, within any constraint tolerance (Theorem~\ref{thm:globalinf}).

\subsection{Inexact sampled checks and the data for Figure~\ref{fig:pipeline}}
\label{app:examples}
\label{app:proofs-examples}

\paragraph{Sampling and control-polygon tests are not exact.}
\label{sec:whynotelementary}

\emph{Sampling} $t$ at a finite set is a strict \emph{relaxation}. For any
finite $\mathcal T=\{t_1<\dots<t_M\}\subset[0,1]$, choosing $a<b$ inside any gap
between consecutive samples makes $g(t)=(t-a)(t-b)$ (of degree
$2\le2d$) nonnegative at every sample and negative in between, with
$\min_{[0,1]}g=-\big(\tfrac{b-a}{2}\big)^2$. So the sampled feasible set is
in general a proper superset of $\feasset$ at every $M$ and every degree; no finite
sample is ever exact. Structurally, each $\m{b}_{2d}(t_0)$ is an extreme ray of
the dual cone $\mathcal K_{2d}^{*}$ \eqref{eq:Kdual} of the cone \eqref{eq:Kdef} of degree-$2d$ nonnegative polynomials (indexed by degree, Appendix~\ref{app:coneprops}): if $\m{b}_{2d}(t_0)=\m x+\m y$ with
$\m x,\m y\in\mathcal K_{2d}^{*}$, pairing with $(t-t_0)^2$ forces $\m x$'s representing
measure onto $\{t_0\}$, so a continuum of extreme rays would be replaced
by finitely many. The direction of the error is the unsafe one for a
planner: the sampled problem gives a valid \emph{lower} bound on the
optimal cost of \eqref{eq:P} when solved globally, but the curve it returns may penetrate the obstacle, and the
margin violation is only $\Theta(1/M^2)$: on a uniform grid a
gap of width $\tau=1/(M-1)$ admits violations approaching
$\min_{[0,1]}g=-\tau^2/4$ for $g$ above (for a block the violation scales with $\|g''\|_\infty$, so $\Theta(1/M^2)$ holds at bounded speed and acceleration).

\emph{Requiring all Bernstein coefficients to be nonnegative} is a strict
\emph{restriction}, and degree elevation does not close it. If $g\ge0$
vanishes at some $t^\star\in(0,1)$, then since every entry of $\m b_D(t^\star)$ is strictly positive for every degree $D$,
nonnegative coefficients $\g\zeta$ of $g$ would force $\g\zeta^{\!\top}\m b_D(t^\star)=0$ with all terms
nonnegative, hence $\g\zeta=\m 0$ and $g\equiv0$: no elevation degree
ever works for a polynomial with an interior zero. This case arises here,
because whenever a minimizer of \eqref{eq:P} touches an obstacle at a
time inside a piece, its avoidance polynomial has an interior double root. The
smallest Bernstein-$D$ coefficient of such a polynomial stays negative and
decays only like $-t^\star(1-t^\star)g''(t^\star)/(2D)$: for
$g(t)=(t-\tfrac12)^2$ it is exactly $-1/\big(4(D-1)\big)$ for even $D$ and
$-1/(4D)$ for odd $D$, so no elevation is ever enough. Approaching tangency
the required degree likewise diverges: for
$g_\varepsilon(t)=(t-\tfrac12)^2+\varepsilon$, so that
$\min_{[0,1]}g_\varepsilon=\varepsilon$, the least $D$ with all
Bernstein-$D$ coefficients nonnegative satisfies
$D_{\min}=1/(4\varepsilon)+O(1)$; in
workspace terms
$D_{\min}\approx\|\g\gamma'\|^2t^\star(1-t^\star)/(2r_j\delta)$ for
clearance $\delta$. The avoidance condition is therefore carried exactly, as written in
\eqref{eq:Kdef}.

\paragraph{Data for Figure~\ref{fig:pipeline}.}
\label{app:figure-pipeline}
The figure shows two robots, A (left to right) and B (bottom to top),
planned jointly with one degree-$6$ Bernstein segment each; each robot is a
moving centre for the other (\S\ref{sec:intro}). The obstacles are
three ellipses and a circle, each replaced by the smallest scaled ellipse containing its offset by the robot radius $0.055$ (semi-axes raised by $0.055$, then by up to $2\%$), so
that the constraint on the robot's centre keeps the whole robot clear, and
the robots keep their centres at least $0.11$ apart. The cost is the
smoothness objective of \eqref{eq:Pinf} with $k=2$ (minimum acceleration),
positions fixed at both ends ($\ell=0$), and no steering constraint. Panel (2)
imposes the constraints only at $8$ sample times, ends included; panel (3) runs GlobMP with
every constraint on all of $[0,1]$. The instance has eight route classes (A
passes each of the two coloured ellipses on either side, and either robot crosses first);
one opening curve per class, restored and then refined by
Algorithm~\ref{alg:pgd}, gives feasible KKT curves in five of them,
with costs $1$, $1.2$, $5.3$, $5.6$ and $50.8$ times the cheapest; panel (3) shows
the cheapest and, in grey, the routes that take a different path in space.

\section{Addendum to Section~\ref{sec:setup}: formulation and algorithm}

\subsection{Variants of the formulation}
\label{app:variants}

The experiments use four variants of the blocks of \eqref{eq:Pinf}. Each is
convex in the position or has a convex feasible set imposed exactly, so
Proposition~\ref{prop:fixeddegree} covers all four. Theorem~\ref{thm:globalinf}
covers the position blocks (ellipsoids, robot pairs, boxes and walls); the
speed limit constrains $\g\gamma'$ and lies outside it.

\paragraph{Ellipsoidal obstacles.} A weighted norm
$\|\m x\|_{\g\Sigma}=(\m x^{\!\top}\g\Sigma\m x)^{1/2}$, $\g\Sigma\succeq0$, in the
collision constraint, $\|\g\gamma(t)-\m c_j(t)\|_{\g\Sigma_j}\ge r_j$, keeps every
block quadratic; with $\g\Sigma_j\succ0$ it describes an ellipsoid.

\paragraph{Several robots.} Several robots are planned jointly by stacking
their curves into one $\g\gamma$; let $\m E_a$ select robot $a$'s rows, so
$\g\gamma_a=\m E_a\g\gamma$. For a robot of radius $r_a$, a sphere of
radius $r$ and centre $\m m_j$ is $\g\Sigma_j=\m E_a^{\!\top}\m E_a$,
$\m E_a\m c_j=\m m_j$, $r_j=r+r_a$; two robots $a,b$ are kept apart by
$\g\Sigma_j=\m E_{ab}^{\!\top}\m E_{ab}$ with $\m E_{ab}=\m E_a-\m E_b$ and
$r_j=r_a+r_b$, so that $\|\g\gamma\|_{\g\Sigma_j}=\|\g\gamma_a-\g\gamma_b\|$. The
multi-robot experiments of \S\ref{sec:exp-multirobot} use this separation.

\paragraph{Workspace box and speed limit.} Box constraints use
$x_i-\underline x_i\ge0$ and $\overline x_i-x_i\ge0$ (Appendix~\ref{app:box}); a speed
limit uses $(T_{\mathrm{dur}}v_{\max})^2-\|\g\gamma'(t)\|^2\ge0$, where $T_{\mathrm{dur}}$ is the
trajectory duration, so the physical velocity is $\g\gamma'/T_{\mathrm{dur}}$. Both are
affine or concave quadratic, so their feasible sets are convex.

\subsubsection{Half-space and corridor constraints}
\label{app:affine}

Confining the curve to one side of a wall (a workspace bound, or a face
of a corridor) is the constraint $\m{a}_w^{\!\top}\g\gamma(t)-\xi_w\ge0$ for all
$t$. The polynomial $\m{a}_w^{\!\top}\g\gamma(t)-\xi_w$ has degree $d$, so its
Bernstein coefficient vector $\Gm^{\!\top}\m{a}_w-\xi_w\m{u}$ ($\m u$ the vector of ones) is raised to degree
$2d$ by the elevation map $\m{W}\in\R^{(2d+1)\times(d+1)}$,
$W_{si}=\binom{d}{i}\binom{d}{s-i}\big/\binom{2d}{s}$:
\begin{equation}
\label{eq:wallblock}
\g{\pi}_w(\Gm) \;:=\; \m{W}\big(\Gm^{\!\top}\m{a}_w-\xi_w\m{u}\big)\;\in\;\mathcal K_{2d} .
\end{equation}
Elevation rewrites one polynomial in a higher basis and changes nothing
else, so \eqref{eq:wallblock} holds exactly when the wall is respected at
every $t$, and the block enters the same cone $\mathcal K_{2d}$ as the collision
blocks. It is affine in $\Gm$, so its linearization is exact. An obstacle
block is a squared length and a wall block a length, so the implementation
divides each obstacle block by $r_j^2$ and each wall block by the workspace
scale; since $\mathcal K$ is a cone, this changes the numerics but not the
feasible set.

\subsubsection{Reference motions from learned policies}
\label{app:policyref}

The reference $\g\rho$ of \eqref{eq:Pinf} is a motion supplied by a learned
planner or policy (one fixed proposal per solve), a demonstration or a
classical planner. For positions $\m y_0,\ldots,\m y_L\in\R^n$ at normalized times
$0=t_0<\cdots<t_L=1$, we use the piecewise-linear interpolant
\begin{equation}
\label{eq:policyref}
\g\rho(t)
=\frac{t_{i+1}-t}{t_{i+1}-t_i}\,\m y_i
 +\frac{t-t_i}{t_{i+1}-t_i}\,\m y_{i+1},
\qquad t\in[t_i,t_{i+1}],\quad i=0,\ldots,L-1.
\end{equation}
It need not have the derivatives required of $\g\gamma$, and it enters
the steering constraint (Appendix~\ref{sec:steering}). Relative action commands
are first composed from the current configuration into absolute waypoints.

\subsection{Steering a generative planner}
\label{sec:steering}

Learned planners propose motion as waypoints. Diffusion and flow-matching
planners \citep{janner2022diffuser,carvalho2023mpd,shaoul2025mmd,direct2026}
denoise an array $\m y=(\m y_0,\ldots,\m y_L)$ of positions at fixed times, one
model per map, with start and goal written into the array at every step.
Constraints enter either as gradient guidance, which does not enforce them,
or as a projection of the waypoints, which enforces them at the $L+1$
sampled times only; Figure~\ref{fig:pipeline} and Appendix~\ref{app:examples} show what happens
between samples. We use \eqref{eq:P} as that projection, so the object the
sampler carries is a polynomial trajectory whose every block is tested at
every $t$, and the generator is left untouched.

Given waypoints $\m y$, the reference $\g\rho$ is their piecewise-linear
interpolant \eqref{eq:policyref}, and the steering constraint of
\eqref{eq:Pinf}, $\|\g\gamma(t)-\g\rho(t)\|\le\bar r$ for every $t$, keeps the
curve near the proposal (in the Robotarium runs, a tracking cost to $\g\rho$ replaces this constraint). Otherwise Algorithms~\ref{alg:pgd}--\ref{alg:proj}
run as in \S\ref{sec:setup}.

\section{Addendum to Section~\ref{sec:theory}: theory and proofs}
\label{app:theory}
Appendix~\ref{app:prelim} sets up the chart, the matrices, the compactness
facts, Assumption~\ref{assum:structural}, and the step-size and cone-solve bounds (Algorithms~\ref{alg:proj} and~\ref{alg:pk}). Appendix~\ref{app:gpa-global} proves
the fixed-degree result (Proposition~\ref{prop:fixeddegree}) and Theorem~\ref{thm:globalinf}, and, under Assumption~\ref{assum:mfcq} stated there, its counterpart at a fixed number of pieces (Proposition~\ref{prop:globalC}) and the degree refinement (Proposition~\ref{prop:degree}); Appendix~\ref{app:scope} discusses the assumptions and the stability of the optimal value.

\subsection{Preliminaries}
\label{app:prelim}

\subsubsection{Boundary chart}
\begin{lemma}
\label{lem:affchart}
If $\m A$ has full column rank, then
$\Gm_{\mathrm{aff}}=\m B(\m A^\top\m A)^{-1}\m A^\top$
belongs to $\affset$. If $\m N$ is an orthonormal basis for
$\ker\m A^\top$, then
$
\m Z\longmapsto\Gm_{\mathrm{aff}}+\m Z\m N^\top$ ($\m Z=\Gm_\free$) is an affine isometry from $\R^{n\times d_f}$ onto $\affset$,
with inverse $\Gm\longmapsto(\Gm-\Gm_{\mathrm{aff}})\m N$.
\end{lemma}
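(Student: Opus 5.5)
The plan is a direct linear-algebra check in three steps: $\Gm_{\mathrm{aff}}$ solves the boundary equations, the chart maps into $\affset$, and the chart is a bijection with the stated inverse that preserves Frobenius distances.

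\emph{Step 1: $\Gm_{\mathrm{aff}}\in\affset$.} Since $\m A$ has full column rank, $\m A^\top\m A$ is invertible. Then
\[
\Gm_{\mathrm{aff}}\m A=\m B(\m A^\top\m A)^{-1}\m A^\top\m A=\m B .
\]

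\emph{Step 2: the chart maps into $\affset$.} Because the columns of $\m N$ lie in $\ker\m A^\top$, we have $\m A^\top\m N=\m 0$, hence $\m N^\top\m A=\m 0$. Therefore
\[
(\Gm_{\mathrm{aff}}+\m Z\m N^\top)\m A=\m B+\m Z\,\m N^\top\m A=\m B
\]
for every $\m Z$.

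\emph{Step 3: bijection and isometry.} For injectivity and the inverse, use $\m N^\top\m N=\m I$, which holds because the basis is orthonormal. Then $(\Gm_{\mathrm{aff}}+\m Z\m N^\top-\Gm_{\mathrm{aff}})\m N=\m Z$.

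For surjectivity, take any $\Gm\in\affset$. Then $(\Gm-\Gm_{\mathrm{aff}})\m A=\m 0$, so every row of $\Gm-\Gm_{\mathrm{aff}}$ is orthogonal to $\operatorname{range}\m A$. Since $\R^{d+1}=\operatorname{range}\m A\oplus\ker\m A^\top$ orthogonally, each such row lies in $\ker\m A^\top=\operatorname{range}\m N$. The matrix $\m N\m N^\top$ is the orthogonal projector onto $\operatorname{range}\m N$, so it fixes these rows:
\[
\Gm-\Gm_{\mathrm{aff}}=(\Gm-\Gm_{\mathrm{aff}})\m N\m N^\top .
\]
Setting $\m Z=(\Gm-\Gm_{\mathrm{aff}})\m N$ recovers $\Gm$.

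Distances are preserved because
\[
\|\m Z\m N^\top\|_F^2=\tr(\m Z\m N^\top\m N\m Z^\top)=\|\m Z\|_F^2 ,
\]
and the chart is affine, so differences of images are $(\m Z_1-\m Z_2)\m N^\top$.

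The dimension $d_f=d+1-2(\ell+1)$ follows from rank--nullity, since $\dim\ker\m A^\top=(d+1)-\operatorname{rank}\m A$ and $\m A$ has $2(\ell+1)$ independent columns.

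The only step needing care is surjectivity, which depends on identifying $\m N\m N^\top$ as the orthogonal projector onto $\ker\m A^\top$ and on the orthogonal decomposition of $\R^{d+1}$. Everything else is direct substitution.
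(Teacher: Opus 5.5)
Your proof is correct and follows the paper's route: invertibility of $\m A^\top\m A$ gives $\Gm_{\mathrm{aff}}\m A=\m B$; a difference of two solutions has its rows in $\ker\m A^\top$ and so equals $\m Z\m N^\top$ for a unique $\m Z$; and $\m N^\top\m N=\m I$ yields both the inverse formula and the Frobenius identity. The orthogonal-decomposition step in your surjectivity argument is unnecessary, since $(\Gm-\Gm_{\mathrm{aff}})\m A=\m 0$ already says that each transposed row lies in $\ker\m A^\top=\operatorname{range}\m N$.
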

\begin{proof}[Proof of Lemma~\ref{lem:affchart}]
Full column rank makes $\m A^\top\m A$ invertible, and multiplying
the displayed anchor by $\m A$ gives $\m B$. Any difference of two
solutions has rows in $\ker\m A^\top$ and thus uniquely equals
$\m Z\m N^\top$. The inverse formula follows from $\m N^\top\m N=\m I$.
Finally, with $\|\cdot\|_F$ the Frobenius norm, $\|\m Z\m N^\top\|_F^2=\tr(\m Z^\top\m Z)=\|\m Z\|_F^2$.
The map and its inverse are continuous, so relative local minima and
set membership transfer between the two coordinates.
\end{proof}

\subsubsection{Matrices, cone and bounds}
\label{app:proofs-prelim}

\paragraph{Polynomial cost, boundary, and constraint matrices.}
\label{app:polynomial-details}
The costs we use are all of the quadratic form
\begin{equation}
\label{eq:generalcost}
\mathrm{cost}(\Gm)=\tr(\m Q\Gm^\top\Gm)
-2\tr(\m L^\top\Gm)+c_0,
\end{equation}
with $\m Q$ symmetric. The main-text cost is the case $\m L=0$, $c_0=0$; the linear and constant
terms cover tracking costs, and the gradient
is $2(\Gm\m Q-\m L)\m N$; the constraint blocks do not depend on the cost.
For reference fitting, define
$\m{G}_d=\int_0^1\m{b}_d(t)\m{b}_d(t)^{\!\top}dt$ and
$\m{R}_d=\int_0^1\g\rho(t)\m{b}_d(t)^{\!\top}dt$.
Then \eqref{eq:generalcost} uses $\m Q=\tfrac12\m G_d$,
$\m L=\tfrac12\m R_d$, and
$c_0=\tfrac12\int_0^1\|\g\rho(t)\|^2dt$.
The reference need only be square-integrable. For the integrated squared-derivative
objective in \eqref{eq:Pinf}, let $\m D_k$ be the matrix with $\m b_d^{(k)}=\m D_k\m b_{d-k}$ and use $\m Q=\m G^{(k)}:=\m D_k\m G_{d-k}\m D_k^{\!\top}$, $\m L=0$, $c_0=0$.
The endpoint evaluation matrix is
\begin{equation}
\label{eq:bc}
\Gm\m{A}=\m{B},\qquad
\m{A}=[\,\m{b}_d(0)\;\m{b}_d(1)\;\cdots\;
\m{b}_d^{(\ell)}(0)\;\m{b}_d^{(\ell)}(1)\,]
\in\R^{(d+1)\times2(\ell+1)}.
\end{equation}
The columns of $\m{B}$ give the prescribed values in the same order.

For the signed quadratic derivative blocks used in the examples, let
$0\le i_j\le k$, $\g\Sigma_j\succeq0$, $\bar d_j=d-i_j$, and
$\m{D}_0=\m{I}$. Then
\begin{equation}
\label{eq:MjY}
\m{M}_j(\Gm)=\sigma_j\big(\Gm_j^{\!\top}\g\Sigma_j\Gm_j
-r_j^2\m{u}\m{u}^{\!\top}\big),\qquad
\Gm_j=\Gm\m{D}_{i_j}-\m{C}_j,\quad \sigma_j\in\{-1,+1\},
\end{equation}
where $\m{u}$ is the vector of ones and the columns of $\m C_j$ are the Bernstein coefficients of the centre $\m c_j$. Let $S_e$ denote the linear map that sends a
symmetric $(e+1)\times(e+1)$ matrix $\m M$ to the degree-$2e$ Bernstein
coefficients of $\m b_e(t)^{\!\top}\m M\m b_e(t)$. Then
\begin{equation}
\label{eq:pj}
g_j^{\Gm}(t)=\m{b}_{\bar d_j}(t)^{\!\top}\m{M}_j(\Gm)\m{b}_{\bar d_j}(t),
\qquad \g{\pi}_j(\Gm)=S_{\bar d_j}[\m{M}_j(\Gm)].
\end{equation}
Here $m_j=2\bar d_j$. If the centre has higher degree, first elevate both
coefficient representations to a common degree and use the corresponding
linear derivative/elevation map in place of $\m{D}_{i_j}$.
Affine box blocks have degree $d$, are raised to degree $2d$ (Appendix~\ref{app:box}), and have affine coefficient maps.

\paragraph{Properties of the cone.}
\label{app:coneprops}

Write $\mathcal K_m$ for the cone \eqref{eq:Kdef} of degree $m$, so $\mathcal K_j=\mathcal K_{m_j}$, and $p_{\g\zeta}(t):=\g\zeta^{\!\top}\m b_m(t)$ for the polynomial with coefficients $\g\zeta$. All four properties of $\mathcal K_m$ are immediate
from \eqref{eq:Kdef}. It is an intersection of closed homogeneous
half-spaces, hence a \emph{closed convex cone}. It is \emph{pointed}, since
$\g{\zeta},-\g{\zeta}\in\mathcal K_m$ forces the polynomial to vanish, hence $\g\zeta=\m 0$ as $\m b_m$ is a basis. It has \emph{nonempty interior},
since $\m b_m$ is a nonnegative partition of unity, so
$|p_{\g{\zeta}}(t)|\le\|\g{\zeta}\|_\infty\le\|\g{\zeta}\|$ and any $\g{\zeta}$ with
$\min_{[0,1]}p_{\g{\zeta}}=\mu>0$ carries the open $\mu$-ball. And it is
\emph{semialgebraic}, by Tarski--Seidenberg applied to
$\forall t\,(0\le t\le1\to p_{\g{\zeta}}(t)\ge\nobreak0)$.

The \emph{dual cone}, the set of vectors with nonnegative inner product with every element of $\mathcal K_m$, is
\begin{equation}
\label{eq:Kdual}
\mathcal K_m^*=\mathrm{cone}\{\m b_m(t):t\in[0,1]\},
\end{equation}
the nonnegative combinations of the evaluation vectors used in
\S\ref{sec:distcone}. This holds because $\mathcal K_m$ is by
definition the dual of that conic hull, the hull is closed (each element
needs at most $m+1$ generators by Carath\'eodory, and
$\m{u}^{\!\top}\m b_m(t)\equiv1$ bounds the total weight, so the
generating data lie in a compact set), and the bipolar theorem (a closed
convex cone is the dual of its own dual) applies.

\paragraph{Lipschitz bound for the restoration merit.}
\label{app:proof-lipJ}

For general $C^2$ coefficient maps on a compact convex region $\mathcal R$, set
$\m e_j(\m z)=\g{\pi}_j(\m z)-P_{\mathcal K_j}(\g{\pi}_j(\m z))$.
Projection residuals are nonexpansive and $0\in\mathcal K_j$, hence, with
$B_{\pi,j}$ and $B_{J,j}$ bounds on $\|\g\pi_j\|$ and $\|D\g\pi_j\|_{\mathrm{op}}$ over $\mathcal R$ and $L_{J,j}$ a Lipschitz constant of $D\g\pi_j$ (computed below for spherical exclusions),
$\|\m e_j(\m z)\|\le B_{\pi,j}$ and
$\|\m e_j(\m z)-\m e_j(\m y)\|\le B_{J,j}\|\m z-\m y\|$.
The gradient $2D\g\pi_j(\m z)^{\!\top}\m e_j(\m z)$ therefore has Lipschitz constant
at most $2B_{J,j}^2+2L_{J,j}B_{\pi,j}$, and summing over blocks bounds the
full merit; the step scale of Appendix~\ref{app:stepdisc} uses the larger,
also valid, constant with $4B_{J,j}^2$.
The remaining calculation specializes to spherical position exclusions ($\g\Sigma_j=\m I$).

We now show that $L_{J,j}$, the Lipschitz constant of the Jacobian
$D\g\pi_j$ of \S\ref{sec:alg1} as a function of the base point, is a fixed
number, independent of $\mathcal{R}$. By the exact quadratic expansion of the
coefficient map, the Jacobian of $\g{\pi}_j$ at a base point $\Gm$, applied to a
direction $\g{\Delta}$, is
\begin{equation}\label{eq:jacpi}
D\g\pi_j(\Gm)[\g{\Delta}]=S_d\big(\g{\Delta}^{\!\top}\Gm_j+\Gm_j^{\!\top}\g{\Delta}\big).
\end{equation}

Take two base points $\Gm^{(1)},\Gm^{(2)}\in\affset$ and write
$\m{F}:=\Gm^{(1)}-\Gm^{(2)}$. Then $\Gm_j^{(1)}-\Gm_j^{(2)}=\m{F}$ as well, since the
obstacle-centring shift is a constant offset. Subtracting \eqref{eq:jacpi} at the
two base points,
\[
\big(D\g\pi_j(\Gm^{(1)})-D\g\pi_j(\Gm^{(2)})\big)[\g{\Delta}] \;=\; S_d\big(\g{\Delta}^{\!\top}\m{F}+\m{F}^{\!\top}\g{\Delta}\big),
\]
which is linear in $\g{\Delta}$ and in $\m{F}$. Since $S_d$ is a fixed linear map,
\[
\|D\g\pi_j(\Gm^{(1)})-D\g\pi_j(\Gm^{(2)})\|_{\mathrm{op}}\le2\|S_d\|_{\mathrm{op}}\|\m{F}\|_F ,
\]
so $D\g\pi_j$ is Lipschitz on all of $\affset$ with constant
$2\|S_d\|_{\mathrm{op}}$, which depends only on $d$. For a robot pair, both
$\m F$ and $\g\Delta$ enter through the difference of the two robots' rows, each
at most $\sqrt2$ times longer, so the constant gains a factor $2$. Finally,
restricting to the free coordinates through the chart
$\m{Z}\mapsto\Gm_{\mathrm{aff}}+\m{Z}\m{N}^{\!\top}$ of Lemma~\ref{lem:affchart} composes this
bound with the fixed linear map $\m{N}$, and $\|\m N\|_{\mathrm{op}}=1$. Hence
$L_{J,j}:=2\|S_d\|_{\mathrm{op}}$ ($4\|S_d\|_{\mathrm{op}}$ for a robot pair), an explicit constant that depends only on $d$
and not on any compact set $\mathcal{R}$.

\paragraph{Quadratic structure and curvature.}
\label{sec:quadstructure}

For matrices $\m{X},\m{Y}$ of the same shape,
$\langle \m{X},\m{Y}\rangle := \tr(\m{X}^{\!\top}\m{Y})$ is the Frobenius inner
product.

\begin{lemma}
\label{lem:quadexact}
Fix the chart anchor $\Gm_{\mathrm{aff}}\in\affset$ and write $\Gm=\Gm_{\mathrm{aff}}+\m{Z}\m{N}^{\!\top}$
(Lemma~\ref{lem:affchart}). Then, exactly, with no remainder,
\[
\mathrm{cost}(\Gm_{\mathrm{aff}}+\m{Z}\m{N}^{\!\top}) \;=\; \mathrm{cost}(\Gm_{\mathrm{aff}})
\;+\; 2\big\langle (\Gm_{\mathrm{aff}}\m Q-\m L)\m{N},\,\m{Z}\big\rangle
\;+\; \tr\big(\m{H}\m{Z}^{\!\top}\m{Z}\big), \qquad \m{H} \;:=\; \m{N}^{\!\top}\m Q\m{N}.
\]
\end{lemma}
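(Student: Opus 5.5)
The identity is an exact second-order expansion of a quadratic function, so I would prove it by direct substitution into the general cost \eqref{eq:generalcost}. Abbreviate the displacement as $\m W:=\m Z\m N^{\!\top}$, so that $\Gm=\Gm_{\mathrm{aff}}+\m W$. Then expand $\tr(\m Q\Gm^{\!\top}\Gm)$ using bilinearity of $(\m X,\m Y)\mapsto\tr(\m Q\m X^{\!\top}\m Y)$. This gives three pieces: $\tr(\m Q\Gm_{\mathrm{aff}}^{\!\top}\Gm_{\mathrm{aff}})$, the cross terms $\tr(\m Q\Gm_{\mathrm{aff}}^{\!\top}\m W)+\tr(\m Q\m W^{\!\top}\Gm_{\mathrm{aff}})$, and $\tr(\m Q\m W^{\!\top}\m W)$. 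Likewise $-2\tr(\m L^{\!\top}\Gm)$ splits as $-2\tr(\m L^{\!\top}\Gm_{\mathrm{aff}})-2\tr(\m L^{\!\top}\m W)$. The terms that do not depend on $\m W$, together with $c_0$, reassemble to $\mathrm{cost}(\Gm_{\mathrm{aff}})$.

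For the linear terms, I first use symmetry of $\m Q$. Transposing inside the trace gives $\tr(\m Q\m W^{\!\top}\Gm_{\mathrm{aff}})=\tr(\Gm_{\mathrm{aff}}^{\!\top}\m W\m Q)=\tr(\m Q\Gm_{\mathrm{aff}}^{\!\top}\m W)$, so the two cross terms are equal. By cyclicity, each equals $\langle\Gm_{\mathrm{aff}}\m Q,\m W\rangle$ in the Frobenius inner product. The linear terms therefore combine to $2\langle\Gm_{\mathrm{aff}}\m Q-\m L,\m Z\m N^{\!\top}\rangle$. Moving $\m N^{\!\top}$ across the inner product, $\langle\m X,\m Z\m N^{\!\top}\rangle=\tr(\m X^{\!\top}\m Z\m N^{\!\top})=\tr(\m N^{\!\top}\m X^{\!\top}\m Z)=\langle\m X\m N,\m Z\rangle$, which gives the stated linear term $2\langle(\Gm_{\mathrm{aff}}\m Q-\m L)\m N,\m Z\rangle$.

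For the quadratic term, cyclicity gives $\tr(\m Q\m N\m Z^{\!\top}\m Z\m N^{\!\top})=\tr(\m N^{\!\top}\m Q\m N\,\m Z^{\!\top}\m Z)=\tr(\m H\m Z^{\!\top}\m Z)$. Since the cost is a polynomial of degree two in $\Gm$, the expansion has no higher-order terms, which is why the identity holds with no remainder.

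Nothing here is a real obstacle. The only care needed is the symmetry of $\m Q$, which the paper assumes for \eqref{eq:generalcost}; it holds for $\m G^{(k)}$ and $\tfrac12\m G_d$ by their definition as integrals of outer products. Without symmetry, $\m Q$ would have to be replaced by $\tfrac12(\m Q+\m Q^{\!\top})$. Note also that the identity does not use the orthonormality $\m N^{\!\top}\m N=\m I$ from Lemma~\ref{lem:affchart}; it uses only the affine parametrization.
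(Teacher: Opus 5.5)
Your proposal is correct and matches the paper's proof essentially line for line: the same substitution $\m W=\m Z\m N^{\!\top}$, the same use of the symmetry of $\m Q$ to merge the two cross terms, and the same cyclicity moves that carry $\m N$ onto the linear and quadratic terms. The only difference is that the paper's proof also appends the bounds $\lambda_{\min}(\m H)\|\m Z\|_F^2\le\tr(\m H\m Z^{\!\top}\m Z)\le\lambda_{\max}(\m H)\|\m Z\|_F^2$. These are not part of the stated identity, but they are used later to define the curvature constants $\alpha,\beta$.
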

\begin{proof}[Proof of Lemma~\ref{lem:quadexact}]
Let $\m W:=\m Z\m N^{\!\top}$, so $\Gm=\Gm_{\mathrm{aff}}+\m W$; $\m Q$ is symmetric. Then
\[
\tr(\m Q\Gm^{\!\top}\Gm)=\tr(\m Q\Gm_{\mathrm{aff}}^{\!\top}\Gm_{\mathrm{aff}})
+2\tr(\m Q\Gm_{\mathrm{aff}}^{\!\top}\m W)+\tr(\m Q\m W^{\!\top}\m W),
\]
since $\tr(\m Q\m W^{\!\top}\Gm_{\mathrm{aff}})=\tr(\Gm_{\mathrm{aff}}^{\!\top}\m W\m Q)=\tr(\m Q\Gm_{\mathrm{aff}}^{\!\top}\m W)$. By cyclicity of the trace,
\[
\tr(\m Q\Gm_{\mathrm{aff}}^{\!\top}\m Z\m N^{\!\top})=\langle\Gm_{\mathrm{aff}}\m Q\m N,\m Z\rangle,\qquad
\tr(\m Q\m N\m Z^{\!\top}\m Z\m N^{\!\top})=\tr(\m H\m Z^{\!\top}\m Z),\qquad
\tr(\m L^{\!\top}\m W)=\langle\m L\m N,\m Z\rangle .
\]
Substituting into \eqref{eq:generalcost} and collecting the terms without $\m Z$
into $\mathrm{cost}(\Gm_{\mathrm{aff}})$ gives the identity. Finally, with $\m Z_i$
the rows of $\m Z$,
\[
\lambda_{\min}(\m H)\|\m Z\|_F^2\le\tr(\m H\m Z^{\!\top}\m Z)=\textstyle\sum_{i}\m Z_i\m H\m Z_i^{\!\top}\le\lambda_{\max}(\m H)\|\m Z\|_F^2 . \qedhere
\]
\end{proof}

\begin{lemma}
\label{lem:hpos}
For $\m Q\succeq0$ and a full-column-rank $\m N$,
$\m H=\m N^\top\m Q\m N\succ0$ if and only if
$\ker\m Q\cap\operatorname{range}(\m N)=\{0\}$.
For the derivative objective in \eqref{eq:Pinf} with the endpoint data
\eqref{eq:bc} and $\m N$ the chart basis of Lemma~\ref{lem:affchart}, $\m H\succ0$ for $d\ge\max\{k,2(\ell+1)\}$ under Assumption~\ref{assum:curv}.
\end{lemma}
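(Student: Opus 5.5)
The plan has two parts: a direct quadratic-form argument for the equivalence, and then an identification of $\ker\m Q$ and $\operatorname{range}(\m N)$ as explicit polynomial subspaces whose intersection is trivial.

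\textbf{Equivalence.} Since $\m Q\succeq0$, it has a symmetric square root $\m Q^{1/2}$. For $\m v\in\R^{d_f}$,
\[
\m v^{\!\top}\m H\m v=(\m N\m v)^{\!\top}\m Q(\m N\m v)=\|\m Q^{1/2}\m N\m v\|^2\ge0,
\]
so $\m H\succeq0$. This form vanishes exactly when $\m N\m v\in\ker\m Q^{1/2}=\ker\m Q$. Because $\m N$ has full column rank, $\m v\ne\m 0$ implies $\m N\m v\ne\m 0$. Hence $\m H$ fails to be positive definite if and only if some nonzero vector lies in $\ker\m Q\cap\operatorname{range}(\m N)$, which is the first claim.

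\textbf{Polynomial reading of the two subspaces.} Identify a row vector $\m g\in\R^{d+1}$ with the scalar polynomial $p(t)=\m g^{\!\top}\m b_d(t)$; since $\m b_d$ is a basis, this is a linear bijection onto polynomials of degree at most $d$. We need $d\ge k$ so that $\m b_d^{(k)}$ and $\m Q=\m G^{(k)}$ are defined.
\begin{itemize}
\item By the definition of $\m Q$, $\m g^{\!\top}\m Q\m g=\int_0^1(p^{(k)})^2\,dt$. For $\m Q\succeq0$, $\m g\in\ker\m Q$ iff this integral is zero. Since $p^{(k)}$ is a polynomial, that happens iff $p^{(k)}\equiv0$, so $\ker\m Q$ is exactly the polynomials of degree below $k$.
\item By \eqref{eq:bc}, $\m A^{\!\top}\m g=\m 0$ says $p^{(s)}(0)=p^{(s)}(1)=0$ for $s=0,\dots,\ell$. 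So $\operatorname{range}(\m N)=\ker\m A^{\!\top}$ is exactly the polynomials with a root of multiplicity at least $\ell+1$ at each endpoint.
\end{itemize}

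\textbf{Trivial intersection.} A nonzero polynomial in both subspaces would be divisible by $t^{\ell+1}(1-t)^{\ell+1}$, so its degree would be at least $2(\ell+1)$. By Assumption~\ref{assum:curv}, $2(\ell+1)\ge k$, which contradicts having degree below $k$. So the intersection is $\{0\}$, and the first part gives $\m H\succ0$.

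\textbf{Hypotheses of the chart.} Lemma~\ref{lem:affchart} requires $\m A$ to have full column rank. This is the one step that needs care, and it is where $d\ge2(\ell+1)$ enters. The map $p\mapsto\big(p^{(s)}(0),p^{(s)}(1)\big)_{s\le\ell}$ is onto $\R^{2(\ell+1)}$ by Hermite interpolation: the unique Hermite interpolant has degree at most $2\ell+1\le d$. This map is $\m g\mapsto\m A^{\!\top}\m g$, so $\m A^{\!\top}$ is surjective and $\m A$ has full column rank. Then $\m N$ exists with $d_f=d+1-2(\ell+1)\ge1$ orthonormal columns, so it has full column rank.

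The cost $\tr(\m H\m Z^{\!\top}\m Z)$ sums this scalar form over the $n$ rows of $\m Z$, so positivity per row is all Lemma~\ref{lem:quadexact} needs.
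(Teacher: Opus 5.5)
Your proof is correct and follows essentially the same route as the paper. It uses the quadratic form $\m v^{\!\top}\m H\m v=\|\m Q^{1/2}\m N\m v\|^2$ for the equivalence, then identifies $\ker\m Q$ with the polynomials of degree below $k$ and $\ker\m A^{\!\top}$ with the multiples of $[t(1-t)]^{\ell+1}$ (the paper's Lemma~\ref{lem:kernels}), and concludes from $2(\ell+1)\ge k$; your Hermite-interpolation check that $\m A$ has full column rank matches the paper's remark after Assumption~\ref{assum:structural}.
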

\begin{proof}[Proof of Lemma~\ref{lem:hpos}]
For $\m w:=\m N\m v$,
\begin{equation}\label{eq:l5-quad}
\m v^{\!\top}\m H\m v=\m w^{\!\top}\m Q\m w=\|\m Q^{1/2}\m w\|^2\ge0 ,
\end{equation}
which vanishes if and only if $\m Q\m w=\m 0$. If $\ker\m Q\cap\operatorname{range}(\m N)=\{0\}$,
then \eqref{eq:l5-quad} vanishes only for $\m N\m v=\m 0$, that is, $\m v=\m 0$ ($\m N$
has full column rank). Conversely, $\m 0\ne\m N\m v\in\ker\m Q$ gives
$\m v^{\!\top}\m H\m v=0$ with $\m v\ne\m 0$.

For tracking, $\m Q=\frac12\m G_d$ and $\m w^{\!\top}\m G_d\m w=\int_0^1p_{\m w}^2\,dt>0$ for
$\m w\ne\m 0$, with $p_{\m w}$ the polynomial with coefficients $\m w$. For
planning, $\m Q=\m G^{(k)}$; let $\m w\in\ker\m G^{(k)}\cap\ker\m A^{\!\top}$. By
Lemma~\ref{lem:kernels}, $\deg p_{\m w}<k$ and $p_{\m w}=[t(1-t)]^{\ell+1}q$. If
$q\ne0$, then $\deg p_{\m w}\ge2(\ell+1)\ge k$ by Assumption~\ref{assum:curv}, a
contradiction; so $\m w=\m 0$.
\end{proof}

Hence the constants
\begin{equation}
\label{eq:albeta}
\alpha \;:=\; 2\lambda_{\min}(\m{H}) \;>\;0, \qquad
\beta \;:=\; 2\lambda_{\max}(\m{H}) \;<\;\infty
\end{equation}
are well defined, and by Lemma~\ref{lem:quadexact} they bound the
curvature on all of $\affset$, with no remainder term. For tracking with orthonormal $\m{N}$, $\lambda_{\min}(\m{H})\ge\frac12\lambda_{\min}(\m{G}_d)>0$.

\paragraph{Compactness and existence on each component.}
\label{sec:compactexist}
Lemma~\ref{lem:ellipsoid} gives the compact sublevel sets used in the proof of
Proposition~\ref{prop:fixeddegree}. Lemmas~\ref{lem:components}
and~\ref{lem:compexist} are not needed for the main results; they record that the
feasible set splits into finitely many connected components, each with its own
cheapest curve, which is why GlobMP is run from several seeds in
\S\ref{sec:experiments}.

\begin{lemma}
\label{lem:ellipsoid}
For any $\bar{\Gm}\in\affset$, writing $\Gm=\Gm_{\mathrm{aff}}+\m{Z}\m{N}^{\!\top}$ in the
chart of Lemma~\ref{lem:affchart},
$\mathcal E(\bar{\Gm}) := \{\Gm\in\affset : \mathrm{cost}(\Gm)\le\mathrm{cost}(\bar{\Gm})\}$
is, under $\m{H}\succ0$ (Lemma~\ref{lem:hpos}), an ellipsoid in
$\m{Z}$-coordinates centred at the unconstrained minimizer
$\m{Z}^\star=-(\Gm_{\mathrm{aff}}\m Q-\m L)\m{N}\m{H}^{-1}$ of $\mathrm{cost}$ on
$\affset$, hence a compact convex subset of $\affset$.
\end{lemma}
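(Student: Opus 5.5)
The plan is to use the exact quadratic expansion of Lemma~\ref{lem:quadexact} and complete the square. By Lemma~\ref{lem:affchart}, every $\Gm\in\affset$ is $\Gm_{\mathrm{aff}}+\m Z\m N^{\!\top}$ for a unique $\m Z$. In these coordinates
\[
\phi(\m Z):=\mathrm{cost}(\Gm_{\mathrm{aff}}+\m Z\m N^{\!\top})
=c+2\langle\m G,\m Z\rangle+\tr(\m H\m Z^{\!\top}\m Z),
\]
with $\m G:=(\Gm_{\mathrm{aff}}\m Q-\m L)\m N$ and $c:=\mathrm{cost}(\Gm_{\mathrm{aff}})$. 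Since $\m H\succ0$ by Lemma~\ref{lem:hpos}, $\m H^{-1}$ exists, and we define $\m Z^\star:=-\m G\m H^{-1}$.

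We then verify the identity
\[
\phi(\m Z)=\phi(\m Z^\star)+\tr\big(\m H(\m Z-\m Z^\star)^{\!\top}(\m Z-\m Z^\star)\big).
\]
To check it, expand the right-hand side. The cross term is $-2\tr(\m H\m Z^{\star\top}\m Z)$. Using the symmetry of $\m H$ and $\m Z^\star\m H=-\m G$, this equals $2\langle\m G,\m Z\rangle$. The constant terms collect to $c$.

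This identity gives three conclusions. First, $\m Z^\star$ is the unique unconstrained minimizer of $\phi$, because the remainder term is $\ge\lambda_{\min}(\m H)\|\m Z-\m Z^\star\|_F^2>0$ for $\m Z\ne\m Z^\star$. Second, with $\bar{\m Z}$ the chart coordinate of $\bar\Gm$ and $\rho:=\phi(\bar{\m Z})-\phi(\m Z^\star)\ge0$, the sublevel set is
\[
\{\m Z:\tr(\m H(\m Z-\m Z^\star)^{\!\top}(\m Z-\m Z^\star))\le\rho\}.
\]
Writing the rows $\m Z_i$, this is $\sum_i(\m Z_i-\m Z^\star_i)\m H(\m Z_i-\m Z^\star_i)^{\!\top}\le\rho$, which is an ellipsoid in $\R^{n\times d_f}$ with positive definite shape $\m I_n\otimes\m H$, centred at $\m Z^\star$. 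It is degenerate (a single point) when $\rho=0$. Third, it is closed as a sublevel set of a continuous function, and it is bounded inside the ball of radius $(\rho/\lambda_{\min}(\m H))^{1/2}$, so it is compact. It is convex as a sublevel set of a convex quadratic.

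Compactness and convexity transfer to $\mathcal E(\bar\Gm)\subset\affset$ because the chart is an affine isometry (Lemma~\ref{lem:affchart}). The only step that needs care is the completing-the-square sign bookkeeping with matrix-valued $\m Z$ (the right multiplication by $\m H^{-1}$); otherwise the argument is routine.
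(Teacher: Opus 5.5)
Your proposal is correct and is essentially the paper's proof: complete the square in the exact expansion of Lemma~\ref{lem:quadexact} with $\m Z^\star=-\m G\m H^{-1}$, identify the sublevel set as $\{\tr(\m H\m Y^{\!\top}\m Y)\le\rho\}$ for $\m Y=\m Z-\m Z^\star$ (your $\rho$ is the paper's $\varsigma$), bound it by $\|\m Y\|_F^2\le\rho/\lambda_{\min}(\m H)$, and transfer compactness and convexity through the affine isometry of Lemma~\ref{lem:affchart}. The differences are cosmetic. You write the constant as $\phi(\m Z^\star)$ rather than $\mathrm{cost}(\Gm_{\mathrm{aff}})-\tr(\m H\m Z^{\star\top}\m Z^\star)$, and you add remarks on the uniqueness of the minimizer and the degenerate case $\rho=0$.
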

\begin{proof}[Proof of Lemma~\ref{lem:ellipsoid}]
Let $c_{\mathrm{a}}:=\mathrm{cost}(\Gm_{\mathrm{aff}})$ and $\m V:=(\Gm_{\mathrm{aff}}\m Q-\m L)\m N$, so
$\m Z^\star=-\m V\m H^{-1}$ and $\m H\m Z^{\star\top}=-\m V^{\!\top}$. Completing the square in
Lemma~\ref{lem:quadexact},
\begin{equation}\label{eq:l6-square}
\mathrm{cost}(\Gm_{\mathrm{aff}}+\m{Z}\m{N}^{\!\top})=c_{\mathrm{a}}
-\tr(\m H\m Z^{\star\top}\m Z^\star)+\tr\big(\m{H}\m Y^{\!\top}\m Y\big),\qquad \m Y:=\m Z-\m Z^\star,
\end{equation}
because $\tr(\m H\m Y^{\!\top}\m Y)=\tr(\m H\m Z^{\!\top}\m Z)+2\langle\m V,\m Z\rangle+\tr(\m H\m Z^{\star\top}\m Z^\star)$.
So $\m Z^\star$ is the unconstrained minimizer, and
\[
\mathcal E(\bar\Gm)=\big\{\Gm_{\mathrm{aff}}+(\m Z^\star+\m Y)\m N^{\!\top}:\ \tr(\m H\m Y^{\!\top}\m Y)\le\varsigma\big\},\qquad
\varsigma:=\tr\big(\m H(\bar{\m Z}-\m Z^\star)^{\!\top}(\bar{\m Z}-\m Z^\star)\big)\ge0,
\]
with $\bar{\m Z}$ the coordinates of $\bar\Gm$. As $\m H\succ0$, the set of such
$\m Y$ is an ellipsoid: closed, convex, and bounded by
$\|\m Y\|_F^2\le\varsigma/\lambda_{\min}(\m H)$ (Lemma~\ref{lem:quadexact}). The
chart is an affine isometry (Lemma~\ref{lem:affchart}), which preserves these
properties.
\end{proof}

\begin{lemma}
\label{lem:components}
Suppose every coefficient map is continuous and semialgebraic, that is, described
by finitely many polynomial equations and inequalities (polynomial maps qualify).
Consider the set of feasible curves,
$\affset\cap\bigcap_j\g{\pi}_j^{-1}(\mathcal K_j)$, identified with $\feasset$
through the chart of Lemma~\ref{lem:affchart}. Then:
\begin{enumerate}[label=(\roman*),nosep]
\item \emph{Finitely many pieces.} $\feasset$ splits into finitely many separate
pieces $\feasset_1,\dots,\feasset_{\bar c}$, its connected components: two
feasible curves lie in the same piece exactly when one can be deformed
continuously into the other without leaving $\feasset$.
\item \emph{Each piece is closed.} Each $\feasset_c$ is closed in
$\R^{n\times(d+1)}$, and also open relative to $\feasset$ (clopen in
$\feasset$).
\item \emph{The pieces are a positive distance apart.} If $\mathcal U$ is a nonempty
compact subset of one piece $\feasset_c$, then every feasible curve outside
that piece stays at distance at least
$d_0(\mathcal U):=\dist(\mathcal U,\feasset\setminus\feasset_c)>0$ from
$\mathcal U$; we set $d_0(\mathcal U)=+\infty$ when $\feasset$ has only the one
piece.
\end{enumerate}
\end{lemma}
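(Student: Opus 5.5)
The three parts of Lemma~\ref{lem:components} all follow from semialgebraic geometry once $\feasset$ is known to be a closed semialgebraic set.

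\textbf{$\feasset$ is closed and semialgebraic.} In the coordinates $\m Z$ of Lemma~\ref{lem:affchart}, $\feasset$ is the preimage of $\mathcal K$ under the composed map $\m Z\mapsto\g\pi(\Gm_{\mathrm{aff}}+\m Z\m N^{\!\top})$. Each block is handled as follows.
\begin{itemize}
\item Each $\mathcal K_j$ is closed and semialgebraic by the properties of the cone in Appendix~\ref{app:coneprops}.
\item Budget blocks use $\R_+$, which is also closed and semialgebraic.
\item Each coefficient map is continuous and semialgebraic by hypothesis, and composing with the affine chart preserves both properties.
\end{itemize}
Preimages of closed sets under continuous maps are closed, and preimages of semialgebraic sets under semialgebraic maps are semialgebraic (Tarski--Seidenberg). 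A finite intersection of such sets is again closed and semialgebraic, so $\feasset$ is both.

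\textbf{Part (i).} I would invoke the standard theorem that a semialgebraic set has finitely many connected components, each itself semialgebraic. Semialgebraic sets are locally contractible, so each component is path-connected. Hence "same component" coincides with "joined by a continuous deformation inside $\feasset$", which is the stated characterization.

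\textbf{Part (ii).} Components of any topological space are closed in that space. Since $\feasset$ is closed in the ambient space, each $\feasset_c$ is closed in $\R^{n\times(d+1)}$. Openness relative to $\feasset$ follows from finiteness: $\feasset\setminus\feasset_c$ is a finite union of relatively closed sets, hence relatively closed.

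\textbf{Part (iii).} The set $\feasset\setminus\feasset_c$ is a finite union of closed sets, so it is closed in the ambient space, and it is disjoint from $\mathcal U\subseteq\feasset_c$. The distance from a compact set to a disjoint closed set is positive. If that complement is empty, the convention $d_0=+\infty$ applies.

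\textbf{Main obstacle.} The only delicate point is the semialgebraicity of $\mathcal K_j$ and its compatibility with the composed coefficient maps. I would cite the quantifier-elimination remark already given in Appendix~\ref{app:coneprops}, which settles it.
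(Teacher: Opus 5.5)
Your proposal is correct and follows the paper's proof essentially step for step. Closedness and semialgebraicity of $\feasset$ come from preimages and Tarski--Seidenberg, finiteness and path-connectedness of the components from standard semialgebraic geometry, (ii) from closedness plus finiteness, and (iii) from the positive distance between a compact set and a disjoint closed set. The only small gap is that you work in the free coordinates $\m Z$, so closedness in $\R^{n\times(d+1)}$ still needs the remark that the chart of Lemma~\ref{lem:affchart} is an affine isometry onto the closed subspace $\affset$; the paper avoids this by working directly with $\affset\cap\bigcap_j\g\pi_j^{-1}(\mathcal K_j)$.
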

\begin{proof}[Proof of Lemma~\ref{lem:components}]
$\feasset=\affset\cap\bigcap_j\g\pi_j^{-1}(\mathcal K_j)$ is closed, as a finite
intersection of preimages of closed sets under continuous maps. It is
semialgebraic: each $\g\pi_j$ and each $\mathcal K_j$ is (the quantifier
``for every $t\in[0,1]$'' is eliminated by the Tarski--Seidenberg theorem), and
preimages and finite intersections preserve this \citep{bochnak1998real}.

(i) A semialgebraic set has finitely many connected components, each path
connected \citep{bochnak1998real}.

(ii) A connected component is closed in its set, hence in $\R^{n\times(d+1)}$
since $\feasset$ is closed. Its complement in $\feasset$ is a finite union of
closed components, so $\feasset_c$ is also open in $\feasset$.

(iii) $\mathcal W:=\feasset\setminus\feasset_c$ is closed by (ii) and disjoint from
$\mathcal U$. If $\mathcal W\ne\emptyset$, then $x\mapsto\dist(x,\mathcal W)$ is
continuous and positive on the compact set $\mathcal U$, so its minimum
$d_0(\mathcal U)$ is positive.
\end{proof}

\begin{lemma}
\label{lem:compexist}
Assume $\m H\succ0$ and continuous coefficient maps. Then every connected
component $\feasset_c$ of the feasible set contains a cheapest curve: there is a
$\Gm^\star_c\in\feasset_c$ with $\mathrm{cost}(\Gm^\star_c)\le\mathrm{cost}(\Gm)$
for every $\Gm\in\feasset_c$ (it need not be unique; we fix one).
\end{lemma}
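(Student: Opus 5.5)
The plan is to minimize the cost over one component and argue by compactness plus continuity, using the existence of a nonempty sublevel set inside the component. Fix any curve $\bar\Gm\in\feasset_c$ (components are nonempty by definition) and restrict attention to
\[
\mathcal S:=\feasset_c\cap\mathcal E(\bar\Gm),
\]
the feasible curves of that component whose cost does not exceed $\mathrm{cost}(\bar\Gm)$. Every curve of $\feasset_c$ outside $\mathcal S$ is more expensive than $\bar\Gm\in\mathcal S$. It therefore suffices to show that $\mathrm{cost}$ attains its minimum on $\mathcal S$; that minimizer is then cheapest on all of $\feasset_c$.

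For compactness, Lemma~\ref{lem:ellipsoid} (which uses $\m H\succ0$) makes $\mathcal E(\bar\Gm)$ a compact ellipsoid in the chart coordinates. For closedness of $\feasset_c$, I will argue directly rather than invoke Lemma~\ref{lem:components}(ii), since that lemma assumed semialgebraic maps and here only continuity is given.
\begin{itemize}
\item The feasible set $\feasset=\affset\cap\bigcap_j\g\pi_j^{-1}(\mathcal K_j)$ is closed, because each $\g\pi_j$ is continuous, each $\mathcal K_j$ is closed (Appendix~\ref{app:coneprops}), and $\affset$ is an affine subspace.
\item A connected component of a topological space is closed in that space. Hence $\feasset_c$ is closed in $\feasset$, and therefore closed in $\R^{n\times(d+1)}$.
\end{itemize}
So $\mathcal S$ is the intersection of a closed set with a compact set, hence compact, and it is nonempty since it contains $\bar\Gm$.

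The cost \eqref{eq:generalcost} is a polynomial in $\Gm$, hence continuous. By Weierstrass it attains its minimum on the nonempty compact set $\mathcal S$ at some $\Gm^\star_c$. Comparing with $\bar\Gm$ as in the first paragraph gives $\mathrm{cost}(\Gm^\star_c)\le\mathrm{cost}(\Gm)$ for all $\Gm\in\feasset_c$. The chart of Lemma~\ref{lem:affchart} is an isometry, so working in free or full coordinates does not matter.

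The only delicate point is the closedness of the component without semialgebraicity. The general topological fact that components are closed (the closure of a connected set is connected) settles it. Uniqueness is not claimed, so no convexity is needed.
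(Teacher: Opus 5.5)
Your proof is correct and takes essentially the same route as the paper. Both intersect the component with the compact sublevel ellipsoid $\mathcal E(\bar\Gm)$ of Lemma~\ref{lem:ellipsoid}, use that $\feasset$ and hence each of its components is closed (your general-topology justification, which needs no semialgebraicity, is exactly what the paper's brief remark relies on), apply Weierstrass, and compare with $\bar\Gm$.
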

\begin{proof}[Proof of Lemma~\ref{lem:compexist}]
Fix $\bar\Gm\in\feasset_c$ and let $\mathcal Y:=\feasset_c\cap\mathcal E(\bar\Gm)$, with
$\mathcal E(\bar\Gm)$ from Lemma~\ref{lem:ellipsoid}. $\mathcal Y$ contains $\bar\Gm$; it is
closed ($\feasset$ is closed and so is each of its components) and bounded
($\mathcal E(\bar\Gm)$ is, as $\m H\succ0$). So the continuous cost attains its minimum on
$\mathcal Y$ at some $\Gm^\star_c$, and every $\Gm\in\feasset_c\setminus\mathcal Y$ has
$\mathrm{cost}(\Gm)>\mathrm{cost}(\bar\Gm)\ge\mathrm{cost}(\Gm^\star_c)$.
\end{proof}

\paragraph{Cumulative constraints and derivative-cost curvature.}
\label{app:cumulative-theory}
The finite-dimensional reduction of \eqref{eq:Pinf} also admits scalar budget blocks. Write $\m z=\operatorname{vec}(\Gm_\free)$
and let $\pi_j(\m z)$ denote a budget block \eqref{eq:intblock}. Its cone is $\R_+$,
so its contribution to the restoration merit and gradient is
\[
V_j(\m z)=\min\{\pi_j(\m z),0\}^2,\qquad
\nabla V_j(\m z)=2\min\{\pi_j(\m z),0\}\nabla\pi_j(\m z).
\]
These formulas do not require the integrand to be polynomial in time.
For a general integrand, differentiation under the integral is justified,
for example, if it is measurable in time and, for almost every $t$, $C^2$ in the trajectory
arguments, with its value and first two composed
$\m z$-derivatives dominated by integrable functions on each compact set
of free coordinates.

For a single segment of degree $d\ge k$ with only start and goal fixed, the objective of
\eqref{eq:Pinf} has $\m H\succ0$ if and only if $k\le2$ (the case $\ell=0$ of
Assumption~\ref{assum:curv}): a free perturbation has zero derivative cost exactly when its degree is
below $k$, and among those vanishing at $t=0$ and $t=1$ the first nonzero one is $t(1-t)$, of degree $2$. When the cost has free zero-curvature directions, compactness must be checked separately. Additional endpoint derivatives
can remove these directions as in Lemma~\ref{lem:kernels}; adding them
changes the boundary data of \eqref{eq:Pinf}.

\subsubsection{The structural assumption}
\label{app:proofs-assumptions}
\setcounter{assumption}{3}%
\begin{assumption}[finite-dimensional representation]
\label{assum:structural}
$d\ge k$ and $2(\ell+1)\le d$, and every pointwise coefficient map and
scalar cumulative map is $C^2$ on an open neighbourhood of the regions used
in the analysis.
\end{assumption}
The quadratic obstacle and separation blocks, and budget blocks whose integrand meets the
condition of Appendix~\ref{app:cumulative-theory}, satisfy the smoothness part.

The degree condition makes $\m A$ of full column rank,
since its columns are independent Hermite interpolation conditions
\citep{stoer2002introduction}; $\m N$ is an orthonormal basis of
$\ker\m A^\top$, and $\m Q\succeq0$ always. Under
Assumption~\ref{assum:structural}, Assumption~\ref{assum:curv} is
equivalent to $\m H\succ0$ for the derivative objective, with $\m H$ as in Lemma~\ref{lem:quadexact}
(Lemmas~\ref{lem:hpos} and~\ref{lem:kernels}).

Write $\mathcal P_r$ for scalar polynomials of degree at most $r$, with
$\mathcal P_{r}=\{0\}$ for $r<0$, and $\mathrm{coeff}(\cdot)$ for the Bernstein-$d$
coefficient vector of a polynomial.
\begin{lemma}[kernels]
\label{lem:kernels}
Let $0\le k\le d$, let $\m A$ be the endpoint matrix of \eqref{eq:bc}, and read
each coefficient vector as the polynomial of degree at most $d$ it represents.
The vectors with zero $k$th-derivative cost are exactly the polynomials of
degree below $k$, and the vectors with zero endpoint data are exactly the
multiples of $[t(1-t)]^{\ell+1}$:
\[
\ker\m G^{(k)}=\mathrm{coeff}(\mathcal P_{k-1}),\qquad
\ker\m A^\top=
\mathrm{coeff}\big([t(1-t)]^{\ell+1}\mathcal P_{d-2(\ell+1)}\big).
\]
Under Assumption~\ref{assum:structural}, the second space has dimension
$d_f=d+1-2(\ell+1)$.
\end{lemma}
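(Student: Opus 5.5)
We prove the two kernel identities separately, working with the polynomial $p_{\m w}(t)=\m w^{\!\top}\m b_d(t)$ represented by a coefficient vector $\m w\in\R^{d+1}$. Since $\m b_d$ is a basis of $\mathcal P_d$, the map $\m w\mapsto p_{\m w}$ is a linear isomorphism. Each statement about $\m w$ is therefore a statement about $p_{\m w}$, and membership in $\mathrm{coeff}(\cdot)$ simply reads off the corresponding polynomial subspace.

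\emph{First kernel.} Recall that $\m G^{(k)}=\int_0^1\m b_d^{(k)}\m b_d^{(k)\top}dt$, which is the matrix $\m D_k\m G_{d-k}\m D_k^{\!\top}$. For any $\m w$ this gives
\[
\m w^{\!\top}\m G^{(k)}\m w=\int_0^1\big(p_{\m w}^{(k)}(t)\big)^2dt .
\]
Because $\m G^{(k)}\succeq0$, we have $\m w\in\ker\m G^{(k)}$ if and only if this quadratic form vanishes. The integrand is the square of a polynomial, so the form vanishes exactly when $p_{\m w}^{(k)}\equiv0$ on $[0,1]$, hence identically. That holds exactly when $\deg p_{\m w}<k$, i.e.\ $p_{\m w}\in\mathcal P_{k-1}$. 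For $k=0$ we read $\mathcal P_{-1}=\{0\}$, and indeed $\m G_d$ is then positive definite.

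\emph{Second kernel.} By \eqref{eq:bc}, $\m A^{\!\top}\m w$ lists the values $p_{\m w}^{(s)}(0)$ and $p_{\m w}^{(s)}(1)$ for $s=0,\dots,\ell$, because $\m w^{\!\top}\m b_d^{(s)}(t)=p_{\m w}^{(s)}(t)$. So $\m A^{\!\top}\m w=0$ exactly when $p_{\m w}$ has a root of multiplicity at least $\ell+1$ at both $0$ and $1$. By the factor theorem applied repeatedly, this holds exactly when $t^{\ell+1}$ and $(1-t)^{\ell+1}$ both divide $p_{\m w}$. These two factors are coprime, so it is equivalent to $[t(1-t)]^{\ell+1}\mid p_{\m w}$, that is, $p_{\m w}=[t(1-t)]^{\ell+1}q$. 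The degree bound $\deg p_{\m w}\le d$ forces $q\in\mathcal P_{d-2(\ell+1)}$. Conversely, every such product has degree at most $d$ and satisfies the endpoint conditions. If $2(\ell+1)>d$, only $q=0$ is allowed, which matches the convention $\mathcal P_r=\{0\}$ for $r<0$.

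\emph{Dimension.} Under Assumption~\ref{assum:structural} we have $d-2(\ell+1)\ge0$. The map $q\mapsto[t(1-t)]^{\ell+1}q$ is injective, since multiplication by a nonzero polynomial has no kernel. So the second kernel has dimension $\dim\mathcal P_{d-2(\ell+1)}=d+1-2(\ell+1)=d_f$. This agrees with rank--nullity, because $\m A$ has full column rank $2(\ell+1)$ by Hermite interpolation.

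There is no real obstacle in this argument. The only point needing care is the identification $\m w^{\!\top}\m b_d^{(s)}=p_{\m w}^{(s)}$, which holds because differentiation is linear, together with the edge conventions for $k=0$ and for negative $r$.
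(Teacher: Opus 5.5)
Your proof is correct and follows the paper's argument step for step: you identify $\m w^{\!\top}\m G^{(k)}\m w$ with $\int_0^1 (p_{\m w}^{(k)})^2\,dt$ and use positive semidefiniteness for the first kernel, then use divisibility by the coprime factors $t^{\ell+1}$ and $(1-t)^{\ell+1}$, with injectivity of $q\mapsto[t(1-t)]^{\ell+1}q$, for the second kernel and its dimension. Your extra remarks on the edge cases $k=0$ and $2(\ell+1)>d$, and the rank--nullity cross-check, are consistent with the paper's conventions but are not needed for the argument.
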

\begin{proof}[Proof of Lemma~\ref{lem:kernels}]
Let $p_{\g\zeta}$ be the polynomial with coefficients $\g\zeta$. Since
$\g\zeta^\top\m G^{(k)}\g\zeta=\int_0^1|p_{\g\zeta}^{(k)}|^2dt$ and $\m G^{(k)}\succeq0$,
$\g\zeta\in\ker\m G^{(k)}$ if and only if $p_{\g\zeta}^{(k)}\equiv0$, that is,
$\deg p_{\g\zeta}<k$.

$\m A^\top\g\zeta=\m 0$ means $p_{\g\zeta}^{(s)}(0)=p_{\g\zeta}^{(s)}(1)=0$ for
$s\le\ell$. By Taylor's formula at $0$ and at $1$, $t^{\ell+1}$ and $(1-t)^{\ell+1}$
divide $p_{\g\zeta}$; they are coprime, so
\[
p_{\g\zeta}=[t(1-t)]^{\ell+1}q,\qquad \deg q\le d-2(\ell+1),
\]
and conversely every such product has these zero data. The map
$q\mapsto[t(1-t)]^{\ell+1}q$ is linear and injective on $\mathcal P_{d-2(\ell+1)}$,
whose dimension is $d+1-2(\ell+1)=d_f$.
\end{proof}

\subsubsection{Step size and the cone solve}
\label{app:proofs-convergence}
\paragraph{Step size.}
\label{app:stepdisc}
A valid conservative step scale, provided the step stays in a compact
convex region $\mathcal R$, is
\[
L_{\mathrm{proj}}=\sum_j(4B_{J,j}^2+2L_{J,j}B_{\pi,j}),
\qquad \eta_{\mathrm{proj}}=1/L_{\mathrm{proj}},
\]
where $B_{J,j}=\max_{\mathcal R}\|D\g\pi_j\|_{\mathrm{op}}$,
$B_{\pi,j}=\max_{\mathcal R}\|\g{\pi}_j\|$, and $L_{J,j}$ is the
Lipschitz constant of the Jacobian
$D\g\pi_j$ of \S\ref{sec:alg1}.

On a compact convex region containing both endpoints of a restoration step,
the Lipschitz bound of Appendix~\ref{app:proof-lipJ} gives
\[
V(\m z-\eta_{\mathrm{proj}}\nabla V(\m z))
\le V(\m z)-\frac{\eta_{\mathrm{proj}}}{2}\|\nabla V(\m z)\|^2,
\]
for $0<\eta_{\mathrm{proj}}\le1/L_{\mathrm{proj}}$.
The step must be kept in the region, which compactness does not do by
itself, and constants taken over a ball around the unknown minimizer are
not computable. Algorithm~\ref{alg:proj} keeps every nonzero residual, so
omitted blocks have zero gradient and every update is a step on the full
merit
$V(\m z)=\sum_j\dist^2(\g\pi_j(\m z),\mathcal K_j)$, the merit
\eqref{eq:restoration-merit} in the coordinates $\m z$.
With exact cone projections and the region condition above, telescoping
$T_{\mathrm{proj}}$ executed steps gives
\[
\min_{0\le\nu<T_{\mathrm{proj}}}\|\nabla V(\m z^\nu)\|^2
\le \frac{2V(\m z^0)}{\eta_{\mathrm{proj}} T_{\mathrm{proj}}}.
\]
The bound covers scalar cumulative blocks and does not need $\m H\succ0$.
It gives a small merit gradient, not feasibility: for the scalar block $\pi(z)=z^2-1\in\R_+$ of one coordinate $z$,
$V(z)=(1-z^2)^2$ near zero and $V'(0)=0$ although $V(0)=1$.

\paragraph{The cone solve and its accuracy.}
\label{app:innerrate}
Algorithm~\ref{alg:pk} approximates the projection onto $\mathcal K_m$
for any degree $m\ge0$, with $m=m_j$ for block $j$. By Moreau's identity it
projects $-\g\pi/\|\g\pi\|$ onto the dual cone $\mathcal K_m^*$ with the atoms
$\m b_m(t)$ and adds $\|\g\pi\|$ times the result to $\g\pi$;
$T_{\mathrm{cone}}$ is a numerical budget.

\begin{algorithm}[t]
\caption{Approximate projection onto $\mathcal K_m$ by fully corrective
nonnegative matching pursuit}
\label{alg:pk}
\begin{algorithmic}[1]
\Require $\g\pi\in\R^{m+1}$, degree $m\ge0$, threshold $\varepsilon\ge0$,
 iteration budget $T_{\mathrm{cone}}$
\State \textbf{if} $\g\pi=\m 0$ \textbf{then return} $\m 0$
\State $\m y\gets-\g\pi/\|\g\pi\|$, $\mathcal T\gets\emptyset$,
 $\g\omega\gets\m 0$, $\m v\gets\m y$
\For{$\nu=1,\ldots,T_{\mathrm{cone}}$}
 \State $t_\nu\gets\operatorname*{arg\,max}_{t\in[0,1]}
  \langle\m v,\m b_m(t)\rangle/\|\m b_m(t)\|$
 \State \textbf{if} $\langle\m v,\m b_m(t_\nu)\rangle/\|\m b_m(t_\nu)\|
   \le\varepsilon$ \textbf{then break}
 \State $\mathcal T\gets\mathcal T\cup\{t_\nu\}$
 \State $\m w\gets\operatorname*{arg\,min}_{w_t\ge0,\,t\in\mathcal T}
    \|\sum_{t\in\mathcal T}w_t\m b_m(t)-\m y\|^2$
 \State $\g\omega\gets\sum_{t\in\mathcal T}w_t\m b_m(t)$,
    $\m v\gets\m y-\g\omega$
\EndFor
\State \Return $\g\pi+\|\g\pi\|\g\omega$
\end{algorithmic}
\end{algorithm}

The normalized correlation gives the best decrease along a single
nonnegative ray. It is not a polynomial in $t$: if
$\mu(t)=\langle\m v,\m b_m(t)\rangle$ and
$\|\m b_m(t)\|^2>0$, its interior stationary points satisfy
$2\mu'\|\m b_m\|^2-\mu(\|\m b_m\|^2)'=0$. Evaluate the correlation at these roots and at the endpoints; an identically
zero derivative means the correlation is constant.

A direct rate argument specializes the conic pursuit framework of
\citet{locatello2017cone}. Let $\g\omega_\nu$ be an exact fully corrective
iterate, $\g\omega_* =P_{\mathcal K_m^*}(\m y)$ (so $\g\omega^\star=\|\g\pi\|\g\omega_*$ in \eqref{eq:main-cone-oracle}), and measure the gap
\[
\theta_\nu=\tfrac12\|\m y-\g\omega_\nu\|^2-\tfrac12\|\m y-\g\omega_*\|^2\ \ge0,
\]
which is nonnegative because $\g\omega_*$ is the nearest point of the cone.

For the normalized atoms $\m a_t=\m b_m(t)/\|\m b_m(t)\|$,
nonnegative entries summing to one give $\|\m b_m(t)\|\le1$, so $\m u^\top\m a_t\ge1$ for the vector of ones $\m u$.
Since $\|\g\omega_*\|\le\|\m y\|=1$, it has an atomic
representation of total weight at most $\sqrt{m+1}$.

Exact nonnegative least squares gives
$\langle\m y-\g\omega_\nu,\g\omega_\nu\rangle=0$. Convexity then bounds
$\theta_\nu$ by $\sqrt{m+1}$ times the largest positive atom correlation.
Adding that unit atom with its optimal nonnegative weight decreases the
objective by half the squared correlation; full correction does at least
as well. Hence
\[
\theta_{\nu+1}\le \theta_\nu-\frac{\theta_\nu^2}{2(m+1)},\qquad
\theta_\nu\le\frac{2(m+1)}{\nu+4(m+1)},
\]
using $\theta_0\le1/2$, for exact maximization and weight solves without a
stopping threshold; it gives neither a linear rate nor finite termination.

If the correlation test stops the loop at iteration $\nu$, then
$\theta_{\nu-1}\le\sqrt{m+1}\,\varepsilon$. Strong convexity of the squared-distance
objective gives projection error at most
$\|\g\pi\|\sqrt{2\sqrt{m+1}\,\varepsilon}$ for the returned vector.
Its polynomial values are at least $-\|\g\pi\|\varepsilon$, since
that vector is $-\|\g\pi\|\m v$.
For $\varepsilon>0$ these bound the error rather than give exact
membership, and stopping on the budget gives neither bound. Proposition~\ref{prop:fixeddegree}
and Theorem~\ref{thm:globalinf} assume an exact solve of the convex problem \eqref{eq:inner-step}, which is not
this cone projection; Algorithm~\ref{alg:pk} plays no role there.

\subsection{Proof of Theorem~\ref{thm:globalinf}}
\label{app:gpa-global}

Throughout this subsection, the blocks are polynomial in $\Gm$, as the
sphere, ellipsoid, pair and affine blocks, and budgets with polynomial integrands, are, and $\m z$ stands for $\Gm_\free$. It uses the boundary chart
(Lemma~\ref{lem:affchart}) and the compact cost sublevel sets
(Lemma~\ref{lem:ellipsoid}) of Appendix~\ref{app:prelim}; the push maps
that move a curve off the obstacles (Lemma~\ref{lem:push}) serve only
Appendix~\ref{app:globalC} and the stability results of Appendix~\ref{app:scope}.

\subsubsection{Fixed degree}

Let $E^\star$ denote the optimum of \eqref{eq:P}. Its collision blocks
($j\le p$) are polynomial and convex in the position, and its steering block,
speed limit and (assumed convex) budgets have convex feasible sets, all
imposed through $\mathcal D$.

GlobMP starts from the cheapest point of a grid. Take a box $\mathcal B$ with a
global minimizer of \eqref{eq:P} in its interior, and let
\[
\m z_h\in\operatorname*{arg\,min}\big\{f(\m z):\ \m z\in\feasset\cap h\mathbb Z^{nd_f}\cap\mathcal B\big\},
\]
in the rational chart of free control points described after the grid part
below. The box may be the whole chart: $f$ has compact sublevel sets with finitely many grid points, so a
cheapest feasible grid point exists whenever one grid point is feasible. The
start of Theorem~\ref{thm:globalinf} and Proposition~\ref{prop:globalC} is this
point for the class $\mathcal S_N$, with $\mathcal B$ the whole chart and the
blocks of the problem being solved; there $d_f$ counts the free control points
per coordinate left by the boundary and junction equations.

Proposition~\ref{prop:globalC} rests on the following result at fixed degree,
which uses a constraint qualification on the control points.
Theorem~\ref{thm:globalinf} uses only its inner-set and descent steps, which do
not need Assumption~\ref{assum:rob}.

\begin{assumption}[constraint qualification]
\label{assum:rob}
For some $U>E^\star$, every $\m z\in\feasset$ with $f(\m z)\le U$ has a $\m v$ with
$\g\pi(\m z)+D\g\pi(\m z)\,\m v\in\mathrm{int}\,\mathcal K$.
\end{assumption}

Assumption~\ref{assum:rob} is Robinson's condition
\citep{robinson1976stability}: some change of the control points makes
every block strictly feasible to first order. It is needed only on this
sublevel set, and on fine splines it follows from
Assumption~\ref{assum:mfcq} (Appendix~\ref{app:globalC}, Lemma~\ref{lem:mfcq-rob}).

\begin{proposition}[GlobMP at fixed degree]
\label{prop:fixeddegree}
Assume Assumptions~\ref{assum:structural} and~\ref{assum:curv} (so that
$\m H\succ0$) and Assumption~\ref{assum:rob}, and use a step
$0<\eta<2/\beta$. Run the repair \eqref{eq:inner-step} from a feasible start
of cost at most $U$. Then every iterate is feasible, each step lowers the
cost by at least $(1/\eta-\beta/2)\|\m z^{\nu+1}-\m z^\nu\|^2$, and every limit
point of the iterates is a KKT point of \eqref{eq:P} (defined in the grid part
of the proof). If the start is the cheapest feasible grid point $\m z_h$ and
the grid spacing $h$ is small enough, then moreover the costs decrease to the
optimum, $f(\m z^\nu)\downarrow E^\star$, and every limit point is a global
minimizer of \eqref{eq:P}.
\end{proposition}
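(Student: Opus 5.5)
The proof splits into three parts: a descent and feasibility part, a KKT limit-point part, and a global part from the grid start.

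\emph{Feasibility and descent.} For feasible $\m z$, the set $\mathcal C(\m z)$ of \eqref{eq:innerset} is convex: it is the intersection of the convex set $\mathcal D$ with preimages of the convex cones $\mathcal K_j$ under affine maps. It contains $\m z$. It also lies in $\feasset$, because every collision block is convex in the position. Indeed, for each $t$ the value $g_j^{\Gm}(t)$ is a convex quadratic in $\Gm$, so its tangent at $\m z$ is pointwise below it. Hence the polynomial with coefficients $\g\pi_j(\m z)+D\g\pi_j(\m z)(\m w-\m z)$ is at most $g_j^{\m w}(t)$ for every $t$, and membership in $\mathcal K_j$ transfers.

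The iterate $\m z^+$ is the projection of $\m z-\eta\nabla f(\m z)$ onto the closed convex set $\mathcal C(\m z)$, and $\m z\in\mathcal C(\m z)$. The variational inequality at $\m w=\m z$ gives $\langle\nabla f(\m z),\m z^+-\m z\rangle\le-\|\m z^+-\m z\|^2/\eta$. By Lemma~\ref{lem:quadexact}, $f$ is quadratic with curvature at most $\beta$, so
\[
f(\m z^+)\le f(\m z)-(1/\eta-\beta/2)\|\m z^+-\m z\|^2 .
\]
This proves feasibility and monotone descent. The iterates stay in the compact ellipsoid of Lemma~\ref{lem:ellipsoid}, so the step lengths are square-summable.

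\emph{KKT limit points.} I define a KKT point of \eqref{eq:P} as one with multipliers $\g\lambda_j\in\mathcal K_j^*$, complementary to $\g\pi_j$, together with a normal vector to $\mathcal D$, that balance $\nabla f$. Let $\m z^{\nu_i}\to\bar{\m z}$. Then $\m z^{\nu_i+1}-\m z^{\nu_i}\to0$. Each $\m z^{\nu+1}$ is the exact minimizer of a convex problem whose constraints are affine-in-$\m w$ conic constraints, intersected with $\mathcal D$. Assumption~\ref{assum:rob} holds on the sublevel set, hence at $\bar{\m z}$, and by stability it persists at nearby base points. This gives Slater-type multipliers for each subproblem, bounded uniformly near $\bar{\m z}$ by Robinson's theorem. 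Passing to the limit, the linearization becomes exact at $\bar{\m z}$ since $\m w-\m z\to0$, and the subproblem KKT conditions become the KKT conditions of \eqref{eq:P}.

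\emph{Global part from the grid start.} Let $\m z^\star$ be a global minimizer inside $\mathcal B$. Robinson's condition at $\m z^\star$ yields a direction $\m v$ with $\g\pi(\m z^\star)+D\g\pi(\m z^\star)\m v\in\mathrm{int}\,\mathcal K$. Then $\m z^\star+s\m v$ is strictly feasible for small $s>0$, so a ball of radius $c s$ around it lies in $\feasset$, while its cost is at most $E^\star+O(s)$. For $h$ below that radius over $\sqrt{nd_f}$, the ball contains a grid point, so $f(\m z_h)\le E^\star+O(s)$. Letting $s\to0$ as $h\to0$ gives $f(\m z_h)\to E^\star$.

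The remaining claim, that the costs decrease exactly to $E^\star$ and all limit points are global, is where I expect the main obstacle. My plan is a separation argument. Since \eqref{eq:P} is semialgebraic, the KKT values of $f$ on the compact sublevel set $\{f\le U\}$ form a finite set, by the semialgebraic Sard theorem applied to the KKT system under Robinson's condition. Let $E'$ be the smallest KKT value above $E^\star$, or $+\infty$ if there is none. Choose $h$ small enough that $f(\m z_h)<E'$. The costs are nonincreasing and every limit point is KKT, so every limit point has value in $[E^\star,f(\m z_h)]$, which contains only the KKT value $E^\star$. The limit value is therefore $E^\star$ and every limit point is a global minimizer. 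If finiteness of KKT values proves delicate, I would fall back on the definable Sard theorem for the Lagrangian restricted to strata.
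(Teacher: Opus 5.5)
Your proof is correct. Two of its three parts coincide with the paper's. The tangent inequality for collision blocks convex in position gives $\mathcal C(\m z)\subset\feasset$, and the projection inequality tested at $\m w=\m z^\nu$ plus the $\beta$-descent bound gives the decrease. In the grid part, the Robinson direction at $\m z^\star$ yields a strictly feasible ball that the grid hits, and finiteness of critical values of the semialgebraic problem separates $E^\star$ from the next critical value, exactly as in the paper.

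The genuine difference is the KKT step.

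\begin{itemize}
\item \emph{Your route.} You take multipliers of each projection subproblem and bound them uniformly, because the Slater point $\m w_0=\bar{\m z}+\tau\m v$ coming from Assumption~\ref{assum:rob} at $\bar{\m z}$ stays strictly feasible for the linearizations at nearby base points. You then pass the multipliers to the limit.
\item \emph{The paper's route.} It uses the same $\m w_0$ differently. It shows $\m w_\theta=(1-\theta)\m w+\theta\m w_0\in\mathcal C(\m z^{\nu_i})$ for large $i$ and every $\m w\in\mathcal C(\bar{\m z})$, which is inner semicontinuity of $\mathcal C$ at $\bar{\m z}$. It then passes the projection variational inequality itself to the limit, giving $\langle\nabla f(\bar{\m z}),\m w-\bar{\m z}\rangle\ge0$ on $\mathcal C(\bar{\m z})$. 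Multipliers are produced only once, for that single limiting convex problem, by Slater.
\end{itemize}

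The paper's route needs no quantitative multiplier bound. Yours gives an explicit bounded dual sequence whose limit points are KKT multipliers. However, your slack estimate must also cover the blocks in $\mathcal D$, which are imposed exactly rather than linearized. So you need $\m w_0$ strictly inside $\mathcal D$, which the first-order expansion of Robinson's condition gives for small $\tau$; the paper checks this explicitly.

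For the finiteness step, the precise tool is the paper's. A KKT point is Clarke critical for the extended cost ($f$ on $\feasset$, $+\infty$ elsewhere), because $D\g\pi(\m z)^\top\mathcal N_{\mathcal K}(\g\pi(\m z))$ lies in the normal cone of $\feasset$ with no constraint qualification needed. A lower semicontinuous semialgebraic function continuous on its closed domain has finitely many Clarke critical values \citep[Cor.~9(ii)]{bolte2007clarke}. Robinson's condition plays no role there, and a direct Sard argument on the KKT system would have to redo that stratification.

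Finally, take $h$ small enough that also $f(\m z_h)\le U$, so that Assumption~\ref{assum:rob} covers the run started at $\m z_h$.
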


The proof has a descent part and a grid part. With uniform feasible random
starts, a run with globally minimizing limit points is obtained with probability one after finitely many starts (Corollary~\ref{cor:random}).

\paragraph{Proof of Proposition~\ref{prop:fixeddegree}, descent part.}
We first show $\mathcal C(\m z)\subset\feasset$. Let $\m z$ be feasible, $\m w\in\mathcal C(\m z)$, with
curves $\g\gamma_z,\g\gamma_w$. For a collision block $j\le p$, convexity of
$g_j(\cdot,t)$ gives
\begin{equation}\label{eq:p10-tangent}
g_j(\g\gamma_w(t),t)\;\ge\; g_j(\g\gamma_z(t),t)+\nabla_{\m x}g_j(\g\gamma_z(t),t)^\top\big(\g\gamma_w(t)-\g\gamma_z(t)\big),\qquad t\in[0,1],
\end{equation}
and the right side has coefficients $\g\pi_j(\m z)+D\g\pi_j(\m z)(\m w-\m z)\in\mathcal K_j$
(for a sphere the gap in \eqref{eq:p10-tangent} is exactly $\|\g\gamma_w-\g\gamma_z\|^2$). So
$g_j(\g\gamma_w(t),t)\ge0$ for all $t$; the remaining blocks hold through $\mathcal D$.
$\mathcal C(\m z)$ is closed, convex and contains $\m z$, so \eqref{eq:inner-step} has
exactly one solution.

$\m z^{\nu+1}$ is the projection of $\m y:=\m z^\nu-\eta\nabla f(\m z^\nu)$
onto $\mathcal C(\m z^\nu)$, characterized by
\begin{equation}\label{eq:p10-proj}
\langle\m y-\m z^{\nu+1},\m w-\m z^{\nu+1}\rangle\le0\qquad\forall\,\m w\in\mathcal C(\m z^\nu).
\end{equation}
With $\m w=\m z^\nu$ and $\g\Delta:=\m z^{\nu+1}-\m z^\nu$ this gives
\begin{equation}\label{eq:p10-dir}
\langle\nabla f(\m z^\nu),\g\Delta\rangle\le-\tfrac{1}{\eta}\|\g\Delta\|^2 ,
\end{equation}
and since $\nabla f$ is $\beta$-Lipschitz,
\begin{equation}\label{eq:p10-descent}
f(\m z^{\nu+1})\le f(\m z^\nu)+\langle\nabla f(\m z^\nu),\g\Delta\rangle+\tfrac\beta2\|\g\Delta\|^2
\le f(\m z^\nu)-c_\eta\|\g\Delta\|^2,\qquad c_\eta:=\tfrac1\eta-\tfrac\beta2>0 .
\end{equation}
Summing over $\nu<T$,
\begin{equation}\label{eq:p10-sum}
c_\eta\textstyle\sum_{\nu<T}\|\m z^{\nu+1}-\m z^\nu\|^2\le f(\m z^0)-E^\star ,
\end{equation}
so the steps tend to zero and the iterates stay in a compact sublevel set
(Lemma~\ref{lem:ellipsoid}).

Let $\m z^{\nu_i}\to\bar{\m z}$; then $\m z^{\nu_i+1}\to\bar{\m z}$, and
$\bar{\m z}\in\feasset$. Take $\m v$ from Assumption~\ref{assum:rob} at $\bar{\m z}$ and
$\m w_0:=\bar{\m z}+\tau\m v$, $\tau\in(0,1]$. Its linearized blocks,
\begin{equation}\label{eq:p10-w0}
\g\pi(\bar{\m z})+D\g\pi(\bar{\m z})(\m w_0-\bar{\m z})
=(1-\tau)\,\g\pi(\bar{\m z})+\tau\big(\g\pi(\bar{\m z})+D\g\pi(\bar{\m z})\m v\big)\in\mathrm{int}\,\mathcal K,
\end{equation}
and, for small $\tau$, the blocks in $\mathcal D$ hold strictly at $\m w_0$. For
$\m w\in\mathcal C(\bar{\m z})$ and $\theta\in(0,1]$, $\m w_\theta:=(1-\theta)\m w+\theta\m w_0$
is strictly feasible for $\mathcal C(\bar{\m z})$, hence lies in $\mathcal C(\m z^{\nu_i})$
for large $i$. Using it in \eqref{eq:p10-proj} at step $\nu_i$ and letting
$i\to\infty$, then $\theta\to0$,
\begin{equation}\label{eq:p10-vi}
\langle\nabla f(\bar{\m z}),\m w-\bar{\m z}\rangle\ge0\qquad\forall\,\m w\in\mathcal C(\bar{\m z}),
\end{equation}
so $\bar{\m z}$ minimizes $f$ over $\mathcal C(\bar{\m z})$. This convex problem is strictly
feasible ($\m w_0$), so $\bar{\m z}$ has Lagrange multipliers; with
$\mathcal N_{\mathcal D}(\bar{\m z})=D\g\pi_{\mathcal D}(\bar{\m z})^{\!\top}\mathcal N_{\mathcal K_{\mathcal D}}(\g\pi_{\mathcal D}(\bar{\m z}))$
\citep[Thm.~6.14]{rockafellar1998variational} (by Assumption~\ref{assum:rob}), where $\g\pi_{\mathcal D}$ stacks the
blocks in $\mathcal D$ and $\mathcal K_{\mathcal D}$ is the product of their cones, and $\mathcal N_{\mathcal K}(\g\zeta)$, $\mathcal N_{\mathcal K_{\mathcal D}}(\g\zeta)$ denote the normal cones at $\g\zeta$, these are
the KKT conditions \eqref{eq:p10-kkt} of \eqref{eq:P} at $\bar{\m z}$.

\paragraph{Proof of Proposition~\ref{prop:fixeddegree}, grid part.}
(i) Let $\m z^\star$ be a global minimizer in the
interior of $\mathcal B$ and $\m v$ the vector of Assumption~\ref{assum:rob} at
$\m z^\star$. For $\tau\in(0,1)$,
\begin{equation}\label{eq:p10-taylor}
\g\pi(\m z^\star+\tau\m v)=(1-\tau)\,\g\pi(\m z^\star)+\tau\big(\g\pi(\m z^\star)+D\g\pi(\m z^\star)\m v\big)+o(\tau).
\end{equation}
The first term is in $\mathcal K$; the second is $\tau$ times a point of $\mathrm{int}\,\mathcal K$, so the ball of radius $\tau\rho_0$ around it lies in
$\mathcal K$ for some $\rho_0>0$; so for small $\tau$ the sum is in $\mathrm{int}\,\mathcal K$. By continuity, a
ball $\overline{\mathbb B}(\m z^\star+\tau\m v,\delta)\subset\feasset\cap\mathcal B$ has
$f<E^\star+\varepsilon$ for $\tau,\delta$ small, and contains a grid point for small $h$ (every point is within
$h\sqrt{nd_f}/2$ of the grid in the rational chart, an affine bijection). Hence
\begin{equation}\label{eq:p10-grid}
E^\star\le f(\m z_h)<E^\star+\varepsilon ,
\end{equation}
and every limit point of $\m z_h$ is feasible with cost $E^\star$.

(ii) The extended cost $\varphi:=f$ on
$\feasset$, $+\infty$ elsewhere, is lower semicontinuous, semialgebraic and continuous on its closed domain
($\g\pi$ is polynomial; the quantifier over $t$ is eliminated by
Tarski--Seidenberg). So its Clarke critical points take finitely many values
\citep[Cor.~9(ii)]{bolte2007clarke}. A \emph{KKT point} of \eqref{eq:P} is a feasible
$\m z$ with
\begin{equation}\label{eq:p10-kkt}
-\nabla f(\m z)\in D\g\pi(\m z)^\top\mathcal N_{\mathcal K}\big(\g\pi(\m z)\big),
\end{equation}
and it is Clarke critical, since the right side lies in the normal cone of
$\feasset$ \citep[Thm.~6.14]{rockafellar1998variational}. Let $\varepsilon_0>0$ be
below $U-E^\star$ and below the distance from $E^\star$ to the next critical
value, and, by \eqref{eq:p10-grid}, take $h_0$ with $f(\m z_h)<E^\star+\varepsilon_0$ for
$h\le h_0$. By the descent part, the costs decrease to a limit $\bar E$ attained
at KKT points, with
\[
E^\star\le\bar E\le f(\m z_h)<E^\star+\varepsilon_0 ;
\]
$\bar E$ is a critical value, so $\bar E=E^\star$, and every limit point is a global
minimizer. The iterates are bounded, so they approach the compact set of
global minimizers.

The chart of Lemma~\ref{lem:affchart} has irrational entries in general.
The control points that the boundary conditions (for a spline, also the junction conditions) leave free form a
rational affine chart of the curve class: these conditions fix the others in
terms of these and the data (for one segment, the data alone). For a single segment, a move of the free
coordinates moves $\Gm$ by the same amount in the Frobenius norm and the free
coordinates have norm at most $\|\Gm\|_F$, which is all
Proposition~\ref{prop:degree} uses of Lemma~\ref{lem:affchart}; for a spline
class such as $\mathcal S_N$ the chart is still an affine bijection, which is
all the grid part of Proposition~\ref{prop:fixeddegree} needs. In both cases
the membership test is exact in rational arithmetic for rational data and breakpoints, by the Sturm-sequence test stated before Proposition~\ref{prop:degree}.

\begin{corollary}[random starts]
\label{cor:random}
Assume the hypotheses of Proposition~\ref{prop:fixeddegree}, with $\mathcal B$
bounded. Draw starting points independently and uniformly at random from
$\feasset\cap\mathcal B$, run the exact repair \eqref{eq:inner-step} from each
(each run understood in its infinite-iteration limit), and keep the cheapest
limiting cost. There is a number $\vartheta>0$, depending on the instance, such
that after $n_s$ starts the kept cost is above $E^\star$ with probability at
most $(1-\vartheta)^{n_s}$. So, with probability one, some run reaches globally
minimizing limit points after finitely many starts.
\end{corollary}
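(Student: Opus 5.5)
The plan is to reduce the statement to the grid part of Proposition~\ref{prop:fixeddegree}. The key object is a fixed nonempty open set of good starts. Let $\mathcal G$ be the set of starts $\m z\in\feasset\cap\mathcal B$ with $f(\m z)<E^\star+\varepsilon_0$, where $\varepsilon_0$ is the constant from part (ii) of that proof: it lies below $U-E^\star$ and below the gap from $E^\star$ to the next Clarke critical value of the extended cost $\varphi$. The descent part shows that from any feasible start of cost at most $U$ the costs decrease to a limit $\bar E$. This limit is attained at KKT points, hence at Clarke critical points, so it is a critical value. For a start in $\mathcal G$ we get $E^\star\le\bar E<E^\star+\varepsilon_0$, which forces $\bar E=E^\star$. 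Every limit point is then a global minimizer. So every start in $\mathcal G$ gives a run whose limiting cost is exactly $E^\star$, and whose limit points minimize globally.

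Next I show that $\mathcal G$ has positive measure relative to $\feasset\cap\mathcal B$. Part (i) of the grid proof, through \eqref{eq:p10-taylor}, supplies a closed ball $\overline{\mathbb B}(\m z^\star+\tau\m v,\delta)$ contained in $\feasset\cap\mathcal B$ on which $f<E^\star+\varepsilon_0$. This ball lies in $\mathcal G$ and has positive Lebesgue measure. The set $\feasset\cap\mathcal B$ is bounded, and it contains this ball, so its measure is finite and positive and uniform sampling on it is well defined. I then set
\[
\vartheta:=\frac{\mathrm{vol}\big(\overline{\mathbb B}(\m z^\star+\tau\m v,\delta)\big)}{\mathrm{vol}(\feasset\cap\mathcal B)}>0 .
\]
This number depends only on the instance. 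Measurability of $\feasset$ follows because it is closed.

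Now I count starts. The draws are independent, and each lands in the ball with probability at least $\vartheta$. The kept cost is the minimum of the limiting costs, and each limiting cost is at least $E^\star$. The kept cost can therefore exceed $E^\star$ only if no draw lands in $\mathcal G$, which has probability at most $(1-\vartheta)^{n_s}$. Let $S$ be the index of the first start that lands in $\mathcal G$. Then $\Pr(S>n_s)\le(1-\vartheta)^{n_s}\to0$, so $S<\infty$ almost surely, and that run has globally minimizing limit points.

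One side condition needs care: each start must have cost at most $U$, the level at which Assumption~\ref{assum:rob} holds. This is automatic in $\mathcal G$ because $\varepsilon_0<U-E^\star$. Starts outside $\mathcal G$ are simply not relied upon, and their limiting costs are at least $E^\star$ in any case, since every iterate stays feasible. The main obstacle is getting the bound uniform over the random start. The finiteness of critical values from \citep[Cor.~9(ii)]{bolte2007clarke} is what makes $\varepsilon_0$ independent of the start, so the single threshold set $\mathcal G$ works for every draw.
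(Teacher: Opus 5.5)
Your proof is correct and follows the paper's argument: the ball from part (i) of the grid proof, on which $f<E^\star+\varepsilon_0$, gives $\vartheta$ as a volume ratio; part (ii) sends every start in that ball to limiting cost $E^\star$; and independence gives the bound $(1-\vartheta)^{n_s}$. You also check that starts above level $U$ still produce feasible runs with limiting cost at least $E^\star$, which the paper leaves implicit. The paper adds one remark you omit, that uniform draws can be realized by exact rejection sampling from $\mathcal B$.
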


\paragraph{Proof of Corollary~\ref{cor:random}.}
Part (i) of the grid part gives a ball in $\feasset\cap\mathcal B$ on which
$f<E^\star+\varepsilon_0$; let $\vartheta\in(0,1]$ be the ratio of its volume to that
of $\feasset\cap\mathcal B$. A start in the ball is feasible with cost below
$E^\star+\varepsilon_0$, which is all part (ii) uses, so its run reaches only global
minimizers. By independence,
\[
\Pr\{\text{none of the first $n_s$ starts lies in the ball}\}=(1-\vartheta)^{n_s}\to0 .
\]
Uniform starts are drawn by rejection from $\mathcal B$, since membership in
$\feasset$ is decided exactly (rounding each draw to rationals finely enough keeps a
positive probability of landing in the ball). The grid of
Proposition~\ref{prop:fixeddegree}, in contrast, has $O((\mathrm{diam}\,\mathcal B/h)^{nd_f})$
points: that proposition establishes global convergence, not an efficient search.

\subsubsection{Theorem~\ref{thm:globalinf}: the continuous-time problem}
\label{app:thm1}

Let $E(\g\gamma)=\int_0^1\|\g\gamma^{(k)}\|^2dt$ denote the cost of
\eqref{eq:Pinf} on $\trajspace^k$.
For a margin $\tilde{r}$, the problem
$(\mathrm P^{\tilde{r}}_\infty)$ keeps the boundary data of \eqref{eq:Pinf} and makes
the replacements:
\[
r_j\ \to\ r_j+\tilde{r},\qquad r_a+r_b\ \to\ r_a+r_b+2\tilde{r},\qquad \bar r\ \to\ \bar r-\tilde{r}.
\]
We write $E^\star_\infty(\tilde{r})$ for its optimal value, $+\infty$ when it is
infeasible. For the position blocks of Appendix~\ref{app:variants}, an
ellipsoid's radius is raised by $\tilde{r}$ in its own norm (a robot pair keeps
the pair rule even when written as an ellipsoid block), and a wall or box face,
written with a unit normal, is moved by $\tilde{r}$ towards its feasible side.

The \emph{clearance} $\sigma$ of the endpoints is the
largest $\tilde{r}$ for which the start and goal positions are feasible for
$(\mathrm P^{\tilde{r}}_\infty)$; it is positive by Assumption~\ref{assum:data}. We take
$\tilde{r}_{\max}$ below $\sigma/2$ and below every obstacle and robot radius. Then,
for $|\tilde{r}|\le\tilde{r}_{\max}$, the radii $r_j+\tilde{r}$, the separations and
$\bar r-\tilde{r}$ stay positive, so every block keeps the form of
\S\ref{sec:setup}: the collision and pair blocks are convex in the position and
the steering set is a ball. The endpoints keep a clearance of at least
$\sigma/2$ in $(\mathrm P^{\tilde{r}}_\infty)$.

The feasible sets shrink as $\tilde{r}$ grows, so
$E^\star_\infty(\cdot)$ is nondecreasing, and $(\mathrm P^{\tilde{r}}_\infty)$ is feasible for
$\tilde{r}\le0$. Whenever $(\mathrm P^{\tilde{r}}_\infty)$ is feasible it has a minimizer,
by the existence step of the proof of Proposition~\ref{prop:degree}, which uses
only a feasible curve and Assumption~\ref{assum:curv}. The chart, the
coercivity step and the end correction borrowed from
Appendix~\ref{app:globalC} do not use Assumption~\ref{assum:mfcq} either.

\begin{figure}[t]
\centering
\setlength{\abovecaptionskip}{3pt}
\includegraphics[width=\linewidth]{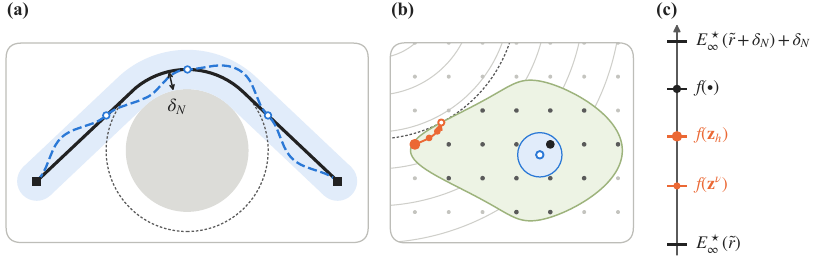}
\caption{Proof of Theorem~\ref{thm:globalinf}(ii) (drawn for $\ell<k$; otherwise Lemma~\ref{lem:margin}(c) adds an end correction). (a) The cheapest curve (black) with the margin raised by $\delta_N$ (dotted circle) and its $N$-piece interpolant (blue), which stays within $\delta_N$ of it (shaded) and so clears the obstacle grown by $\tilde r$ (grey). (b) In the free control points (green: feasible set; grey: cost levels and infeasible grid points), a feasible ball (blue) around the interpolant (blue open dot) contains a grid point (black), so the cheapest feasible grid point (orange) costs no more, and GlobMP descends from it (small dots: iterates; orange open dot: their limit, on the dashed cost level). (c) The resulting cost bounds, lowest at the bottom (dots as in (b)).}
\label{fig:proof}
\end{figure}

\begin{lemma}[interpolation keeps a margin]
\label{lem:margin}
Let $-\tilde{r}_{\max}\le \tilde{r}<\tilde{r}'\le\tilde{r}_{\max}$ and $U\ge0$, and let
$\g\gamma$ be a curve that is feasible for the stricter problem
$(\mathrm P^{\tilde{r}'}_\infty)$ and costs $E(\g\gamma)\le U$. Let $\m q$ be its
piecewise Hermite interpolant of degree $2k-1$ at the nodes $i/N$,
$i=0,\dots,N$, the piecewise polynomial that matches $\g\gamma$ and its first
$k-1$ derivatives at every node. Then:
\begin{enumerate}[label=(\alph*),nosep]
\item $\m q$ costs no more than $\g\gamma$, $E(\m q)\le E(\g\gamma)$, and stays
uniformly close to it, $\|\m q-\g\gamma\|_\infty\le N^{-(k-1/2)}\sqrt{E(\g\gamma)}$;
\item if the two margins are far enough apart,
$(1+2\sqrt U)N^{-(k-1/2)}\le\tilde{r}'-\tilde{r}$ (for an ellipsoid block, with the left side multiplied by $\lambda_{\max}(\g\Sigma_j)^{1/2}$), then $\m q$ satisfies every constraint of
the looser problem $(\mathrm P^{\tilde{r}}_\infty)$ strictly at every instant, and if
moreover $\ell<k$, then $\m q\in\mathcal S_N$;
\item if $\ell\ge k$ and $\g\gamma$ minimizes $(\mathrm P^{\tilde{r}'}_\infty)$, there are
$N_d$ and $C_d$, depending only on $U$ and the data, such that for $N\ge N_d$ a
correction $\m p$ supported on the first and last two pieces makes $\m q+\m p$ a
curve of $\mathcal S_N$ with all the boundary data and
$E(\m q+\m p)\le E(\g\gamma)+C_d/N$, along which, under the condition of (b),
every constraint of $(\mathrm P^{\tilde{r}}_\infty)$ holds strictly at every instant.
\end{enumerate}
\end{lemma}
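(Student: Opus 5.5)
Parts (a) and (b) follow from the classical minimality property of piecewise Hermite interpolation of degree $2k-1$ together with a Sobolev-type estimate on each piece. Part (c) is handled by a localized end correction whose cost we estimate by scaling.

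\emph{Part (a).} Let $\m e:=\g\gamma-\m q$. On each piece $I_i=[i/N,(i+1)/N]$, $\m q$ is a polynomial of degree $2k-1$, so $\m q^{(2k)}=0$ there. Also $\m e^{(s)}$ vanishes at both nodes for $s\le k-1$. Integrating $\int_{I_i}\langle\m q^{(k)},\m e^{(k)}\rangle\,dt$ by parts $k$ times therefore kills every boundary term and leaves $\pm\int_{I_i}\langle\m q^{(2k)},\m e\rangle=0$. Summing over pieces gives the Pythagorean identity $E(\g\gamma)=E(\m q)+E(\m e)$, hence $E(\m q)\le E(\g\gamma)$ and $E(\m e)\le E(\g\gamma)$.

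For the uniform bound, work on one piece of length $h=1/N$ with all derivatives of $\m e$ up to order $k-1$ vanishing at its left node. By Cauchy--Schwarz, $|\m e^{(k-1)}(t)|\le h^{1/2}\|\m e^{(k)}\|_{L^2(I_i)}$. Integrating $k-1$ more times from the node, each integration gains a factor at most $h$, so $\|\m e\|_{\infty,I_i}\le h^{k-1/2}\|\m e^{(k)}\|_{L^2}\le N^{-(k-1/2)}\sqrt{E(\g\gamma)}$. Here $\m e\in\trajspace^k$, so the fundamental theorem of calculus applies.

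\emph{Part (b).} Each constraint is $1$-Lipschitz in the position:
\begin{itemize}
\item $\big|\|\m q(t)-\m c_j(t)\|-\|\g\gamma(t)-\m c_j(t)\|\big|\le\|\m q(t)-\g\gamma(t)\|$;
\item a pair separation changes by at most the sum of both robots' displacements, against the margin gap $2(\tilde r'-\tilde r)$;
\item the steering distance to $\g\rho$ changes by at most $\|\m q(t)-\g\gamma(t)\|$;
\item a wall face with unit normal changes by the same amount;
\item in an ellipsoid norm the displacement is scaled by $\lambda_{\max}(\g\Sigma_j)^{1/2}$.
\end{itemize}
With $\|\m q-\g\gamma\|_\infty\le\sqrt U N^{-(k-1/2)}<(1+2\sqrt U)N^{-(k-1/2)}\le\tilde r'-\tilde r$ (the factor $2$ covers the stacked pair displacement), every constraint of $(\mathrm P^{\tilde r}_\infty)$ holds with strict slack.

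For $\m q\in\mathcal S_N$: the pieces have degree $2k-1\le d$ after degree elevation, and they join with $k-1$ continuous derivatives because Hermite data are matched at every node. The boundary data of orders $\le\ell<k$ are inherited from $\g\gamma$ at $t=0,1$.

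\emph{Part (c).} This is the main obstacle. When $\ell\ge k$, the data of orders $k,\dots,\ell$ are not carried by $\g\gamma\in\trajspace^k$, so I first need regularity of the minimizer near the ends.

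\emph{Regularity.} The endpoints have clearance at least $\sigma/2$ and $\g\gamma$ is uniformly continuous with $E(\g\gamma)\le U$. Hence on some interval $[0,\tau]$, with $\tau$ depending only on $U$ and the data, every constraint is inactive, and similarly near $t=1$. A variation supported there shows $\g\gamma$ satisfies $\g\gamma^{(2k)}=0$ on that interval, so $\g\gamma$ is a polynomial of degree at most $2k-1$ near each end. Its coefficients are bounded in terms of $U$ and the boundary data, so $\g\gamma^{(k)}$ is bounded there by a constant depending only on $U$ and the data. For $2/N\le\tau$, the interpolant $\m q$ coincides with this polynomial on the end pieces.

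\emph{Correction.} I would add $\m p(t)=\sum_{s=k}^{\ell}\m a_s h^{s}\phi_s(t/h)$ on $[0,2h]$ (and symmetrically at $t=1$). Here the $\phi_s$ are fixed degree-$d$ splines on $[0,2]$ with a knot at $1$. They are chosen so that $\phi_s^{(r)}(0)=\delta_{rs}/s!$ up to normalization for $r\le\ell$, and so that $\phi_s$ vanishes with derivatives through $d$ or $k-1$ (as the junction requires) at $2$. The vectors $\m a_s$ are the bounded mismatches between the prescribed derivatives and those of $\g\gamma$. The degree condition $d\ge\ell$ gives enough freedom.

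\emph{Cost estimate.} Then $\|\m p^{(k)}\|_\infty=O(1)$ on an interval of length $2h$, so $\int\|\m p^{(k)}\|^2=O(h)$. The cross term satisfies $2\big|\int\langle\m q^{(k)},\m p^{(k)}\rangle\big|\le 2\sup\|\g\gamma^{(k)}\|\int\|\m p^{(k)}\|=O(h)$. This gives $E(\m q+\m p)\le E(\g\gamma)+C_d/N$.

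\emph{Feasibility.} $\|\m p\|_\infty=O(h^{k})$, which is smaller than the endpoint clearance $\sigma/2$ for $N\ge N_d$. On the end pieces the constraints are inactive for $\g\gamma$ with room $\sigma/2$, and elsewhere $\m p=0$, so strict feasibility from (b) persists. The delicate points I expect to check carefully are the uniformity of $\tau$ in terms of $U$ and the data, and the solvability of the spline end-interpolation conditions with junction smoothness for every $d\ge\max\{2k-1,\ell\}$.
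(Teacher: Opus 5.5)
Your proposal is correct and follows essentially the same route as the paper: for (a), the Pythagorean identity from $k$ integrations by parts and a Cauchy--Schwarz/Taylor sup bound; for (b), the Lipschitz comparison of margins; and for (c), first-variation regularity near the ends followed by a scaled two-piece end correction (your $\sum_s\m a_s h^s\phi_s(t/h)$ is the paper's $N^{-k}\m P(Nt)$ with $\m P^{(s)}(0)=N^{k-s}\m a_s$). The two points you flag are settled as in the paper. The cost bound and the fixed data of order below $k$ bound the curve in $\trajspace^k$, which gives equicontinuity and hence a $\tau$ depending only on $U$ and the data. The end spline exists for every $d\ge\max\{2k-1,\ell\}$: take a degree-$\ell$ Taylor polynomial on the first piece and a degree-$(2k-1)$ Hermite join to zero, to order $k-1$, on the second.
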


\begin{proof}
(a) On a piece $[a,a+1/N]$, the error $\m e:=\g\gamma-\m q$ and its first $k-1$
derivatives vanish at both ends. For any polynomial $\m w$ of degree below $k$,
$k$ integrations by parts give $\int_a^{a+1/N}\m e^{(k)}\cdot\m w\,dt=\pm\int\m e\cdot\m w^{(k)}dt=0$.
So $\g\epsilon:=\m e^{(k)}$ is orthogonal to $\m q^{(k)}$ on every piece, and
\begin{equation}\label{eq:l12-pyth}
E(\m q)=E(\g\gamma)-\|\g\epsilon\|_{L^2}^2\le E(\g\gamma),\qquad \|\g\epsilon\|_{L^2}\le\sqrt{E(\g\gamma)} .
\end{equation}
Taylor's formula at the left node and the Cauchy--Schwarz inequality give
\begin{equation}\label{eq:l12-sup}
\m e(t)=\int_a^t\frac{(t-s)^{k-1}}{(k-1)!}\,\g\epsilon(s)\,ds,\qquad
\|\m e(t)\|\le\frac{(t-a)^{k-1/2}}{(k-1)!\sqrt{2k-1}}\,\|\g\epsilon\|_{L^2}
\le N^{-(k-1/2)}\sqrt{E(\g\gamma)} .
\end{equation}

(b) By (a) and the condition, $\|\m q-\g\gamma\|_\infty<\tilde r'-\tilde r$. The two problems
differ by exactly this gap: each obstacle distance $\ge r_j+\tilde r'$ along $\g\gamma$
stays $>r_j+\tilde r$ along $\m q$; a separation (each robot moves less than the
gap) $\ge r_a+r_b+2\tilde r'$ stays $>r_a+r_b+2\tilde r$; a reference distance
$\le\bar r-\tilde r'$ stays $<\bar r-\tilde r$; a wall offset $\ge\tilde r'$ stays $>\tilde r$.
An ellipsoid norm enlarges the move by at most $\lambda_{\max}(\g\Sigma_j)^{1/2}$. The
interpolant is $C^{k-1}$ of degree $2k-1\le d$ and matches $\g\gamma$ to order
$k-1$ at the endpoints, so for $\ell<k$ it lies in $\mathcal S_N$.

(c) Let $\ell\ge k$. Curves with the data of order below $k$ and cost $\le U$
are bounded in $\trajspace^k$ (coercivity step of
Proposition~\ref{prop:degree}), hence equicontinuous; with the endpoint
clearance $\sigma/2$, there is $\tau_0\in(0,1/4]$, depending only on $U$ and the
data, such that every such curve feasible for $(\mathrm P^{\tilde r'}_\infty)$ has
clearance $\ge\sigma/4$ (in margin units) on $[0,\tau_0]\cup[1-\tau_0,1]$. So
$\g\gamma+s\g\varphi$ stays feasible for small $s$ and smooth $\g\varphi$ supported in
$(0,\tau_0)$, and optimality gives $\int\g\gamma^{(k)}\cdot\g\varphi^{(k)}=0$: by
du Bois-Reymond, $\g\gamma$ is a polynomial of degree $\le2k-1$ on $(0,\tau_0)$,
with derivatives at $0$ bounded in terms of $U$, $\tau_0$ and the data
($\|\g\gamma^{(k)}\|_{L^2(0,\tau_0)}\le\sqrt U$; norms on polynomials of bounded
degree are equivalent). The same holds at $t=1$.

For $N\ge2/\tau_0$, Hermite interpolation reproduces $\g\gamma$ on the first and
last two pieces, and only the data of orders $k,\dots,\ell$ are missing:
$\m a_s:=\g\gamma_0^{(s)}-\g\gamma^{(s)}(0)$, with $\sum_s\|\m a_s\|\le\bar a$ independent of
$\tilde r,\tilde r',N$. The correction $\m p(t)=N^{-k}\m P(Nt)$ on $[0,2/N]$ of the
proof of Lemma~\ref{lem:rate} (which does not use Assumption~\ref{assum:mfcq}),
plus its mirror image at $t=1$, has
\[
\m p^{(s)}(0)=\m p^{(s)}(2/N)=\m 0\ \ (s<k),\qquad \m p^{(s)}(0)=\m a_s\ \ (k\le s\le\ell),
\]
so $\m q+\m p\in\mathcal S_N$ has all the data. $\m P$ and its derivatives up to order
$k$ are at most $K\bar a$ ($K$ depending on $k,\ell$), so $\m p$ moves the curve by
$\le N^{-k}K\bar a<\sigma/4$ for large $N$, only where the clearance is $\ge\sigma/4$.
The cost increase at each end, with $u=Nt$ and the Cauchy--Schwarz inequality,
\begin{equation}\label{eq:l12-corr}
\int_0^{2/N}|\m p^{(k)}|^2dt=\frac1N\int_0^2\|\m P^{(k)}\|^2du,\qquad
2\Big|\int_0^{2/N}\m q^{(k)}\cdot\m p^{(k)}dt\Big|\le
\frac{2\sqrt2}{N}\,\|\g\gamma^{(k)}\|_{L^\infty(0,\tau_0)}\|\m P^{(k)}\|_{L^2(0,2)},
\end{equation}
is $O(1/N)$, which gives $C_d$; $N_d$ is the least $N\ge\max\{4,2/\tau_0\}$ with
$N^{-k}K\bar a<\sigma/4$.
\end{proof}

\paragraph{Proof of Theorem~\ref{thm:globalinf}.}
(i) For $|\tilde{r}|\le\tilde{r}_{\max}$ the blocks of $(\mathrm P^{\tilde{r}}_\infty)$ keep the form
of \S\ref{sec:setup}, and $\m H\succ0$ and the affine chart of $\mathcal S_N$ hold
without Assumption~\ref{assum:mfcq} (proof of Proposition~\ref{prop:globalC}). So
the descent part of the proof of Proposition~\ref{prop:fixeddegree} applies:
every iterate is feasible and \eqref{eq:p10-descent} holds. $\m z_h$ exists once
one grid point is feasible, as $f$ has compact sublevel sets.

(ii) (See Figure~\ref{fig:proof}.) Let $r=\min\{k-\tfrac12,1\}$, take $N_d,C_d$ from
Lemma~\ref{lem:margin} ($C_d=0$ if $\ell<k$), and
\begin{equation}\label{eq:t1-C}
C=1+2\sqrt U\ \ (\ell<k),\qquad
C=\max\big\{1+2\sqrt U,\ 2C_d,\ 2\tilde{r}_{\max}\,N_d^{\,r}\big\}\ \ (\ell\ge k).
\end{equation}
If $\ell\ge k$, then $\delta_N=CN^{-r}\le\mu\le2\tilde r_{\max}$ gives $N\ge N_d$, and
$C_d/N\le\tfrac12CN^{-r}<\delta_N$. Let $\tilde r':=\tilde r+\delta_N\le\tilde r+\mu$. The given
curve is feasible for $(\mathrm P^{\tilde r'}_\infty)$, so it has a minimizer $\g\gamma$
with $E(\g\gamma)=E^\star_\infty(\tilde r')\le U$. As
$(1+2\sqrt U)N^{-(k-1/2)}\le\tilde r'-\tilde r$, Lemma~\ref{lem:margin} gives
$\tilde{\m q}\in\mathcal S_N$ with all the boundary data, strictly feasible for
$(\mathrm P^{\tilde r}_\infty)$, and $E(\tilde{\m q})<E^\star_\infty(\tilde r')+\delta_N$. By
continuity (the chart is an affine bijection), a closed ball around the free
coordinates of $\tilde{\m q}$ is feasible for $(\mathrm P^{\tilde r}_\infty)$ with cost below
$E^\star_\infty(\tilde r')+\delta_N$; for small $h$ it contains a grid point, so
\begin{equation}\label{eq:t1-upper}
E^\star_\infty(\tilde r)\le f(\m z^\nu)\le f(\m z_h)<E^\star_\infty(\tilde{r}+\delta_N)+\delta_N ,
\end{equation}
the lower bound because every iterate is feasible for $(\mathrm P^{\tilde r}_\infty)$
with the boundary data. The threshold on $h$ depends on $N$ and the unknown
minimizer, so Corollary~\ref{cor:eps} bounds $N$ but not $h$.

(iii) With $\tilde r=-\mu$, $(\mathrm P^{\tilde r+\mu}_\infty)$ is \eqref{eq:Pinf}, and
\eqref{eq:t1-upper} with monotonicity of $E^\star_\infty(\cdot)$ gives
\begin{equation}\label{eq:t1-iii}
f(\m z^\nu)<E^\star_\infty(-\mu+\delta_N)+\delta_N\le E^\star_\infty+\delta_N .
\end{equation}
A curve feasible for $(\mathrm P^{-\mu}_\infty)$ violates no constraint of
\eqref{eq:Pinf} by more than $\mu$ ($2\mu$ for a separation). Let $\mu_i\to0$ with
$N_i,h_i$ as in (ii), and let $\g\gamma_i$ be a curve from $\m z_{h_i}$ or a
minimizer of $(\mathrm P^{-\mu_i}_\infty)$. These cost at most $E^\star_\infty+\mu_i$, so they are bounded in $\trajspace^k$ (coercivity, Proposition~\ref{prop:degree}) and a
subsequence converges weakly in $\trajspace^k$, and uniformly with the
derivatives below $k$, to a curve $\g\gamma$ with the boundary data that satisfies
every constraint of \eqref{eq:Pinf} (limits of the loosened inequalities). By
weak lower semicontinuity,
\[
E(\g\gamma)\le\liminf_i E(\g\gamma_i)\le E^\star_\infty ,
\]
so $\g\gamma$ is a global minimizer. Applied to the minimizers, this gives
$E^\star_\infty(-\mu_i)\to E^\star_\infty$, and the costs from $\m z_{h_i}$, which lie in
$[E^\star_\infty(-\mu_i),E^\star_\infty+\delta_{N_i}]$, tend to $E^\star_\infty$. For the
position blocks of Appendix~\ref{app:variants} the proof is the same, with
$K\bar a$ (hence $N_d$) and $C$ multiplied by
$\lambda:=\max\{1,\max_j\lambda_{\max}(\g\Sigma_j)^{1/2}\}$ (Lemma~\ref{lem:margin}).

\paragraph{Proof of Corollary~\ref{cor:eps}.}
Item (i) is the case $\ell<k$ of the proof of Theorem~\ref{thm:globalinf}(ii),
which needs no end correction: the margin condition and the cost bound hold
with $\delta_N=(1+2\sqrt U)N^{-(k-1/2)}$. For item (ii), the choice of $N$
gives $\delta_N=CN^{-r}\le\min\{\varepsilon,\mu\}$, so
Theorem~\ref{thm:globalinf}(iii) applies: the curves violate no constraint of
\eqref{eq:Pinf} by more than $\mu$ ($2\mu$ for a separation) and cost at most
$E^\star_\infty+\delta_N\le E^\star_\infty+\varepsilon$.

\subsubsection{Global minimizers over \texorpdfstring{$\mathcal S_N$}{S\_N} under a condition on configurations}
\label{app:globalC}

Theorem~\ref{thm:globalinf} does not say that GlobMP reaches a global minimizer
over a fixed class $\mathcal S_N$. It does for \eqref{eq:Pinf} and large $N$ under the
following condition.

\setcounter{assumption}{2}%
\begin{assumption}[free configurations]
\label{assum:mfcq}
With $g_j$ as in \S\ref{sec:setup}, at every time and feasible position, some
$\m v$ increases every active constraint:
$\nabla_{\m x}g_j(\m x,t)^\top\m v>0$ whenever $g_j(\m x,t)=0$.
\end{assumption}
\setcounter{assumption}{5}%

Assumption~\ref{assum:mfcq} is the Mangasarian--Fromovitz condition on
configurations; it rules out jamming and is used only at feasible positions
within distance $1$ of the points visited by curves of cost at most
$E^\star_\infty+1$ (Lemma~\ref{lem:push} and the proof of Proposition~\ref{prop:globalC}). Under it, GlobMP from $\m z_h$
reaches global minimizers over every fine enough class $\mathcal S_N$; the proof
chains four results and is at the end of this subsection.

\begin{proposition}[global minimizers over $\mathcal S_N$]
\label{prop:globalC}
Under Assumptions~\ref{assum:data}, \ref{assum:curv}
and~\ref{assum:mfcq}:
\begin{enumerate}[leftmargin=1.6em,label=\textup{\textbf{(\roman*)}},itemsep=1pt,topsep=2pt]
\item There is $N_0$ such that, for every $N\ge N_0$, GlobMP over
$\mathcal S_N$ on \eqref{eq:Pinf} with the repair \eqref{eq:inner-step}, a step
$0<\eta<2/\beta$ ($\beta$ computed for $\mathcal S_N$) and the start $\m z_h$ with $h$ small enough produces curves
feasible at every instant, with nonincreasing cost, and all its
limit points are global minimizers over $\mathcal S_N$.
\item For $N\ge N_0$, the optimal cost $E^\star_N$ over $\mathcal S_N$ satisfies
$E^\star_\infty\le E^\star_N\le E^\star_\infty+C\,N^{-\min\{k-1/2,\,1\}}$ with $C$
independent of $N$, and as $N\to\infty$ subsequences of these minimizers
converge uniformly to global minimizers of \eqref{eq:Pinf}.
\end{enumerate}
\end{proposition}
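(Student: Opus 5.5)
The plan is to reduce part (i) to Proposition~\ref{prop:fixeddegree}, applied on the spline class $\mathcal S_N$, and part (ii) to Lemma~\ref{lem:margin} combined with a stability estimate that comes from Assumption~\ref{assum:mfcq}. Two pieces of Proposition~\ref{prop:fixeddegree}'s setting over $\mathcal S_N$ already hold without Assumption~\ref{assum:mfcq}: $\m H\succ0$ under Assumption~\ref{assum:curv}, and the affine chart from the boundary and junction equations. The only missing hypothesis is Assumption~\ref{assum:rob} on a sublevel set $\{f\le U\}$, for some $U>E^\star_N$. I would take $U:=E^\star_\infty+1$. Once $E^\star_N\le E^\star_\infty+CN^{-r}$ is known for large $N$, this $U$ exceeds $E^\star_N$, so part (ii)'s upper bound is proved first, or at least in parallel.

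\textbf{Step 1 (stability from Assumption~\ref{assum:mfcq}).} I would show that $E^\star_\infty(\tilde r)\le E^\star_\infty+c\,\tilde r$ for $0\le\tilde r\le\tilde r_0$. Start from a global minimizer $\g\gamma^\star$; it has cost at most $U$ and so lies in a compact family of curves. Let $\mathcal M$ be the compact set of pairs $(t,\g\gamma^\star(t))$. Assumption~\ref{assum:mfcq} holds at each point of $\mathcal M$, and by continuity on a neighbourhood of it. A partition of unity in $t$ then yields a smooth field $\m v(t)$, vanishing near $t=0,1$ by endpoint clearance, with $\nabla_{\m x}g_j\cdot\m v\ge c_0>0$ wherever $g_j(\g\gamma^\star(t),t)\le\varepsilon_0$. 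The pushed curve $\g\gamma^\star+\tilde r\,\m v/c_0'$ is then feasible for $(\mathrm P^{\tilde r}_\infty)$. Its cost is at most $E^\star_\infty+c\tilde r$, because $\m v$ is smooth and $E$ is quadratic. This is Lemma~\ref{lem:push}. Feeding it into Lemma~\ref{lem:margin} with $\tilde r=0$ and $\tilde r'=\delta_N$, together with the end correction (c) when $\ell\ge k$, gives a curve of $\mathcal S_N$ that is strictly feasible for \eqref{eq:Pinf}. That curve has cost $\le E^\star_\infty+(c+1)\delta_N$, which proves the upper bound in (ii). The lower bound is immediate because $\mathcal S_N\subset\trajspace^k$.

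\textbf{Step 2 (Robinson on fine splines, Lemma~\ref{lem:mfcq-rob}).} Take a feasible $\m z\in\mathcal S_N$ with $f(\m z)\le U$. I would build a direction field $\m v$ along its curve exactly as in Step 1. The constants $c_0$, $\varepsilon_0$ and the bound $\|\m v\|_{C^k}\le V$ must be uniform over all such curves. This uniformity comes from the coercivity of Proposition~\ref{prop:degree}, which makes these curves equicontinuous in a compact family, and from the stated use of Assumption~\ref{assum:mfcq} only within distance $1$ of points those curves visit. I would then replace $\m v$ by its Hermite interpolant $\m v_N\in\mathcal S_N$, which has zero boundary data and satisfies $\|\m v_N-\m v\|_\infty\le C'V/N$. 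The linearized block along $\m z+\tau\m v_N$ is $g_j+\tau\nabla_{\m x}g_j\cdot\m v_N$. Consider first the times where $g_j\le\varepsilon_0$: there this expression is at least $\tau(c_0-C''/N)>0$. At the remaining times it is at least $\varepsilon_0-\tau C'''>0$ once $\tau$ is small. The result is strictly positive on every piece, that is, it lies in $\mathrm{int}\,\mathcal K_j$. The steering ball is handled the same way, because its gradient is also a configuration constraint covered by Assumption~\ref{assum:mfcq}. This produces a threshold $N_0$ beyond which Assumption~\ref{assum:rob} holds.

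\textbf{Step 3 (conclusion).} For (i), apply Proposition~\ref{prop:fixeddegree} over $\mathcal S_N$ for $N\ge N_0$. Its grid part places the start $\m z_h$ below $E^\star_N+\varepsilon_0$ for small $h$, and the critical-value argument then forces every limit point to be a global minimizer over $\mathcal S_N$. For the convergence statement in (ii), I would repeat the compactness argument of Theorem~\ref{thm:globalinf}(iii). The minimizers are bounded in $\trajspace^k$, so a subsequence converges weakly and uniformly, the limit is feasible, and weak lower semicontinuity gives cost $\le E^\star_\infty$.

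The main obstacle is the uniformity in Step 2. The constants $c_0$, $\varepsilon_0$ and $V$ must not degrade with $N$, and near-active times must be handled even when $g_j$ is small but nonzero. I would first try a compactness-and-contradiction argument over the set of pairs $(t,\m x)$ reachable by cost-$\le U$ curves, which is compact by the $\trajspace^k$ bound.
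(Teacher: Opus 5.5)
Your proposal is correct and follows the paper's proof of Proposition~\ref{prop:globalC} step for step. The paper also fixes $U=E^\star_\infty+1$ and first proves the rate bound (Lemma~\ref{lem:rate}), so that $E^\star_N<U$ for $N\ge N_0$; it then obtains Assumption~\ref{assum:rob} on the cost-$\le U$ feasible curves of $\mathcal S_N$ by Hermite-interpolating a direction field that vanishes near the ends (Lemma~\ref{lem:mfcq-rob}), and concludes with Proposition~\ref{prop:fixeddegree} over $\mathcal S_N$ and weak compactness in $\trajspace^k$.

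Your two variations are harmless. Getting the rate from the linear stability bound and Lemma~\ref{lem:margin} is the alternative route the paper itself records in Appendix~\ref{app:scope}; when $\ell\ge k$, apply it to a minimizer of $(\mathrm P^{\delta_N}_\infty)$, since part (c) requires one. The paper removes the uniformity you flag in Step~2 by building a single space-time field $\g\phi(\m x,t)$ on the compact set of feasible pairs reached by curves of cost at most $U$, and interpolating $t\mapsto\g\phi(\g\gamma(t),t)$. Then $\kappa$ and the lower bound on $\nabla_{\m x}g_j\cdot\g\phi$ are fixed, and only $\|(\g\phi\circ\g\gamma)^{(k)}\|_{L^2}$ depends on the curve. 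Your own contradiction argument, taken along a uniformly convergent subsequence of bad curves, would also close this step.
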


\begin{lemma}[pushing into the interior]
\label{lem:push}
Fix a time and let the $g_j$ be $C^1$ functions of the position, so that
$\mathcal X:=\{\m x\in\R^n:g_j(\m x)\ge0\ \forall j\}$ is the set of feasible
positions. Assume Assumption~\ref{assum:mfcq} at this time: at every
$\m x\in\mathcal X$ some direction $\m v$ increases all active constraints,
$\nabla g_j(\m x)\cdot\m v>0$ whenever $g_j(\m x)=0$. Assume also that the
start and goal positions are strictly feasible. Then for every compact
$\mathcal Q\subset\R^n$ there are numbers $\mu_0,G>0$ and, for each
$0<\mu\le\mu_0$, a smooth map $\g\Phi_\mu:\R^n\to\R^n$, a \emph{push}, such that:
\begin{enumerate}[label=(\alph*),nosep]
\item $\g\Phi_\mu$ is the identity near the start and goal positions;
\item it raises every block value of a feasible point to at least $\mu$:
$g_j(\g\Phi_\mu(\m x))\ge\mu$ for all $j$ and all $\m x\in\mathcal X\cap\mathcal Q$;
\item it moves points only by order $\mu$:
$\|\g\Phi_\mu-\mathrm{id}\|_{C^i(\mathcal Q)}\le a_i\mu$ for every $i$, with $a_i$
independent of $\mu$;
\item a point $\m y$ within distance $1$ of $\mathcal Q$ with $g_j(\m y)\ge\mu$ for
all $j$ has the whole open ball $\mathbb B(\m y,\mu/G)$ inside $\mathcal X$.
\end{enumerate}
\end{lemma}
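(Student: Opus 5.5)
The plan is to build $\g\Phi_\mu$ as the time-$\mu$ flow of one smooth vector field $\m V$ that increases every nearly active constraint at a uniform rate near $\mathcal X\cap\mathcal Q$. The four properties then follow from a first-order expansion along the flow and from a Lipschitz bound on the $g_j$.

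\textbf{Step 1: local directions and their overlap.} At each $\m x\in\mathcal X\cap\mathcal Q'$, with $\mathcal Q'$ a compact neighbourhood of $\mathcal Q$ (say its closed $1$-neighbourhood), Assumption~\ref{assum:mfcq} gives a unit $\m v_{\m x}$ with $\nabla g_j(\m x)\cdot\m v_{\m x}>0$ for the active $j$. By continuity of the $\nabla g_j$, there are a radius $\rho_{\m x}>0$ and constants $c_{\m x},\epsilon_{\m x}>0$ such that on $\mathbb B(\m x,\rho_{\m x})$ we have $\nabla g_j\cdot\m v_{\m x}\ge c_{\m x}$ whenever $g_j\le\epsilon_{\m x}$. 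For inactive $j$ this holds vacuously after shrinking $\rho_{\m x}$, since $g_j$ stays above $\epsilon_{\m x}$ there.

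Cover the compact set $\mathcal X\cap\mathcal Q'$ by finitely many such balls and take a smooth partition of unity $\{\psi_i\}$ subordinate to the cover. Set $\m V:=\sum_i\psi_i\m v_{\m x_i}$. Convex combinations preserve the inequality, so on a neighbourhood $\mathcal O$ of $\mathcal X\cap\mathcal Q'$ we get $\nabla g_j\cdot\m V\ge c:=\min_i c_{\m x_i}$ whenever $g_j\le\epsilon:=\min_i\epsilon_{\m x_i}$.

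For (a), the endpoints are strictly feasible, so in a small ball around each of them every $g_j\ge\epsilon_0>0$. There I multiply $\m V$ by a cutoff $\chi$ that vanishes near the endpoints and equals $1$ outside those balls. Inside the balls the push is not needed, since the points already clear the threshold once $\mu\le\epsilon_0$.

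\textbf{Step 2: define the push and check (b) and (c).} Let $\g\Phi_\mu$ be the flow of $\chi\m V$ for time $\mu/c$, or time $\kappa\mu$ with a fixed $\kappa$. Smooth dependence of flows on time gives $\|\g\Phi_\mu-\mathrm{id}\|_{C^i(\mathcal Q)}\le a_i\mu$, which is (c).

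For (b), follow $s\mapsto g_j(\g\phi_s(\m x))$. While this value is below $\epsilon$ and the trajectory stays in $\mathcal O$, its derivative is at least $c$, so it cannot fall below $0$. After time $\kappa\mu$ it therefore exceeds $\min\{\epsilon,c\kappa\mu\}$ for each $j$; choosing $\kappa=1/c$ and $\mu_0\le\epsilon$ gives the bound $\mu$. The trajectory does stay in $\mathcal O$, because it moves only $O(\mu)$ and $\mathcal O$ contains a uniform neighbourhood of $\mathcal X\cap\mathcal Q$.

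\textbf{Step 3: the ball property (d).} Let $G$ be a Lipschitz constant of all $g_j$ on the closed $2$-neighbourhood of $\mathcal Q$ (the $g_j$ are $C^1$). For $\m w\in\mathbb B(\m y,\mu/G)$ we get $g_j(\m w)>g_j(\m y)-\mu\ge0$, so the ball lies in $\mathcal X$. Taking $\mu_0\le G$ keeps the ball inside the region where the constant is valid.

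\textbf{Main obstacle.} The delicate step is (b): the uniform-rate claim must hold along the whole flow trajectory, which can leave the neighbourhood where the cover was built. I will handle this with a bootstrap on the first exit time. The point moves at most $\|\m V\|_\infty\kappa\mu$, and for $\mu_0$ small enough this is less than the Lebesgue number of the finite cover around $\mathcal X\cap\mathcal Q$. The cutoff $\chi$ must also be shown harmless: near the endpoints, where it switches $\m V$ off, all $g_j$ already exceed $\epsilon_0$, so taking $\mu_0\le\epsilon_0$ suffices.
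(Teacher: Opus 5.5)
Your proposal is correct and follows the paper's proof. The push is the flow of a cutoff times a partition-of-unity combination of Mangasarian--Fromovitz directions; (b) comes from the growth of $g_j$ along the flow, (c) from writing $\g\Phi_T-\mathrm{id}$ as the time integral of the field along the flow, and (d) from a gradient bound $G$ on the $2$-neighbourhood of $\mathcal Q$. The difference is bookkeeping: you get the rate on a full neighbourhood and use forward invariance of $\{g_j\ge\epsilon\}$, so staying in the good region is a plain distance bound, whereas the paper uses the rate only at feasible points of $\mathcal Q'$ and bounds the loss of a non-small $g_j$ by $GM_\phi\mu\le\kappa/2$.
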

\begin{proof}
Let $\mathcal Q'$ be the closed $1$-neighbourhood of $\mathcal Q$. For
$\m x\in\mathcal X\cap\mathcal Q'$ with active set $\mathcal J(\m x)$, scale the vector of
Assumption~\ref{assum:mfcq} to get $\m v_{\m x}$ and an open neighbourhood
$\mathcal O_{\m x}$ on which $\nabla g_j\cdot\m v_{\m x}\ge2$ ($j\in\mathcal J(\m x)$) and
$g_j\ge\kappa_{\m x}>0$ ($j\notin\mathcal J(\m x)$). By compactness, finitely many
$\mathcal O_{\m x_1},\dots,\mathcal O_{\m x_L}$ cover $\mathcal X\cap\mathcal Q'$. Let $\kappa>0$
be below every $\kappa_{\m x_l}$ and below $\frac12\min_jg_j$ near the start and
goal, take a partition of unity $(\varpi_l)$ subordinate to the cover and a
smooth cutoff $\chi$ equal to $0$ near the start and goal and to $1$ where some
$g_j<\kappa$, and set
\begin{equation}\label{eq:l14-field}
\g\phi:=\chi\textstyle\sum_{l=1}^{L}\varpi_l\,\m v_{\m x_l}.
\end{equation}
If $g_j(\m y)<\kappa$ and $\varpi_l(\m y)>0$, then $j\in\mathcal J(\m x_l)$, so
\begin{equation}\label{eq:l14-rate}
\nabla g_j(\m y)\cdot\g\phi(\m y)\ge2\qquad(\m y\in\mathcal X\cap\mathcal Q',\ g_j(\m y)<\kappa).
\end{equation}
Let $\g\Phi_\mu$ be the time-$\mu$ flow of $\g\phi$, $M_\phi>0$ a bound on
$\|\g\phi\|$, $G$ a bound on $\|\nabla g_j\|$ on the $2$-neighbourhood of $\mathcal Q$,
and
\begin{equation}\label{eq:l14-mu0}
\mu\le\mu_0:=\min\Big\{\frac\kappa4,\ \frac{\kappa}{2GM_\phi},\ \frac1{M_\phi},\ G\Big\}.
\end{equation}
A trajectory from $\m x\in\mathcal X\cap\mathcal Q$ moves at most $M_\phi\mu\le1$; along it
$\tfrac{d}{ds}g_j\ge2$ while $g_j<\kappa$, and $g_j$ loses at most
$GM_\phi\mu\le\kappa/2$ otherwise. So no $g_j$ can first reach $0$, and it stays in $\mathcal X\cap\mathcal Q'$ and
$g_j(\g\Phi_\mu(\m x))\ge\min\{2\mu,\kappa/2\}\ge\mu$, which is (b); (a) holds as
$\g\phi=0$ near the endpoints. For (c),
\begin{equation}\label{eq:l14-int}
\g\Phi_\mu-\mathrm{id}=\int_0^\mu\g\phi\circ\g\Phi_s\,ds ,
\end{equation}
and the derivatives of $\g\Phi_s$ up to order $i$ are bounded on $\mathcal Q$
uniformly in $s\le\mu_0$. For (d), $\|\nabla g_j\|\le G$ on $\mathbb B(\m y,1)$, so
$g_j(\m y)\ge\mu$ gives $g_j>0$ on $\mathbb B(\m y,\mu/G)$. Only points of
$\mathcal X\cap\mathcal Q'$ were used.
\end{proof}

\begin{lemma}[rate in the number of pieces]
\label{lem:rate}
Write $E^\star_N$ for the optimum over $\mathcal S_N$, whose curves carry the
boundary data. Under the hypotheses of Proposition~\ref{prop:globalC}, the
best spline with $N$ pieces is almost as cheap as the best curve: there are
$C$ and $N_1$, depending on the instance, such that for all $N\ge N_1$
\[
E^\star_\infty\le E^\star_N\le E^\star_\infty+C\,N^{-(k-1/2)}\ \text{ if }\ell<k,
\qquad
E^\star_\infty\le E^\star_N\le E^\star_\infty+C\,N^{-\min\{k-1/2,\,1\}}\ \text{ if }\ell\ge k.
\]
\end{lemma}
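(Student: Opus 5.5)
The lower bound $E^\star_\infty\le E^\star_N$ is immediate. Every curve of $\mathcal S_N$ is $C^{k-1}$ and piecewise polynomial, so it lies in $\trajspace^k$. Its feasible members carry the boundary data and meet every constraint of \eqref{eq:Pinf} at every instant. The real content is the upper bound. I will prove it by taking a global minimizer $\g\gamma^\star$ of \eqref{eq:Pinf}, which exists by the existence step used in Appendix~\ref{app:thm1}, and turning it into a curve with a positive margin before interpolating. Where Theorem~\ref{thm:globalinf} loosened the constraints by $\tilde r=-\mu$, here Assumption~\ref{assum:mfcq} lets us tighten the curve instead, through the push of Lemma~\ref{lem:push}.

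\textbf{Push.} Let $\mathcal Q$ be a compact set containing the image of every curve of cost at most $E^\star_\infty+1$ with the boundary data. Coercivity gives a uniform sup bound. With moving centres I would apply the push in the space-time variables $(\m x,t)$, or argue timewise with $t$-smooth dependence. For $0<\mu\le\mu_0$ set $\g\gamma_\mu:=\g\Phi_\mu\circ\g\gamma^\star$. By Lemma~\ref{lem:push}(b), every block value along $\g\gamma_\mu$ is at least $\mu$. By (d), a tube of radius $\mu/G$ around $\g\gamma_\mu$ is then feasible. By (a), the boundary data are untouched, since $\g\Phi_\mu$ is the identity near the endpoints. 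By (c) and the chain (Fa\`a di Bruno) rule, $\|(\g\gamma_\mu-\g\gamma^\star)^{(k)}\|_{L^2}\le c\mu$, with $c$ depending on $\|\g\gamma^\star\|_{\trajspace^k}$. Hence $E(\g\gamma_\mu)\le E^\star_\infty+c'\mu$.

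\textbf{Interpolate.} Apply Lemma~\ref{lem:margin}(a) to $\g\gamma_\mu$. Its piecewise Hermite interpolant $\m q$ costs no more than $\g\gamma_\mu$ and lies within $N^{-(k-1/2)}\sqrt{U}$ of it. Choose $\mu=G\sqrt U N^{-(k-1/2)}$ up to a constant; then $\m q$ stays in the feasible tube, so $\m q$ is feasible for \eqref{eq:Pinf}. If $\ell<k$, then $\m q\in\mathcal S_N$ and $E^\star_N\le E^\star_\infty+c'\mu=E^\star_\infty+CN^{-(k-1/2)}$.

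If $\ell\ge k$, I will add the end correction of Lemma~\ref{lem:margin}(c), with $\m p(t)=N^{-k}\m P(Nt)$ on the first and last two pieces. That lemma needs $\g\gamma$ to be polynomial near the ends, and the pushed curve is not a minimizer. I avoid the issue because $\g\Phi_\mu$ is the identity near the endpoints, so near $t=0$ and $t=1$ we have $\g\gamma_\mu=\g\gamma^\star$. The minimizer $\g\gamma^\star$ is a polynomial of degree at most $2k-1$ there, by the du Bois-Reymond argument of Lemma~\ref{lem:margin}(c), and this uses the endpoint clearance. The correction moves the curve by $O(N^{-k})$, which is within the clearance, and costs $O(1/N)$ by \eqref{eq:l12-corr}. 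This gives the rate $N^{-\min\{k-1/2,1\}}$. The threshold $N_1$ comes from requiring $\mu\le\mu_0$ and $N\ge N_d$.

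\textbf{Main obstacle.} The hardest step is the cost estimate for the push. I must bound $\|(\g\Phi_\mu\circ\g\gamma^\star)^{(k)}-\g\gamma^{\star(k)}\|_{L^2}$ linearly in $\mu$ when $\g\gamma^\star$ is only in $\trajspace^k$. The Fa\`a di Bruno terms contain products of lower derivatives of $\g\gamma^\star$, and these are bounded since $\trajspace^k\subset C^{k-1}$. They are multiplied by $D^i(\g\Phi_\mu-\mathrm{id})=O(\mu)$. The one term containing $\g\gamma^{\star(k)}$ is $(D\g\Phi_\mu-\m I)\g\gamma^{\star(k)}$, which is $O(\mu)$ in $L^2$. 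Writing $E(\g\gamma_\mu)=\|\g\gamma^{\star(k)}+\m r_\mu\|^2$ with $\|\m r_\mu\|_{L^2}=O(\mu)$ then gives an increase of order $\mu$. The time dependence of moving centres needs care: the push must be built uniformly in $t$, for example from a time-dependent vector field using compactness of $[0,1]\times\mathcal Q$. I would try the space-time version of Lemma~\ref{lem:push} first.
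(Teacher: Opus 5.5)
Your proposal is correct and follows the paper's proof essentially step for step. The paper also pushes a minimizer $\g\gamma^\star$ through the space-time field of Lemma~\ref{lem:push}, bounds $\|\g\gamma_\mu^{(k)}-\g\gamma^{\star(k)}\|_{L^2}=O(\mu)$ by the chain rule, Hermite-interpolates with $\mu$ a constant multiple of $N^{-(k-1/2)}$ (the interpolation constant taken uniform over $\mu\le\mu_0$), and for $\ell\ge k$ adds the rescaled end correction $\m p(t)=N^{-k}\m P(Nt)$, justified because $\g\gamma_\mu=\g\gamma^\star$ is a polynomial of degree at most $2k-1$ near the ends. The one point worth making explicit is that the interval near $t=0,1$ on which $\g\gamma_\mu=\g\gamma^\star$ does not shrink with $\mu$, since the field $\g\phi$ and its zero set are fixed and only the flow time changes; this is what lets the interpolant reproduce $\g\gamma^\star$ on the first and last two pieces for all large $N$.
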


\paragraph{Proof of Lemma~\ref{lem:rate}.}
$E^\star_N\ge E^\star_\infty$ because $\mathcal S_N\subset\trajspace^k$. For the upper
bound, push a minimizer $\g\gamma^\star$ of \eqref{eq:Pinf} (it exists by the
existence step of Proposition~\ref{prop:degree}) inward by $\mu$ and
interpolate. For moving centres and reference, build the field of
Lemma~\ref{lem:push} on the compact set
$\{(\m x,t)\in\mathcal Q'\times[0,1]:g_j(\m x,t)\ge0\ \forall j\}$ ($\mathcal Q'$ the
$1$-neighbourhood of a compact set containing $\g\gamma^\star([0,1])$), with a
partition of unity in $(\m x,t)$ and $\chi=0$ near $(\g\gamma_0,0)$ and
$(\g\gamma_1,1)$. The resulting maps $\g\Phi_\mu(\cdot,t)$ are the identity near the
endpoints, give $g_j\ge\mu$, and $\g\Phi_\mu-\mathrm{id}$ has derivatives in
$(\m x,t)$ up to order $k$ bounded by a constant times $\mu$ (as in
Lemma~\ref{lem:push}(c)). Let $\g\gamma_\mu(t):=\g\Phi_\mu(\g\gamma^\star(t),t)$. By the chain
rule, each term of $\g\gamma_\mu^{(k)}-\g\gamma^{\star(k)}$ has a factor $O(\mu)$ and at
most one factor $\g\gamma^{\star(k)}\in L^2$, so
\begin{equation}\label{eq:l15-push}
\|\g\gamma_\mu^{(k)}-\g\gamma^{\star(k)}\|_{L^2}\le c_1\mu,\qquad E(\g\gamma_\mu)\le E^\star_\infty+c_2\mu ,
\end{equation}
with $c_1,c_2$ independent of $\mu\le\mu_0$, and $\mathbb B(\g\gamma_\mu(t),\mu/G)$ is
feasible (Lemma~\ref{lem:push}(d)). Let $\m q$ be the Hermite interpolant of
$\g\gamma_\mu$. By \eqref{eq:l12-pyth} and \eqref{eq:l12-sup},
\begin{equation}\label{eq:l15-interp}
E(\m q)\le E(\g\gamma_\mu),\qquad
\|\g\gamma_\mu-\m q\|_\infty\le c_3N^{-(k-1/2)},\qquad c_3:=\sqrt{E^\star_\infty+c_2\mu_0}+1 .
\end{equation}
With
\begin{equation}\label{eq:l15-mu}
\mu=2Gc_3N^{-(k-1/2)}\ (\le\mu_0\text{ for large }N),
\end{equation}
$\m q$ stays within $\mu/(2G)$ of $\g\gamma_\mu$, hence is feasible, and costs at most
$E^\star_\infty+c_2\mu=E^\star_\infty+2c_2Gc_3N^{-(k-1/2)}$. For $\ell<k$, $\m q\in\mathcal S_N$,
which proves the claim with $C=2c_2Gc_3$.

If $\ell\ge k$, no constraint is active near the endpoints, so there
$\g\gamma_\mu=\g\gamma^\star$ and $\g\gamma^{\star(2k)}=0$ (first variation, as in Lemma~\ref{lem:margin}(c)) and, for large $N$, $\m q=\g\gamma^\star$ on the first and last
two pieces; the missing data $\m a_s=\g\gamma_0^{(s)}-\g\gamma^{\star(s)}(0)$,
$k\le s\le\ell$, do not depend on $N$. Let $\m P$ be the spline on $[0,2]$ with
two unit pieces of degree $d$, $C^{k-1}$, zero derivatives of order $<k$ at
$0$ and $2$, and $\m P^{(s)}(0)=N^{k-s}\m a_s$ for $k\le s\le\ell$ (Taylor
polynomial on $[0,1]$, $d\ge\ell$; Hermite join to $0$ on $[1,2]$,
$d\ge2k-1$); as $N^{k-s}\le1$, $\m P$ is bounded uniformly in $N$. Then
$\m p(t)=N^{-k}\m P(Nt)$ on $[0,2/N]$, with its mirror image at $t=1$, supplies
$\m p^{(s)}(0)=\m a_s$, moves the curve by $O(N^{-k})$ inside the endpoint
clearance, and, as in \eqref{eq:l12-corr}, raises the cost at each end by at most
\[
\frac1N\int_0^2\|\m P^{(k)}\|^2du\;+\;2\|\g\gamma^{\star(k)}\|_{L^\infty(0,2/N)}\Big(\frac2N\Big)^{1/2}\|\m p^{(k)}\|_{L^2}
\;=\;O(N^{-1}).
\]
Hence $E^\star_N\le E^\star_\infty+C N^{-\min\{k-1/2,1\}}$.

\begin{lemma}[Assumption~\ref{assum:mfcq} gives Assumption~\ref{assum:rob}]
\label{lem:mfcq-rob}
Under the hypotheses of Lemma~\ref{lem:rate}, fix any $U$. For all large
enough $N$, every feasible curve of $\mathcal S_N$ with cost at most $U$
satisfies Robinson's condition: some change $\m v$ of its control points makes
every block strictly feasible to first order, as in
Assumption~\ref{assum:rob}.
\end{lemma}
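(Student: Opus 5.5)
The plan is to build the Robinson direction $\m v$ from the push field of Lemma~\ref{lem:push}, transported along the curve and then interpolated back into $\mathcal S_N$. Two facts make this work. First, for the quadratic blocks the linearization is exact in time: $\g\pi_j(\m z)+D\g\pi_j(\m z)\m v$ is the coefficient vector of $t\mapsto g_j(\g\gamma(t),t)+\nabla_{\m x}g_j(\g\gamma(t),t)^{\!\top}\delta\g\gamma(t)$, where $\delta\g\gamma$ is the curve of $\m v$, as in \eqref{eq:jacpi}. Second, $\mathrm{int}\,\mathcal K_j$ is the set of polynomials with $\min_{[0,1]}>0$, by the interior property of the cone in Appendix~\ref{app:coneprops}. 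So it suffices, piece by piece, to find $\delta\g\gamma\in\mathcal S_N$ with zero boundary and junction data that makes every linearized block strictly positive at every instant.

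\textbf{Uniform compactness.} I start with the curves of $\mathcal S_N$ (for any $N$) that carry the boundary data and cost at most $U$. By the coercivity step of Proposition~\ref{prop:degree}, they are bounded in $\trajspace^k$ uniformly in $N$. Hence they take values in one compact set $\mathcal Q$, their derivatives of order below $k$ are uniformly bounded, and they are uniformly $\tfrac12$-H\"older. On the compact set $\{(\m x,t)\in\mathcal Q'\times[0,1]:g_j\ge0\ \forall j\}$ I build the field $\g\phi(\m x,t)$ exactly as in the proof of Lemma~\ref{lem:rate}, with a partition of unity in $(\m x,t)$ and a cutoff vanishing near $(\g\gamma_0,0)$ and $(\g\gamma_1,1)$. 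This gives constants $\kappa,M_\phi,G$ with:
\begin{itemize}
\item $\nabla_{\m x}g_j\cdot\g\phi\ge2$ wherever $0\le g_j<\kappa$;
\item $\|\g\phi\|\le M_\phi$ and $\|\nabla g_j\|\le G$.
\end{itemize}
By the strict feasibility of the endpoints and the uniform H\"older bound, there is $\tau_0>0$, independent of $N$ and of the curve, such that $\g\phi(\g\gamma(t),t)=0$ for $t\in[0,\tau_0]\cup[1-\tau_0,1]$.

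\textbf{Construction of the direction.} For a feasible $\g\gamma$ I set $\m w(t):=\g\phi(\g\gamma(t),t)$ and take $\delta\g\gamma:=s\,\m q_{\m w}$, where $\m q_{\m w}$ is the piecewise Hermite interpolant of $\m w$ of degree $2k-1\le d$ at the nodes $i/N$. This interpolant is $C^{k-1}$, so it lies in the linear space of $\mathcal S_N$. For $N\ge2/\tau_0$ it vanishes identically on the end pieces, so all boundary data of every order up to $\ell$ are kept, even when $\ell\ge k$. By \eqref{eq:l12-sup}, $\|\m q_{\m w}-\m w\|_\infty\le N^{-(k-1/2)}\|\m w^{(k)}\|_{L^2}$.

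\textbf{Expected main obstacle.} The hardest step is a bound $\|\m w^{(k)}\|_{L^2}\le C(U)$ that is uniform over curves and in $N$. I would use Fa\`a di Bruno: each term of $\m w^{(k)}$ is a derivative of $\g\phi$ up to order $k$, bounded on the compact set, times a product of derivatives of $\g\gamma$ whose orders sum to at most $k$. Only the single top-order factor $\g\gamma^{(k)}$ is merely $L^2$. The lower-order factors are $L^\infty$-bounded by the Sobolev embedding and the $\trajspace^k$ bound. The piecewise polynomial structure causes no trouble, since $\g\gamma\in\trajspace^k$ globally.

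\textbf{Conclusion.} Take $N$ large enough that $N^{-(k-1/2)}C(U)\le\min\{1/G,\,M_\phi\}$. Then the linearized block at time $t$ is bounded below as follows:
\begin{itemize}
\item if $g_j<\kappa$, it is at least $g_j+s(2-G\cdot\tfrac1G)\ge s>0$;
\item otherwise, it is at least $\kappa-2sGM_\phi>0$ for $s<\kappa/(2GM_\phi)$.
\end{itemize}
The steering block, which enters through $\mathcal D$, is handled identically, since $\g\phi$ is built for all $g_j$ jointly, as in Lemma~\ref{lem:rate}. Strict positivity on each closed piece gives membership in $\mathrm{int}\,\mathcal K$, which is Robinson's condition. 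Every constant depends only on $U$ and the data, so $N$ is uniform over the sublevel set.
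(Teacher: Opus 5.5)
Your proposal is correct and follows the paper's proof essentially step for step. Both arguments compose the space-time push field of Lemma~\ref{lem:rate} with the curve and Hermite-interpolate it into $\mathcal S_N$, where it vanishes on the end pieces by equicontinuity. The error bound \eqref{eq:l12-sup} then gives $\nabla_{\m x}g_j\cdot\m v\ge1$ where $g_j<\kappa$, and a small $s$ handles the remaining times. Your Fa\`a di Bruno argument supplies the justification for the uniform bound $\|(\g\phi\circ\g\gamma)^{(k)}\|_{L^2}\le c_4(U)$, which the paper asserts without detail.
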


\paragraph{Proof of Lemma~\ref{lem:mfcq-rob}.}
Let $\g\phi(\m x,t)$ be the space-time field of the proof of Lemma~\ref{lem:rate},
on a compact set containing every curve of cost $\le U$ (bounded by coercivity,
Proposition~\ref{prop:degree}); it is $C^k$, vanishes near the endpoints, and
$\nabla_{\m x}g_j\cdot\g\phi\ge2$ where $g_j<\kappa$. Let $\g\gamma$ be feasible in
$\mathcal S_N$ with cost $\le U$, and $\m v$ the Hermite interpolant of
$t\mapsto\g\phi(\g\gamma(t),t)$. By equicontinuity, for large $N$ the end pieces of
$\g\gamma$ lie where $\g\phi=0$, so $\m v$ has zero boundary data and is an
admissible change of control points. As $\|(\g\phi\circ\g\gamma)^{(k)}\|_{L^2}\le c_4(U)$,
\eqref{eq:l12-sup} gives
\begin{equation}\label{eq:l16-interp}
\|\m v-\g\phi\circ\g\gamma\|_\infty\le c_4N^{-(k-1/2)}<1/G\ \text{ for large }N,
\end{equation}
so $\nabla_{\m x}g_j\cdot\m v\ge1$ where $g_j<\kappa$. Hence, for small $s>0$, the
piecewise polynomial $g_j+s\nabla_{\m x}g_j\cdot\m v$, with coefficients
$\g\pi_j+D\g\pi_j(s\m v)$, is $\ge g_j+s>0$ where $g_j<\kappa$ and
$\ge\kappa-sG\sup\|\m v\|>0$ elsewhere: its coefficients lie in
$\mathrm{int}\,\mathcal K_j$, which is Assumption~\ref{assum:rob} with $s\m v$.

\paragraph{The degree-refinement procedure.}
At level $r$, take degrees $d_r\uparrow\infty$ satisfying
Assumption~\ref{assum:structural}, and set
\[
h_r=\frac{(2d_r)^{-(k+1)}}{r},\qquad
\mathcal B_r=\big[-r\sqrt{d_r+1},\,r\sqrt{d_r+1}\big]^{nd_f(r)},\qquad
d_f(r)=d_r+1-2(\ell+1):
\]
the grid spacing and a box of free control points (in the rational chart
after the proof of Proposition~\ref{prop:fixeddegree}) centred at zero.

Degree elevation keeps the previous output without changing its curve or
cost. Algorithm~\ref{alg:pgd} may stop after finitely many iterations, since a
trial replaces its seed only when it is feasible and no more expensive.

Feasibility of each grid candidate and of each trial is tested exactly over
$[0,1]$, piece by piece, for rational piecewise-polynomial data and
breakpoints. For a trial, its free control points in the rational chart are
first rounded to rationals, and this rounded curve is the one kept. On each
piece, Sturm sequences isolate the roots in the open interval of the piece within disjoint closed rational subintervals of it, with no root at their endpoints. The sign of the block is then checked at one rational point between
consecutive isolating intervals, and between each end of the piece and its
nearest interval (or at one interior rational point if there is no root). A
piece on which the block vanishes identically passes.

\begin{algorithm}[t]
\caption{Degree and seed refinement for \eqref{eq:Pinf}}
\label{alg:refinement}
\begin{algorithmic}[1]
\Require degrees $d_r$ with charts $(\Gm_{\mathrm{aff}}^{(r)},\m N_r)$, spacings $h_r$, boxes $\mathcal B_r$; levels $R$; finite iteration caps for Algorithm~\ref{alg:pgd}
\State $\Gm_{\mathrm{best}}\gets\mathrm{none}$
\For{$r=1,\ldots,R$}
  \State $\mathcal G_r\gets$ feasible degree-$d_r$ curves from $h_r\mathbb Z^{n d_f(r)}\cap\mathcal B_r$
  \State \textbf{if} $\Gm_{\mathrm{best}}\ne\mathrm{none}$ \textbf{then} add its degree-$d_r$ elevation to $\mathcal G_r$
  \State \textbf{if} $\mathcal G_r=\emptyset$ \textbf{then continue}
  \State $\Gm_{\mathrm{seed}}\gets\operatorname*{arg\,min}_{\Gm\in\mathcal G_r}\mathrm{cost}(\Gm)$
  \State $\Gm_{\mathrm{trial},\free}\gets$ Algorithm~\ref{alg:pgd} with chart $(\Gm_{\mathrm{aff}}^{(r)},\m N_r)$, initialized at $(\Gm_{\mathrm{seed}}-\Gm_{\mathrm{aff}}^{(r)})\m N_r$
  \State $\Gm_{\mathrm{trial}}\gets\Gm_{\mathrm{aff}}^{(r)}+\Gm_{\mathrm{trial},\free}\m N_r^\top$ \Comment{rounded to rationals in the rational chart}
  \State $\Gm_{\mathrm{best}}\gets$ cheaper feasible member of $\{\Gm_{\mathrm{seed}},\Gm_{\mathrm{trial}}\}$
\EndFor
\State \Return $\Gm_{\mathrm{best}}$ \Comment{or no feasible curve found within the $R$ levels}
\end{algorithmic}
\end{algorithm}

\begin{proposition}[degree refinement]
\label{prop:degree}
Assume Assumptions~\ref{assum:data}, \ref{assum:curv} and~\ref{assum:mfcq}.
Once Algorithm~\ref{alg:refinement} has found a first feasible curve, its
output costs never increase and converge to the optimum $E^\star_\infty$; the
optimum $E^\star(d)$ of \eqref{eq:P} at degree $d$ also decreases to
$E^\star_\infty$; and $E^\star_\infty$ is attained by some curve. Every limit
point of the output curves (weakly in $\trajspace^k$, and uniformly together
with the derivatives below $k$) is a global minimizer of \eqref{eq:Pinf}.
\end{proposition}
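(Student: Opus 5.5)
The proof runs through four steps, in this order: existence and coercivity, monotonicity of the degree optima, their convergence to $E^\star_\infty$, and finally the behaviour of Algorithm~\ref{alg:refinement}.

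\emph{Existence and coercivity.} Take a minimizing sequence for \eqref{eq:Pinf}, which is nonempty by Assumption~\ref{assum:data}, with bounded cost. Subtract the fixed polynomial that carries the boundary data. By Assumption~\ref{assum:curv}, the remainder has zero data of orders $0,\dots,\min\{\ell,k-1\}$ at both ends, and enough of them that no nonzero polynomial of degree below $k$ survives (Lemma~\ref{lem:kernels}). A Poincar\'e-type inequality then gives $\|\g\gamma\|_{\trajspace^k}\le c(1+\sqrt{E(\g\gamma)})$; I would prove it by compactness and contradiction, using that the kernel of $E$ on curves with this data is trivial. Bounded sequences have subsequences that converge weakly in $\trajspace^k$ and, by Rellich, uniformly with derivatives below $k$. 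Each constraint is a closed pointwise condition on the position, so it passes to the uniform limit. The cost is weakly lower semicontinuous, so $E^\star_\infty$ is attained. The same argument proves the last claim: any limit point of curves whose costs tend to $E^\star_\infty$ is a global minimizer.

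\emph{Monotone convergence of $E^\star(d)$.} Degree elevation embeds degree-$d$ curves into degree-$d'$ curves for $d\le d'$ without changing the curve. I would choose $d_r$ nested (for instance multiples), so $E^\star(d)$ is nonincreasing and bounded below by $E^\star_\infty$. For the upper bound I would follow Lemma~\ref{lem:rate}, but approximate by a single polynomial instead of a spline. Push a minimizer $\g\gamma^\star$ inward with Lemma~\ref{lem:push}, so that $E(\g\gamma_\mu)\le E^\star_\infty+c_2\mu$ and a tube of radius $\mu/G$ around it is feasible. Then approximate $\g\gamma_\mu$ by a degree-$d$ polynomial that matches the boundary data and has $k$th derivative close in $L^2$. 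One way is to take a Legendre truncation of $\g\gamma_\mu^{(k)}$ and integrate $k$ times, with a low-degree correction that restores the endpoint data; $\g\gamma_\mu$ is only in $\trajspace^k$, so I would first mollify it, which is fine because the push leaves slack. The $L^2$ error of the $k$th derivative controls the sup error through the embedding, so for large $d$ the polynomial lies in the tube and costs at most $E^\star_\infty+c_2\mu+o(1)$. Letting $\mu\to0$ gives $E^\star(d)\downarrow E^\star_\infty$. If $\ell\ge k$, the data of order $\ge k$ must also be matched; near the ends the minimizer is a polynomial (first variation), which makes the end correction cheap.

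\emph{The algorithm.} Costs never increase because the previous best is elevated into $\mathcal G_r$ and a trial replaces the seed only when it is feasible and no more expensive. All tests are exact, by the Sturm procedure described above. For convergence, fix $\varepsilon$ and a degree $d$ with $E^\star(d)<E^\star_\infty+\varepsilon$. Lemma~\ref{lem:mfcq-rob} (adapted to single polynomials) gives Robinson's condition, so, as in the grid part of Proposition~\ref{prop:fixeddegree}, there is a ball in the free chart that is feasible with cost below $E^\star_\infty+2\varepsilon$. Its elevations to every $d_r\ge d$ remain feasible balls, but their radius in the $d_r$ chart may shrink, and I expect this to be the main obstacle. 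I must show that the ball still contains a point of $h_r\mathbb Z^{nd_f(r)}\cap\mathcal B_r$ for large $r$. The choice $h_r=(2d_r)^{-(k+1)}/r$ is meant to beat the operator norm of the cost and constraint maps in Bernstein coordinates, which grow polynomially like $d^{k}$ (Markov-type inequalities). The box radius $r\sqrt{d_r+1}$ eventually contains the elevated ball, since elevation does not increase the control-point norm by more than a factor $\sqrt{d_r+1}$. Concretely, I would show that perturbing the control points by $h_r\sqrt{nd_f}$ changes the sup norm of the curve and the cost by $o(1)$, so grid points near the elevated ball's centre are feasible with cost below $E^\star_\infty+3\varepsilon$. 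Then $\mathrm{cost}(\Gm_{\mathrm{best}})\le E^\star_\infty+3\varepsilon$ eventually, every output is feasible and so costs at least $E^\star_\infty$, and the costs converge. Limit points are global minimizers by the first step.
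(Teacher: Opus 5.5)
Your proposal is correct and follows essentially the paper's route: coercivity by compactness and contradiction together with the direct method; the push of Lemma~\ref{lem:push}, in its space-time form from the proof of Lemma~\ref{lem:rate}, to reach a near-optimal curve with positive clearance; polynomial approximation with a Hermite correction of the boundary data (you use Legendre truncation where the paper uses Bernstein polynomials of a smoothed curve); and the Markov-type bound $\|\g\gamma^{(i)}\|_\infty\le(2d)^i\|\Gm\|_F$, which makes rounding to $h_r\mathbb Z^{nd_f(r)}$ move the curve by $O(1/r)$ in $C^k$. Two simplifications are available. The detour through an adapted Lemma~\ref{lem:mfcq-rob} and elevated balls is unnecessary: an elevated ball is only a lower-dimensional disc in the $d_r$ chart and need not contain grid points, and the rounding step needs nothing beyond the positive clearance your step-2 polynomial already has. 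Nested $d_r$ are also not needed, since every polynomial of degree at most $d$ is one of degree at most $d'$ for any $d'\ge d$.
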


\paragraph{Proof of Proposition~\ref{prop:degree}.}
We use two properties, proved at the end: \emph{coercivity} (curves with the
boundary data and bounded cost are bounded in $\trajspace^k$) and the
\emph{approach from inside} (for every $\varepsilon>0$ some $C^\infty$ curve with the
boundary data costs at most $E^\star_\infty+\varepsilon$ and has every constraint
bounded below by a positive number). The constraints $g_j$ need only depend
continuously on $t$ and on derivatives of order below $k$.

Existence: a minimizing sequence is bounded, so a subsequence converges weakly
in $\trajspace^k$ and uniformly with the derivatives below $k$; the limit keeps
the data and constraints, and $E$ is weakly lower semicontinuous.

Approximation: fix $\varepsilon$ and $\g\gamma_\varepsilon$ from the approach from inside,
and $\delta>0$ such that every curve with the boundary data within $2\delta$ of
$\g\gamma_\varepsilon$ in $C^k$ is feasible with cost $\le E^\star_\infty+2\varepsilon$. The
Bernstein polynomials of $\g\gamma_\varepsilon$ converge in $C^{\max\{k,\ell\}}$
\citep{lorentz1986bernstein}; correcting their boundary data by a Hermite
polynomial of degree $2\ell+1$ gives $\g\gamma_d$ with the data and
$\|\g\gamma_d-\g\gamma_\varepsilon\|_{C^k}\le\delta$ for $d\ge d(\varepsilon)$. For a degree-$d$
curve with control points $\Gm$,
\begin{equation}\label{eq:p17-deriv}
\g\gamma^{(i)}=\frac{d!}{(d-i)!}\sum_{l}(\Delta^i\Gm)_l\,[\m b_{d-i}]_l ,\qquad\text{so}\qquad
\|\g\gamma^{(i)}\|_\infty\le(2d)^i\|\Gm\|_F ,
\end{equation}
since $\|(\Delta^i\Gm)_l\|\le2^i\|\Gm\|_F$ and the weights are nonnegative with
sum one. In the rational chart (an isometry for one segment), rounding the
free coordinates of $\g\gamma_{d_r}$ to $h_r\mathbb Z^{nd_f(r)}$ moves the curve in
$C^k$ by at most
\[
(2d_r)^kh_r\sqrt{n(d_r+1)}/2\le\sqrt n/(2r)<\delta
\]
for large $r$, and the grid point lies in $\mathcal B_r$, since the control points
of $\g\gamma_d$ are bounded independently of $d$. So for $r\ge r(\varepsilon)$ the seed,
hence the output, costs at most $E^\star_\infty+2\varepsilon$, and every output costs at
least $E^\star(d_r)\ge E^\star_\infty$. The previous output, elevated, is a candidate
at each level, so the costs never increase, and $E^\star(d)$ is nonincreasing and
between $E^\star_\infty$ and the output costs. Limit points of the outputs are
feasible with cost $\le E^\star_\infty$, as in the existence step.

Coercivity: if $\|\g\gamma_i\|_{\trajspace^k}\to\infty$ with bounded cost, the
normalized curves have $k$th derivatives and boundary data tending to zero; by
the compact embedding $\trajspace^k\subset\trajspace^{k-1}$, a subsequence converges in
$\trajspace^{k-1}$, hence (the $k$th derivatives tending to zero) in $\trajspace^k$, to a polynomial of degree below $k$ and norm one whose derivatives
of orders $0,\dots,\min\{\ell,k-1\}$ vanish at both ends. Under
Assumption~\ref{assum:curv} it is zero (proof of Lemma~\ref{lem:hpos}), a
contradiction.

Approach from inside: apply the space-time pushes of the proof of
Lemma~\ref{lem:rate} to a minimizer $\g\gamma^\star$. The pushed curve keeps the
data, has $g_j\ge\mu$, and costs at most $E^\star_\infty+c_2\mu\le E^\star_\infty+\varepsilon/3$.
Smoothing and a small Hermite correction add at most $\varepsilon/3$ and keep
$g_j\ge\mu/2$; if $\ell\ge k$, the end correction of Lemma~\ref{lem:rate} on a
layer of width $w$ adds the data of orders $k,\dots,\ell$ at cost $O(w)$ and moves
the curve by $O(w^k)$, so for small $w$ the cost is below $E^\star_\infty+\varepsilon$ and
$g_j\ge\mu/4$.

\paragraph{Proof of Proposition~\ref{prop:globalC}.}
Let $U=E^\star_\infty+1$ and let $N_0$ exceed the thresholds of
Lemmas~\ref{lem:rate} and~\ref{lem:mfcq-rob} with $CN_0^{-\min\{k-1/2,1\}}<1$, so that
$E^\star_N<U$ for $N\ge N_0$. By Lemma~\ref{lem:mfcq-rob}, Assumption~\ref{assum:rob}
holds on the feasible curves of $\mathcal S_N$ with cost $\le U$, which contain the
global minimizers over $\mathcal S_N$ and all limit points from starts of cost
$\le U$. Assumption~\ref{assum:structural} enters
Proposition~\ref{prop:fixeddegree} and Corollary~\ref{cor:random} only through
$\m H\succ0$ and an affine chart. On $\mathcal S_N$, $\m H\succ0$: a $C^{k-1}$
spline with zero $k$th derivative is one polynomial of degree below $k$, which
the zero boundary data annihilate (proof of Lemma~\ref{lem:hpos}). The boundary
and junction equations are linear, consistent and of full column rank for
every $N$ when $\ell<k$ (Hermite interpolation at the nodes) and for $N\ge3$
when $\ell\ge k$ (a middle piece of degree $\ge2k-1$ joins the end pieces), so
the chart of Lemma~\ref{lem:affchart} exists (for smaller $N$, a feasible start
makes the equations consistent, and an orthonormal basis of their null space
gives the chart used by Theorem~\ref{thm:globalinf}(i)). There
\[
\beta=2\lambda_{\max}(\m N^\top\m Q\m N),
\]
with $\m N$ the orthonormal basis of this chart (to which $\m z_h$ is mapped) and $\m Q$ block diagonal (each block $N^{2k-1}$ times the cost matrix of one
piece on $[0,1]$).

(i) Proposition~\ref{prop:fixeddegree} over $\mathcal S_N$, with $\mathcal B$ the whole
chart, gives limit points that are global minimizers over $\mathcal S_N$;
Corollary~\ref{cor:random} gives the same for independent uniform feasible
starts in a bounded box containing a global minimizer in its interior.

(ii) The bounds are Lemma~\ref{lem:rate}. The minimizers over $\mathcal S_N$ have
bounded cost, hence are bounded in $\trajspace^k$ (coercivity); a subsequence
converges weakly in $\trajspace^k$ and uniformly with the derivatives below $k$ to
a feasible curve with the boundary data of order below $k$ and cost at most
$\lim E^\star_N=E^\star_\infty$, a global minimizer of \eqref{eq:Pinf}.

\subsection{Scope of the assumptions and stability}
\label{app:scope}
This subsection records which parts of Assumptions~\ref{assum:data}
and~\ref{assum:curv} the proofs need and which can be weakened, when the
optimal value is stable, and when Assumption~\ref{assum:mfcq}, used only in
Propositions~\ref{prop:globalC} and~\ref{prop:degree} and in the linear stability bound below, holds and why Proposition~\ref{prop:globalC} cannot drop it.
Theorem~\ref{thm:globalinf} needs only Assumptions~\ref{assum:data} and~\ref{assum:curv}.

\paragraph{Piecewise-polynomial data.}
Assumption~\ref{assum:data} asks for continuous, piecewise-polynomial obstacle
centres and reference, which covers the piecewise-linear reference
\eqref{eq:policyref} of a learned policy. The exact test and the critical-value step of the proof of Proposition~\ref{prop:fixeddegree} use polynomial data.

The cone test stays exact for rational data and
breakpoints: on each interval of a common
partition of $[0,1]$ into the pieces of the curve and of the data, the block
is a polynomial in $t$, and $\mathcal K_j$ becomes the product of the cones of
these intervals, still closed and convex. The problem over $\mathcal S_N$
stays semialgebraic, which is all that step needs.

The
push, the rate and Lemma~\ref{lem:mfcq-rob} use only continuity of the
centres and the reference: the field $\g\phi$ of Lemma~\ref{lem:push} is a
smooth partition of unity times fixed vectors, and continuity of $g_j$ and
$\nabla_{\m x}g_j$ in $(\m x,t)$ is all its construction needs. Beyond the
exact test, the proof of Theorem~\ref{thm:globalinf} uses only continuity of the
data.

\paragraph{Bounded curves instead of Assumption~\ref{assum:curv}.}
Assumption~\ref{assum:curv} enters only through boundedness: curves of
bounded cost are bounded. The descent step uses the upper curvature $\beta$
and not $\alpha$. When the steering constraint is present ($\bar r<\infty$),
every feasible curve has $\|\g\gamma\|_\infty\le\|\g\rho\|_\infty+\bar r$,
and the coercivity argument of Proposition~\ref{prop:degree} applies
unchanged: a normalized sequence with vanishing $k$th derivatives and
vanishing sup norm cannot converge to a polynomial of norm one. So the
conclusions of Theorem~\ref{thm:globalinf} and Proposition~\ref{prop:globalC} hold with $\bar r<\infty$ in place
of Assumption~\ref{assum:curv}, with $f$ convex rather than strongly convex
over $\mathcal S_N$ and compactness supplied by the bounded feasible set.

\paragraph{Stability of the optimal value.}
$E^\star_\infty(\tilde r)$ is nondecreasing in $\tilde r$ and left-continuous on
$(-\tilde{r}_{\max},\tilde{r}_{\max}]$: if $\tilde r_i\uparrow\tilde r$, minimizers of
$(\mathrm P^{\tilde r_i}_\infty)$ cost at most $E^\star_\infty(\tilde r)$, and, as in the
proof of Theorem~\ref{thm:globalinf}(iii), a subsequence converges to a curve
feasible for $(\mathrm P^{\tilde r}_\infty)$ of cost at most
$\lim_iE^\star_\infty(\tilde r_i)$ (or $E^\star_\infty(\tilde r_i)\to\infty$ if
$(\mathrm P^{\tilde r}_\infty)$ is infeasible). Hence $E^\star_\infty$ has at most countably
many jumps, all upward on the right. Let
\begin{equation}\label{eq:Erob}
E_{\mathrm{rob}}:=\lim_{\tilde{r}\downarrow0}E^\star_\infty(\tilde{r}).
\end{equation}
By compactness, a curve along which every constraint holds strictly keeps a
positive margin, so every method whose curves keep a positive clearance costs
at least $E_{\mathrm{rob}}$, the analogue of the robust optimum of
\citet{solovey2020revisiting}. We call \eqref{eq:Pinf} \emph{stable} if
$E_{\mathrm{rob}}=E^\star_\infty$; with $\tilde r$ in place of $0$, this holds at all but
countably many $\tilde r$.

On a stable instance, $E^\star_\infty(\mu)<\infty$ for some $\mu\in(0,\tilde r_{\max}]$, so Theorem~\ref{thm:globalinf}(ii) at $\tilde{r}=0$ (with $U=E^\star_\infty(\mu)$) gives curves
feasible for \eqref{eq:Pinf} with costs in
$[E^\star_\infty,\,E^\star_\infty(\delta_N)+\delta_N]\to E^\star_\infty$, and if
\begin{equation}\label{eq:linstab}
E^\star_\infty(\tilde{r})\le E^\star_\infty+c\,\tilde{r}\qquad(\text{small }\tilde{r}>0),
\end{equation}
within $(c+1)\delta_N$ of $E^\star_\infty$. Bound \eqref{eq:linstab} holds if
Assumption~\ref{assum:mfcq} holds along one optimal curve (an open condition):
the construction of Lemma~\ref{lem:push} near that curve, used as in the proof
of Lemma~\ref{lem:rate}, gives a curve with every $g_j\ge\varrho$ and cost at
most $E^\star_\infty+c_2\varrho$, and $g_j\ge\varrho$ is a margin of at least
$\varrho/(4G')$ for a constant $G'$ of the data. Theorem~\ref{thm:globalinf} then
recovers the rate of Proposition~\ref{prop:globalC}(ii) with exactly feasible
curves.

\paragraph{When Assumption~\ref{assum:mfcq} holds.}
It is needed only at feasible positions within distance $1$ of curves of cost
at most $E^\star_\infty+1$. It always holds for robots without obstacles, walls
or steering: the direction $\m v=\m x$, which spreads every robot away from the
origin, gives
\[
\nabla g_j\cdot\m v=2\|\m x_a-\m x_b\|^2>0
\]
for every touching pair. Spreading about the centre of a single ball obstacle
does the same for that obstacle and any number of robots.

For static constraints it holds for almost every choice of margins. Give block
$j$ the radius $r_j+\delta_j$. For each set $\mathcal J$ of blocks, the map
$\m x\mapsto(g_j(\m x))_{j\in\mathcal J}$ is polynomial, so by Sard's theorem
almost every value is regular. At every configuration where exactly the blocks
of $\mathcal J$ are active, their gradients are then linearly independent, which
implies Assumption~\ref{assum:mfcq}. There are finitely many sets $\mathcal J$,
so the exceptional margins $\g\delta$ form a null set. For moving obstacles or a
moving reference the same count suggests that, generically, the condition
fails only at isolated points of space-time, and a cheapest curve may pass
through one.

\paragraph{Proposition~\ref{prop:globalC} cannot drop Assumption~\ref{assum:mfcq}.}
The condition is needed by the algorithm, not only by the proof. Take a point
robot in $\R^2$, two discs of radius $r$ centred at $(\pm r,0)$, which touch at
the origin, where Assumption~\ref{assum:mfcq} fails, and the cost $c\,x_2$ with
$c>0$ over $x_2\ge-1$ (adding $(\epsilon/2)\|\m x\|^2$, $\epsilon>0$, changes
nothing below). The vertical line $x_1=0$ is feasible throughout. From $(0,y)$
with $0<y\le\eta c$, the linearized set is
\[
\{\m w:\ w_2\ge y/2+(r/y)|w_1|\},
\]
and the repair returns its apex $(0,y/2)$. So the iterates halve their
distance to the origin at every step and converge to it. The origin is not a
KKT point, since the cost still decreases along the feasible line. A cheapest
curve through a zero-width gap between two touching obstacles can stall
GlobMP in the same way.

\section{Addendum to Section~\ref{sec:experiments}: experiments}

\label{app:extexp}

\subsection{Experimental setup}
\begingroup
\setcounter{topnumber}{4}\setcounter{bottomnumber}{3}\setcounter{totalnumber}{8}
\renewcommand{\topfraction}{0.95}\renewcommand{\bottomfraction}{0.9}
\renewcommand{\textfraction}{0.05}\renewcommand{\floatpagefraction}{0.8}
\label{app:benchmarks}

This subsection summarizes the hardware, robot platforms and software shared across all experiments. GlobMP's runs below (D3IL aside, Appendix~\ref{app:avoiding}) are instances of \eqref{eq:P}, each built from blocks of \eqref{eq:Pinf}, Appendix~\ref{app:variants} or the maze keep-outs \eqref{eq:maze-keepout} of Appendix~\ref{sec:exp-maze} and solved by
Algorithms~\ref{alg:pgd}--\ref{alg:proj} under the protocol of its source. All reported baseline results are obtained from our own
reproductions.

\label{app:expdetail}
\label{app:addexp}

\paragraph{Hardware.} The D3IL and maze baselines
ran on a workstation with a $24$-core Intel Core Ultra~9 $285$K at $3.7$\,GHz
and one NVIDIA GeForce RTX~$5090$ ($32$\,GB); the synthetic study of \S\ref{sec:exp-synthetic} and the multi-robot suite of \S\ref{app:exp-multirobot} ran on a shared node with
two $32$-core AMD EPYC~$7502$ CPUs and eight NVIDIA A$100$-PCIE GPUs
($40$\,GB each), as did GlobMP's maze runs, so the maze times compare different machines. Training, sampling and model evaluation were performed on a
single GPU. The RRT, GCS and SDP-MP baselines ran on CPU with each run pinned
to fixed cores, and Algorithms~\ref{alg:pgd}--\ref{alg:proj} ran on CPU or
GPU as stated with each experiment (for the synthetic study, in
Appendix~\ref{app:exp-synthetic-details}).

\paragraph{Robot platforms.}
\label{app:robot-system}
The bimanual demonstrations are collected on the setup of Figure~\ref{fig:teleop-rig}, using a Dexmate Vega-1 with two XHAND1 dexterous hands. Two ZED X One S cameras are mounted at the wrists and one ZED X Mini camera is mounted on the head. Teleoperation uses Manus gloves for hand target
gestures and HTC VIVE trackers for wrist target poses. Only the resulting
trajectories enter this paper; none of the teleoperation hardware is used at
inference. Figure~\ref{fig:modalities} shows the modalities, the
task string, the camera images and the fingertip tactile arrays, at the instant
the tissue box changes hands in the handover task.

The Robotarium experiments of \S\ref{sec:exp-mrmp} use the Georgia Tech
Robotarium~\citep{wilson2020robotarium}, with its $3.2\times2.0$\,m arena,
GRITSBot geometry, and differential-drive dynamics. The benchmark environments
of \S\ref{sec:exp-mrmp} are adapted to the Robotarium workspace, with obstacles scaled by $0.85$ (\texttt{Cross-Room}), $0.4$ (\texttt{Drop-Region}), $0.2$ (\texttt{Corridor-Bay}, eight posts) and $1.1$ (\texttt{Conveyor}, with wider channels and a model retrained for them), represented virtually in the arena. The GRITSBots have
physical radius $0.055$\,m, while planning uses an inflated radius
$r=0.13$\,m to account for tracking error, the testbed's
barrier-certificate safety layer and the non-holonomic conversion of the plan. The planned trajectories are tracked through
the Robotarium's single-integrator-to-unicycle controller, and an overhead
camera records the executed motion. The virtual obstacles are overlaid from
the corresponding map definitions, and clearance is evaluated from the robot
trajectories tracked in the video after correcting for lens distortion.
Across the four obstacle maps, no GlobMP robot comes closer than $0.06$\,m to
an obstacle; the closest pass occurs on \texttt{Conveyor} and is shown in
Figure~\ref{fig:robotarium-setup}; Figures~\ref{fig:rg-empty}--\ref{fig:rg-crossroom}, at the end of the appendix, show every method except PCD-LB (at zero in Figure~\ref{fig:robotarium-sr-one}) on each map.

\begin{figure}[t!]
\begin{center}

\includegraphics[width=\linewidth]{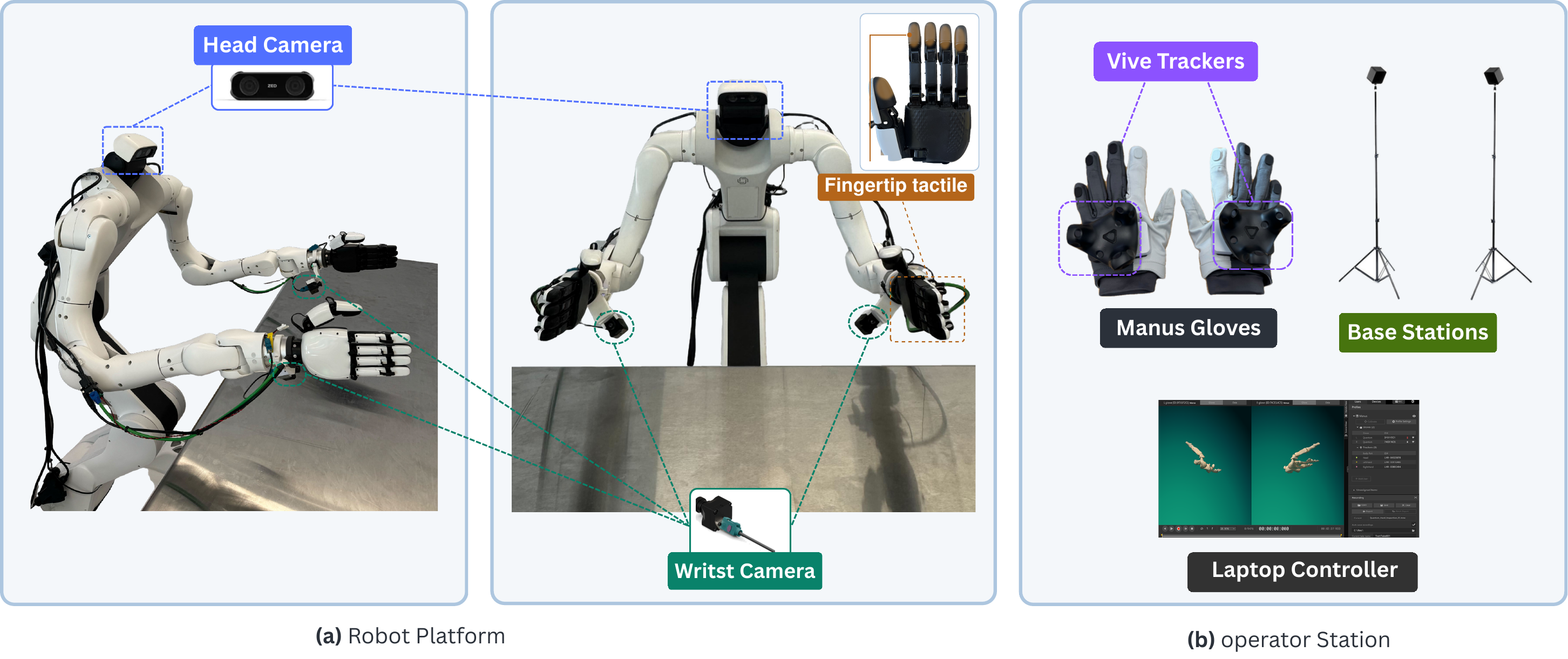}
\end{center}

\caption{Robot platform and operator station. The operator's
finger joints come from Manus gloves and the wrist poses from VIVE trackers
against two base stations; retargeting turns both into absolute joint targets for
the $38$ arm and hand joints of a Dexmate Vega-1 carrying two XHAND1
hands, sent at $30$\,Hz. Head and wrist
cameras and the fingertip force sensors record alongside. The close-up is the same hand
photographed on the bench and cut out of its background; the five fingertip pads, $120$ taxels each, are
tinted amber and the leader points at one of them; they are
the only tactile surfaces the hand has, the thumb's included. The palm and the finger links sense nothing. Only the resulting trajectories are
used in this paper; none of the operator hardware is present at inference.}
\label{fig:teleop-rig}
\end{figure}

\begin{figure}[htbp]
\centering
\includegraphics[width=\linewidth]{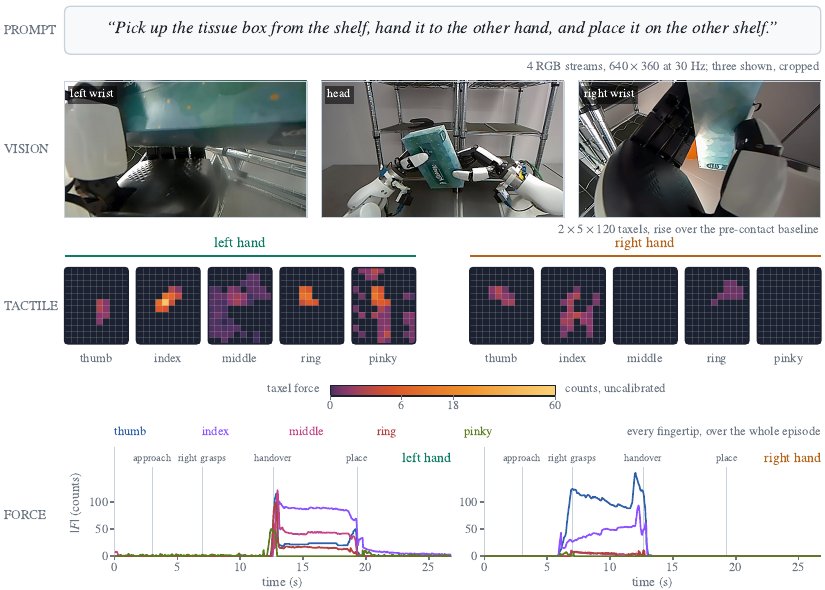}
\caption{Given modalities, at the instant the tissue box changes
hands (one episode of the handover task, $27$\,s at $30$\,Hz). \emph{Prompt}: the
task string the policy is conditioned on, one per episode. \emph{Vision}: three of
the four RGB streams. \emph{Tactile}: both hands at that same instant, five
fingertip pads of $120$ taxels each, every pad unrolled into its $10\times12$
measurement grid, one square per taxel, shown as its rise over the pre-contact
baseline on a stretched colour scale. \emph{Force}: the same fingertip
sensors over the whole episode, every finger on its own, one panel per hand.
Both scales are the sensor's own counts, as measured: the hand reports integers
and the vendor documents no conversion to newtons, so the counts carry no unit
and the figure claims none. The load passes from the right hand to
the left at the handover and falls when the box is placed.}
\label{fig:modalities}
\end{figure}

\begin{figure}[htbp]
\centering
\includegraphics[width=0.95\linewidth]{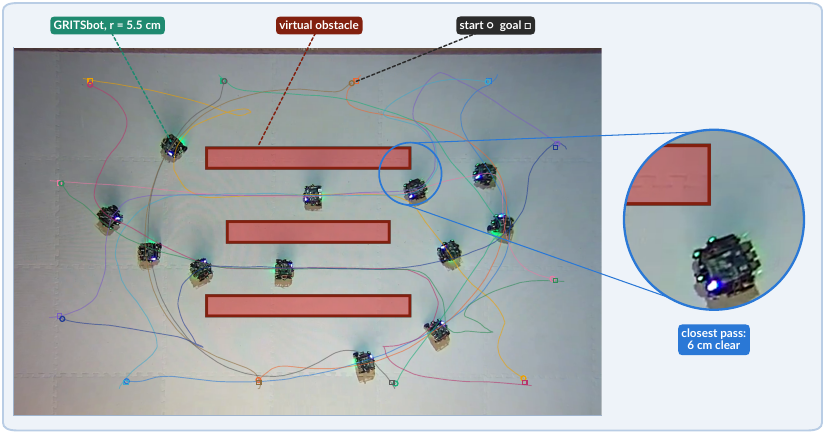}
\caption{GlobMP on the Georgia Tech Robotarium: twelve GRITSBots on \texttt{Conveyor} in the
$3.2\times2.0$\,m arena, at $t=25$\,s. The paths the
robots drove, starts (circles), goals (squares), and the virtual obstacles drawn from
the map. The circle magnifies the run's closest pass: $6$\,cm between the robot's body and a belt.}
\label{fig:robotarium-setup}
\end{figure}

\paragraph{Software.} All experiments are based on PyTorch ($2.13$; $2.6$ for Maze2D) for automatic
differentiation, JAX~$0.6.2$ for DiRecT's SCP--ADMM projector, and IPOPT
through Pyomo for the augmented-Lagrangian projection, under an external
wall-clock cap. The RRT variants run through OMPL~\citep{sucan2012ompl}; IRIS and GCS through Drake~\citep{drake}; and the curve fits, SDP-MP and GlobMP's SQP polish through CVXPY~\citep{diamond2016cvxpy} with Clarabel~\citep{goulart2024clarabel}. The
multi-robot baselines build on MMD \citep{shaoul2025mmd} (MIT), MDOC
\citep{he2026mdoc} (MIT) and DiRecT \citep{direct2026} (MIT), itself an
extension of \textsc{Projected Coupled Diffusion} \citep{luan2026pcd} (MIT). Every learned multi-robot, maze and D3IL baseline is trained from scratch under
its source's protocol rather than scored from a released checkpoint.

\subsection{Synthetic experiments}
\label{app:synthetic-tables}
\label{app:exp-synthetic-details}

The synthetic study of \S\ref{sec:exp-synthetic} tests continuous-time
feasibility, the cost of the returned curve, and scalability, on Problem~\ref{prob:B} and a variant of it.
Every instance is a minimum-derivative problem with snap cost $k=4$,
endpoint data $\ell=4$, piecewise degree-$9$ polynomials, $C^3$ junctions and
margin $0.005$\,m (added once to pair separations), so the piecewise conditions of
Appendix~\ref{app:piecewise} apply and the planning curve lies in the $\trajspace^4$
class of \eqref{eq:Pinf}.

\begin{problem}[robots exchanging places among spheres]\label{prob:B}
$N_a$ robots, $N_a$ even, are stacked into one curve $\g\gamma$ with values in $\R^{2N_a}$ and form
$N_a/2$ exchanging pairs: in pair $i$ one robot moves from $(x_0,y_i)$ to
$(x_0+D_i,y_i)$ and the other the reverse way along $y_i+o_i$, with
$D_i$ uniform on $[6,8]$\,m and $o_i$ uniform on $[0.3,0.9]\,R_S$ with $R_S=1.2$\,m, consecutive pairs separated by $2.8$ to $3.6\,R_S$ (from the upper row of one to the lower row of the next). Every pair of robots
carries a separation block
($\g\Sigma_j=\m E_{ab}^{\!\top}\m E_{ab}$, $\m c_j=\m 0$, $r_j=R_S$; Appendix~\ref{app:variants}), and every robot a
ball block for each static sphere, with $\m E_a\m c_j$ its centre, $\g\Sigma_j=\m E_a^{\!\top}\m E_a$
and $r_j$ its radius. Two spheres of radius uniform on $[0.45,0.65]$\,m sit $0.3$ to $0.9$ radii
below the first row, so the first robot's straight path enters both, and
every start and goal clears them by $0.2$\,m. There is no workspace bound, so
every inequality block has a nonconvex feasible set. Each exchanging pair, and each robot whose straight path enters or passes within $5$\,cm of a sphere, for that sphere, is passed on one of two sides, which gives $8$ to $128$
route classes per instance.
\end{problem}

The main comparison uses $N=7$ pieces and $20$ instances at each of
$N_a=2,4,6,8$. The scaling study uses $N=4$ pieces, lower rows of consecutive pairs spaced $3R_S$
apart with $o_i=0.6R_S$ and one $D_i$ shared by all pairs, three spheres, and, from $2$ to $200$ robots, $10$
seeds per size for GlobMP and $5$ for the baselines. GlobMP's scaling runs start from the straight-line curve (Figure~\ref{fig:prob-b} reports only this stage; no route classes, polish or finishing step enter it) and are timed from process start to the first feasible curve, setup included, on CPU (Figure~\ref{fig:prob-b}a) and separately on one A100 ($28$\,s median at $200$ robots). Blocks are counted per piece: at $200$ robots, $19{,}900$ pair and $600$ robot-sphere blocks over $4$ pieces give the $82{,}000$ blocks of \S\ref{sec:exp-synthetic} and Figure~\ref{fig:prob-b}c.

\paragraph{Methods.} In the main comparison, GlobMP opens each route class with a waypoint path that
passes every pair and every crossed sphere on the chosen side, runs
Algorithms~\ref{alg:pgd}--\ref{alg:proj} (then an SQP polish and the
finishing step of Appendix~\ref{app:impl}), and returns the cheapest feasible result over the
classes. RRT-Connect and RRT$^*$ are the OMPL implementations
\citep{kuffner2000rrt,karaman2011sampling} in $\R^{2N_a}$ with the exact
validity test (every pair at least $R_S$ apart, every robot outside every
sphere); they sample in the rectangle spanned by the instance plus $3$\,m,
since a sampling planner needs a bounded region, and get a $60$\,s budget ($600$\,s
in the scaling study). Their path is collision-free as a polyline but is
not a curve of the class, so the returned curve is the cheapest
collision-free fit among several (else the least colliding) on the true geometry (dense check, $0.1$\,mm tolerance): the
minimum-snap interpolant through the path at the segment breakpoints, which
cuts corners; fits that trade snap against the squared distance to the path;
and minimum-snap fits kept within a corridor around the path wherever the
current fit collides, with the corridor width set by the path's own
clearance \citep{mellinger2011minimum,richter2016polynomial}, each on the path and on a
shortcut version of it. GCS
\citep{marcucci2023gcs} works in the same joint space, where both collision
sets are convex: robot $a$ inside a sphere is $\{\m x:\|\m x_a-\m E_a\m c_j\|\le r_j\}$ and
a colliding pair is $\{\m x:\|\m x_a-\m x_b\|\le R_S\}$. IRIS \citep{deits2015computing}
grows regions from seeds every $0.5$\,m along a $60$\,s RRT-Connect path (GCS budget $2400$\,s, $600$\,s in the scaling study), Drake's
GCS trajectory optimization finds the minimum path-energy path through them (plus a $10^{-3}$ time cost) with Bézier
order $5$, falling back to $3$ and $2$ where its convex solve fails
numerically, and its curve is fitted in the same way. SDP-MP, the
semidefinite relaxation of \citet{graesdal2026sdp} extended to pair blocks, is written on the same program in its own affine
chart, so that the endpoint and junction conditions hold by construction;
the per-segment form with lifted junctions stops with a numerical error in
Clarabel from two segments on. The chart form is joint over all robots and segments, and each block must lie on each segment in the cone of polynomials
nonnegative on $[0,1]$ of \S\ref{sec:setup}, written as sums of squares. On an obstacle-free instance it
reproduces the exact minimum-snap optimum. Its rounded curve is the
first-order part of the solution. The baselines run on CPU, one run per
four pinned cores, and a run that returns nothing within one hour counts as
a failure. GlobMP's route classes run on CPU in the same way at $N_a=2,4$
and on the first $10$ instances at $N_a=6$, and on A100 GPUs, four to ten runs per
GPU, for the rest, so its times are totals of work rather than a
like-for-like speed comparison.

\paragraph{Rate in the number of pieces.} Figure~\ref{fig:rate} uses
instance $0$ of the main comparison with $N_a=2$, degree $9$, $C^{k-1}$ joins
and the $0.5$\,cm margin, for $k\in\{1,2,3,4\}$ and $\ell\in\{k-1,k\}$, and
$N$ up to $12$ ($k=1$), $16$ ($k=4$), $24$ ($k=3$) and $32$ ($k=2$). All route-class runs
converge to the same class, so $E^\star_N$ is estimated by the cost GlobMP reaches from
it: Algorithms~\ref{alg:pgd}--\ref{alg:proj}, the SQP polish and the finishing
step, from the waypoint path for $N\le8$ and from the $N=8$ curve,
refitted by least squares, for larger $N$; a run counts only at a KKT point
that keeps the margin. The cost is evaluated in global time from the
$k$th-derivative coefficients, with the cost quadratic expanded around the
obstacle-free minimizer (a change of chart): the expansion around the
default chart point loses all digits beyond about $16$ pieces at $k=4$, and
two starts agree to $5\cdot10^{-10}$ relative; Figure~\ref{fig:rate}a floors at $2\cdot10^{-9}$. Since \eqref{eq:Pinf} keeps only the orders below
$k$, both values of $\ell$ share $E^\star_\infty$, estimated by the cheapest
$\ell=k-1$ value at $N\ge8$. Panel (c) takes $N=8$, $\ell=k-1$ and margins (added once to the pair separation) from $0$ to
$5$\,cm, up to $10\%$ of the smaller sphere radius ($0.51$\,m; the other is $0.58$\,m); for $k\ge2$ the pair block stays inactive, at least $19$\,cm clear, so these are the margins $\tilde r$ of Theorem~\ref{thm:globalinf} (for $k=1$ it is active, so the $k=1$ curve in (c) raises the pair separation by $\tilde r$, not $2\tilde r$, and is not $E^\star_\infty(\tilde r)$). Velocity ($k=1$) wraps the spheres along arcs, and its finishing step reaches a
KKT point only on some runs, each started from the waypoint path or the curve at $N/2$: the panels show the runs that do, against $E^\star_\infty$ and, in
(c), the zero-margin cost, each estimated by the cheapest $\ell=0$ curve
feasible on the true geometry ($N=10$ and $N=8$).

\FloatBarrier

\subsection{Maze navigation}
\label{sec:exp-maze}
We examine whether the proposed formulation remains effective in a nonconvex maze. We consider the safe maze navigation benchmark of \citet{xiao2025safediffuser}, following the setting reported by \citet{direct2026}. A point robot traverses the D4RL \texttt{maze2d-large} environment \citep{fu2020d4rl} between fixed endpoints while avoiding keep-out regions placed within the maze corridors. The state contains planar position and velocity, and the action is a planar force. A rollout is considered safe if the executed trajectory remains outside every keep-out region. We evaluate two settings, \texttt{Broad}, containing two keep-outs, and \texttt{Narrow}, containing six.

This benchmark combines both constraint constructions used in this paper. The maze walls form a union of cells and therefore define a disjunctive free space that cannot be represented by a single membership in $\mathcal K$. A breadth-first search on the cell graph selects a corridor in ${\approx}1$\,ms, after which the continuous trajectory is optimized and verified within the selected corridor. Each of the $12$ corridor boxes along the route contributes four affine blocks \eqref{eq:wallblock} and one B\'ezier segment, with adjacent segments joined $C^{\ell}$, resulting in $48$ affine blocks. Each keep-out contributes one block \eqref{eq:maze-keepout}, quadratic for $e=2$ and quartic for $e=4$, beyond the quadratic blocks of \eqref{eq:Pinf}. We use $k=\ell=2$ with pieces of degree $9$, so Corollary~\ref{cor:eps}(i) ($\ell<k$) does not apply here. The maze geometry is not learned. The baselines rely on the diffusion prior to remain within the maze and explicitly constrain only the keep-outs among the geometric constraints, whereas our formulation enforces the corridor boundaries through affine blocks and does not use a learned prior.

\begin{table}[t]
\caption{\texttt{Maze2D Broad}, in the format of \citet{direct2026}'s Table~5.
Mean $\pm$ standard deviation. The four \emph{Gen.}\ columns score the generated trajectory.
Best value(s) in bold, Time excepted. All baselines are run here over $10$ evaluation episodes, sampling from the same diffusion planner we train, so the values are measured rather than quoted; our rows are deterministic and report a single trajectory.}
\label{tab:maze2d}
\begin{center}
\footnotesize
\resizebox{\ifdim\width>\linewidth\linewidth\else\width\fi}{!}{%
\begin{tabular}{lcccc|cccc}
\toprule
Method & SR $\uparrow$ & Viol.\ $\downarrow$ & Score $\uparrow$ & Time\,(s) $\downarrow$
& Gen.\ SR $\uparrow$ & Gen.\ Viol.\ $\downarrow$ & Gen.\ AS $\downarrow$ & Gen.\ CS $\downarrow$ \\
\midrule
Diffuser & $0.000$ & $44.600\pm11.298$ & $1.612\pm0.004$ & $0.256\pm0.099$ & $0.000$ & $39.700\pm18.815$ & $0.0396\pm0.0022$ & $0.4281\pm0.0427$ \\
Classifier Guidance & $0.100$ & $27.800\pm13.934$ & $1.612\pm0.004$ & $0.548\pm0.091$ & $0.300$ & $9.000\pm11.340$ & $0.0400\pm0.0022$ & $0.4352\pm0.0552$ \\
Gradient Guidance & $0.200$ & $13.000\pm6.885$ & $1.613\pm0.005$ & $2.663\pm0.107$ & $\mathbf{1.000}$ & $\mathbf{0.000}\pm0.000$ & $0.0404\pm0.0023$ & $0.4471\pm0.0460$ \\
Projected Diffusion & $0.400$ & $38.500\pm45.247$ & $1.314\pm0.463$ & $103.729\pm26.086$ & $\mathbf{1.000}$ & $\mathbf{0.000}\pm0.000$ & $0.0026\pm0.0002$ & $0.0092\pm0.0024$ \\
Primal--Dual & $0.000$ & $30.600\pm12.492$ & $0.111\pm0.320$ & $4.426\pm0.118$ & $0.800$ & $0.300\pm0.640$ & $20.1566\pm0.3115$ & $1.8145\pm0.0146$ \\
Augmented Lagrangian & $0.500$ & $9.600\pm10.394$ & $-0.025\pm0.000$ & $4.229\pm0.212$ & $0.000$ & $9.900\pm7.382$ & $1.8568\pm0.4291$ & $1.1980\pm0.0542$ \\
SafeDiffuser-RoS & $0.000$ & $59.900\pm41.293$ & $1.070\pm0.562$ & $145.924\pm21.730$ & $\mathbf{1.000}$ & $\mathbf{0.000}\pm0.000$ & $0.0082\pm0.0066$ & $0.0115\pm0.0043$ \\
SafeDiffuser-ReS & $0.400$ & $9.800\pm8.600$ & $0.655\pm0.833$ & $30.553\pm2.242$ & $\mathbf{1.000}$ & $\mathbf{0.000}\pm0.000$ & $0.0039\pm0.0048$ & $0.0164\pm0.0059$ \\
SafeDiffuser-TVS & $0.400$ & $42.400\pm38.733$ & $1.180\pm0.625$ & $100.873\pm18.575$ & $\mathbf{1.000}$ & $\mathbf{0.000}\pm0.000$ & $0.0048\pm0.0029$ & $0.0117\pm0.0056$ \\
DiRecT & $\mathbf{1.000}$ & $\mathbf{0.000}\pm0.000$ & $1.610\pm0.004$ & $7.746\pm0.485$ & $\mathbf{1.000}$ & $\mathbf{0.000}\pm0.000$ & $0.0024\pm0.0000$ & $0.0080\pm0.0003$ \\
GlobMP (Ours), disc cover & $\mathbf{1.000}$ & $\mathbf{0.000}$ & $1.591$ & $270.3$ & $\mathbf{1.000}$ & $\mathbf{0.000}$ & $0.0009$ & $0.0010$ \\
GlobMP (Ours), exact $e{=}4$ & $\mathbf{1.000}$ & $\mathbf{0.000}$ & $\mathbf{1.629}$ & $32.8$ & $\mathbf{1.000}$ & $\mathbf{0.000}$ & $\mathbf{0.0005}$ & $\mathbf{0.0001}$ \\
\bottomrule
\end{tabular}}
\end{center}
\end{table}

\textbf{Baselines.} We compare against ten diffusion planners, nine of them constrained: \emph{Diffuser} \citep{janner2022diffuser}, \emph{Classifier Guidance} \citep{dhariwal2021diffusion}, \emph{Gradient Guidance} \citep{chung2023diffusion}, \emph{Projected Diffusion} \citep{christopher2024projected}, \emph{Primal--Dual} and \emph{Augmented Lagrangian} \citep{zhang2025constrained}, the \emph{SafeDiffuser} variants RoS, ReS, and TVS \citep{xiao2025safediffuser}, and \emph{DiRecT} \citep{direct2026}, on the task setup of \citet{direct2026}. Methods requiring a nonlinear solver use IPOPT \citep{wachter2006implementation}. The nine constrained methods impose the keep-out regions but leave satisfaction of the maze walls to the learned prior, and none guarantees feasibility in continuous time. Their constraints are enforced pointwise during sampling or projection, so feasibility holds at the sampled states rather than along the whole curve.

\textbf{Training and environment details.} Each keep-out region is represented by an ellipse or super-ellipse centred at $(c_x,c_y)$ with semi-axes $(r_x,r_y)$ and order $e$, \begin{equation} \label{eq:maze-keepout} \Big|\tfrac{x-c_x}{r_x}\Big|^{e} +\Big|\tfrac{y-c_y}{r_y}\Big|^{e}\;\ge\;1 . \end{equation} The \texttt{Broad} setting uses one region with $e=2$ and one with $e=4$, while all six keep-outs in \texttt{Narrow} use $e=4$. Each region straddles a wall stub and extends into the corridor, and GlobMP's route touches the $0.4$ margin of both keep-outs in \texttt{Broad} and of four of the six in \texttt{Narrow}, as illustrated in Figures~\ref{fig:maze2d-rollouts-broad} and~\ref{fig:maze2d-rollouts-narrow}. A single continuous-time diffusion model with a temporal UNet is used by all baselines. It is trained for $500{,}000$ steps with Adam, batch size $256$, on the D4RL \texttt{maze2d-large-v1} dataset using min--max normalization. The learning rate is $10^{-3}$ with PyTorch's cosine annealing ($T_{\max}=10{,}000$ steps, so periodic over training), the exponential moving average decay is $0.995$, and the planning horizon is $384$ steps. The model outputs a state--action trajectory. Dynamics are identified from the same dataset by ordinary least squares over $3{,}993{,}503$ transitions and are imposed on the baselines as equality constraints. Because open-loop execution of the fitted linear model diverges over this horizon, all methods execute their planned trajectories using a proportional--derivative controller with proportional gain $5$ and derivative gain $1$.

\textbf{Evaluation.}
We follow the protocol of \citet{direct2026}. Each baseline generates $10$
i.i.d.\ trajectories conditioned on the same fixed endpoints, with each
trajectory executed for at most $800$ environment steps. We report
\emph{SR}, the fraction of collision-free rollouts; \emph{Viol.}, the mean
number of steps spent inside a keep-out; \emph{Score}, the D4RL normalized
return; and \emph{Time}, the planning time. The \emph{Gen.}\ columns evaluate the
generated trajectory before execution and additionally report acceleration
smoothness, \emph{AS}, and curvature smoothness, \emph{CS} (not the constraint satisfaction of Appendix~\ref{app:exp-multirobot}). Baseline values
are means over $10$ samples, whereas our method is deterministic and reports
a single trajectory.

Tables~\ref{tab:maze2d} and \ref{tab:maze2d-narrow} reveal an important
difference between planning and execution. In \texttt{Broad}, several
baselines generate trajectories with \emph{Gen.\ SR}${}=1.00$ but lose safety
during execution: Gradient Guidance falls to $0.20$, SafeDiffuser-RoS to
$0.00$, and Projected Diffusion, SafeDiffuser-ReS and SafeDiffuser-TVS to
$0.40$. DiRecT is the only baseline that carries \emph{Gen.\ SR}${}=1.00$
through execution. Both of our formulations
maintain \emph{SR}${}=1.00$ with zero violations before and after execution.
The disc-cover formulation attains a score of $1.591$, while the
exact $e=4$ formulation improves the score to $1.629$ and achieves the lowest
\emph{AS} and \emph{CS}. Both satisfy the keep-out constraints for every $t$.

The distinction becomes more pronounced in \texttt{Narrow}, where feasibility
is difficult already at the planning stage. Gradient Guidance, SafeDiffuser-ReS
and DiRecT are the only baselines reaching \emph{Gen.\ SR}${}=1.00$, and of these
only DiRecT keeps it through execution; the other two fall to $0.10$. Our
trajectory maintains \emph{SR}${}=1.00$ with zero violations, achieves a score of
$1.625$ compared with $1.615$ for DiRecT, and obtains the
lowest \emph{AS} and \emph{CS}, at a planning time of $90.0$\,s compared with
$16.0$\,s for DiRecT (different machines, Appendix~\ref{app:addexp}). That time is not a single descent: Algorithm~\ref{alg:pgd}
is within $1\%$ of its own final cost after $19$\,s and stops at $57$\,s ($60$\,s cap),
and an SQP polish then supplies the final $6\%$ of the cost ($30$\,s, plus $3$\,s for the finishing step); letting
Algorithm~\ref{alg:pgd} run to its own stopping rule instead ($532$\,s) changes
neither the cost nor the score. A $20$\,s cap gives $32.8$\,s on \texttt{Broad}
against $7.7$\,s for DiRecT. On both maps the returned trajectory satisfies
the keep-out constraints at every $t\in[0,1]$, not only at the sampled
states that the benchmark checks.

In Tables~\ref{tab:maze2d} and~\ref{tab:maze2d-narrow}, several baselines lose
safety between the generated plan and the executed rollout, whereas every
GlobMP row is at $1.00$ on both. Figures~\ref{fig:maze2d-rollouts-broad}
and~\ref{fig:maze2d-rollouts-narrow} show the executed rollouts of every method
on the two maps.
GlobMP keeps the left side of \eqref{eq:maze-keepout} at least $1.4$, a margin of $0.4$; the plan touches this margin exactly at its active contacts (minimum
clearance $3.1\times10^{-7}$ on \texttt{Broad} and $3.7\times10^{-10}$ on
\texttt{Narrow}). Tracking error takes the executed
rollout up to $0.24$ into the margin band, which is what the margin is for:
the rollout stays outside every keep-out, and the benchmark counts zero
violations.

\begin{table}[tb]
\caption{\texttt{Maze2D Narrow}, the six-keep-out setting. Mean $\pm$ standard
deviation. The four \emph{Gen.}\ columns score the generated trajectory. Best value(s) in bold, Time excepted. All baselines are run here over $10$ evaluation episodes, sampling from the same diffusion planner we train, so the values are measured rather than quoted; our row is deterministic and reports a single trajectory.}
\label{tab:maze2d-narrow}
\begin{center}
\footnotesize
\resizebox{\ifdim\width>\linewidth\linewidth\else\width\fi}{!}{%
\begin{tabular}{lcccc|cccc}
\toprule
Method & SR $\uparrow$ & Viol.\ $\downarrow$ & Score $\uparrow$ & Time\,(s) $\downarrow$
& Gen.\ SR $\uparrow$ & Gen.\ Viol.\ $\downarrow$ & Gen.\ AS $\downarrow$ & Gen.\ CS $\downarrow$ \\
\midrule
Diffuser & $0.000$ & $94.100\pm7.327$ & $1.612\pm0.004$ & $0.260\pm0.106$ & $0.000$ & $82.800\pm16.726$ & $0.0396\pm0.0022$ & $0.4281\pm0.0427$ \\
Classifier Guidance & $0.000$ & $64.400\pm14.908$ & $1.619\pm0.005$ & $0.549\pm0.104$ & $0.000$ & $27.700\pm23.367$ & $0.0402\pm0.0023$ & $0.4385\pm0.0559$ \\
Gradient Guidance & $0.100$ & $21.700\pm17.211$ & $1.622\pm0.006$ & $2.607\pm0.086$ & $\mathbf{1.000}$ & $\mathbf{0.000}\pm0.000$ & $0.0404\pm0.0028$ & $0.4475\pm0.0535$ \\
Projected Diffusion & $0.100$ & $83.200\pm158.026$ & $1.057\pm0.721$ & $756.264\pm166.998$ & $0.800$ & $4.800\pm9.765$ & $0.0145\pm0.0225$ & $0.0221\pm0.0258$ \\
Primal--Dual & $0.000$ & $125.900\pm114.102$ & $0.103\pm0.294$ & $4.379\pm0.138$ & $0.000$ & $11.400\pm4.409$ & $20.1512\pm0.3103$ & $1.8129\pm0.0146$ \\
Augmented Lagrangian & $0.000$ & $222.200\pm181.693$ & $-0.025\pm0.000$ & $4.394\pm0.204$ & $0.000$ & $159.300\pm40.020$ & $2.5444\pm0.6313$ & $1.2043\pm0.0519$ \\
SafeDiffuser-RoS & $0.000$ & $258.200\pm206.817$ & $0.927\pm0.628$ & $1215.129\pm163.413$ & $0.000$ & $20.800\pm7.859$ & $0.5856\pm0.0694$ & $0.2983\pm0.0335$ \\
SafeDiffuser-ReS & $0.100$ & $223.800\pm230.466$ & $0.808\pm0.834$ & $179.773\pm90.523$ & $\mathbf{1.000}$ & $\mathbf{0.000}\pm0.000$ & $0.0350\pm0.0569$ & $0.0283\pm0.0121$ \\
SafeDiffuser-TVS & $0.000$ & $266.300\pm200.686$ & $0.908\pm0.622$ & $559.372\pm114.038$ & $0.000$ & $31.700\pm19.990$ & $0.0458\pm0.0083$ & $0.0306\pm0.0076$ \\
DiRecT & $\mathbf{1.000}$ & $\mathbf{0.000}\pm0.000$ & $1.615\pm0.003$ & $15.957\pm1.406$ & $\mathbf{1.000}$ & $\mathbf{0.000}\pm0.000$ & $0.0024\pm0.0000$ & $0.0084\pm0.0008$ \\
GlobMP (Ours), exact $e{=}4$ & $\mathbf{1.000}$ & $\mathbf{0.000}$ & $\mathbf{1.625}$ & $90.0$ & $\mathbf{1.000}$ & $\mathbf{0.000}$ & $\mathbf{0.0006}$ & $\mathbf{0.0002}$ \\
\bottomrule
\end{tabular}}
\end{center}
\end{table}

\begin{figure}[p]
\centering
\includegraphics[width=\linewidth]{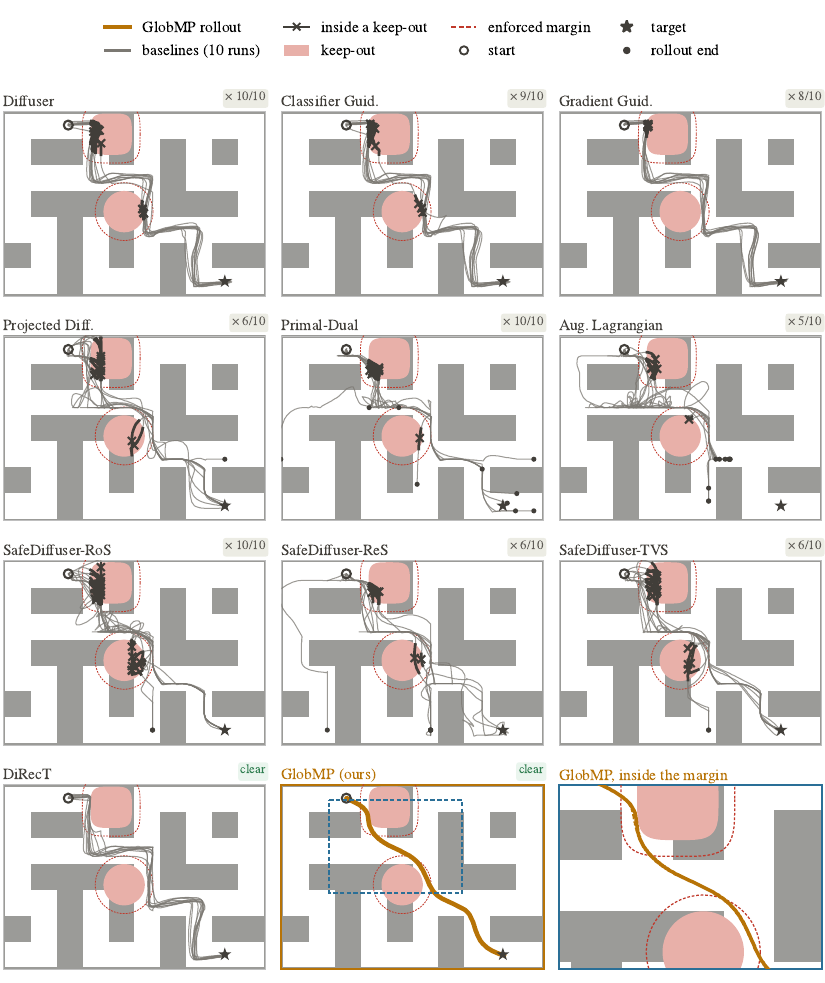}
\vspace{-6pt}
\caption{Executed rollouts on \texttt{Maze2D Broad}, one panel per method, under
the benchmark's PD controller. Baseline panels show all ten evaluation runs
behind Table~\ref{tab:maze2d}, unfiltered; GlobMP is deterministic and returns
one. Stretches drawn dark and marked $\times$ are where the robot is inside a
keep-out, and the badge counts the runs that entered one, which is $10(1-$\emph{SR}$)$
of the table. Every baseline of the table appears, in the table's order, with GlobMP's exact $e{=}4$ row. The last
cell magnifies every stretch where GlobMP runs inside the enforced margin, the
region marked by the dashed box on its own panel: the continuous plan (dashed)
is held off the keep-out by the margin, and the executed rollout (solid) dips
into the margin band but stays outside the keep-out, which is what the margin is
for.}
\label{fig:maze2d-rollouts-broad}
\end{figure}

\begin{figure}[p]
\centering
\includegraphics[width=\linewidth]{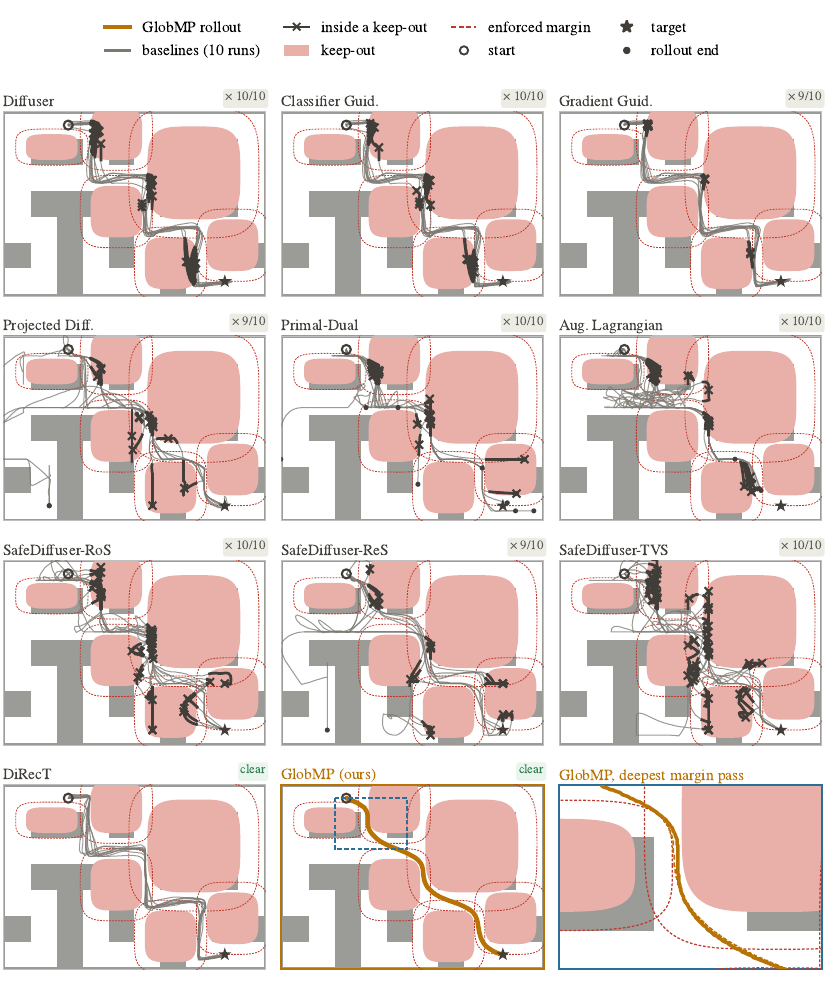}
\vspace{-6pt}
\caption{Executed rollouts on \texttt{Maze2D Narrow}, drawn as in
Figure~\ref{fig:maze2d-rollouts-broad}, over the ten evaluation runs behind
Table~\ref{tab:maze2d-narrow}. Only DiRecT and GlobMP keep every run outside the
keep-outs; the keep-outs leave little of the corridor, and most baselines enter
one on every run. So little is left that GlobMP's trajectory is inside the
enforced margin along most of the route, so the last cell magnifies its deepest
pass rather than every stretch.}
\label{fig:maze2d-rollouts-narrow}
\end{figure}

\subsection{Collision-free obstacle avoidance in end-effector space}
\label{app:avoiding}
We evaluate GlobMP against the baselines on the constrained manipulation benchmark of \citet{jia2024d3il}, with GlobMP as the projection step of DiRecT's own diffusion sampler. A planar end-effector must reach a goal region while avoiding a set of pillars (Figure~\ref{fig:manipulation}). The state lies in $\R^4$ and contains the current and desired positions, while the action lies in $\R^2$ and specifies velocity. The initial position is fixed, the goal lies on $y=0.35$, and safety requires the entire trajectory to remain outside every keep-out region. Circular keep-outs are represented by the quadratic block \eqref{eq:MjY} at degree $2d$, while planar constraints use the affine block of \S\ref{app:affine}. GlobMP's plan is a chain of $15$ straight segments ($C^0$ joins, start fixed, end free) under DiRecT's own projection cost, which lies outside the curve class of Theorem~\ref{thm:globalinf}. 

\textbf{Baselines.} We run the ten diffusion planners of the maze benchmark (\S\ref{sec:exp-maze}), which cover soft guidance, projection, dual optimization, barrier-function guidance, and receding-horizon projection. Methods requiring a nonlinear solver use IPOPT \citep{wachter2006implementation}. 

\textbf{Training and environment details.} A continuous-time diffusion model with a temporal UNet is trained for $500{,}000$ steps with Adam on the D3IL \texttt{avoiding} dataset using batch size $32$, learning rate $2\times10^{-4}$, PyTorch's cosine annealing ($T_{\max}=100{,}000$ steps, so periodic over training), and an exponential moving average with decay $0.995$, at horizon $H{=}16$. The same model is shared by all baselines, and constraints are introduced only during sampling, so no retraining is required when the constraint set changes. The original task, learned from $96$ human demonstrations, contains six circular pillars, $0.05$--$0.06$\,m across. Following \citet{direct2026}, we introduce unseen test-time constraints by enlarging the first pillar from radius $0.03$\,m to $0.08$\,m and restricting lateral motion with two planar regions. We further activate three keep-outs of radius $0.02$\,m centred at $(0.5,0.16)$, $(0.4,0.175)$, and $(0.6,0.175)$, which are included in their release but disabled by default. The resulting problem contains ten circular keep-outs and two planar constraints. These constraints deliberately differ from the training distribution. Of the $96$ demonstrations, $77$ violate the enlarged pillar, $62$ violate the planar constraints, and $53$ violate the three additional keep-outs. The resulting free space contains four route classes, with a minimum passage width of $0.061$\,m.

\begin{figure}[tb]
\begin{center}
\includegraphics[width=\linewidth]{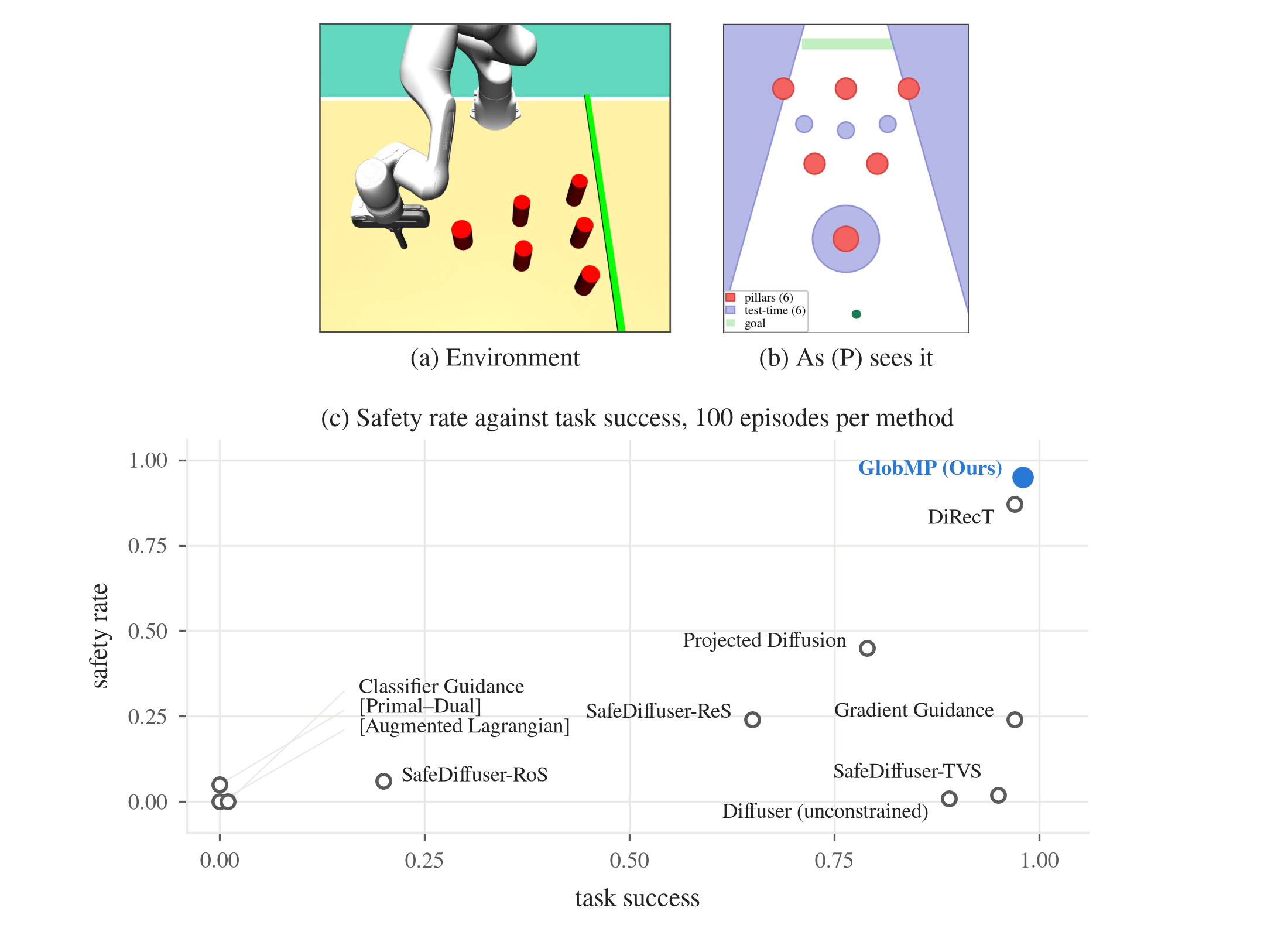}
\end{center}
\caption{D3IL \texttt{avoiding-v0}.
\emph{(a)} The environment's own view; the test-time constraints have no physical counterpart, so they appear only in \emph{(b)}. \emph{(b)} The scene
as GlobMP sees it, with all ten discs and both half-planes. \emph{(c)} Safety rate against task success over $100$ episodes, one point per method: a
point at the lower right reaches the goal without avoiding, and only the upper right does
both. Brackets as in Table~\ref{tab:d3il-table6}.}
\label{fig:manipulation}
\end{figure}

\textbf{Evaluation.}
We evaluate each method over $100$ i.i.d.\ rollouts from the same initial position at seed $0$, with at most $100$ environment steps, planning horizon $H{=}16$, and execution horizon of $4$ steps. We report \emph{SR}, the fraction of rollouts satisfying all constraints; \emph{TS}, the fraction reaching the goal within the episode budget; \emph{Steps(Safe)}, the mean steps among safe rollouts; and \emph{Time}, the mean computation time per replanning step. All values are obtained from our executions of the released implementations.

\textbf{GlobMP.} We keep DiRecT's checkpoint, sampler, dynamics fit and evaluation, and replace only its projection: at every denoising step GlobMP projects the sampler's plan, minimizing DiRecT's own cost, onto plans whose straight-line motion between the $16$ states clears every keep-out by the margin at every instant, not only at the states. The actions follow from the planned positions through the dynamics fit, so their limits (the environment clips each coordinate of a step to at most $1.25$\,cm, and never backwards) and their rate of change enter as linear blocks, and every planned action is within them; the benchmark's PD tracker (gains $5$ and $1$) corrects each executed action, which the environment may still clip. A plan that starts closer to a keep-out than the margin must win the margin back, by at most $1$\,cm per step. Within one plan the start state is fixed, so each projection starts from the previous one's solution, which is feasible when that projection succeeded.

Table~\ref{tab:d3il-table6} shows that safety and task completion do not necessarily coincide,
and Figure~\ref{fig:d3il-grid} shows the rollouts behind it. Over $100$ episodes at the same seed, GlobMP is safe in $0.95$ of them and reaches the goal in $0.98$, ahead of DiRecT ($0.87$ and $0.97$), and every plan it returns is feasible at every instant. Over four seeds ($400$ episodes) GlobMP averages $0.865$ and $0.97$, on par with DiRecT's single seed, and it is slower: safe rollouts take $78$ steps against $61$, and each plan takes longer to compute. Projected Diffusion attains the next-highest safety rate at $0.45$. In contrast, Gradient Guidance and SafeDiffuser-TVS reach the goal in $0.97$ and $0.95$ of episodes but achieve safety rates of only $0.24$ and $0.02$, respectively. The two metrics should therefore be interpreted jointly, since high task success may still correspond to frequent constraint violations.

For DiRecT, the remaining safety failures occur primarily during execution rather than planning. IPOPT converges on every solve and $815$ of $819$ generated plans are feasible; moreover, in $10$ of the $13$ unsafe episodes, every generated plan is feasible before execution. Thus, even when the
planner satisfies the test-time constraints, tracking can still drive the executed trajectory outside the feasible set. As in \citet{direct2026}, these experiments evaluate inference-time constraint handling: the additional constraints are introduced only at test time and no method is retrained.

\begin{figure}[tbp]
\begin{center}
\includegraphics[width=\linewidth]{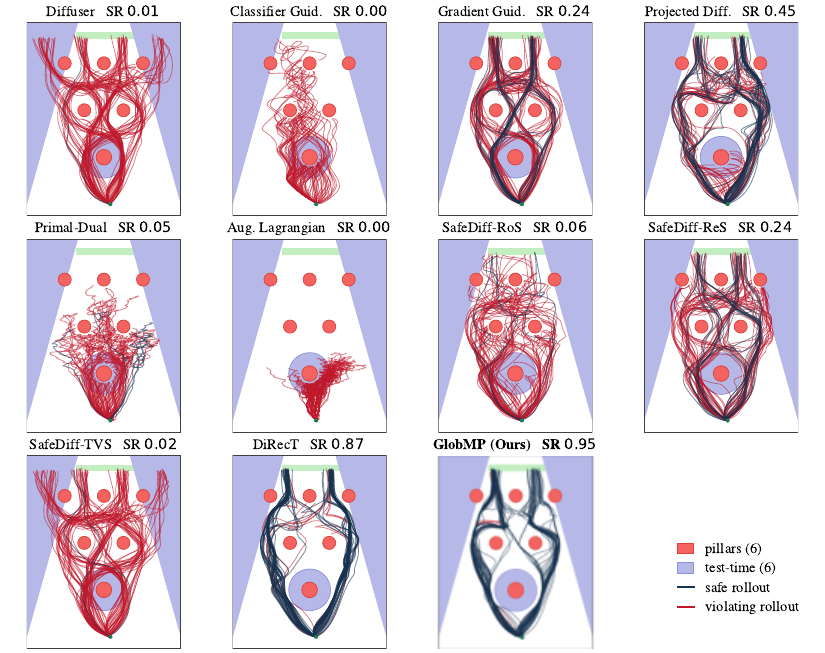}
\end{center}
\caption{All $100$ rollouts of every method on D3IL \texttt{avoiding-v0}, from
the common start (green), GlobMP last. A rollout is drawn red if it violates any of the
twelve constraints under the benchmark's own predicate and navy otherwise, so the fraction of navy is the safety rate in the panel title. The six
pillars belong to the environment; the six lavender regions are imposed only at
test time.}
\label{fig:d3il-grid}
\end{figure}

\begin{table}[tb]
\caption{D3IL \texttt{avoiding-v0}, in the format of \citet{direct2026}'s Table~6, $100$ episodes per method; mean $\pm$ standard deviation over episodes. Every value is our own execution of the released code from a single shared checkpoint. Bracketed rows required a non-default \texttt{-{}-guidance-last-steps 50} to run at all: at the default both crash the simulator. Boldface marks the best safety rate and task success among the methods that enforce the constraints; the cost columns
carry none. GlobMP is the projection step of DiRecT's sampler (replanning every $4$ steps, $28$ CPU workers for $50$ episodes at a time), on the same seed; over four seeds it reaches $0.865$ and $0.97$.}
\label{tab:d3il-table6}
\begin{center}
\footnotesize
\begin{tabular}{lcccc}
\toprule
Method                      & SR $\uparrow$ & TS $\uparrow$ & Steps(Safe) $\downarrow$ & Time\,(s) $\downarrow$ \\
\midrule
Diffuser (unconstrained)    & 0.01 & 0.89 & $61.00\pm0.00$ & $0.09\pm0.00$ \\
Classifier Guidance         & 0.00 & 0.01 & --- & $0.17\pm0.02$ \\
Gradient Guidance           & 0.24 & 0.97 & $62.96\pm7.27$ & $1.16\pm0.07$ \\
Projected Diffusion         & 0.45 & 0.79 & $65.18\pm8.09$ & $7.16\pm3.11$ \\
$[$Primal--Dual$]$          & 0.05 & 0.00 & $100.00\pm0.00$ & $0.74\pm0.02$ \\
$[$Augmented Lagrangian$]$  & 0.00 & 0.00 & --- & $0.79\pm0.01$ \\
SafeDiffuser-RoS            & 0.06 & 0.20 & $67.50\pm6.55$ & $7.34\pm0.18$ \\
SafeDiffuser-ReS            & 0.24 & 0.65 & $62.17\pm5.10$ & $5.44\pm0.27$ \\
SafeDiffuser-TVS            & 0.02 & 0.95 & $58.50\pm1.50$ & $5.26\pm0.09$ \\
DiRecT                      & 0.87 & 0.97 & $61.38\pm4.84$ & $3.72\pm0.33$ \\
GlobMP (Ours)               & \textbf{0.95} & \textbf{0.98} & $78.20\pm9.22$ & $15.12\pm9.82$ \\
\bottomrule
\end{tabular}
\end{center}
\end{table}

\FloatBarrier   %
\setcounter{secnumdepth}{3}\setcounter{tocdepth}{3}
\subsection{Multi-robot motion planning}
\label{sec:exp-multirobot}
\label{app:exp-multirobot}
\setcounter{totalnumber}{12}\setcounter{topnumber}{6}\setcounter{bottomnumber}{4}
\begin{figure}[tbp]
\begin{center}
\includegraphics[width=0.97\linewidth]{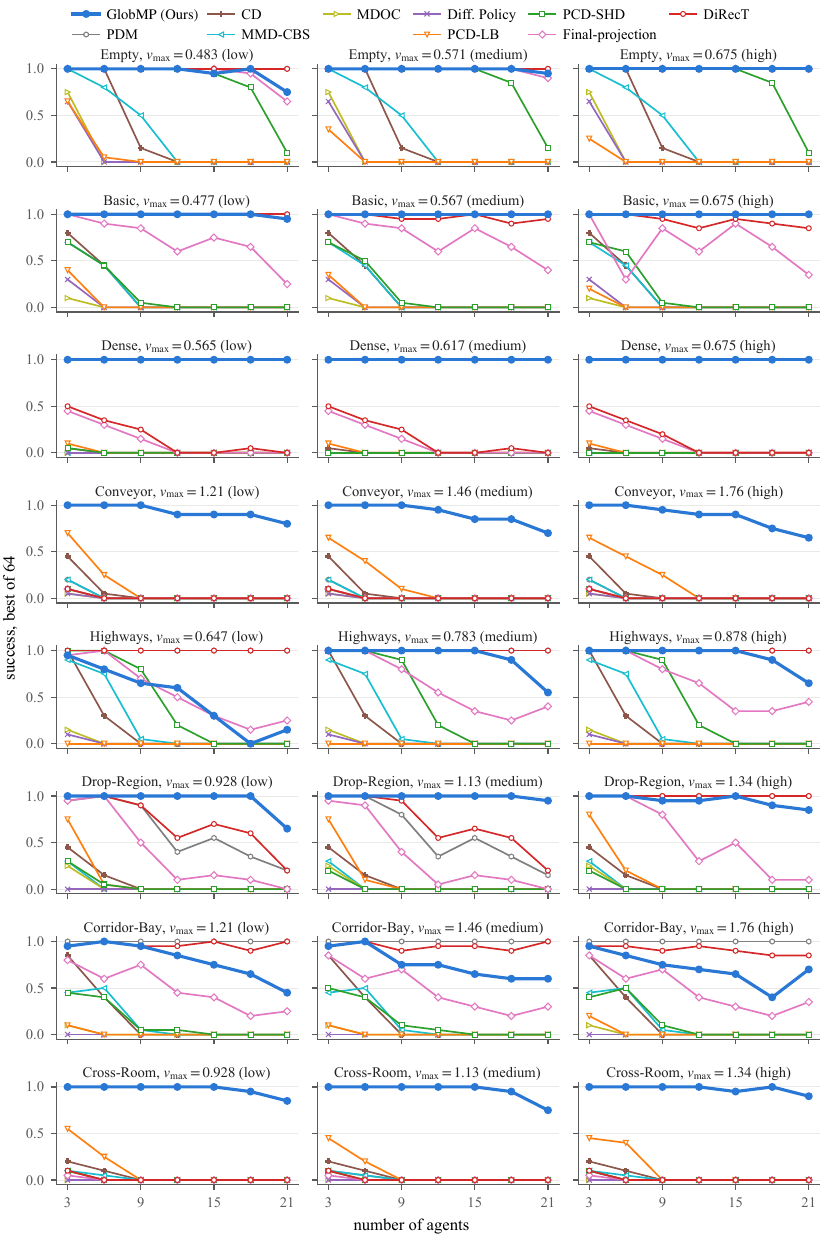}
\end{center}
\caption{Best-of-$64$ success rate for $N_a\in\{3,6,\dots,21\}$ agents on every cell of the
suite, one row per map and one column per velocity cap: an instance counts as solved when any of
the $64$ candidates a baseline samples is collision-free. This is the SR reported in Tables~\ref{tab:hard-empty}--\ref{tab:hard-crossroom};
our method still returns one trajectory.}
\label{fig:hardsuite-best64}
\end{figure}
\begin{figure}[htbp]
\centering
\includegraphics[width=\linewidth]{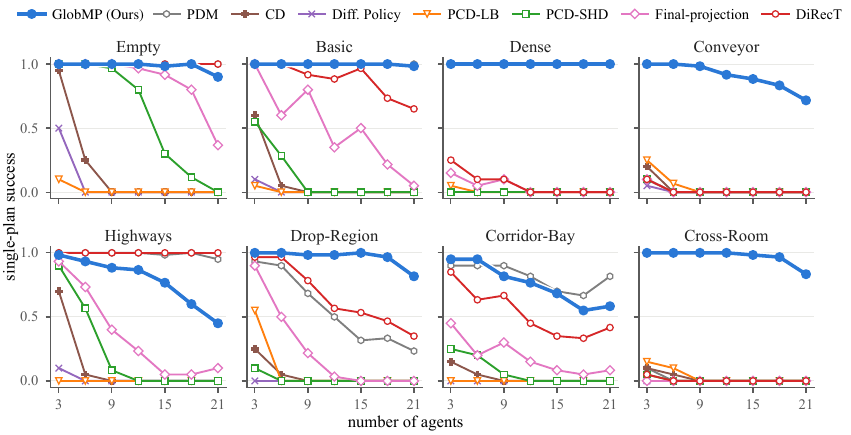}
\caption{Success of a \emph{single} returned plan for $N_a\in\{3,6,\dots,21\}$ agents, one
panel per map, averaged over the three velocity caps: the first candidate each baseline samples,
and the one trajectory our method returns. MDOC and MMD-CBS store no per-candidate data and
are omitted. Figure~\ref{fig:hardsuite-best64} gives the best-of-$64$ counterpart.}
\label{fig:hardsuite-scaling}
\end{figure}

We extend the multi-robot evaluation of \S\ref{sec:exp-mrmp} to a broader benchmark with eight environments, team sizes $N_a\in\{3,6,\dots,21\}$, and three test-time velocity caps (results in Figures~\ref{fig:hardsuite-best64}--\ref{fig:hardsuite-scaling}). The benchmark follows \citet{shaoul2025mmd}, with the velocity constraint of \citet{luan2026pcd} and the evaluation protocol of \citet{direct2026}. Six environments are from \citet{shaoul2025mmd}; \texttt{Corridor-Bay} and \texttt{Cross-Room} are reconstructed from \citet{liang2025smd}. Start--goal pairs follow the sampling procedure of \citet{shaoul2025mmd}, which places every endpoint at
least $2.1r+0.01$ from an obstacle of the source map (before the enlargement below) and $\max(0.15,2.2r)+0.01$ from every other endpoint.

\textbf{Problem setup.}
Each agent follows a planar B\'ezier trajectory, with one block per agent
and obstacle and one per pair of agents at separation $2r$
(\S\ref{app:multiagent}). Box obstacles use the affine blocks of
\S\ref{app:affine}, one half-space per agent, segment and box, with the
side selected by an independent grid seed.

The eight environments cover different forms of geometric and coordinated motion:
\begin{itemize}\itemsep2pt \parskip0pt
\item \texttt{Empty} contains no obstacles and isolates inter-agent coordination.
\item \texttt{Basic} and \texttt{Dense} contain ten and twenty scattered circular obstacles, respectively.
\item \texttt{Conveyor} contains two narrow corridors with opposing preferred traffic directions, requiring coordinated single-file motion.
\item \texttt{Highways} contains a central block and eight boundary blocks that leave a ring road with a preferred counterclockwise circulation.
\item \texttt{Drop-Region} contains four square obstacles and rewards pausing at drop-off points beside them.
\item \texttt{Corridor-Bay} contains three parallel channels joined by a narrow crossing, requiring teams to coordinate their passage.
\item \texttt{Cross-Room} contains a plus-shaped divider that forms four quadrants connected only through gaps near the workspace boundary.
\end{itemize}
The demonstrations associated with each environment encode its characteristic motion pattern, which is measured by the data-adherence metric \emph{DA}; on \texttt{Corridor-Bay} and \texttt{Cross-Room}, which have no preferred pattern, \emph{DA} is the start-to-goal distance divided by the path length.

In the normalized workspace $[-1,1]^2$, robots have radius $r=0.05$ on every map except \texttt{Empty}, where $r=0.10$, with minimum pairwise separation $2r$. Obstacles are those of the source maps, enlarged by $2.20$ (\texttt{Basic}), $1.50$ (\texttt{Conveyor}), $1.45$ (\texttt{Highways}), $1.60$ (\texttt{Drop-Region}), $1.15$ (\texttt{Corridor-Bay}) and $1.10$ (\texttt{Cross-Room}); \texttt{Dense} keeps its published sizes. Each environment is evaluated under Low, Medium, and High test-time velocity caps. In the order Low, Medium, High, these are
$(0.483,0.571,0.675)$ for \texttt{Empty},
$(0.477,0.567,0.675)$ for \texttt{Basic},
$(0.565,0.617,0.675)$ for \texttt{Dense},
$(1.21,1.46,1.76)$ for \texttt{Conveyor},
$(0.647,0.783,0.878)$ for \texttt{Highways},
$(0.928,1.13,1.34)$ for \texttt{Drop-Region},
$(1.21,1.46,1.76)$ for \texttt{Corridor-Bay}, and
$(0.928,1.13,1.34)$ for \texttt{Cross-Room}.

\textbf{Baselines.}
We compare GlobMP against Diffusion Policy \citep{chi2023diffusionpolicy}, CD, PCD-LB, and PCD-SHD \citep{luan2026pcd}, Final~Proj.\ and DiRecT \citep{direct2026}, MMD-CBS \citep{shaoul2025mmd}, and MDOC \citep{he2026mdoc}. Diffusion Policy provides the unguided reference; CD uses constraint guidance; PCD-LB and PCD-SHD augment guidance with projection; Final~Proj.\ applies projection after sampling; MMD-CBS combines sampled single-agent motions through conflict-based search; DiRecT applies projection throughout denoising; and MDOC uses model-based diffusion optimal control.
Each method is evaluated on the same start--goal instances.

\FloatBarrier

\textbf{Evaluation.}
We evaluate each method for $N_a\in\{3,6,\dots,21\}$ on all eight maps and
under the three velocity caps. For each (map, team size) pair, the same $20$
initial configurations are shared across methods, and each sampling baseline generates $64$ candidate trajectories per configuration (Final~Proj.\ samples $128$ on \texttt{Empty} and \texttt{Cross-Room}, scored on the first $64$ and timed over all $128$); MMD-CBS and MDOC return one plan, and GlobMP one trajectory. Following
\citet{direct2026}, we report \emph{SR}, the fraction of configurations with at least one collision-free trajectory; \emph{CS}, the fraction of all trajectories satisfying the collision and kinematic constraints; \emph{Coll}, the mean number of colliding agent pairs; \emph{DA}, environment-specific motion-pattern adherence; and \emph{Time}, wall-clock time per configuration in seconds. Tables~\ref{tab:hard-empty}--\ref{tab:hard-crossroom} give SR and CS in percent; the text quotes them as fractions. A dash marks a quantity that was not measured or is not comparable, rather than a zero. GlobMP's CS checks collisions only, since none of its counted plans exceeds the cap. The other conventions are restated in each table's caption.

\begin{table}[p]
\caption{Multi-robot planning performance on \texttt{Empty} for $N_a\in\{3,6,\dots,21\}$ under the Low, Medium and High velocity caps. \emph{SR} and \emph{CS} measure trajectory feasibility, \emph{Coll} reports inter-robot collisions, \emph{DA} adherence to the environment-specific motion pattern, and \emph{Time} the planning time per configuration: for \emph{GlobMP (Ours)} the median over the configurations with the standard deviation as subscript, for the baselines the mean, so the two are not directly comparable. $^{\dagger}$ marks a method that applies no velocity limit, so its three blocks report the same run; MMD-CBS also returns one plan per configuration rather than $64$ candidates. A dash under \emph{Coll} marks a count that is not commensurable with the others, as MDOC and MMD-CBS count colliding time steps where every other row counts agent pairs with any collision; MDOC's \emph{CS} is omitted because its plans that do not cover every agent were counted as feasible. MMD-CBS plans under $60$\,s per configuration and \emph{GlobMP (Ours)} under $300$\,s per team.}
\label{tab:hard-empty}
\begin{center}\tiny\setlength{\tabcolsep}{2.2pt}
\resizebox{\linewidth}{!}{%
\begin{tabular}{ll ccccc ccccc ccccc}
\toprule
& & \multicolumn{5}{c}{Low ($v_{\max}=0.483$)} & \multicolumn{5}{c}{Medium ($v_{\max}=0.571$)} & \multicolumn{5}{c}{High ($v_{\max}=0.675$)}\\
\cmidrule(lr){3-7}\cmidrule(lr){8-12}\cmidrule(lr){13-17}
$N_a$ & Method & SR$\uparrow$ & CS$\uparrow$ & Coll$\downarrow$ & DA$\uparrow$ & Time$\downarrow$ & SR$\uparrow$ & CS$\uparrow$ & Coll$\downarrow$ & DA$\uparrow$ & Time$\downarrow$ & SR$\uparrow$ & CS$\uparrow$ & Coll$\downarrow$ & DA$\uparrow$ & Time$\downarrow$\\
\midrule
3 & Diff.\ Policy$^{\dagger}$ & 65.0 & 48.2$_{\pm46.5}$ & 0.67$_{\pm0.77}$ & 0.999$_{\pm0.001}$ & 0.5 & 65.0 & 48.2$_{\pm46.5}$ & 0.67$_{\pm0.77}$ & 0.999$_{\pm0.001}$ & 0.3 & 65.0 & 48.2$_{\pm46.5}$ & 0.67$_{\pm0.77}$ & 0.999$_{\pm0.001}$ & 0.5\\
  & CD$^{\dagger}$ & 100.0 & 1.1$_{\pm4.4}$ & 0.14$_{\pm0.28}$ & 0.517$_{\pm0.149}$ & 1.0$_{\pm0.1}$ & 100.0 & 7.1$_{\pm20.2}$ & 0.14$_{\pm0.28}$ & 0.517$_{\pm0.149}$ & 1.1$_{\pm0.1}$ & 100.0 & 20.5$_{\pm26.1}$ & 0.14$_{\pm0.28}$ & 0.517$_{\pm0.149}$ & 0.5$_{\pm0.0}$\\
  & MMD-CBS$^{\dagger}$ & 100.0 & 70.0$_{\pm45.8}$ & --- & 0.946$_{\pm0.071}$ & 3.1$_{\pm2.5}$ & 100.0 & 70.0$_{\pm45.8}$ & --- & 0.946$_{\pm0.071}$ & 3.1$_{\pm2.5}$ & 100.0 & 70.0$_{\pm45.8}$ & --- & 0.946$_{\pm0.071}$ & 3.1$_{\pm2.5}$\\
  & MDOC$^{\dagger}$ & 75.0 & --- & --- & 0.572$_{\pm0.400}$ & 39.0$_{\pm6.1}$ & 75.0 & --- & --- & 0.572$_{\pm0.400}$ & 34.8$_{\pm4.3}$ & 75.0 & --- & --- & 0.572$_{\pm0.400}$ & 39.6$_{\pm4.8}$\\
  & PCD-LB & 65.0 & 12.3$_{\pm20.2}$ & 0.02$_{\pm0.05}$ & 0.325$_{\pm0.135}$ & 2.8$_{\pm0.8}$ & 35.0 & 4.6$_{\pm11.9}$ & 0.01$_{\pm0.02}$ & 0.316$_{\pm0.133}$ & 4.2$_{\pm0.9}$ & 25.0 & 3.3$_{\pm9.1}$ & 0.01$_{\pm0.01}$ & 0.307$_{\pm0.130}$ & 3.2$_{\pm0.6}$\\
  & PCD-SHD & \textbf{100.0} & \textbf{100.0}$_{\pm0.0}$ & \textbf{0.00}$_{\pm0.00}$ & 0.828$_{\pm0.113}$ & 3.0$_{\pm0.4}$ & \textbf{100.0} & \textbf{100.0}$_{\pm0.0}$ & \textbf{0.00}$_{\pm0.00}$ & 0.831$_{\pm0.109}$ & 1.4$_{\pm0.1}$ & \textbf{100.0} & \textbf{100.0}$_{\pm0.0}$ & \textbf{0.00}$_{\pm0.00}$ & 0.833$_{\pm0.107}$ & 1.4$_{\pm0.2}$\\
  & Final Proj. & \textbf{100.0} & \textbf{100.0}$_{\pm0.0}$ & \textbf{0.00}$_{\pm0.00}$ & \textbf{0.998}$_{\pm0.003}$ & 3.7$_{\pm0.3}$ & \textbf{100.0} & \textbf{100.0}$_{\pm0.0}$ & \textbf{0.00}$_{\pm0.00}$ & \textbf{0.998}$_{\pm0.003}$ & 2.0$_{\pm0.5}$ & \textbf{100.0} & \textbf{100.0}$_{\pm0.0}$ & \textbf{0.00}$_{\pm0.00}$ & \textbf{0.998}$_{\pm0.003}$ & 3.7$_{\pm0.5}$\\
  & DiRecT & \textbf{100.0} & \textbf{100.0}$_{\pm0.0}$ & \textbf{0.00}$_{\pm0.00}$ & 0.993$_{\pm0.018}$ & 31.5$_{\pm0.6}$ & \textbf{100.0} & \textbf{100.0}$_{\pm0.0}$ & \textbf{0.00}$_{\pm0.00}$ & 0.993$_{\pm0.018}$ & 31.4$_{\pm0.5}$ & \textbf{100.0} & \textbf{100.0}$_{\pm0.0}$ & \textbf{0.00}$_{\pm0.00}$ & 0.993$_{\pm0.017}$ & 31.0$_{\pm1.3}$\\
  & GlobMP (Ours) & \textbf{100.0} & \textbf{100.0}$_{\pm0.0}$ & \textbf{0.00}$_{\pm0.00}$ & 0.933$_{\pm0.149}$ & 4.0$_{\pm17.1}$ & \textbf{100.0} & \textbf{100.0}$_{\pm0.0}$ & \textbf{0.00}$_{\pm0.00}$ & 0.934$_{\pm0.149}$ & 4.0$_{\pm6.3}$ & \textbf{100.0} & \textbf{100.0}$_{\pm0.0}$ & \textbf{0.00}$_{\pm0.00}$ & 0.984$_{\pm0.043}$ & 0.2$_{\pm0.1}$\\
\midrule
6 & Diff.\ Policy$^{\dagger}$ & 0.0 & 0.0$_{\pm0.0}$ & 4.77$_{\pm2.73}$ & 0.999$_{\pm0.001}$ & 0.5 & 0.0 & 0.0$_{\pm0.0}$ & 4.77$_{\pm2.73}$ & 0.999$_{\pm0.001}$ & 0.6 & 0.0 & 0.0$_{\pm0.0}$ & 4.77$_{\pm2.73}$ & 0.999$_{\pm0.001}$ & 0.3\\
  & CD$^{\dagger}$ & 100.0 & 0.0$_{\pm0.0}$ & 0.96$_{\pm0.85}$ & 0.582$_{\pm0.079}$ & 1.1$_{\pm0.1}$ & 100.0 & 0.0$_{\pm0.0}$ & 0.96$_{\pm0.85}$ & 0.582$_{\pm0.079}$ & 0.9$_{\pm0.3}$ & 100.0 & 0.2$_{\pm0.7}$ & 0.96$_{\pm0.85}$ & 0.582$_{\pm0.079}$ & 1.1$_{\pm0.1}$\\
  & MMD-CBS$^{\dagger}$ & 80.0 & 15.0$_{\pm35.7}$ & --- & 0.879$_{\pm0.065}$ & 30.3$_{\pm21.2}$ & 80.0 & 15.0$_{\pm35.7}$ & --- & 0.879$_{\pm0.065}$ & 30.3$_{\pm21.2}$ & 80.0 & 15.0$_{\pm35.7}$ & --- & 0.879$_{\pm0.065}$ & 30.3$_{\pm21.2}$\\
  & MDOC$^{\dagger}$ & 0.0 & --- & --- & 0.000$_{\pm0.000}$ & 57.6$_{\pm4.8}$ & 0.0 & --- & --- & 0.000$_{\pm0.000}$ & 58.2$_{\pm4.8}$ & 0.0 & --- & --- & 0.000$_{\pm0.000}$ & 67.8$_{\pm5.2}$\\
  & PCD-LB & 5.0 & 0.1$_{\pm0.3}$ & 1.01$_{\pm0.73}$ & 0.299$_{\pm0.050}$ & 3.8$_{\pm0.7}$ & 0.0 & 0.0$_{\pm0.0}$ & 1.04$_{\pm0.72}$ & 0.300$_{\pm0.056}$ & 5.8$_{\pm1.1}$ & 0.0 & 0.0$_{\pm0.0}$ & 1.08$_{\pm0.69}$ & 0.302$_{\pm0.060}$ & 2.3$_{\pm0.3}$\\
  & PCD-SHD & \textbf{100.0} & 99.8$_{\pm0.5}$ & 0.00$_{\pm0.00}$ & 0.608$_{\pm0.073}$ & 4.3$_{\pm1.4}$ & \textbf{100.0} & \textbf{100.0}$_{\pm0.0}$ & \textbf{0.00}$_{\pm0.00}$ & 0.612$_{\pm0.071}$ & 2.0$_{\pm0.3}$ & \textbf{100.0} & \textbf{100.0}$_{\pm0.0}$ & \textbf{0.00}$_{\pm0.00}$ & 0.613$_{\pm0.068}$ & 1.6$_{\pm0.1}$\\
  & Final Proj. & \textbf{100.0} & 98.3$_{\pm7.5}$ & 0.04$_{\pm0.15}$ & \textbf{0.995}$_{\pm0.006}$ & 3.2$_{\pm0.5}$ & \textbf{100.0} & \textbf{100.0}$_{\pm0.0}$ & \textbf{0.00}$_{\pm0.00}$ & \textbf{0.996}$_{\pm0.005}$ & 1.8$_{\pm0.5}$ & \textbf{100.0} & \textbf{100.0}$_{\pm0.0}$ & \textbf{0.00}$_{\pm0.00}$ & \textbf{0.996}$_{\pm0.004}$ & 1.8$_{\pm0.8}$\\
  & DiRecT & \textbf{100.0} & \textbf{100.0}$_{\pm0.0}$ & \textbf{0.00}$_{\pm0.00}$ & 0.985$_{\pm0.017}$ & 27.6$_{\pm3.6}$ & \textbf{100.0} & \textbf{100.0}$_{\pm0.0}$ & \textbf{0.00}$_{\pm0.00}$ & 0.985$_{\pm0.016}$ & 24.6$_{\pm5.9}$ & \textbf{100.0} & \textbf{100.0}$_{\pm0.0}$ & \textbf{0.00}$_{\pm0.00}$ & 0.985$_{\pm0.016}$ & 24.0$_{\pm6.4}$\\
  & GlobMP (Ours) & \textbf{100.0} & \textbf{100.0}$_{\pm0.0}$ & \textbf{0.00}$_{\pm0.00}$ & 0.975$_{\pm0.090}$ & 4.4$_{\pm22.0}$ & \textbf{100.0} & \textbf{100.0}$_{\pm0.0}$ & \textbf{0.00}$_{\pm0.00}$ & 0.976$_{\pm0.090}$ & 4.3$_{\pm4.2}$ & \textbf{100.0} & \textbf{100.0}$_{\pm0.0}$ & \textbf{0.00}$_{\pm0.00}$ & 0.960$_{\pm0.089}$ & 0.2$_{\pm0.1}$\\
\midrule
9 & Diff.\ Policy$^{\dagger}$ & 0.0 & 0.0$_{\pm0.0}$ & 9.22$_{\pm3.01}$ & 0.999$_{\pm0.001}$ & 0.6 & 0.0 & 0.0$_{\pm0.0}$ & 9.22$_{\pm3.01}$ & 0.999$_{\pm0.001}$ & 0.6 & 0.0 & 0.0$_{\pm0.0}$ & 9.22$_{\pm3.01}$ & 0.999$_{\pm0.001}$ & 0.6\\
  & CD$^{\dagger}$ & 15.0 & 0.0$_{\pm0.0}$ & 3.80$_{\pm1.33}$ & 0.569$_{\pm0.075}$ & 1.1$_{\pm0.1}$ & 15.0 & 0.0$_{\pm0.0}$ & 3.80$_{\pm1.33}$ & 0.569$_{\pm0.075}$ & 0.5$_{\pm0.1}$ & 15.0 & 0.0$_{\pm0.0}$ & 3.80$_{\pm1.33}$ & 0.569$_{\pm0.075}$ & 1.1$_{\pm0.1}$\\
  & MMD-CBS$^{\dagger}$ & 50.0 & 0.0$_{\pm0.0}$ & --- & 0.877$_{\pm0.062}$ & 57.5$_{\pm8.9}$ & 50.0 & 0.0$_{\pm0.0}$ & --- & 0.877$_{\pm0.062}$ & 57.5$_{\pm8.9}$ & 50.0 & 0.0$_{\pm0.0}$ & --- & 0.877$_{\pm0.062}$ & 57.5$_{\pm8.9}$\\
  & MDOC$^{\dagger}$ & 0.0 & --- & --- & 0.000$_{\pm0.000}$ & 83.3$_{\pm5.9}$ & 0.0 & --- & --- & 0.000$_{\pm0.000}$ & 84.1$_{\pm5.7}$ & 0.0 & --- & --- & 0.000$_{\pm0.000}$ & 97.4$_{\pm7.2}$\\
  & PCD-LB & 0.0 & 0.0$_{\pm0.0}$ & 4.78$_{\pm1.47}$ & 0.253$_{\pm0.038}$ & 4.6$_{\pm0.7}$ & 0.0 & 0.0$_{\pm0.0}$ & 5.26$_{\pm1.65}$ & 0.260$_{\pm0.037}$ & 6.4$_{\pm0.8}$ & 0.0 & 0.0$_{\pm0.0}$ & 5.78$_{\pm1.64}$ & 0.271$_{\pm0.037}$ & 4.6$_{\pm0.5}$\\
  & PCD-SHD & \textbf{100.0} & 94.8$_{\pm8.7}$ & 0.05$_{\pm0.09}$ & 0.547$_{\pm0.052}$ & 5.4$_{\pm0.9}$ & \textbf{100.0} & 96.2$_{\pm5.0}$ & 0.04$_{\pm0.05}$ & 0.557$_{\pm0.052}$ & 2.2$_{\pm0.2}$ & \textbf{100.0} & 97.5$_{\pm3.8}$ & 0.02$_{\pm0.04}$ & 0.562$_{\pm0.052}$ & 1.9$_{\pm0.1}$\\
  & Final Proj. & \textbf{100.0} & 99.4$_{\pm2.1}$ & 0.01$_{\pm0.04}$ & \textbf{0.993}$_{\pm0.006}$ & 4.1$_{\pm0.5}$ & \textbf{100.0} & \textbf{100.0}$_{\pm0.0}$ & \textbf{0.00}$_{\pm0.00}$ & \textbf{0.995}$_{\pm0.005}$ & 2.2$_{\pm0.5}$ & \textbf{100.0} & \textbf{100.0}$_{\pm0.0}$ & \textbf{0.00}$_{\pm0.00}$ & \textbf{0.995}$_{\pm0.004}$ & 2.1$_{\pm0.8}$\\
  & DiRecT & \textbf{100.0} & \textbf{100.0}$_{\pm0.0}$ & \textbf{0.00}$_{\pm0.00}$ & 0.972$_{\pm0.020}$ & 33.5$_{\pm7.9}$ & \textbf{100.0} & \textbf{100.0}$_{\pm0.0}$ & \textbf{0.00}$_{\pm0.00}$ & 0.973$_{\pm0.020}$ & 33.3$_{\pm7.9}$ & \textbf{100.0} & \textbf{100.0}$_{\pm0.0}$ & \textbf{0.00}$_{\pm0.00}$ & 0.974$_{\pm0.019}$ & 33.2$_{\pm8.0}$\\
  & GlobMP (Ours) & \textbf{100.0} & \textbf{100.0}$_{\pm0.0}$ & \textbf{0.00}$_{\pm0.00}$ & 0.989$_{\pm0.062}$ & 6.1$_{\pm19.5}$ & \textbf{100.0} & \textbf{100.0}$_{\pm0.0}$ & \textbf{0.00}$_{\pm0.00}$ & 0.990$_{\pm0.062}$ & 4.7$_{\pm1.9}$ & \textbf{100.0} & \textbf{100.0}$_{\pm0.0}$ & \textbf{0.00}$_{\pm0.00}$ & 0.932$_{\pm0.116}$ & 0.4$_{\pm0.2}$\\
\midrule
12 & Diff.\ Policy$^{\dagger}$ & 0.0 & 0.0$_{\pm0.0}$ & 17.68$_{\pm4.12}$ & 0.999$_{\pm0.001}$ & 0.6 & 0.0 & 0.0$_{\pm0.0}$ & 17.68$_{\pm4.12}$ & 0.999$_{\pm0.001}$ & 0.6 & 0.0 & 0.0$_{\pm0.0}$ & 17.68$_{\pm4.12}$ & 0.999$_{\pm0.001}$ & 0.6\\
  & CD$^{\dagger}$ & 0.0 & 0.0$_{\pm0.0}$ & 8.90$_{\pm2.05}$ & 0.555$_{\pm0.045}$ & 0.5$_{\pm0.0}$ & 0.0 & 0.0$_{\pm0.0}$ & 8.90$_{\pm2.05}$ & 0.555$_{\pm0.045}$ & 1.1$_{\pm0.1}$ & 0.0 & 0.0$_{\pm0.0}$ & 8.90$_{\pm2.05}$ & 0.555$_{\pm0.045}$ & 0.5$_{\pm0.0}$\\
  & MMD-CBS$^{\dagger}$ & 0.0 & 0.0$_{\pm0.0}$ & --- & 0.961$_{\pm0.022}$ & 67.5$_{\pm4.9}$ & 0.0 & 0.0$_{\pm0.0}$ & --- & 0.961$_{\pm0.022}$ & 67.5$_{\pm4.9}$ & 0.0 & 0.0$_{\pm0.0}$ & --- & 0.961$_{\pm0.022}$ & 67.5$_{\pm4.9}$\\
  & MDOC$^{\dagger}$ & 0.0 & --- & --- & 0.000$_{\pm0.000}$ & 110.0$_{\pm8.6}$ & 0.0 & --- & --- & 0.000$_{\pm0.000}$ & 111.2$_{\pm8.5}$ & 0.0 & --- & --- & 0.000$_{\pm0.000}$ & 128.9$_{\pm10.2}$\\
  & PCD-LB & 0.0 & 0.0$_{\pm0.0}$ & 11.68$_{\pm1.53}$ & 0.259$_{\pm0.038}$ & 5.8$_{\pm0.8}$ & 0.0 & 0.0$_{\pm0.0}$ & 13.11$_{\pm1.52}$ & 0.265$_{\pm0.038}$ & 8.0$_{\pm1.0}$ & 0.0 & 0.0$_{\pm0.0}$ & 14.36$_{\pm1.63}$ & 0.279$_{\pm0.040}$ & 3.1$_{\pm0.4}$\\
  & PCD-SHD & \textbf{100.0} & 74.0$_{\pm15.1}$ & 0.27$_{\pm0.19}$ & 0.515$_{\pm0.047}$ & 7.1$_{\pm1.4}$ & \textbf{100.0} & 80.6$_{\pm13.5}$ & 0.18$_{\pm0.13}$ & 0.521$_{\pm0.043}$ & 2.9$_{\pm0.5}$ & \textbf{100.0} & 81.6$_{\pm12.9}$ & 0.17$_{\pm0.13}$ & 0.524$_{\pm0.040}$ & 2.3$_{\pm0.1}$\\
  & Final Proj. & \textbf{100.0} & 97.1$_{\pm6.5}$ & 0.05$_{\pm0.11}$ & 0.988$_{\pm0.009}$ & 6.4$_{\pm0.6}$ & \textbf{100.0} & 99.9$_{\pm0.3}$ & 0.00$_{\pm0.01}$ & 0.990$_{\pm0.007}$ & 6.5$_{\pm0.6}$ & \textbf{100.0} & \textbf{100.0}$_{\pm0.0}$ & \textbf{0.00}$_{\pm0.00}$ & \textbf{0.992}$_{\pm0.006}$ & 3.0$_{\pm0.8}$\\
  & DiRecT & \textbf{100.0} & \textbf{100.0}$_{\pm0.0}$ & \textbf{0.00}$_{\pm0.00}$ & 0.955$_{\pm0.023}$ & 45.2$_{\pm9.5}$ & \textbf{100.0} & \textbf{100.0}$_{\pm0.0}$ & \textbf{0.00}$_{\pm0.00}$ & 0.957$_{\pm0.022}$ & 45.8$_{\pm9.8}$ & \textbf{100.0} & \textbf{100.0}$_{\pm0.0}$ & \textbf{0.00}$_{\pm0.00}$ & 0.958$_{\pm0.021}$ & 50.0$_{\pm5.4}$\\
  & GlobMP (Ours) & \textbf{100.0} & \textbf{100.0}$_{\pm0.0}$ & \textbf{0.00}$_{\pm0.00}$ & \textbf{0.994}$_{\pm0.024}$ & 9.7$_{\pm31.5}$ & \textbf{100.0} & \textbf{100.0}$_{\pm0.0}$ & \textbf{0.00}$_{\pm0.00}$ & \textbf{0.996}$_{\pm0.016}$ & 7.0$_{\pm5.7}$ & \textbf{100.0} & \textbf{100.0}$_{\pm0.0}$ & \textbf{0.00}$_{\pm0.00}$ & 0.904$_{\pm0.133}$ & 0.6$_{\pm0.3}$\\
\midrule
15 & Diff.\ Policy$^{\dagger}$ & 0.0 & 0.0$_{\pm0.0}$ & 26.96$_{\pm5.94}$ & 0.999$_{\pm0.001}$ & 0.6 & 0.0 & 0.0$_{\pm0.0}$ & 26.96$_{\pm5.94}$ & 0.999$_{\pm0.001}$ & 0.6 & 0.0 & 0.0$_{\pm0.0}$ & 26.96$_{\pm5.94}$ & 0.999$_{\pm0.001}$ & 0.3\\
  & CD$^{\dagger}$ & 0.0 & 0.0$_{\pm0.0}$ & 15.53$_{\pm2.54}$ & 0.587$_{\pm0.059}$ & 1.2$_{\pm0.1}$ & 0.0 & 0.0$_{\pm0.0}$ & 15.53$_{\pm2.54}$ & 0.587$_{\pm0.059}$ & 0.9$_{\pm0.3}$ & 0.0 & 0.0$_{\pm0.0}$ & 15.53$_{\pm2.54}$ & 0.587$_{\pm0.059}$ & 1.2$_{\pm0.1}$\\
  & MMD-CBS$^{\dagger}$ & 0.0 & 0.0$_{\pm0.0}$ & --- & 0.989$_{\pm0.013}$ & 74.4$_{\pm9.3}$ & 0.0 & 0.0$_{\pm0.0}$ & --- & 0.989$_{\pm0.013}$ & 74.4$_{\pm9.3}$ & 0.0 & 0.0$_{\pm0.0}$ & --- & 0.989$_{\pm0.013}$ & 74.4$_{\pm9.3}$\\
  & MDOC$^{\dagger}$ & 0.0 & --- & --- & 0.000$_{\pm0.000}$ & 139.2$_{\pm10.6}$ & 0.0 & --- & --- & 0.000$_{\pm0.000}$ & 139.0$_{\pm10.6}$ & 0.0 & --- & --- & 0.000$_{\pm0.000}$ & 153.1$_{\pm12.1}$\\
  & PCD-LB & 0.0 & 0.0$_{\pm0.0}$ & 20.89$_{\pm2.48}$ & 0.280$_{\pm0.038}$ & 6.8$_{\pm0.9}$ & 0.0 & 0.0$_{\pm0.0}$ & 23.55$_{\pm2.73}$ & 0.284$_{\pm0.037}$ & 9.2$_{\pm0.7}$ & 0.0 & 0.0$_{\pm0.0}$ & 26.34$_{\pm2.90}$ & 0.300$_{\pm0.037}$ & 6.8$_{\pm0.4}$\\
  & PCD-SHD & 95.0 & 33.7$_{\pm20.2}$ & 0.99$_{\pm0.52}$ & 0.506$_{\pm0.028}$ & 9.5$_{\pm2.0}$ & \textbf{100.0} & 37.2$_{\pm16.9}$ & 0.79$_{\pm0.44}$ & 0.517$_{\pm0.026}$ & 2.7$_{\pm0.3}$ & \textbf{100.0} & 37.5$_{\pm18.1}$ & 0.73$_{\pm0.43}$ & 0.523$_{\pm0.027}$ & 2.7$_{\pm0.1}$\\
  & Final Proj. & \textbf{100.0} & 82.8$_{\pm26.1}$ & 0.60$_{\pm1.61}$ & 0.975$_{\pm0.013}$ & 5.2$_{\pm0.4}$ & \textbf{100.0} & 94.7$_{\pm13.7}$ & 0.13$_{\pm0.38}$ & 0.979$_{\pm0.011}$ & 8.5$_{\pm2.0}$ & \textbf{100.0} & 98.9$_{\pm3.4}$ & 0.02$_{\pm0.07}$ & \textbf{0.982}$_{\pm0.010}$ & 4.2$_{\pm0.8}$\\
  & DiRecT & \textbf{100.0} & \textbf{100.0}$_{\pm0.0}$ & \textbf{0.00}$_{\pm0.00}$ & 0.926$_{\pm0.028}$ & 61.9$_{\pm11.6}$ & \textbf{100.0} & \textbf{100.0}$_{\pm0.0}$ & \textbf{0.00}$_{\pm0.00}$ & 0.930$_{\pm0.025}$ & 68.1$_{\pm0.6}$ & \textbf{100.0} & \textbf{100.0}$_{\pm0.0}$ & \textbf{0.00}$_{\pm0.00}$ & 0.932$_{\pm0.024}$ & 46.4$_{\pm18.9}$\\
  & GlobMP (Ours) & 95.0 & 95.0$_{\pm21.8}$ & 1.65$_{\pm7.19}$ & \textbf{0.983}$_{\pm0.043}$ & 24.5$_{\pm61.2}$ & \textbf{100.0} & \textbf{100.0}$_{\pm0.0}$ & \textbf{0.00}$_{\pm0.00}$ & \textbf{0.987}$_{\pm0.033}$ & 14.6$_{\pm40.9}$ & \textbf{100.0} & \textbf{100.0}$_{\pm0.0}$ & \textbf{0.00}$_{\pm0.00}$ & 0.905$_{\pm0.134}$ & 1.1$_{\pm1.6}$\\
\midrule
18 & Diff.\ Policy$^{\dagger}$ & 0.0 & 0.0$_{\pm0.0}$ & 39.06$_{\pm6.29}$ & 0.999$_{\pm0.001}$ & 0.7 & 0.0 & 0.0$_{\pm0.0}$ & 39.06$_{\pm6.29}$ & 0.999$_{\pm0.001}$ & 0.7 & 0.0 & 0.0$_{\pm0.0}$ & 39.06$_{\pm6.29}$ & 0.999$_{\pm0.001}$ & 0.7\\
  & CD$^{\dagger}$ & 0.0 & 0.0$_{\pm0.0}$ & 22.94$_{\pm1.70}$ & 0.565$_{\pm0.049}$ & 1.2$_{\pm0.1}$ & 0.0 & 0.0$_{\pm0.0}$ & 22.94$_{\pm1.70}$ & 0.565$_{\pm0.049}$ & 0.6$_{\pm0.0}$ & 0.0 & 0.0$_{\pm0.0}$ & 22.94$_{\pm1.70}$ & 0.565$_{\pm0.049}$ & 1.2$_{\pm0.1}$\\
  & MMD-CBS$^{\dagger}$ & 0.0 & 0.0$_{\pm0.0}$ & --- & 0.996$_{\pm0.009}$ & 81.6$_{\pm17.1}$ & 0.0 & 0.0$_{\pm0.0}$ & --- & 0.996$_{\pm0.009}$ & 81.6$_{\pm17.1}$ & 0.0 & 0.0$_{\pm0.0}$ & --- & 0.996$_{\pm0.009}$ & 81.6$_{\pm17.1}$\\
  & MDOC$^{\dagger}$ & 0.0 & --- & --- & 0.000$_{\pm0.000}$ & 171.7$_{\pm13.7}$ & 0.0 & --- & --- & 0.000$_{\pm0.000}$ & 172.6$_{\pm14.8}$ & 0.0 & --- & --- & 0.000$_{\pm0.000}$ & 185.6$_{\pm17.7}$\\
  & PCD-LB & 0.0 & 0.0$_{\pm0.0}$ & 29.66$_{\pm2.84}$ & 0.271$_{\pm0.038}$ & 8.1$_{\pm0.9}$ & 0.0 & 0.0$_{\pm0.0}$ & 33.68$_{\pm2.86}$ & 0.276$_{\pm0.037}$ & 11.0$_{\pm1.2}$ & 0.0 & 0.0$_{\pm0.0}$ & 38.10$_{\pm2.86}$ & 0.289$_{\pm0.037}$ & 4.1$_{\pm0.5}$\\
  & PCD-SHD & 80.0 & 7.1$_{\pm7.0}$ & 2.21$_{\pm0.82}$ & 0.485$_{\pm0.034}$ & 5.2$_{\pm1.0}$ & 85.0 & 11.2$_{\pm9.5}$ & 1.83$_{\pm0.78}$ & 0.494$_{\pm0.035}$ & 3.3$_{\pm0.5}$ & 85.0 & 9.9$_{\pm8.8}$ & 1.69$_{\pm0.73}$ & 0.502$_{\pm0.036}$ & 3.0$_{\pm0.2}$\\
  & Final Proj. & 95.0 & 67.3$_{\pm36.5}$ & 1.60$_{\pm2.70}$ & 0.966$_{\pm0.015}$ & 13.6$_{\pm0.6}$ & \textbf{100.0} & 85.0$_{\pm28.5}$ & 0.45$_{\pm0.95}$ & 0.971$_{\pm0.012}$ & 10.2$_{\pm3.0}$ & \textbf{100.0} & 95.4$_{\pm11.3}$ & 0.08$_{\pm0.24}$ & \textbf{0.975}$_{\pm0.011}$ & 5.8$_{\pm0.8}$\\
  & DiRecT & \textbf{100.0} & 99.1$_{\pm4.1}$ & 0.00$_{\pm0.00}$ & 0.901$_{\pm0.025}$ & 38.0$_{\pm0.6}$ & \textbf{100.0} & 99.1$_{\pm3.7}$ & 0.00$_{\pm0.00}$ & 0.907$_{\pm0.024}$ & 38.2$_{\pm0.6}$ & \textbf{100.0} & 99.2$_{\pm3.4}$ & \textbf{0.00}$_{\pm0.00}$ & 0.911$_{\pm0.023}$ & 38.2$_{\pm0.6}$\\
  & GlobMP (Ours) & \textbf{100.0} & \textbf{100.0}$_{\pm0.0}$ & \textbf{0.00}$_{\pm0.00}$ & \textbf{0.977}$_{\pm0.047}$ & 27.5$_{\pm51.1}$ & \textbf{100.0} & \textbf{100.0}$_{\pm0.0}$ & \textbf{0.00}$_{\pm0.00}$ & \textbf{0.984}$_{\pm0.034}$ & 15.0$_{\pm25.0}$ & \textbf{100.0} & \textbf{100.0}$_{\pm0.0}$ & \textbf{0.00}$_{\pm0.00}$ & 0.876$_{\pm0.161}$ & 1.6$_{\pm1.9}$\\
\midrule
21 & Diff.\ Policy$^{\dagger}$ & 0.0 & 0.0$_{\pm0.0}$ & 60.02$_{\pm7.50}$ & 0.999$_{\pm0.001}$ & 0.7 & 0.0 & 0.0$_{\pm0.0}$ & 60.02$_{\pm7.50}$ & 0.999$_{\pm0.001}$ & 0.3 & 0.0 & 0.0$_{\pm0.0}$ & 60.02$_{\pm7.50}$ & 0.999$_{\pm0.001}$ & 0.7\\
  & CD$^{\dagger}$ & 0.0 & 0.0$_{\pm0.0}$ & 35.83$_{\pm3.71}$ & 0.587$_{\pm0.041}$ & 1.3$_{\pm0.1}$ & 0.0 & 0.0$_{\pm0.0}$ & 35.83$_{\pm3.71}$ & 0.587$_{\pm0.041}$ & 1.2$_{\pm0.0}$ & 0.0 & 0.0$_{\pm0.0}$ & 35.83$_{\pm3.71}$ & 0.587$_{\pm0.041}$ & 1.3$_{\pm0.1}$\\
  & MMD-CBS$^{\dagger}$ & 0.0 & 0.0$_{\pm0.0}$ & --- & 1.000$_{\pm0.000}$ & 82.4$_{\pm9.9}$ & 0.0 & 0.0$_{\pm0.0}$ & --- & 1.000$_{\pm0.000}$ & 82.4$_{\pm9.9}$ & 0.0 & 0.0$_{\pm0.0}$ & --- & 1.000$_{\pm0.000}$ & 82.4$_{\pm9.9}$\\
  & MDOC$^{\dagger}$ & 0.0 & --- & --- & 0.000$_{\pm0.000}$ & 193.5$_{\pm25.2}$ & 0.0 & --- & --- & 0.000$_{\pm0.000}$ & 192.8$_{\pm25.2}$ & 0.0 & --- & --- & 0.000$_{\pm0.000}$ & 217.6$_{\pm36.3}$\\
  & PCD-LB & 0.0 & 0.0$_{\pm0.0}$ & 45.06$_{\pm2.97}$ & 0.255$_{\pm0.026}$ & 9.9$_{\pm1.5}$ & 0.0 & 0.0$_{\pm0.0}$ & 51.43$_{\pm2.86}$ & 0.260$_{\pm0.022}$ & 13.2$_{\pm1.8}$ & 0.0 & 0.0$_{\pm0.0}$ & 57.66$_{\pm3.16}$ & 0.273$_{\pm0.020}$ & 9.3$_{\pm0.8}$\\
  & PCD-SHD & 10.0 & 0.2$_{\pm0.7}$ & 5.13$_{\pm1.31}$ & 0.467$_{\pm0.028}$ & 6.8$_{\pm1.5}$ & 15.0 & 0.2$_{\pm0.6}$ & 4.10$_{\pm1.05}$ & 0.480$_{\pm0.024}$ & 4.3$_{\pm0.6}$ & 10.0 & 0.4$_{\pm1.2}$ & 3.64$_{\pm0.92}$ & 0.491$_{\pm0.023}$ & 3.6$_{\pm0.2}$\\
  & Final Proj. & 65.0 & 19.2$_{\pm27.0}$ & 7.83$_{\pm6.74}$ & 0.940$_{\pm0.020}$ & 10.8$_{\pm2.2}$ & 90.0 & 45.0$_{\pm35.2}$ & 3.88$_{\pm4.47}$ & 0.948$_{\pm0.018}$ & 14.5$_{\pm4.3}$ & \textbf{100.0} & 62.4$_{\pm34.9}$ & 1.70$_{\pm2.11}$ & \textbf{0.954}$_{\pm0.016}$ & 7.9$_{\pm0.8}$\\
  & DiRecT & \textbf{100.0} & \textbf{99.8}$_{\pm0.7}$ & \textbf{0.00}$_{\pm0.00}$ & 0.857$_{\pm0.033}$ & 99.1$_{\pm31.9}$ & \textbf{100.0} & \textbf{99.9}$_{\pm0.3}$ & \textbf{0.00}$_{\pm0.00}$ & 0.867$_{\pm0.031}$ & 91.9$_{\pm33.8}$ & \textbf{100.0} & 99.9$_{\pm0.3}$ & \textbf{0.00}$_{\pm0.00}$ & 0.873$_{\pm0.028}$ & 85.6$_{\pm34.6}$\\
  & GlobMP (Ours) & 75.0 & 75.0$_{\pm43.3}$ & 17.15$_{\pm30.27}$ & \textbf{0.972}$_{\pm0.055}$ & 72.7$_{\pm81.0}$ & 95.0 & 95.0$_{\pm21.8}$ & 4.50$_{\pm19.62}$ & \textbf{0.972}$_{\pm0.050}$ & 31.2$_{\pm46.6}$ & \textbf{100.0} & \textbf{100.0}$_{\pm0.0}$ & \textbf{0.00}$_{\pm0.00}$ & 0.839$_{\pm0.185}$ & 3.1$_{\pm3.8}$\\
\bottomrule
\end{tabular}}
\end{center}
\end{table}

\begin{table}[p]
\caption{Multi-robot planning performance on \texttt{Basic} for $N_a\in\{3,6,\dots,21\}$ under the Low, Medium and High velocity caps. \emph{SR} and \emph{CS} measure trajectory feasibility, \emph{Coll} reports inter-robot collisions, \emph{DA} adherence to the environment-specific motion pattern, and \emph{Time} the planning time per configuration: for \emph{GlobMP (Ours)} the median over the configurations with the standard deviation as subscript, for the baselines the mean, so the two are not directly comparable. $^{\dagger}$ marks a method that applies no velocity limit, so its three blocks report the same run; MMD-CBS also returns one plan per configuration rather than $64$ candidates. A dash under \emph{Coll} marks a count that is not commensurable with the others, as MDOC and MMD-CBS count colliding time steps where every other row counts agent pairs with any collision; MDOC's \emph{CS} is omitted because its plans that do not cover every agent were counted as feasible. MMD-CBS plans under $60$\,s per configuration and \emph{GlobMP (Ours)} under $300$\,s per team.}
\label{tab:hard-basic}
\begin{center}\tiny\setlength{\tabcolsep}{2.2pt}
\resizebox{\linewidth}{!}{%
\begin{tabular}{ll ccccc ccccc ccccc}
\toprule
& & \multicolumn{5}{c}{Low ($v_{\max}=0.477$)} & \multicolumn{5}{c}{Medium ($v_{\max}=0.567$)} & \multicolumn{5}{c}{High ($v_{\max}=0.675$)}\\
\cmidrule(lr){3-7}\cmidrule(lr){8-12}\cmidrule(lr){13-17}
$N_a$ & Method & SR$\uparrow$ & CS$\uparrow$ & Coll$\downarrow$ & DA$\uparrow$ & Time$\downarrow$ & SR$\uparrow$ & CS$\uparrow$ & Coll$\downarrow$ & DA$\uparrow$ & Time$\downarrow$ & SR$\uparrow$ & CS$\uparrow$ & Coll$\downarrow$ & DA$\uparrow$ & Time$\downarrow$\\
\midrule
3 & Diff.\ Policy$^{\dagger}$ & 30.0 & 13.0$_{\pm29.6}$ & 0.35$_{\pm0.64}$ & 0.998$_{\pm0.001}$ & 0.3 & 30.0 & 13.0$_{\pm29.6}$ & 0.35$_{\pm0.64}$ & 0.998$_{\pm0.001}$ & 0.3 & 30.0 & 13.0$_{\pm29.6}$ & 0.35$_{\pm0.64}$ & 0.998$_{\pm0.001}$ & 0.3\\
  & CD$^{\dagger}$ & 80.0 & 0.0$_{\pm0.0}$ & 0.05$_{\pm0.08}$ & 0.611$_{\pm0.138}$ & 1.1$_{\pm0.0}$ & 80.0 & 0.0$_{\pm0.0}$ & 0.05$_{\pm0.08}$ & 0.611$_{\pm0.138}$ & 0.4$_{\pm0.0}$ & 80.0 & 0.4$_{\pm1.0}$ & 0.05$_{\pm0.08}$ & 0.611$_{\pm0.138}$ & 1.1$_{\pm0.1}$\\
  & MMD-CBS$^{\dagger}$ & 70.0 & 35.0$_{\pm47.7}$ & --- & 0.895$_{\pm0.113}$ & 2.5$_{\pm1.7}$ & 70.0 & 50.0$_{\pm50.0}$ & --- & 0.895$_{\pm0.113}$ & 2.5$_{\pm1.7}$ & 70.0 & 55.0$_{\pm49.7}$ & --- & 0.895$_{\pm0.113}$ & 2.5$_{\pm1.7}$\\
  & MDOC$^{\dagger}$ & 10.0 & --- & --- & 0.090$_{\pm0.269}$ & 33.4$_{\pm4.5}$ & 10.0 & --- & --- & 0.090$_{\pm0.269}$ & 33.6$_{\pm4.4}$ & 10.0 & --- & --- & 0.090$_{\pm0.269}$ & 33.5$_{\pm4.6}$\\
  & PCD-LB & 40.0 & 5.2$_{\pm10.1}$ & 0.01$_{\pm0.03}$ & 0.361$_{\pm0.114}$ & 5.2$_{\pm1.6}$ & 35.0 & 2.2$_{\pm3.6}$ & 0.01$_{\pm0.03}$ & 0.349$_{\pm0.112}$ & 4.4$_{\pm0.6}$ & 20.0 & 2.1$_{\pm6.1}$ & 0.01$_{\pm0.03}$ & 0.336$_{\pm0.106}$ & 3.6$_{\pm0.4}$\\
  & PCD-SHD & 70.0 & 61.2$_{\pm43.9}$ & 0.06$_{\pm0.14}$ & 0.847$_{\pm0.120}$ & 3.1$_{\pm0.2}$ & 70.0 & 61.1$_{\pm43.9}$ & 0.06$_{\pm0.14}$ & 0.848$_{\pm0.118}$ & 1.5$_{\pm0.1}$ & 70.0 & 60.7$_{\pm43.8}$ & 0.05$_{\pm0.12}$ & 0.849$_{\pm0.118}$ & 3.0$_{\pm0.1}$\\
  & Final Proj. & \textbf{100.0} & 95.1$_{\pm21.5}$ & \textbf{0.00}$_{\pm0.00}$ & \textbf{0.964}$_{\pm0.050}$ & 4.0$_{\pm0.7}$ & \textbf{100.0} & 95.1$_{\pm21.5}$ & \textbf{0.00}$_{\pm0.00}$ & \textbf{0.969}$_{\pm0.043}$ & 4.0$_{\pm0.8}$ & \textbf{100.0} & 95.1$_{\pm21.5}$ & \textbf{0.00}$_{\pm0.00}$ & 0.973$_{\pm0.037}$ & 2.2$_{\pm0.8}$\\
  & DiRecT & \textbf{100.0} & 99.8$_{\pm1.0}$ & \textbf{0.00}$_{\pm0.00}$ & 0.925$_{\pm0.098}$ & 26.3$_{\pm6.4}$ & \textbf{100.0} & 95.6$_{\pm19.1}$ & \textbf{0.00}$_{\pm0.00}$ & 0.930$_{\pm0.093}$ & 26.8$_{\pm6.5}$ & \textbf{100.0} & 95.5$_{\pm19.4}$ & \textbf{0.00}$_{\pm0.00}$ & 0.931$_{\pm0.092}$ & 27.7$_{\pm7.3}$\\
  & GlobMP (Ours) & \textbf{100.0} & \textbf{100.0}$_{\pm0.0}$ & \textbf{0.00}$_{\pm0.00}$ & 0.878$_{\pm0.169}$ & 2.1$_{\pm2.0}$ & \textbf{100.0} & \textbf{100.0}$_{\pm0.0}$ & \textbf{0.00}$_{\pm0.00}$ & 0.851$_{\pm0.184}$ & 0.2$_{\pm2.0}$ & \textbf{100.0} & \textbf{100.0}$_{\pm0.0}$ & \textbf{0.00}$_{\pm0.00}$ & \textbf{0.987}$_{\pm0.026}$ & 0.0$_{\pm0.1}$\\
\midrule
6 & Diff.\ Policy$^{\dagger}$ & 0.0 & 0.0$_{\pm0.0}$ & 3.63$_{\pm1.90}$ & 0.998$_{\pm0.001}$ & 0.3 & 0.0 & 0.0$_{\pm0.0}$ & 3.63$_{\pm1.90}$ & 0.998$_{\pm0.001}$ & 0.4 & 0.0 & 0.0$_{\pm0.0}$ & 3.63$_{\pm1.90}$ & 0.998$_{\pm0.001}$ & 0.5\\
  & CD$^{\dagger}$ & 45.0 & 0.0$_{\pm0.0}$ & 1.14$_{\pm0.58}$ & 0.633$_{\pm0.079}$ & 1.1$_{\pm0.1}$ & 45.0 & 0.0$_{\pm0.0}$ & 1.14$_{\pm0.58}$ & 0.633$_{\pm0.079}$ & 1.2$_{\pm0.0}$ & 45.0 & 0.0$_{\pm0.0}$ & 1.14$_{\pm0.58}$ & 0.633$_{\pm0.079}$ & 1.1$_{\pm0.1}$\\
  & MMD-CBS$^{\dagger}$ & 45.0 & 0.0$_{\pm0.0}$ & --- & 0.797$_{\pm0.102}$ & 9.3$_{\pm9.6}$ & 45.0 & 5.0$_{\pm21.8}$ & --- & 0.797$_{\pm0.102}$ & 9.3$_{\pm9.6}$ & 45.0 & 10.0$_{\pm30.0}$ & --- & 0.797$_{\pm0.102}$ & 9.3$_{\pm9.6}$\\
  & MDOC$^{\dagger}$ & 0.0 & --- & --- & 0.000$_{\pm0.000}$ & 90.6$_{\pm21.6}$ & 0.0 & --- & --- & 0.000$_{\pm0.000}$ & 69.7$_{\pm5.3}$ & 0.0 & --- & --- & 0.000$_{\pm0.000}$ & 77.3$_{\pm14.4}$\\
  & PCD-LB & 0.0 & 0.0$_{\pm0.0}$ & 0.69$_{\pm0.50}$ & 0.360$_{\pm0.064}$ & 6.4$_{\pm0.9}$ & 0.0 & 0.0$_{\pm0.0}$ & 0.72$_{\pm0.44}$ & 0.347$_{\pm0.056}$ & 4.9$_{\pm0.5}$ & 0.0 & 0.0$_{\pm0.0}$ & 0.78$_{\pm0.43}$ & 0.336$_{\pm0.051}$ & 1.8$_{\pm0.1}$\\
  & PCD-SHD & 45.0 & 22.2$_{\pm33.5}$ & 0.46$_{\pm0.35}$ & 0.773$_{\pm0.084}$ & 3.3$_{\pm0.2}$ & 50.0 & 21.4$_{\pm33.1}$ & 0.45$_{\pm0.33}$ & 0.777$_{\pm0.082}$ & 3.2$_{\pm0.1}$ & 60.0 & 20.8$_{\pm31.9}$ & 0.44$_{\pm0.31}$ & 0.780$_{\pm0.079}$ & 3.1$_{\pm0.1}$\\
  & Final Proj. & 90.0 & 85.6$_{\pm32.3}$ & 0.09$_{\pm0.33}$ & \textbf{0.959}$_{\pm0.031}$ & 4.3$_{\pm0.8}$ & 90.0 & 88.6$_{\pm29.9}$ & 0.02$_{\pm0.06}$ & \textbf{0.965}$_{\pm0.026}$ & 4.4$_{\pm0.7}$ & 30.0 & 9.1$_{\pm22.6}$ & 0.08$_{\pm0.11}$ & \textbf{0.983}$_{\pm0.011}$ & 4.2$_{\pm0.6}$\\
  & DiRecT & \textbf{100.0} & \textbf{100.0}$_{\pm0.0}$ & \textbf{0.00}$_{\pm0.00}$ & 0.919$_{\pm0.059}$ & 30.6$_{\pm7.9}$ & \textbf{100.0} & \textbf{100.0}$_{\pm0.0}$ & \textbf{0.00}$_{\pm0.00}$ & 0.921$_{\pm0.058}$ & 29.4$_{\pm7.8}$ & \textbf{100.0} & \textbf{100.0}$_{\pm0.0}$ & \textbf{0.00}$_{\pm0.00}$ & 0.922$_{\pm0.057}$ & 30.9$_{\pm8.8}$\\
  & GlobMP (Ours) & \textbf{100.0} & \textbf{100.0}$_{\pm0.0}$ & \textbf{0.00}$_{\pm0.00}$ & 0.932$_{\pm0.141}$ & 4.2$_{\pm18.1}$ & \textbf{100.0} & \textbf{100.0}$_{\pm0.0}$ & \textbf{0.00}$_{\pm0.00}$ & 0.934$_{\pm0.137}$ & 4.2$_{\pm5.0}$ & \textbf{100.0} & \textbf{100.0}$_{\pm0.0}$ & \textbf{0.00}$_{\pm0.00}$ & 0.980$_{\pm0.041}$ & 0.2$_{\pm0.1}$\\
\midrule
9 & Diff.\ Policy$^{\dagger}$ & 0.0 & 0.0$_{\pm0.0}$ & 6.58$_{\pm2.27}$ & 0.999$_{\pm0.001}$ & 0.3 & 0.0 & 0.0$_{\pm0.0}$ & 6.58$_{\pm2.27}$ & 0.999$_{\pm0.001}$ & 0.3 & 0.0 & 0.0$_{\pm0.0}$ & 6.58$_{\pm2.27}$ & 0.999$_{\pm0.001}$ & 0.3\\
  & CD$^{\dagger}$ & 0.0 & 0.0$_{\pm0.0}$ & 2.86$_{\pm1.16}$ & 0.630$_{\pm0.065}$ & 0.5$_{\pm0.1}$ & 0.0 & 0.0$_{\pm0.0}$ & 2.86$_{\pm1.16}$ & 0.630$_{\pm0.065}$ & 1.2$_{\pm0.0}$ & 0.0 & 0.0$_{\pm0.0}$ & 2.86$_{\pm1.16}$ & 0.630$_{\pm0.065}$ & 1.1$_{\pm0.1}$\\
  & MMD-CBS$^{\dagger}$ & 0.0 & 0.0$_{\pm0.0}$ & --- & 0.756$_{\pm0.082}$ & 15.1$_{\pm23.5}$ & 0.0 & 0.0$_{\pm0.0}$ & --- & 0.756$_{\pm0.082}$ & 15.1$_{\pm23.5}$ & 0.0 & 0.0$_{\pm0.0}$ & --- & 0.756$_{\pm0.082}$ & 15.1$_{\pm23.5}$\\
  & MDOC$^{\dagger}$ & 0.0 & --- & --- & 0.000$_{\pm0.000}$ & 92.2$_{\pm17.8}$ & 0.0 & --- & --- & 0.000$_{\pm0.000}$ & 81.2$_{\pm17.7}$ & 0.0 & --- & --- & 0.000$_{\pm0.000}$ & 74.6$_{\pm5.1}$\\
  & PCD-LB & 0.0 & 0.0$_{\pm0.0}$ & 2.85$_{\pm1.07}$ & 0.365$_{\pm0.072}$ & 7.4$_{\pm0.5}$ & 0.0 & 0.0$_{\pm0.0}$ & 3.06$_{\pm1.03}$ & 0.365$_{\pm0.065}$ & 5.6$_{\pm0.4}$ & 0.0 & 0.0$_{\pm0.0}$ & 3.27$_{\pm0.98}$ & 0.361$_{\pm0.061}$ & 4.4$_{\pm0.3}$\\
  & PCD-SHD & 5.0 & 0.3$_{\pm1.4}$ & 0.81$_{\pm0.55}$ & 0.735$_{\pm0.061}$ & 3.6$_{\pm0.1}$ & 5.0 & 0.5$_{\pm2.0}$ & 0.81$_{\pm0.53}$ & 0.738$_{\pm0.060}$ & 3.2$_{\pm0.6}$ & 5.0 & 0.4$_{\pm1.7}$ & 0.84$_{\pm0.50}$ & 0.740$_{\pm0.058}$ & 2.8$_{\pm0.8}$\\
  & Final Proj. & 85.0 & 76.3$_{\pm38.4}$ & 0.11$_{\pm0.24}$ & 0.949$_{\pm0.027}$ & 5.7$_{\pm0.7}$ & 85.0 & 78.4$_{\pm38.7}$ & 0.06$_{\pm0.21}$ & 0.957$_{\pm0.023}$ & 3.1$_{\pm0.7}$ & 85.0 & 79.0$_{\pm38.6}$ & 0.05$_{\pm0.20}$ & 0.962$_{\pm0.020}$ & 5.8$_{\pm0.7}$\\
  & DiRecT & \textbf{100.0} & 99.0$_{\pm4.4}$ & 0.01$_{\pm0.04}$ & 0.903$_{\pm0.050}$ & 46.6$_{\pm13.9}$ & 95.0 & 93.9$_{\pm22.0}$ & 0.01$_{\pm0.04}$ & 0.902$_{\pm0.051}$ & 47.1$_{\pm14.1}$ & 95.0 & 94.0$_{\pm21.9}$ & 0.01$_{\pm0.04}$ & 0.907$_{\pm0.046}$ & 42.6$_{\pm12.5}$\\
  & GlobMP (Ours) & \textbf{100.0} & \textbf{100.0}$_{\pm0.0}$ & \textbf{0.00}$_{\pm0.00}$ & \textbf{0.969}$_{\pm0.100}$ & 4.4$_{\pm8.8}$ & \textbf{100.0} & \textbf{100.0}$_{\pm0.0}$ & \textbf{0.00}$_{\pm0.00}$ & \textbf{0.969}$_{\pm0.099}$ & 4.3$_{\pm1.2}$ & \textbf{100.0} & \textbf{100.0}$_{\pm0.0}$ & \textbf{0.00}$_{\pm0.00}$ & \textbf{0.972}$_{\pm0.055}$ & 0.3$_{\pm0.1}$\\
\midrule
12 & Diff.\ Policy$^{\dagger}$ & 0.0 & 0.0$_{\pm0.0}$ & 13.12$_{\pm3.83}$ & 0.999$_{\pm0.001}$ & 0.3 & 0.0 & 0.0$_{\pm0.0}$ & 13.12$_{\pm3.83}$ & 0.999$_{\pm0.001}$ & 0.3 & 0.0 & 0.0$_{\pm0.0}$ & 13.12$_{\pm3.83}$ & 0.999$_{\pm0.001}$ & 0.3\\
  & CD$^{\dagger}$ & 0.0 & 0.0$_{\pm0.0}$ & 7.58$_{\pm1.54}$ & 0.638$_{\pm0.033}$ & 1.2$_{\pm0.1}$ & 0.0 & 0.0$_{\pm0.0}$ & 7.58$_{\pm1.54}$ & 0.638$_{\pm0.033}$ & 0.5$_{\pm0.1}$ & 0.0 & 0.0$_{\pm0.0}$ & 7.58$_{\pm1.54}$ & 0.638$_{\pm0.033}$ & 1.2$_{\pm0.0}$\\
  & MMD-CBS$^{\dagger}$ & 0.0 & 0.0$_{\pm0.0}$ & --- & 0.797$_{\pm0.027}$ & 9.5$_{\pm18.8}$ & 0.0 & 0.0$_{\pm0.0}$ & --- & 0.797$_{\pm0.027}$ & 9.5$_{\pm18.8}$ & 0.0 & 0.0$_{\pm0.0}$ & --- & 0.797$_{\pm0.027}$ & 9.5$_{\pm18.8}$\\
  & MDOC$^{\dagger}$ & 0.0 & --- & --- & 0.000$_{\pm0.000}$ & 104.7$_{\pm8.5}$ & 0.0 & --- & --- & 0.000$_{\pm0.000}$ & 113.0$_{\pm6.5}$ & 0.0 & --- & --- & 0.000$_{\pm0.000}$ & 109.8$_{\pm6.6}$\\
  & PCD-LB & 0.0 & 0.0$_{\pm0.0}$ & 7.89$_{\pm1.38}$ & 0.340$_{\pm0.044}$ & 8.6$_{\pm0.8}$ & 0.0 & 0.0$_{\pm0.0}$ & 8.44$_{\pm1.23}$ & 0.336$_{\pm0.044}$ & 3.1$_{\pm0.3}$ & 0.0 & 0.0$_{\pm0.0}$ & 8.87$_{\pm1.18}$ & 0.334$_{\pm0.043}$ & 4.9$_{\pm0.4}$\\
  & PCD-SHD & 0.0 & 0.0$_{\pm0.0}$ & 1.98$_{\pm0.70}$ & 0.713$_{\pm0.052}$ & 3.9$_{\pm0.2}$ & 0.0 & 0.0$_{\pm0.0}$ & 1.93$_{\pm0.69}$ & 0.718$_{\pm0.051}$ & 3.7$_{\pm0.1}$ & 0.0 & 0.0$_{\pm0.0}$ & 1.97$_{\pm0.65}$ & 0.721$_{\pm0.049}$ & 3.8$_{\pm0.1}$\\
  & Final Proj. & 60.0 & 38.0$_{\pm42.5}$ & 0.72$_{\pm1.00}$ & 0.960$_{\pm0.021}$ & 6.6$_{\pm0.7}$ & 60.0 & 40.6$_{\pm42.5}$ & 0.36$_{\pm0.53}$ & 0.966$_{\pm0.018}$ & 4.6$_{\pm1.5}$ & 60.0 & 43.6$_{\pm43.7}$ & 0.28$_{\pm0.38}$ & \textbf{0.971}$_{\pm0.015}$ & 6.5$_{\pm0.7}$\\
  & DiRecT & \textbf{100.0} & 99.2$_{\pm2.1}$ & 0.00$_{\pm0.01}$ & 0.905$_{\pm0.047}$ & 41.5$_{\pm0.9}$ & 95.0 & 82.7$_{\pm35.4}$ & 0.00$_{\pm0.01}$ & 0.906$_{\pm0.046}$ & 41.4$_{\pm0.9}$ & 85.0 & 79.8$_{\pm39.7}$ & 0.04$_{\pm0.15}$ & 0.915$_{\pm0.042}$ & 42.4$_{\pm1.4}$\\
  & GlobMP (Ours) & \textbf{100.0} & \textbf{100.0}$_{\pm0.0}$ & \textbf{0.00}$_{\pm0.00}$ & \textbf{0.983}$_{\pm0.084}$ & 5.2$_{\pm13.2}$ & \textbf{100.0} & \textbf{100.0}$_{\pm0.0}$ & \textbf{0.00}$_{\pm0.00}$ & \textbf{0.984}$_{\pm0.078}$ & 4.7$_{\pm1.0}$ & \textbf{100.0} & \textbf{100.0}$_{\pm0.0}$ & \textbf{0.00}$_{\pm0.00}$ & 0.968$_{\pm0.065}$ & 0.4$_{\pm0.2}$\\
\midrule
15 & Diff.\ Policy$^{\dagger}$ & 0.0 & 0.0$_{\pm0.0}$ & 22.25$_{\pm4.65}$ & 0.999$_{\pm0.001}$ & 0.3 & 0.0 & 0.0$_{\pm0.0}$ & 22.25$_{\pm4.65}$ & 0.999$_{\pm0.001}$ & 0.3 & 0.0 & 0.0$_{\pm0.0}$ & 22.25$_{\pm4.65}$ & 0.999$_{\pm0.001}$ & 0.4\\
  & CD$^{\dagger}$ & 0.0 & 0.0$_{\pm0.0}$ & 13.94$_{\pm2.28}$ & 0.625$_{\pm0.038}$ & 1.2$_{\pm0.1}$ & 0.0 & 0.0$_{\pm0.0}$ & 13.94$_{\pm2.28}$ & 0.625$_{\pm0.038}$ & 1.2$_{\pm0.1}$ & 0.0 & 0.0$_{\pm0.0}$ & 13.94$_{\pm2.28}$ & 0.625$_{\pm0.038}$ & 1.2$_{\pm0.0}$\\
  & MMD-CBS$^{\dagger}$ & 0.0 & 0.0$_{\pm0.0}$ & --- & --- & 6.6$_{\pm3.5}$ & 0.0 & 0.0$_{\pm0.0}$ & --- & --- & 6.6$_{\pm3.5}$ & 0.0 & 0.0$_{\pm0.0}$ & --- & --- & 6.6$_{\pm3.5}$\\
  & MDOC$^{\dagger}$ & 0.0 & --- & --- & 0.000$_{\pm0.000}$ & 150.5$_{\pm11.2}$ & 0.0 & --- & --- & 0.000$_{\pm0.000}$ & 165.2$_{\pm13.5}$ & 0.0 & --- & --- & 0.000$_{\pm0.000}$ & 160.4$_{\pm31.5}$\\
  & PCD-LB & 0.0 & 0.0$_{\pm0.0}$ & 14.53$_{\pm2.15}$ & 0.324$_{\pm0.037}$ & 9.2$_{\pm1.2}$ & 0.0 & 0.0$_{\pm0.0}$ & 15.91$_{\pm2.00}$ & 0.320$_{\pm0.032}$ & 6.8$_{\pm0.6}$ & 0.0 & 0.0$_{\pm0.0}$ & 17.16$_{\pm1.99}$ & 0.321$_{\pm0.026}$ & 5.3$_{\pm0.4}$\\
  & PCD-SHD & 0.0 & 0.0$_{\pm0.0}$ & 3.93$_{\pm1.17}$ & 0.674$_{\pm0.039}$ & 4.3$_{\pm0.2}$ & 0.0 & 0.0$_{\pm0.0}$ & 3.89$_{\pm1.14}$ & 0.679$_{\pm0.038}$ & 4.1$_{\pm0.1}$ & 0.0 & 0.0$_{\pm0.0}$ & 3.84$_{\pm1.12}$ & 0.682$_{\pm0.037}$ & 4.2$_{\pm0.2}$\\
  & Final Proj. & 75.0 & 44.5$_{\pm40.5}$ & 1.35$_{\pm1.67}$ & 0.946$_{\pm0.027}$ & 5.2$_{\pm0.5}$ & 85.0 & 54.4$_{\pm41.8}$ & 0.79$_{\pm1.19}$ & 0.954$_{\pm0.023}$ & 9.6$_{\pm0.6}$ & 90.0 & 60.1$_{\pm41.6}$ & 0.50$_{\pm0.81}$ & \textbf{0.960}$_{\pm0.020}$ & 6.0$_{\pm2.1}$\\
  & DiRecT & \textbf{100.0} & 97.7$_{\pm4.9}$ & 0.03$_{\pm0.05}$ & 0.887$_{\pm0.041}$ & 64.3$_{\pm2.3}$ & \textbf{100.0} & 92.0$_{\pm20.0}$ & 0.02$_{\pm0.05}$ & 0.890$_{\pm0.039}$ & 71.4$_{\pm11.9}$ & 95.0 & 91.3$_{\pm22.3}$ & 0.05$_{\pm0.12}$ & 0.895$_{\pm0.038}$ & 72.6$_{\pm13.9}$\\
  & GlobMP (Ours) & \textbf{100.0} & \textbf{100.0}$_{\pm0.0}$ & \textbf{0.00}$_{\pm0.00}$ & \textbf{1.000}$_{\pm0.001}$ & 5.7$_{\pm29.2}$ & \textbf{100.0} & \textbf{100.0}$_{\pm0.0}$ & \textbf{0.00}$_{\pm0.00}$ & \textbf{1.000}$_{\pm0.000}$ & 5.3$_{\pm11.8}$ & \textbf{100.0} & \textbf{100.0}$_{\pm0.0}$ & \textbf{0.00}$_{\pm0.00}$ & 0.959$_{\pm0.082}$ & 0.9$_{\pm0.3}$\\
\midrule
18 & Diff.\ Policy$^{\dagger}$ & 0.0 & 0.0$_{\pm0.0}$ & 28.87$_{\pm4.75}$ & 0.999$_{\pm0.001}$ & 0.4 & 0.0 & 0.0$_{\pm0.0}$ & 28.87$_{\pm4.75}$ & 0.999$_{\pm0.001}$ & 0.5 & 0.0 & 0.0$_{\pm0.0}$ & 28.87$_{\pm4.75}$ & 0.999$_{\pm0.001}$ & 0.4\\
  & CD$^{\dagger}$ & 0.0 & 0.0$_{\pm0.0}$ & 21.88$_{\pm2.51}$ & 0.642$_{\pm0.035}$ & 0.6$_{\pm0.2}$ & 0.0 & 0.0$_{\pm0.0}$ & 21.88$_{\pm2.51}$ & 0.642$_{\pm0.035}$ & 1.3$_{\pm0.1}$ & 0.0 & 0.0$_{\pm0.0}$ & 21.88$_{\pm2.51}$ & 0.642$_{\pm0.035}$ & 1.3$_{\pm0.0}$\\
  & MMD-CBS$^{\dagger}$ & 0.0 & 0.0$_{\pm0.0}$ & --- & 0.829$_{\pm0.000}$ & 9.2$_{\pm15.5}$ & 0.0 & 0.0$_{\pm0.0}$ & --- & 0.829$_{\pm0.000}$ & 9.2$_{\pm15.5}$ & 0.0 & 0.0$_{\pm0.0}$ & --- & 0.829$_{\pm0.000}$ & 9.2$_{\pm15.5}$\\
  & MDOC$^{\dagger}$ & 0.0 & --- & --- & 0.000$_{\pm0.000}$ & 149.1$_{\pm10.6}$ & 0.0 & --- & --- & 0.000$_{\pm0.000}$ & 143.3$_{\pm7.8}$ & 0.0 & --- & --- & 0.000$_{\pm0.000}$ & 142.9$_{\pm7.8}$\\
  & PCD-LB & 0.0 & 0.0$_{\pm0.0}$ & 22.51$_{\pm2.98}$ & 0.333$_{\pm0.034}$ & 9.3$_{\pm0.9}$ & 0.0 & 0.0$_{\pm0.0}$ & 24.71$_{\pm2.90}$ & 0.337$_{\pm0.031}$ & 3.6$_{\pm0.3}$ & 0.0 & 0.0$_{\pm0.0}$ & 27.06$_{\pm2.84}$ & 0.342$_{\pm0.032}$ & 5.5$_{\pm0.4}$\\
  & PCD-SHD & 0.0 & 0.0$_{\pm0.0}$ & 6.51$_{\pm1.58}$ & 0.683$_{\pm0.043}$ & 3.9$_{\pm0.9}$ & 0.0 & 0.0$_{\pm0.0}$ & 6.28$_{\pm1.53}$ & 0.688$_{\pm0.043}$ & 4.2$_{\pm0.1}$ & 0.0 & 0.0$_{\pm0.0}$ & 6.19$_{\pm1.47}$ & 0.691$_{\pm0.043}$ & 4.2$_{\pm0.1}$\\
  & Final Proj. & 65.0 & 19.4$_{\pm30.9}$ & 2.66$_{\pm2.15}$ & 0.941$_{\pm0.029}$ & 12.1$_{\pm0.7}$ & 65.0 & 22.1$_{\pm32.5}$ & 1.84$_{\pm1.67}$ & 0.949$_{\pm0.025}$ & 12.0$_{\pm0.6}$ & 65.0 & 24.5$_{\pm32.3}$ & 1.48$_{\pm1.35}$ & \textbf{0.956}$_{\pm0.021}$ & 7.3$_{\pm2.0}$\\
  & DiRecT & \textbf{100.0} & 92.6$_{\pm11.6}$ & 0.06$_{\pm0.11}$ & 0.881$_{\pm0.045}$ & 93.6$_{\pm18.4}$ & 90.0 & 78.4$_{\pm36.7}$ & 0.21$_{\pm0.39}$ & 0.888$_{\pm0.044}$ & 92.6$_{\pm16.9}$ & 90.0 & 77.4$_{\pm37.0}$ & 0.31$_{\pm0.60}$ & 0.893$_{\pm0.042}$ & 84.1$_{\pm7.1}$\\
  & GlobMP (Ours) & \textbf{100.0} & \textbf{100.0}$_{\pm0.0}$ & \textbf{0.00}$_{\pm0.00}$ & \textbf{1.000}$_{\pm0.000}$ & 6.4$_{\pm22.9}$ & \textbf{100.0} & \textbf{100.0}$_{\pm0.0}$ & \textbf{0.00}$_{\pm0.00}$ & \textbf{1.000}$_{\pm0.000}$ & 5.3$_{\pm4.2}$ & \textbf{100.0} & \textbf{100.0}$_{\pm0.0}$ & \textbf{0.00}$_{\pm0.00}$ & 0.953$_{\pm0.079}$ & 1.0$_{\pm0.5}$\\
\midrule
21 & Diff.\ Policy$^{\dagger}$ & 0.0 & 0.0$_{\pm0.0}$ & 42.00$_{\pm5.66}$ & 0.999$_{\pm0.001}$ & 0.4 & 0.0 & 0.0$_{\pm0.0}$ & 42.00$_{\pm5.66}$ & 0.999$_{\pm0.001}$ & 0.4 & 0.0 & 0.0$_{\pm0.0}$ & 42.00$_{\pm5.66}$ & 0.999$_{\pm0.001}$ & 0.8\\
  & CD$^{\dagger}$ & 0.0 & 0.0$_{\pm0.0}$ & 31.60$_{\pm2.41}$ & 0.636$_{\pm0.033}$ & 1.3$_{\pm0.0}$ & 0.0 & 0.0$_{\pm0.0}$ & 31.60$_{\pm2.41}$ & 0.636$_{\pm0.033}$ & 1.3$_{\pm0.0}$ & 0.0 & 0.0$_{\pm0.0}$ & 31.60$_{\pm2.41}$ & 0.636$_{\pm0.033}$ & 1.3$_{\pm0.1}$\\
  & MMD-CBS$^{\dagger}$ & 0.0 & 0.0$_{\pm0.0}$ & --- & --- & 4.4$_{\pm2.8}$ & 0.0 & 0.0$_{\pm0.0}$ & --- & --- & 4.4$_{\pm2.8}$ & 0.0 & 0.0$_{\pm0.0}$ & --- & --- & 4.4$_{\pm2.8}$\\
  & MDOC$^{\dagger}$ & 0.0 & --- & --- & 0.000$_{\pm0.000}$ & 165.5$_{\pm10.3}$ & 0.0 & --- & --- & 0.000$_{\pm0.000}$ & 162.1$_{\pm8.8}$ & 0.0 & --- & --- & 0.000$_{\pm0.000}$ & 162.7$_{\pm8.7}$\\
  & PCD-LB & 0.0 & 0.0$_{\pm0.0}$ & 32.93$_{\pm3.29}$ & 0.337$_{\pm0.031}$ & 11.2$_{\pm1.0}$ & 0.0 & 0.0$_{\pm0.0}$ & 36.75$_{\pm3.00}$ & 0.335$_{\pm0.026}$ & 8.2$_{\pm0.6}$ & 0.0 & 0.0$_{\pm0.0}$ & 40.37$_{\pm3.01}$ & 0.338$_{\pm0.025}$ & 6.4$_{\pm0.3}$\\
  & PCD-SHD & 0.0 & 0.0$_{\pm0.0}$ & 8.70$_{\pm1.44}$ & 0.662$_{\pm0.030}$ & 5.0$_{\pm0.3}$ & 0.0 & 0.0$_{\pm0.0}$ & 8.63$_{\pm1.36}$ & 0.668$_{\pm0.028}$ & 4.6$_{\pm0.1}$ & 0.0 & 0.0$_{\pm0.0}$ & 8.37$_{\pm1.38}$ & 0.672$_{\pm0.026}$ & 4.6$_{\pm0.1}$\\
  & Final Proj. & 25.0 & 7.7$_{\pm20.9}$ & 4.65$_{\pm4.26}$ & 0.946$_{\pm0.026}$ & 15.1$_{\pm0.6}$ & 40.0 & 8.7$_{\pm21.3}$ & 3.21$_{\pm2.92}$ & 0.954$_{\pm0.023}$ & 11.8$_{\pm3.7}$ & 35.0 & 9.5$_{\pm21.0}$ & 2.42$_{\pm2.03}$ & \textbf{0.960}$_{\pm0.020}$ & 15.1$_{\pm0.7}$\\
  & DiRecT & \textbf{100.0} & 93.9$_{\pm8.7}$ & \textbf{0.06}$_{\pm0.10}$ & 0.875$_{\pm0.039}$ & 109.4$_{\pm9.7}$ & 95.0 & 60.2$_{\pm38.2}$ & 0.11$_{\pm0.13}$ & 0.880$_{\pm0.038}$ & 101.1$_{\pm21.0}$ & 85.0 & 54.4$_{\pm43.5}$ & 0.64$_{\pm0.70}$ & 0.894$_{\pm0.040}$ & 78.0$_{\pm0.8}$\\
  & GlobMP (Ours) & 95.0 & \textbf{95.0}$_{\pm21.8}$ & 2.35$_{\pm10.24}$ & \textbf{1.000}$_{\pm0.003}$ & 9.7$_{\pm47.9}$ & \textbf{100.0} & \textbf{100.0}$_{\pm0.0}$ & \textbf{0.00}$_{\pm0.00}$ & \textbf{1.000}$_{\pm0.002}$ & 6.3$_{\pm3.8}$ & \textbf{100.0} & \textbf{100.0}$_{\pm0.0}$ & \textbf{0.00}$_{\pm0.00}$ & 0.945$_{\pm0.098}$ & 1.5$_{\pm2.5}$\\
\bottomrule
\end{tabular}}
\end{center}
\end{table}

\begin{table}[p]
\caption{Multi-robot planning performance on \texttt{Dense} for $N_a\in\{3,6,\dots,21\}$ under the Low, Medium and High velocity caps. \emph{SR} and \emph{CS} measure trajectory feasibility, \emph{Coll} reports inter-robot collisions, \emph{DA} adherence to the environment-specific motion pattern, and \emph{Time} the planning time per configuration: for \emph{GlobMP (Ours)} the median over the configurations with the standard deviation as subscript, for the baselines the mean, so the two are not directly comparable. $^{\dagger}$ marks a method that applies no velocity limit, so its three blocks report the same run; MMD-CBS also returns one plan per configuration rather than $64$ candidates. A dash under \emph{Coll} marks a count that is not commensurable with the others, as MDOC and MMD-CBS count colliding time steps where every other row counts agent pairs with any collision; MDOC's \emph{CS} is omitted because its plans that do not cover every agent were counted as feasible. MMD-CBS plans under $60$\,s per configuration and \emph{GlobMP (Ours)} under $300$\,s per team.}
\label{tab:hard-dense}
\begin{center}\tiny\setlength{\tabcolsep}{2.2pt}
\resizebox{\linewidth}{!}{%
\begin{tabular}{ll ccccc ccccc ccccc}
\toprule
& & \multicolumn{5}{c}{Low ($v_{\max}=0.565$)} & \multicolumn{5}{c}{Medium ($v_{\max}=0.617$)} & \multicolumn{5}{c}{High ($v_{\max}=0.675$)}\\
\cmidrule(lr){3-7}\cmidrule(lr){8-12}\cmidrule(lr){13-17}
$N_a$ & Method & SR$\uparrow$ & CS$\uparrow$ & Coll$\downarrow$ & DA$\uparrow$ & Time$\downarrow$ & SR$\uparrow$ & CS$\uparrow$ & Coll$\downarrow$ & DA$\uparrow$ & Time$\downarrow$ & SR$\uparrow$ & CS$\uparrow$ & Coll$\downarrow$ & DA$\uparrow$ & Time$\downarrow$\\
\midrule
3 & Diff.\ Policy$^{\dagger}$ & 0.0 & 0.0$_{\pm0.0}$ & 0.56$_{\pm0.63}$ & 0.999$_{\pm0.001}$ & 0.4 & 0.0 & 0.0$_{\pm0.0}$ & 0.56$_{\pm0.63}$ & 0.999$_{\pm0.001}$ & 0.4 & 0.0 & 0.0$_{\pm0.0}$ & 0.56$_{\pm0.63}$ & 0.999$_{\pm0.001}$ & 0.5\\
  & CD$^{\dagger}$ & 5.0 & 0.0$_{\pm0.0}$ & 0.14$_{\pm0.20}$ & 0.674$_{\pm0.096}$ & 1.1$_{\pm0.1}$ & 5.0 & 0.0$_{\pm0.0}$ & 0.14$_{\pm0.20}$ & 0.674$_{\pm0.096}$ & 1.1$_{\pm0.0}$ & 5.0 & 0.0$_{\pm0.0}$ & 0.14$_{\pm0.20}$ & 0.674$_{\pm0.096}$ & 1.1$_{\pm0.0}$\\
  & MMD-CBS$^{\dagger}$ & 0.0 & 0.0$_{\pm0.0}$ & --- & --- & 0.8$_{\pm0.5}$ & 0.0 & 0.0$_{\pm0.0}$ & --- & --- & 0.8$_{\pm0.5}$ & 0.0 & 0.0$_{\pm0.0}$ & --- & --- & 0.8$_{\pm0.5}$\\
  & MDOC$^{\dagger}$ & 0.0 & --- & --- & 0.000$_{\pm0.000}$ & 34.0$_{\pm5.3}$ & 0.0 & --- & --- & 0.000$_{\pm0.000}$ & 45.5$_{\pm8.4}$ & 0.0 & --- & --- & 0.000$_{\pm0.000}$ & 50.9$_{\pm17.6}$\\
  & PCD-LB & 10.0 & 5.2$_{\pm19.5}$ & 0.01$_{\pm0.02}$ & 0.386$_{\pm0.134}$ & 5.1$_{\pm1.5}$ & 10.0 & 4.8$_{\pm20.1}$ & 0.00$_{\pm0.01}$ & 0.381$_{\pm0.133}$ & 2.1$_{\pm0.6}$ & 10.0 & 4.7$_{\pm19.7}$ & 0.00$_{\pm0.01}$ & 0.376$_{\pm0.132}$ & 4.0$_{\pm0.8}$\\
  & PCD-SHD & 5.0 & 0.1$_{\pm0.3}$ & 0.11$_{\pm0.20}$ & 0.798$_{\pm0.100}$ & 3.1$_{\pm0.1}$ & 0.0 & 0.0$_{\pm0.0}$ & 0.11$_{\pm0.20}$ & 0.796$_{\pm0.099}$ & 3.1$_{\pm0.1}$ & 0.0 & 0.0$_{\pm0.0}$ & 0.12$_{\pm0.20}$ & 0.794$_{\pm0.099}$ & 3.0$_{\pm0.1}$\\
  & Final Proj. & 45.0 & 19.2$_{\pm35.9}$ & 0.11$_{\pm0.27}$ & \textbf{0.983}$_{\pm0.021}$ & 5.1$_{\pm0.7}$ & 45.0 & 19.2$_{\pm35.9}$ & 0.10$_{\pm0.27}$ & \textbf{0.984}$_{\pm0.019}$ & 5.1$_{\pm0.8}$ & 45.0 & 19.2$_{\pm35.9}$ & 0.09$_{\pm0.24}$ & 0.986$_{\pm0.017}$ & 5.1$_{\pm0.8}$\\
  & DiRecT & 50.0 & 22.2$_{\pm36.6}$ & 0.04$_{\pm0.11}$ & 0.928$_{\pm0.073}$ & 36.2$_{\pm9.4}$ & 50.0 & 22.0$_{\pm36.8}$ & 0.04$_{\pm0.10}$ & 0.928$_{\pm0.073}$ & 37.1$_{\pm11.0}$ & 50.0 & 21.9$_{\pm36.7}$ & 0.02$_{\pm0.09}$ & 0.928$_{\pm0.073}$ & 37.9$_{\pm10.4}$\\
  & GlobMP (Ours) & \textbf{100.0} & \textbf{100.0}$_{\pm0.0}$ & \textbf{0.00}$_{\pm0.00}$ & 0.980$_{\pm0.087}$ & 4.0$_{\pm8.0}$ & \textbf{100.0} & \textbf{100.0}$_{\pm0.0}$ & \textbf{0.00}$_{\pm0.00}$ & 0.864$_{\pm0.153}$ & 0.2$_{\pm5.7}$ & \textbf{100.0} & \textbf{100.0}$_{\pm0.0}$ & \textbf{0.00}$_{\pm0.00}$ & \textbf{0.989}$_{\pm0.024}$ & 0.2$_{\pm0.1}$\\
\midrule
6 & Diff.\ Policy$^{\dagger}$ & 0.0 & 0.0$_{\pm0.0}$ & 1.42$_{\pm0.99}$ & 0.999$_{\pm0.001}$ & 0.4 & 0.0 & 0.0$_{\pm0.0}$ & 1.42$_{\pm0.99}$ & 0.999$_{\pm0.001}$ & 0.4 & 0.0 & 0.0$_{\pm0.0}$ & 1.42$_{\pm0.99}$ & 0.999$_{\pm0.001}$ & 0.4\\
  & CD$^{\dagger}$ & 0.0 & 0.0$_{\pm0.0}$ & 0.80$_{\pm0.48}$ & 0.700$_{\pm0.074}$ & 1.1$_{\pm0.1}$ & 0.0 & 0.0$_{\pm0.0}$ & 0.80$_{\pm0.48}$ & 0.700$_{\pm0.074}$ & 0.5$_{\pm0.1}$ & 0.0 & 0.0$_{\pm0.0}$ & 0.80$_{\pm0.48}$ & 0.700$_{\pm0.074}$ & 1.1$_{\pm0.1}$\\
  & MMD-CBS$^{\dagger}$ & 0.0 & 0.0$_{\pm0.0}$ & --- & --- & 1.1$_{\pm0.8}$ & 0.0 & 0.0$_{\pm0.0}$ & --- & --- & 1.1$_{\pm0.8}$ & 0.0 & 0.0$_{\pm0.0}$ & --- & --- & 1.1$_{\pm0.8}$\\
  & MDOC$^{\dagger}$ & 0.0 & --- & --- & 0.000$_{\pm0.000}$ & 66.0$_{\pm9.4}$ & 0.0 & --- & --- & 0.000$_{\pm0.000}$ & 76.7$_{\pm6.1}$ & 0.0 & --- & --- & 0.000$_{\pm0.000}$ & 63.7$_{\pm15.4}$\\
  & PCD-LB & 0.0 & 0.0$_{\pm0.0}$ & 0.53$_{\pm0.39}$ & 0.378$_{\pm0.084}$ & 5.3$_{\pm1.1}$ & 0.0 & 0.0$_{\pm0.0}$ & 0.57$_{\pm0.41}$ & 0.373$_{\pm0.082}$ & 4.6$_{\pm0.8}$ & 0.0 & 0.0$_{\pm0.0}$ & 0.60$_{\pm0.41}$ & 0.372$_{\pm0.086}$ & 4.0$_{\pm0.6}$\\
  & PCD-SHD & 0.0 & 0.0$_{\pm0.0}$ & 1.04$_{\pm0.65}$ & 0.788$_{\pm0.062}$ & 3.3$_{\pm0.1}$ & 0.0 & 0.0$_{\pm0.0}$ & 1.07$_{\pm0.64}$ & 0.786$_{\pm0.061}$ & 3.2$_{\pm0.1}$ & 0.0 & 0.0$_{\pm0.0}$ & 1.06$_{\pm0.63}$ & 0.785$_{\pm0.061}$ & 3.2$_{\pm0.1}$\\
  & Final Proj. & 30.0 & 8.3$_{\pm20.9}$ & 0.16$_{\pm0.24}$ & 0.980$_{\pm0.013}$ & 4.2$_{\pm1.4}$ & 30.0 & 9.0$_{\pm22.4}$ & 0.10$_{\pm0.14}$ & \textbf{0.981}$_{\pm0.012}$ & 5.6$_{\pm0.8}$ & 30.0 & 9.1$_{\pm22.6}$ & 0.08$_{\pm0.11}$ & 0.983$_{\pm0.011}$ & 5.6$_{\pm0.5}$\\
  & DiRecT & 35.0 & 13.0$_{\pm26.8}$ & 0.04$_{\pm0.08}$ & 0.922$_{\pm0.046}$ & 44.0$_{\pm12.8}$ & 35.0 & 13.2$_{\pm26.9}$ & 0.03$_{\pm0.07}$ & 0.923$_{\pm0.046}$ & 47.7$_{\pm14.0}$ & 35.0 & 13.4$_{\pm27.2}$ & 0.02$_{\pm0.07}$ & 0.923$_{\pm0.045}$ & 47.4$_{\pm14.6}$\\
  & GlobMP (Ours) & \textbf{100.0} & \textbf{100.0}$_{\pm0.0}$ & \textbf{0.00}$_{\pm0.00}$ & \textbf{1.000}$_{\pm0.000}$ & 4.2$_{\pm5.9}$ & \textbf{100.0} & \textbf{100.0}$_{\pm0.0}$ & \textbf{0.00}$_{\pm0.00}$ & 0.915$_{\pm0.143}$ & 4.1$_{\pm2.0}$ & \textbf{100.0} & \textbf{100.0}$_{\pm0.0}$ & \textbf{0.00}$_{\pm0.00}$ & \textbf{0.985}$_{\pm0.051}$ & 0.2$_{\pm0.1}$\\
\midrule
9 & Diff.\ Policy$^{\dagger}$ & 0.0 & 0.0$_{\pm0.0}$ & 4.18$_{\pm1.43}$ & 0.999$_{\pm0.001}$ & 0.4 & 0.0 & 0.0$_{\pm0.0}$ & 4.18$_{\pm1.43}$ & 0.999$_{\pm0.001}$ & 0.4 & 0.0 & 0.0$_{\pm0.0}$ & 4.18$_{\pm1.43}$ & 0.999$_{\pm0.001}$ & 0.4\\
  & CD$^{\dagger}$ & 0.0 & 0.0$_{\pm0.0}$ & 3.39$_{\pm0.77}$ & 0.712$_{\pm0.056}$ & 1.2$_{\pm0.1}$ & 0.0 & 0.0$_{\pm0.0}$ & 3.39$_{\pm0.77}$ & 0.712$_{\pm0.056}$ & 1.2$_{\pm0.0}$ & 0.0 & 0.0$_{\pm0.0}$ & 3.39$_{\pm0.77}$ & 0.712$_{\pm0.056}$ & 1.2$_{\pm0.1}$\\
  & MMD-CBS$^{\dagger}$ & 0.0 & 0.0$_{\pm0.0}$ & --- & --- & 1.0$_{\pm0.5}$ & 0.0 & 0.0$_{\pm0.0}$ & --- & --- & 1.0$_{\pm0.5}$ & 0.0 & 0.0$_{\pm0.0}$ & --- & --- & 1.0$_{\pm0.5}$\\
  & MDOC$^{\dagger}$ & 0.0 & --- & --- & 0.000$_{\pm0.000}$ & 82.2$_{\pm12.5}$ & 0.0 & --- & --- & 0.000$_{\pm0.000}$ & 73.9$_{\pm5.1}$ & 0.0 & --- & --- & 0.000$_{\pm0.000}$ & 77.0$_{\pm6.6}$\\
  & PCD-LB & 0.0 & 0.0$_{\pm0.0}$ & 2.95$_{\pm1.17}$ & 0.403$_{\pm0.093}$ & 5.2$_{\pm1.7}$ & 0.0 & 0.0$_{\pm0.0}$ & 3.12$_{\pm1.15}$ & 0.400$_{\pm0.089}$ & 5.5$_{\pm1.0}$ & 0.0 & 0.0$_{\pm0.0}$ & 3.30$_{\pm1.13}$ & 0.399$_{\pm0.084}$ & 4.8$_{\pm0.7}$\\
  & PCD-SHD & 0.0 & 0.0$_{\pm0.0}$ & 2.08$_{\pm0.66}$ & 0.799$_{\pm0.047}$ & 3.6$_{\pm0.1}$ & 0.0 & 0.0$_{\pm0.0}$ & 2.12$_{\pm0.62}$ & 0.798$_{\pm0.046}$ & 1.7$_{\pm0.1}$ & 0.0 & 0.0$_{\pm0.0}$ & 2.17$_{\pm0.62}$ & 0.796$_{\pm0.045}$ & 3.5$_{\pm0.1}$\\
  & Final Proj. & 15.0 & 8.8$_{\pm25.7}$ & 0.46$_{\pm0.63}$ & 0.984$_{\pm0.010}$ & 8.2$_{\pm0.7}$ & 15.0 & 8.8$_{\pm25.7}$ & 0.36$_{\pm0.52}$ & \textbf{0.985}$_{\pm0.009}$ & 5.6$_{\pm2.1}$ & 15.0 & 8.8$_{\pm25.7}$ & 0.34$_{\pm0.50}$ & \textbf{0.987}$_{\pm0.009}$ & 8.4$_{\pm0.7}$\\
  & DiRecT & 25.0 & 11.1$_{\pm26.4}$ & 0.15$_{\pm0.18}$ & 0.939$_{\pm0.034}$ & 74.7$_{\pm22.0}$ & 25.0 & 11.1$_{\pm26.7}$ & 0.12$_{\pm0.14}$ & 0.940$_{\pm0.033}$ & 58.3$_{\pm13.7}$ & 20.0 & 11.2$_{\pm26.9}$ & 0.09$_{\pm0.13}$ & 0.940$_{\pm0.033}$ & 53.8$_{\pm0.9}$\\
  & GlobMP (Ours) & \textbf{100.0} & \textbf{100.0}$_{\pm0.0}$ & \textbf{0.00}$_{\pm0.00}$ & \textbf{1.000}$_{\pm0.000}$ & 4.3$_{\pm15.9}$ & \textbf{100.0} & \textbf{100.0}$_{\pm0.0}$ & \textbf{0.00}$_{\pm0.00}$ & 0.938$_{\pm0.129}$ & 4.3$_{\pm12.5}$ & \textbf{100.0} & \textbf{100.0}$_{\pm0.0}$ & \textbf{0.00}$_{\pm0.00}$ & 0.984$_{\pm0.037}$ & 0.3$_{\pm0.1}$\\
\midrule
12 & Diff.\ Policy$^{\dagger}$ & 0.0 & 0.0$_{\pm0.0}$ & 7.86$_{\pm2.57}$ & 0.999$_{\pm0.001}$ & 0.4 & 0.0 & 0.0$_{\pm0.0}$ & 7.86$_{\pm2.57}$ & 0.999$_{\pm0.001}$ & 0.4 & 0.0 & 0.0$_{\pm0.0}$ & 7.86$_{\pm2.57}$ & 0.999$_{\pm0.001}$ & 0.4\\
  & CD$^{\dagger}$ & 0.0 & 0.0$_{\pm0.0}$ & 6.72$_{\pm1.77}$ & 0.745$_{\pm0.049}$ & 0.6$_{\pm0.0}$ & 0.0 & 0.0$_{\pm0.0}$ & 6.72$_{\pm1.77}$ & 0.745$_{\pm0.049}$ & 1.2$_{\pm0.0}$ & 0.0 & 0.0$_{\pm0.0}$ & 6.72$_{\pm1.77}$ & 0.745$_{\pm0.049}$ & 1.2$_{\pm0.1}$\\
  & MMD-CBS$^{\dagger}$ & 0.0 & 0.0$_{\pm0.0}$ & --- & --- & 0.9$_{\pm0.4}$ & 0.0 & 0.0$_{\pm0.0}$ & --- & --- & 0.9$_{\pm0.4}$ & 0.0 & 0.0$_{\pm0.0}$ & --- & --- & 0.9$_{\pm0.4}$\\
  & MDOC$^{\dagger}$ & 0.0 & --- & --- & 0.000$_{\pm0.000}$ & 111.9$_{\pm6.7}$ & 0.0 & --- & --- & 0.000$_{\pm0.000}$ & 108.8$_{\pm6.3}$ & 0.0 & --- & --- & 0.000$_{\pm0.000}$ & 114.3$_{\pm6.4}$\\
  & PCD-LB & 0.0 & 0.0$_{\pm0.0}$ & 6.12$_{\pm2.04}$ & 0.405$_{\pm0.071}$ & 7.2$_{\pm1.0}$ & 0.0 & 0.0$_{\pm0.0}$ & 6.57$_{\pm2.01}$ & 0.404$_{\pm0.073}$ & 5.9$_{\pm0.7}$ & 0.0 & 0.0$_{\pm0.0}$ & 7.05$_{\pm2.03}$ & 0.405$_{\pm0.074}$ & 5.1$_{\pm0.6}$\\
  & PCD-SHD & 0.0 & 0.0$_{\pm0.0}$ & 4.12$_{\pm1.60}$ & 0.815$_{\pm0.039}$ & 3.8$_{\pm0.1}$ & 0.0 & 0.0$_{\pm0.0}$ & 4.12$_{\pm1.62}$ & 0.814$_{\pm0.037}$ & 3.7$_{\pm0.1}$ & 0.0 & 0.0$_{\pm0.0}$ & 4.26$_{\pm1.62}$ & 0.814$_{\pm0.037}$ & 3.7$_{\pm0.1}$\\
  & Final Proj. & 0.0 & 0.0$_{\pm0.0}$ & 1.10$_{\pm1.28}$ & 0.985$_{\pm0.009}$ & 10.1$_{\pm0.7}$ & 0.0 & 0.0$_{\pm0.0}$ & 1.00$_{\pm1.19}$ & 0.986$_{\pm0.008}$ & 6.3$_{\pm1.8}$ & 0.0 & 0.0$_{\pm0.0}$ & 0.92$_{\pm1.18}$ & \textbf{0.987}$_{\pm0.007}$ & 10.1$_{\pm0.7}$\\
  & DiRecT & 0.0 & 0.0$_{\pm0.0}$ & 0.49$_{\pm0.41}$ & 0.940$_{\pm0.029}$ & 68.9$_{\pm1.6}$ & 0.0 & 0.0$_{\pm0.0}$ & 0.40$_{\pm0.39}$ & 0.941$_{\pm0.029}$ & 69.3$_{\pm5.5}$ & 0.0 & 0.0$_{\pm0.0}$ & 0.33$_{\pm0.39}$ & 0.942$_{\pm0.029}$ & 74.8$_{\pm12.7}$\\
  & GlobMP (Ours) & \textbf{100.0} & \textbf{100.0}$_{\pm0.0}$ & \textbf{0.00}$_{\pm0.00}$ & \textbf{1.000}$_{\pm0.000}$ & 4.3$_{\pm18.6}$ & \textbf{100.0} & \textbf{100.0}$_{\pm0.0}$ & \textbf{0.00}$_{\pm0.00}$ & \textbf{0.992}$_{\pm0.054}$ & 4.3$_{\pm11.5}$ & \textbf{100.0} & \textbf{100.0}$_{\pm0.0}$ & \textbf{0.00}$_{\pm0.00}$ & 0.972$_{\pm0.073}$ & 0.3$_{\pm0.1}$\\
\midrule
15 & Diff.\ Policy$^{\dagger}$ & 0.0 & 0.0$_{\pm0.0}$ & 12.70$_{\pm3.56}$ & 0.999$_{\pm0.001}$ & 0.4 & 0.0 & 0.0$_{\pm0.0}$ & 12.70$_{\pm3.56}$ & 0.999$_{\pm0.001}$ & 0.4 & 0.0 & 0.0$_{\pm0.0}$ & 12.70$_{\pm3.56}$ & 0.999$_{\pm0.001}$ & 0.4\\
  & CD$^{\dagger}$ & 0.0 & 0.0$_{\pm0.0}$ & 13.31$_{\pm3.25}$ & 0.712$_{\pm0.044}$ & 1.2$_{\pm0.1}$ & 0.0 & 0.0$_{\pm0.0}$ & 13.31$_{\pm3.25}$ & 0.712$_{\pm0.044}$ & 1.2$_{\pm0.0}$ & 0.0 & 0.0$_{\pm0.0}$ & 13.31$_{\pm3.25}$ & 0.712$_{\pm0.044}$ & 1.2$_{\pm0.1}$\\
  & MMD-CBS$^{\dagger}$ & 0.0 & 0.0$_{\pm0.0}$ & --- & --- & 0.9$_{\pm0.4}$ & 0.0 & 0.0$_{\pm0.0}$ & --- & --- & 0.9$_{\pm0.4}$ & 0.0 & 0.0$_{\pm0.0}$ & --- & --- & 0.9$_{\pm0.4}$\\
  & MDOC$^{\dagger}$ & 0.0 & --- & --- & 0.000$_{\pm0.000}$ & 155.2$_{\pm7.7}$ & 0.0 & --- & --- & 0.000$_{\pm0.000}$ & 171.4$_{\pm25.9}$ & 0.0 & --- & --- & 0.000$_{\pm0.000}$ & 133.6$_{\pm10.9}$\\
  & PCD-LB & 0.0 & 0.0$_{\pm0.0}$ & 13.05$_{\pm2.30}$ & 0.368$_{\pm0.041}$ & 8.1$_{\pm1.1}$ & 0.0 & 0.0$_{\pm0.0}$ & 14.03$_{\pm2.31}$ & 0.367$_{\pm0.039}$ & 6.4$_{\pm1.2}$ & 0.0 & 0.0$_{\pm0.0}$ & 15.16$_{\pm2.36}$ & 0.366$_{\pm0.039}$ & 5.6$_{\pm0.7}$\\
  & PCD-SHD & 0.0 & 0.0$_{\pm0.0}$ & 7.59$_{\pm2.13}$ & 0.799$_{\pm0.032}$ & 4.1$_{\pm0.1}$ & 0.0 & 0.0$_{\pm0.0}$ & 7.65$_{\pm2.17}$ & 0.798$_{\pm0.032}$ & 4.0$_{\pm0.1}$ & 0.0 & 0.0$_{\pm0.0}$ & 7.81$_{\pm2.24}$ & 0.797$_{\pm0.033}$ & 4.0$_{\pm0.1}$\\
  & Final Proj. & 0.0 & 0.0$_{\pm0.0}$ & 1.55$_{\pm1.38}$ & 0.979$_{\pm0.007}$ & 12.8$_{\pm3.3}$ & 0.0 & 0.0$_{\pm0.0}$ & 1.23$_{\pm1.21}$ & 0.980$_{\pm0.007}$ & 14.7$_{\pm0.7}$ & 0.0 & 0.0$_{\pm0.0}$ & 1.06$_{\pm1.02}$ & \textbf{0.982}$_{\pm0.006}$ & 14.6$_{\pm0.7}$\\
  & DiRecT & 0.0 & 0.0$_{\pm0.0}$ & 0.75$_{\pm0.68}$ & 0.922$_{\pm0.023}$ & 113.1$_{\pm16.3}$ & 0.0 & 0.0$_{\pm0.0}$ & 0.53$_{\pm0.48}$ & 0.923$_{\pm0.023}$ & 114.6$_{\pm19.5}$ & 0.0 & 0.0$_{\pm0.0}$ & 0.36$_{\pm0.36}$ & 0.923$_{\pm0.023}$ & 102.4$_{\pm7.6}$\\
  & GlobMP (Ours) & \textbf{100.0} & \textbf{100.0}$_{\pm0.0}$ & \textbf{0.00}$_{\pm0.00}$ & \textbf{1.000}$_{\pm0.000}$ & 4.6$_{\pm33.3}$ & \textbf{100.0} & \textbf{100.0}$_{\pm0.0}$ & \textbf{0.00}$_{\pm0.00}$ & \textbf{0.982}$_{\pm0.069}$ & 4.5$_{\pm15.1}$ & \textbf{100.0} & \textbf{100.0}$_{\pm0.0}$ & \textbf{0.00}$_{\pm0.00}$ & 0.974$_{\pm0.071}$ & 0.4$_{\pm0.2}$\\
\midrule
18 & Diff.\ Policy$^{\dagger}$ & 0.0 & 0.0$_{\pm0.0}$ & 16.75$_{\pm4.10}$ & 0.999$_{\pm0.000}$ & 0.4 & 0.0 & 0.0$_{\pm0.0}$ & 16.75$_{\pm4.10}$ & 0.999$_{\pm0.000}$ & 0.4 & 0.0 & 0.0$_{\pm0.0}$ & 16.75$_{\pm4.10}$ & 0.999$_{\pm0.000}$ & 0.3\\
  & CD$^{\dagger}$ & 0.0 & 0.0$_{\pm0.0}$ & 19.75$_{\pm3.14}$ & 0.702$_{\pm0.038}$ & 1.3$_{\pm0.0}$ & 0.0 & 0.0$_{\pm0.0}$ & 19.75$_{\pm3.14}$ & 0.702$_{\pm0.038}$ & 0.6$_{\pm0.0}$ & 0.0 & 0.0$_{\pm0.0}$ & 19.75$_{\pm3.14}$ & 0.702$_{\pm0.038}$ & 1.3$_{\pm0.1}$\\
  & MMD-CBS$^{\dagger}$ & 0.0 & 0.0$_{\pm0.0}$ & --- & --- & 0.8$_{\pm0.4}$ & 0.0 & 0.0$_{\pm0.0}$ & --- & --- & 0.8$_{\pm0.4}$ & 0.0 & 0.0$_{\pm0.0}$ & --- & --- & 0.8$_{\pm0.4}$\\
  & MDOC$^{\dagger}$ & 0.0 & --- & --- & 0.000$_{\pm0.000}$ & 147.2$_{\pm8.4}$ & 0.0 & --- & --- & 0.000$_{\pm0.000}$ & 143.6$_{\pm7.6}$ & 0.0 & --- & --- & 0.000$_{\pm0.000}$ & 143.0$_{\pm8.6}$\\
  & PCD-LB & 0.0 & 0.0$_{\pm0.0}$ & 20.86$_{\pm4.27}$ & 0.353$_{\pm0.037}$ & 8.0$_{\pm1.0}$ & 0.0 & 0.0$_{\pm0.0}$ & 22.51$_{\pm4.22}$ & 0.356$_{\pm0.035}$ & 6.7$_{\pm0.7}$ & 0.0 & 0.0$_{\pm0.0}$ & 24.22$_{\pm4.20}$ & 0.359$_{\pm0.034}$ & 5.7$_{\pm0.5}$\\
  & PCD-SHD & 0.0 & 0.0$_{\pm0.0}$ & 10.38$_{\pm2.46}$ & 0.792$_{\pm0.031}$ & 4.3$_{\pm0.2}$ & 0.0 & 0.0$_{\pm0.0}$ & 10.52$_{\pm2.40}$ & 0.790$_{\pm0.031}$ & 2.0$_{\pm0.1}$ & 0.0 & 0.0$_{\pm0.0}$ & 10.74$_{\pm2.40}$ & 0.789$_{\pm0.030}$ & 4.1$_{\pm0.1}$\\
  & Final Proj. & 0.0 & 0.0$_{\pm0.0}$ & 2.43$_{\pm1.79}$ & 0.984$_{\pm0.007}$ & 14.1$_{\pm3.8}$ & 0.0 & 0.0$_{\pm0.0}$ & 2.01$_{\pm1.52}$ & \textbf{0.985}$_{\pm0.006}$ & 17.3$_{\pm0.7}$ & 0.0 & 0.0$_{\pm0.0}$ & 1.75$_{\pm1.40}$ & \textbf{0.986}$_{\pm0.006}$ & 9.2$_{\pm0.7}$\\
  & DiRecT & 5.0 & 0.1$_{\pm0.3}$ & 0.79$_{\pm0.60}$ & 0.936$_{\pm0.022}$ & 124.6$_{\pm10.6}$ & 5.0 & 0.1$_{\pm0.3}$ & 0.65$_{\pm0.52}$ & 0.937$_{\pm0.022}$ & 107.4$_{\pm20.5}$ & 0.0 & 0.0$_{\pm0.0}$ & 0.54$_{\pm0.52}$ & 0.937$_{\pm0.022}$ & 89.8$_{\pm1.3}$\\
  & GlobMP (Ours) & \textbf{100.0} & \textbf{100.0}$_{\pm0.0}$ & \textbf{0.00}$_{\pm0.00}$ & \textbf{1.000}$_{\pm0.000}$ & 5.0$_{\pm20.7}$ & \textbf{100.0} & \textbf{100.0}$_{\pm0.0}$ & \textbf{0.00}$_{\pm0.00}$ & 0.972$_{\pm0.090}$ & 4.8$_{\pm10.4}$ & \textbf{100.0} & \textbf{100.0}$_{\pm0.0}$ & \textbf{0.00}$_{\pm0.00}$ & 0.972$_{\pm0.069}$ & 0.5$_{\pm0.2}$\\
\midrule
21 & Diff.\ Policy$^{\dagger}$ & 0.0 & 0.0$_{\pm0.0}$ & 24.25$_{\pm6.19}$ & 0.999$_{\pm0.000}$ & 0.4 & 0.0 & 0.0$_{\pm0.0}$ & 24.25$_{\pm6.19}$ & 0.999$_{\pm0.000}$ & 0.3 & 0.0 & 0.0$_{\pm0.0}$ & 24.25$_{\pm6.19}$ & 0.999$_{\pm0.000}$ & 0.4\\
  & CD$^{\dagger}$ & 0.0 & 0.0$_{\pm0.0}$ & 29.53$_{\pm5.13}$ & 0.706$_{\pm0.022}$ & 1.3$_{\pm0.1}$ & 0.0 & 0.0$_{\pm0.0}$ & 29.53$_{\pm5.13}$ & 0.706$_{\pm0.022}$ & 1.3$_{\pm0.1}$ & 0.0 & 0.0$_{\pm0.0}$ & 29.53$_{\pm5.13}$ & 0.706$_{\pm0.022}$ & 1.3$_{\pm0.1}$\\
  & MMD-CBS$^{\dagger}$ & 0.0 & 0.0$_{\pm0.0}$ & --- & --- & 0.9$_{\pm0.3}$ & 0.0 & 0.0$_{\pm0.0}$ & --- & --- & 0.9$_{\pm0.3}$ & 0.0 & 0.0$_{\pm0.0}$ & --- & --- & 0.9$_{\pm0.3}$\\
  & MDOC$^{\dagger}$ & 0.0 & --- & --- & 0.000$_{\pm0.000}$ & 165.3$_{\pm8.7}$ & 0.0 & --- & --- & 0.000$_{\pm0.000}$ & 164.1$_{\pm8.1}$ & 0.0 & --- & --- & 0.000$_{\pm0.000}$ & 160.9$_{\pm8.6}$\\
  & PCD-LB & 0.0 & 0.0$_{\pm0.0}$ & 30.67$_{\pm3.84}$ & 0.356$_{\pm0.029}$ & 9.6$_{\pm1.4}$ & 0.0 & 0.0$_{\pm0.0}$ & 33.64$_{\pm3.88}$ & 0.357$_{\pm0.028}$ & 7.9$_{\pm1.1}$ & 0.0 & 0.0$_{\pm0.0}$ & 36.36$_{\pm3.94}$ & 0.359$_{\pm0.027}$ & 3.5$_{\pm0.8}$\\
  & PCD-SHD & 0.0 & 0.0$_{\pm0.0}$ & 14.48$_{\pm2.36}$ & 0.796$_{\pm0.024}$ & 4.9$_{\pm0.4}$ & 0.0 & 0.0$_{\pm0.0}$ & 14.69$_{\pm2.44}$ & 0.795$_{\pm0.024}$ & 4.7$_{\pm0.1}$ & 0.0 & 0.0$_{\pm0.0}$ & 14.90$_{\pm2.43}$ & 0.794$_{\pm0.023}$ & 4.6$_{\pm0.1}$\\
  & Final Proj. & 0.0 & 0.0$_{\pm0.0}$ & 4.31$_{\pm3.10}$ & 0.982$_{\pm0.007}$ & 23.1$_{\pm0.7}$ & 0.0 & 0.0$_{\pm0.0}$ & 3.68$_{\pm2.74}$ & \textbf{0.983}$_{\pm0.007}$ & 23.1$_{\pm0.7}$ & 0.0 & 0.0$_{\pm0.0}$ & 3.20$_{\pm2.45}$ & \textbf{0.984}$_{\pm0.006}$ & 16.5$_{\pm5.1}$\\
  & DiRecT & 0.0 & 0.0$_{\pm0.0}$ & 1.36$_{\pm0.97}$ & 0.931$_{\pm0.019}$ & 120.1$_{\pm1.0}$ & 0.0 & 0.0$_{\pm0.0}$ & 1.06$_{\pm0.78}$ & 0.932$_{\pm0.019}$ & 120.0$_{\pm1.1}$ & 0.0 & 0.0$_{\pm0.0}$ & 0.62$_{\pm0.57}$ & 0.933$_{\pm0.019}$ & 120.2$_{\pm0.7}$\\
  & GlobMP (Ours) & \textbf{100.0} & \textbf{100.0}$_{\pm0.0}$ & \textbf{0.00}$_{\pm0.00}$ & \textbf{1.000}$_{\pm0.000}$ & 5.1$_{\pm43.5}$ & \textbf{100.0} & \textbf{100.0}$_{\pm0.0}$ & \textbf{0.00}$_{\pm0.00}$ & 0.978$_{\pm0.081}$ & 5.0$_{\pm26.0}$ & \textbf{100.0} & \textbf{100.0}$_{\pm0.0}$ & \textbf{0.00}$_{\pm0.00}$ & 0.970$_{\pm0.071}$ & 0.9$_{\pm0.3}$\\
\bottomrule
\end{tabular}}
\end{center}
\end{table}

\begin{table}[p]
\caption{Multi-robot planning performance on \texttt{Conveyor} for $N_a\in\{3,6,\dots,21\}$ under the Low, Medium and High velocity caps. \emph{SR} and \emph{CS} measure trajectory feasibility, \emph{Coll} reports inter-robot collisions, \emph{DA} adherence to the environment-specific motion pattern, and \emph{Time} the planning time per configuration: for \emph{GlobMP (Ours)} the median over the configurations with the standard deviation as subscript, for the baselines the mean, so the two are not directly comparable. $^{\dagger}$ marks a method that applies no velocity limit, so its three blocks report the same run; MMD-CBS also returns one plan per configuration rather than $64$ candidates. A dash under \emph{Coll} marks a count that is not commensurable with the others, as MDOC and MMD-CBS count colliding time steps where every other row counts agent pairs with any collision; MDOC's \emph{CS} is omitted because its plans that do not cover every agent were counted as feasible. MMD-CBS plans under $60$\,s per configuration and \emph{GlobMP (Ours)} under $300$\,s per team. \emph{DA} on this map scores an end-to-end traversal of one conveyor lane, which the uniformly drawn start and goal pairs almost never require; it is therefore below $0.07$ for every method and does not separate them.}
\label{tab:hard-corridor}
\begin{center}\tiny\setlength{\tabcolsep}{2.2pt}
\resizebox{\linewidth}{!}{%
\begin{tabular}{ll ccccc ccccc ccccc}
\toprule
& & \multicolumn{5}{c}{Low ($v_{\max}=1.21$)} & \multicolumn{5}{c}{Medium ($v_{\max}=1.46$)} & \multicolumn{5}{c}{High ($v_{\max}=1.76$)}\\
\cmidrule(lr){3-7}\cmidrule(lr){8-12}\cmidrule(lr){13-17}
$N_a$ & Method & SR$\uparrow$ & CS$\uparrow$ & Coll$\downarrow$ & DA$\uparrow$ & Time$\downarrow$ & SR$\uparrow$ & CS$\uparrow$ & Coll$\downarrow$ & DA$\uparrow$ & Time$\downarrow$ & SR$\uparrow$ & CS$\uparrow$ & Coll$\downarrow$ & DA$\uparrow$ & Time$\downarrow$\\
\midrule
3 & Diff.\ Policy$^{\dagger}$ & 5.0 & 4.9$_{\pm21.5}$ & 0.37$_{\pm0.53}$ & 0.016$_{\pm0.070}$ & 0.6 & 5.0 & 4.9$_{\pm21.5}$ & 0.37$_{\pm0.53}$ & 0.016$_{\pm0.070}$ & 0.6 & 5.0 & 4.9$_{\pm21.5}$ & 0.37$_{\pm0.53}$ & 0.016$_{\pm0.070}$ & 0.6\\
  & CD$^{\dagger}$ & 45.0 & 17.4$_{\pm35.0}$ & 0.02$_{\pm0.04}$ & 0.001$_{\pm0.003}$ & 0.6$_{\pm0.1}$ & 45.0 & 18.3$_{\pm36.2}$ & 0.02$_{\pm0.04}$ & 0.001$_{\pm0.003}$ & 1.2$_{\pm0.1}$ & 45.0 & 19.2$_{\pm37.3}$ & 0.02$_{\pm0.04}$ & 0.001$_{\pm0.003}$ & 1.1$_{\pm0.1}$\\
  & MMD-CBS$^{\dagger}$ & 20.0 & 20.0$_{\pm40.0}$ & --- & 0.000$_{\pm0.000}$ & 1.3$_{\pm0.7}$ & 20.0 & 20.0$_{\pm40.0}$ & --- & 0.000$_{\pm0.000}$ & 1.3$_{\pm0.7}$ & 20.0 & 20.0$_{\pm40.0}$ & --- & 0.000$_{\pm0.000}$ & 1.3$_{\pm0.7}$\\
  & MDOC$^{\dagger}$ & 5.0 & --- & --- & 0.000$_{\pm0.000}$ & 40.1$_{\pm7.0}$ & 5.0 & --- & --- & 0.000$_{\pm0.000}$ & 36.4$_{\pm2.6}$ & 5.0 & --- & --- & 0.000$_{\pm0.000}$ & 36.3$_{\pm2.8}$\\
  & PCD-LB & 70.0 & 36.2$_{\pm42.5}$ & 0.01$_{\pm0.03}$ & 0.004$_{\pm0.016}$ & 3.0$_{\pm0.1}$ & 65.0 & 37.1$_{\pm43.6}$ & 0.01$_{\pm0.02}$ & 0.004$_{\pm0.016}$ & 3.1$_{\pm0.1}$ & 65.0 & 36.8$_{\pm43.5}$ & 0.01$_{\pm0.02}$ & 0.004$_{\pm0.017}$ & 3.1$_{\pm0.1}$\\
  & PCD-SHD & 10.0 & 9.5$_{\pm28.4}$ & 0.01$_{\pm0.03}$ & 0.003$_{\pm0.011}$ & 2.1$_{\pm0.7}$ & 10.0 & 9.5$_{\pm28.4}$ & 0.01$_{\pm0.02}$ & 0.002$_{\pm0.010}$ & 3.0$_{\pm0.1}$ & 10.0 & 9.6$_{\pm28.9}$ & 0.01$_{\pm0.02}$ & 0.002$_{\pm0.009}$ & 3.0$_{\pm0.1}$\\
  & Final Proj. & 10.0 & 10.0$_{\pm30.0}$ & 0.00$_{\pm0.01}$ & 0.000$_{\pm0.000}$ & 3.6$_{\pm0.5}$ & 10.0 & 10.0$_{\pm30.0}$ & 0.00$_{\pm0.00}$ & 0.000$_{\pm0.000}$ & 3.7$_{\pm0.7}$ & 10.0 & 10.0$_{\pm30.0}$ & \textbf{0.00}$_{\pm0.00}$ & 0.000$_{\pm0.000}$ & 3.7$_{\pm0.6}$\\
  & DiRecT & 10.0 & 9.9$_{\pm29.8}$ & \textbf{0.00}$_{\pm0.00}$ & 0.000$_{\pm0.000}$ & 24.3$_{\pm5.8}$ & 10.0 & 10.0$_{\pm30.0}$ & \textbf{0.00}$_{\pm0.00}$ & 0.000$_{\pm0.000}$ & 24.5$_{\pm5.7}$ & 10.0 & 10.0$_{\pm30.0}$ & \textbf{0.00}$_{\pm0.00}$ & 0.000$_{\pm0.000}$ & 26.4$_{\pm6.7}$\\
  & GlobMP (Ours) & \textbf{100.0} & \textbf{100.0}$_{\pm0.0}$ & \textbf{0.00}$_{\pm0.00}$ & \textbf{0.017}$_{\pm0.128}$ & 0.4$_{\pm0.2}$ & \textbf{100.0} & \textbf{100.0}$_{\pm0.0}$ & \textbf{0.00}$_{\pm0.00}$ & \textbf{0.033}$_{\pm0.180}$ & 0.5$_{\pm0.4}$ & \textbf{100.0} & \textbf{100.0}$_{\pm0.0}$ & \textbf{0.00}$_{\pm0.00}$ & \textbf{0.067}$_{\pm0.249}$ & 0.7$_{\pm0.6}$\\
\midrule
6 & Diff.\ Policy$^{\dagger}$ & 0.0 & 0.0$_{\pm0.0}$ & 1.22$_{\pm1.04}$ & 0.000$_{\pm0.000}$ & 0.6 & 0.0 & 0.0$_{\pm0.0}$ & 1.22$_{\pm1.04}$ & 0.000$_{\pm0.000}$ & 0.6 & 0.0 & 0.0$_{\pm0.0}$ & 1.22$_{\pm1.04}$ & 0.000$_{\pm0.000}$ & 0.6\\
  & CD$^{\dagger}$ & 5.0 & 0.1$_{\pm0.3}$ & 0.25$_{\pm0.29}$ & 0.008$_{\pm0.032}$ & 1.2$_{\pm0.1}$ & 5.0 & 0.1$_{\pm0.3}$ & 0.25$_{\pm0.29}$ & 0.008$_{\pm0.032}$ & 1.2$_{\pm0.1}$ & 5.0 & 0.2$_{\pm0.7}$ & 0.25$_{\pm0.29}$ & 0.008$_{\pm0.032}$ & 1.1$_{\pm0.1}$\\
  & MMD-CBS$^{\dagger}$ & 0.0 & 0.0$_{\pm0.0}$ & --- & --- & 1.5$_{\pm1.1}$ & 0.0 & 0.0$_{\pm0.0}$ & --- & --- & 1.5$_{\pm1.1}$ & 0.0 & 0.0$_{\pm0.0}$ & --- & --- & 1.5$_{\pm1.1}$\\
  & MDOC$^{\dagger}$ & 0.0 & --- & --- & 0.000$_{\pm0.000}$ & 62.5$_{\pm3.3}$ & 0.0 & --- & --- & 0.000$_{\pm0.000}$ & 82.6$_{\pm23.1}$ & 0.0 & --- & --- & 0.000$_{\pm0.000}$ & 85.2$_{\pm20.8}$\\
  & PCD-LB & 25.0 & 6.1$_{\pm14.7}$ & 0.85$_{\pm0.74}$ & 0.006$_{\pm0.013}$ & 2.8$_{\pm0.7}$ & 40.0 & 7.8$_{\pm16.5}$ & 0.72$_{\pm0.60}$ & 0.007$_{\pm0.013}$ & 3.1$_{\pm0.4}$ & 45.0 & 13.5$_{\pm20.5}$ & 0.44$_{\pm0.34}$ & 0.007$_{\pm0.014}$ & 3.3$_{\pm0.1}$\\
  & PCD-SHD & 0.0 & 0.0$_{\pm0.0}$ & 0.10$_{\pm0.18}$ & 0.001$_{\pm0.006}$ & 3.3$_{\pm0.1}$ & 0.0 & 0.0$_{\pm0.0}$ & 0.06$_{\pm0.09}$ & 0.001$_{\pm0.006}$ & 3.2$_{\pm0.1}$ & 0.0 & 0.0$_{\pm0.0}$ & 0.03$_{\pm0.05}$ & 0.001$_{\pm0.005}$ & 3.2$_{\pm0.1}$\\
  & Final Proj. & 0.0 & 0.0$_{\pm0.0}$ & 0.05$_{\pm0.14}$ & 0.000$_{\pm0.000}$ & 1.9$_{\pm0.9}$ & 0.0 & 0.0$_{\pm0.0}$ & 0.03$_{\pm0.10}$ & 0.000$_{\pm0.000}$ & 3.4$_{\pm0.7}$ & 0.0 & 0.0$_{\pm0.0}$ & 0.01$_{\pm0.03}$ & 0.000$_{\pm0.000}$ & 3.4$_{\pm0.6}$\\
  & DiRecT & 0.0 & 0.0$_{\pm0.0}$ & 0.00$_{\pm0.01}$ & 0.001$_{\pm0.005}$ & 23.4$_{\pm6.1}$ & 0.0 & 0.0$_{\pm0.0}$ & 0.01$_{\pm0.02}$ & 0.001$_{\pm0.002}$ & 24.0$_{\pm6.6}$ & 0.0 & 0.0$_{\pm0.0}$ & 0.01$_{\pm0.02}$ & 0.000$_{\pm0.001}$ & 25.0$_{\pm7.0}$\\
  & GlobMP (Ours) & \textbf{100.0} & \textbf{100.0}$_{\pm0.0}$ & \textbf{0.00}$_{\pm0.00}$ & \textbf{0.008}$_{\pm0.091}$ & 0.8$_{\pm0.8}$ & \textbf{100.0} & \textbf{100.0}$_{\pm0.0}$ & \textbf{0.00}$_{\pm0.00}$ & \textbf{0.008}$_{\pm0.091}$ & 1.2$_{\pm2.2}$ & \textbf{100.0} & \textbf{100.0}$_{\pm0.0}$ & \textbf{0.00}$_{\pm0.00}$ & \textbf{0.017}$_{\pm0.128}$ & 1.8$_{\pm10.5}$\\
\midrule
9 & Diff.\ Policy$^{\dagger}$ & 0.0 & 0.0$_{\pm0.0}$ & 3.16$_{\pm1.58}$ & 0.006$_{\pm0.024}$ & 0.6 & 0.0 & 0.0$_{\pm0.0}$ & 3.16$_{\pm1.58}$ & 0.006$_{\pm0.024}$ & 0.3 & 0.0 & 0.0$_{\pm0.0}$ & 3.16$_{\pm1.58}$ & 0.006$_{\pm0.024}$ & 0.3\\
  & CD$^{\dagger}$ & 0.0 & 0.0$_{\pm0.0}$ & 1.47$_{\pm0.91}$ & 0.000$_{\pm0.002}$ & 0.6$_{\pm0.1}$ & 0.0 & 0.0$_{\pm0.0}$ & 1.47$_{\pm0.91}$ & 0.000$_{\pm0.002}$ & 1.1$_{\pm0.1}$ & 0.0 & 0.0$_{\pm0.0}$ & 1.47$_{\pm0.91}$ & 0.000$_{\pm0.002}$ & 1.1$_{\pm0.1}$\\
  & MMD-CBS$^{\dagger}$ & 0.0 & 0.0$_{\pm0.0}$ & --- & --- & 1.8$_{\pm1.1}$ & 0.0 & 0.0$_{\pm0.0}$ & --- & --- & 1.8$_{\pm1.1}$ & 0.0 & 0.0$_{\pm0.0}$ & --- & --- & 1.8$_{\pm1.1}$\\
  & MDOC$^{\dagger}$ & 0.0 & --- & --- & 0.000$_{\pm0.000}$ & 113.3$_{\pm17.4}$ & 0.0 & --- & --- & 0.000$_{\pm0.000}$ & 97.2$_{\pm18.9}$ & 0.0 & --- & --- & 0.000$_{\pm0.000}$ & 74.7$_{\pm4.0}$\\
  & PCD-LB & 0.0 & 0.0$_{\pm0.0}$ & 4.20$_{\pm1.42}$ & 0.004$_{\pm0.008}$ & 3.5$_{\pm0.1}$ & 10.0 & 0.4$_{\pm1.4}$ & 3.09$_{\pm1.02}$ & 0.004$_{\pm0.010}$ & 3.5$_{\pm0.1}$ & 25.0 & 2.4$_{\pm5.4}$ & 1.54$_{\pm0.57}$ & 0.005$_{\pm0.011}$ & 3.5$_{\pm0.1}$\\
  & PCD-SHD & 0.0 & 0.0$_{\pm0.0}$ & 0.13$_{\pm0.16}$ & 0.003$_{\pm0.014}$ & 3.6$_{\pm0.1}$ & 0.0 & 0.0$_{\pm0.0}$ & 0.08$_{\pm0.10}$ & 0.003$_{\pm0.014}$ & 3.6$_{\pm0.1}$ & 0.0 & 0.0$_{\pm0.0}$ & 0.05$_{\pm0.06}$ & 0.003$_{\pm0.014}$ & 3.6$_{\pm0.1}$\\
  & Final Proj. & 0.0 & 0.0$_{\pm0.0}$ & 0.05$_{\pm0.10}$ & 0.005$_{\pm0.024}$ & 4.3$_{\pm0.7}$ & 0.0 & 0.0$_{\pm0.0}$ & 0.02$_{\pm0.04}$ & 0.005$_{\pm0.024}$ & 4.3$_{\pm0.7}$ & 0.0 & 0.0$_{\pm0.0}$ & \textbf{0.02}$_{\pm0.04}$ & 0.005$_{\pm0.024}$ & 4.3$_{\pm0.6}$\\
  & DiRecT & 0.0 & 0.0$_{\pm0.0}$ & 0.04$_{\pm0.06}$ & 0.007$_{\pm0.025}$ & 34.5$_{\pm9.7}$ & 0.0 & 0.0$_{\pm0.0}$ & 0.01$_{\pm0.02}$ & 0.006$_{\pm0.024}$ & 35.6$_{\pm10.5}$ & 0.0 & 0.0$_{\pm0.0}$ & 0.02$_{\pm0.03}$ & 0.006$_{\pm0.024}$ & 25.9$_{\pm1.3}$\\
  & GlobMP (Ours) & \textbf{100.0} & \textbf{100.0}$_{\pm0.0}$ & \textbf{0.00}$_{\pm0.00}$ & \textbf{0.028}$_{\pm0.164}$ & 2.0$_{\pm8.3}$ & \textbf{100.0} & \textbf{100.0}$_{\pm0.0}$ & \textbf{0.00}$_{\pm0.00}$ & \textbf{0.028}$_{\pm0.164}$ & 2.6$_{\pm2.7}$ & \textbf{95.0} & \textbf{95.0}$_{\pm21.8}$ & 0.10$_{\pm0.44}$ & \textbf{0.028}$_{\pm0.164}$ & 5.5$_{\pm64.2}$\\
\midrule
12 & Diff.\ Policy$^{\dagger}$ & 0.0 & 0.0$_{\pm0.0}$ & 6.33$_{\pm2.50}$ & 0.012$_{\pm0.029}$ & 0.6 & 0.0 & 0.0$_{\pm0.0}$ & 6.33$_{\pm2.50}$ & 0.012$_{\pm0.029}$ & 0.6 & 0.0 & 0.0$_{\pm0.0}$ & 6.33$_{\pm2.50}$ & 0.012$_{\pm0.029}$ & 0.6\\
  & CD$^{\dagger}$ & 0.0 & 0.0$_{\pm0.0}$ & 3.96$_{\pm1.28}$ & 0.018$_{\pm0.026}$ & 1.2$_{\pm0.1}$ & 0.0 & 0.0$_{\pm0.0}$ & 3.96$_{\pm1.28}$ & 0.018$_{\pm0.026}$ & 1.2$_{\pm0.1}$ & 0.0 & 0.0$_{\pm0.0}$ & 3.96$_{\pm1.28}$ & 0.018$_{\pm0.026}$ & 0.6$_{\pm0.0}$\\
  & MMD-CBS$^{\dagger}$ & 0.0 & 0.0$_{\pm0.0}$ & --- & --- & 1.9$_{\pm1.6}$ & 0.0 & 0.0$_{\pm0.0}$ & --- & --- & 1.9$_{\pm1.6}$ & 0.0 & 0.0$_{\pm0.0}$ & --- & --- & 1.9$_{\pm1.6}$\\
  & MDOC$^{\dagger}$ & 0.0 & --- & --- & 0.000$_{\pm0.000}$ & 107.9$_{\pm7.1}$ & 0.0 & --- & --- & 0.000$_{\pm0.000}$ & 120.6$_{\pm5.3}$ & 0.0 & --- & --- & 0.000$_{\pm0.000}$ & 127.3$_{\pm7.3}$\\
  & PCD-LB & 0.0 & 0.0$_{\pm0.0}$ & 9.25$_{\pm1.68}$ & 0.005$_{\pm0.007}$ & 3.7$_{\pm0.1}$ & 0.0 & 0.0$_{\pm0.0}$ & 7.14$_{\pm1.37}$ & 0.005$_{\pm0.007}$ & 1.9$_{\pm0.2}$ & 0.0 & 0.0$_{\pm0.0}$ & 3.76$_{\pm0.86}$ & 0.005$_{\pm0.006}$ & 3.7$_{\pm0.5}$\\
  & PCD-SHD & 0.0 & 0.0$_{\pm0.0}$ & 0.62$_{\pm0.51}$ & 0.013$_{\pm0.022}$ & 3.9$_{\pm0.1}$ & 0.0 & 0.0$_{\pm0.0}$ & 0.37$_{\pm0.32}$ & 0.012$_{\pm0.022}$ & 3.9$_{\pm0.1}$ & 0.0 & 0.0$_{\pm0.0}$ & 0.19$_{\pm0.18}$ & 0.012$_{\pm0.022}$ & 1.8$_{\pm0.2}$\\
  & Final Proj. & 0.0 & 0.0$_{\pm0.0}$ & 0.20$_{\pm0.29}$ & 0.010$_{\pm0.025}$ & 5.4$_{\pm0.7}$ & 0.0 & 0.0$_{\pm0.0}$ & \textbf{0.05}$_{\pm0.08}$ & 0.010$_{\pm0.025}$ & 5.4$_{\pm0.7}$ & 0.0 & 0.0$_{\pm0.0}$ & \textbf{0.06}$_{\pm0.11}$ & 0.010$_{\pm0.025}$ & 5.4$_{\pm0.7}$\\
  & DiRecT & 0.0 & 0.0$_{\pm0.0}$ & \textbf{0.06}$_{\pm0.08}$ & 0.012$_{\pm0.026}$ & 33.6$_{\pm1.4}$ & 0.0 & 0.0$_{\pm0.0}$ & 0.07$_{\pm0.09}$ & 0.013$_{\pm0.028}$ & 34.0$_{\pm1.9}$ & 0.0 & 0.0$_{\pm0.0}$ & 0.09$_{\pm0.16}$ & 0.014$_{\pm0.029}$ & 34.2$_{\pm1.9}$\\
  & GlobMP (Ours) & \textbf{90.0} & \textbf{90.0}$_{\pm30.0}$ & 0.85$_{\pm2.55}$ & \textbf{0.042}$_{\pm0.200}$ & 3.3$_{\pm91.3}$ & \textbf{95.0} & \textbf{95.0}$_{\pm21.8}$ & 0.45$_{\pm1.96}$ & \textbf{0.037}$_{\pm0.190}$ & 5.7$_{\pm72.9}$ & \textbf{90.0} & \textbf{90.0}$_{\pm30.0}$ & 0.60$_{\pm1.80}$ & \textbf{0.033}$_{\pm0.180}$ & 13.3$_{\pm79.3}$\\
\midrule
15 & Diff.\ Policy$^{\dagger}$ & 0.0 & 0.0$_{\pm0.0}$ & 9.77$_{\pm2.61}$ & 0.003$_{\pm0.013}$ & 0.3 & 0.0 & 0.0$_{\pm0.0}$ & 9.77$_{\pm2.61}$ & 0.003$_{\pm0.013}$ & 0.3 & 0.0 & 0.0$_{\pm0.0}$ & 9.77$_{\pm2.61}$ & 0.003$_{\pm0.013}$ & 0.3\\
  & CD$^{\dagger}$ & 0.0 & 0.0$_{\pm0.0}$ & 7.35$_{\pm1.96}$ & 0.003$_{\pm0.009}$ & 1.2$_{\pm0.1}$ & 0.0 & 0.0$_{\pm0.0}$ & 7.35$_{\pm1.96}$ & 0.003$_{\pm0.009}$ & 1.2$_{\pm0.1}$ & 0.0 & 0.0$_{\pm0.0}$ & 7.35$_{\pm1.96}$ & 0.003$_{\pm0.009}$ & 1.2$_{\pm0.1}$\\
  & MMD-CBS$^{\dagger}$ & 0.0 & 0.0$_{\pm0.0}$ & --- & --- & 1.9$_{\pm1.4}$ & 0.0 & 0.0$_{\pm0.0}$ & --- & --- & 1.9$_{\pm1.4}$ & 0.0 & 0.0$_{\pm0.0}$ & --- & --- & 1.9$_{\pm1.4}$\\
  & MDOC$^{\dagger}$ & 0.0 & --- & --- & 0.000$_{\pm0.000}$ & 155.4$_{\pm11.8}$ & 0.0 & --- & --- & 0.000$_{\pm0.000}$ & 172.5$_{\pm14.6}$ & 0.0 & --- & --- & 0.000$_{\pm0.000}$ & 174.3$_{\pm27.4}$\\
  & PCD-LB & 0.0 & 0.0$_{\pm0.0}$ & 20.74$_{\pm3.01}$ & 0.005$_{\pm0.006}$ & 4.1$_{\pm0.1}$ & 0.0 & 0.0$_{\pm0.0}$ & 17.02$_{\pm2.64}$ & 0.004$_{\pm0.006}$ & 4.1$_{\pm0.2}$ & 0.0 & 0.0$_{\pm0.0}$ & 11.84$_{\pm1.88}$ & 0.005$_{\pm0.006}$ & 4.1$_{\pm0.1}$\\
  & PCD-SHD & 0.0 & 0.0$_{\pm0.0}$ & 0.75$_{\pm0.40}$ & 0.003$_{\pm0.010}$ & 4.3$_{\pm0.1}$ & 0.0 & 0.0$_{\pm0.0}$ & 0.48$_{\pm0.28}$ & 0.003$_{\pm0.010}$ & 4.3$_{\pm0.1}$ & 0.0 & 0.0$_{\pm0.0}$ & 0.29$_{\pm0.19}$ & 0.003$_{\pm0.010}$ & 4.3$_{\pm0.1}$\\
  & Final Proj. & 0.0 & 0.0$_{\pm0.0}$ & 0.49$_{\pm0.42}$ & 0.003$_{\pm0.015}$ & 3.7$_{\pm0.7}$ & 0.0 & 0.0$_{\pm0.0}$ & 0.23$_{\pm0.25}$ & 0.003$_{\pm0.015}$ & 6.9$_{\pm0.7}$ & 0.0 & 0.0$_{\pm0.0}$ & \textbf{0.18}$_{\pm0.24}$ & 0.003$_{\pm0.015}$ & 6.0$_{\pm1.6}$\\
  & DiRecT & 0.0 & 0.0$_{\pm0.0}$ & \textbf{0.14}$_{\pm0.15}$ & 0.003$_{\pm0.013}$ & 44.0$_{\pm1.3}$ & 0.0 & 0.0$_{\pm0.0}$ & \textbf{0.16}$_{\pm0.12}$ & 0.003$_{\pm0.013}$ & 44.9$_{\pm2.2}$ & 0.0 & 0.0$_{\pm0.0}$ & \textbf{0.18}$_{\pm0.15}$ & 0.003$_{\pm0.013}$ & 47.2$_{\pm5.5}$\\
  & GlobMP (Ours) & \textbf{90.0} & \textbf{90.0}$_{\pm30.0}$ & 1.00$_{\pm3.15}$ & \textbf{0.027}$_{\pm0.161}$ & 5.5$_{\pm68.3}$ & \textbf{85.0} & \textbf{85.0}$_{\pm35.7}$ & 1.45$_{\pm3.58}$ & \textbf{0.023}$_{\pm0.151}$ & 11.2$_{\pm74.1}$ & \textbf{90.0} & \textbf{90.0}$_{\pm30.0}$ & 0.90$_{\pm2.70}$ & \textbf{0.023}$_{\pm0.151}$ & 22.8$_{\pm91.2}$\\
\midrule
18 & Diff.\ Policy$^{\dagger}$ & 0.0 & 0.0$_{\pm0.0}$ & 15.01$_{\pm3.59}$ & 0.000$_{\pm0.000}$ & 0.3 & 0.0 & 0.0$_{\pm0.0}$ & 15.01$_{\pm3.59}$ & 0.000$_{\pm0.000}$ & 0.3 & 0.0 & 0.0$_{\pm0.0}$ & 15.01$_{\pm3.59}$ & 0.000$_{\pm0.000}$ & 0.3\\
  & CD$^{\dagger}$ & 0.0 & 0.0$_{\pm0.0}$ & 12.92$_{\pm1.90}$ & 0.003$_{\pm0.006}$ & 1.2$_{\pm0.1}$ & 0.0 & 0.0$_{\pm0.0}$ & 12.92$_{\pm1.90}$ & 0.003$_{\pm0.006}$ & 1.2$_{\pm0.2}$ & 0.0 & 0.0$_{\pm0.0}$ & 12.92$_{\pm1.90}$ & 0.003$_{\pm0.006}$ & 1.2$_{\pm0.0}$\\
  & MMD-CBS$^{\dagger}$ & 0.0 & 0.0$_{\pm0.0}$ & --- & --- & 1.4$_{\pm0.9}$ & 0.0 & 0.0$_{\pm0.0}$ & --- & --- & 1.4$_{\pm0.9}$ & 0.0 & 0.0$_{\pm0.0}$ & --- & --- & 1.4$_{\pm0.9}$\\
  & MDOC$^{\dagger}$ & 0.0 & --- & --- & 0.000$_{\pm0.000}$ & 162.1$_{\pm8.6}$ & 0.0 & --- & --- & 0.000$_{\pm0.000}$ & 156.7$_{\pm7.6}$ & 0.0 & --- & --- & 0.000$_{\pm0.000}$ & 157.5$_{\pm7.0}$\\
  & PCD-LB & 0.0 & 0.0$_{\pm0.0}$ & 35.56$_{\pm3.24}$ & 0.002$_{\pm0.003}$ & 4.2$_{\pm0.2}$ & 0.0 & 0.0$_{\pm0.0}$ & 28.42$_{\pm2.79}$ & 0.002$_{\pm0.003}$ & 4.3$_{\pm0.1}$ & 0.0 & 0.0$_{\pm0.0}$ & 19.50$_{\pm1.96}$ & 0.003$_{\pm0.004}$ & 2.8$_{\pm0.9}$\\
  & PCD-SHD & 0.0 & 0.0$_{\pm0.0}$ & 1.18$_{\pm0.69}$ & 0.003$_{\pm0.004}$ & 4.4$_{\pm0.1}$ & 0.0 & 0.0$_{\pm0.0}$ & 0.80$_{\pm0.52}$ & 0.003$_{\pm0.004}$ & 4.4$_{\pm0.1}$ & 0.0 & 0.0$_{\pm0.0}$ & 0.55$_{\pm0.37}$ & 0.002$_{\pm0.004}$ & 4.4$_{\pm0.1}$\\
  & Final Proj. & 0.0 & 0.0$_{\pm0.0}$ & 0.84$_{\pm0.83}$ & 0.000$_{\pm0.000}$ & 8.7$_{\pm0.7}$ & 0.0 & 0.0$_{\pm0.0}$ & 0.33$_{\pm0.33}$ & 0.000$_{\pm0.000}$ & 7.3$_{\pm2.1}$ & 0.0 & 0.0$_{\pm0.0}$ & \textbf{0.23}$_{\pm0.23}$ & 0.000$_{\pm0.000}$ & 8.7$_{\pm0.7}$\\
  & DiRecT & 0.0 & 0.0$_{\pm0.0}$ & \textbf{0.20}$_{\pm0.15}$ & 0.003$_{\pm0.009}$ & 65.8$_{\pm10.7}$ & 0.0 & 0.0$_{\pm0.0}$ & \textbf{0.22}$_{\pm0.18}$ & 0.003$_{\pm0.008}$ & 64.5$_{\pm10.8}$ & 0.0 & 0.0$_{\pm0.0}$ & 0.26$_{\pm0.22}$ & 0.003$_{\pm0.008}$ & 64.4$_{\pm9.0}$\\
  & GlobMP (Ours) & \textbf{90.0} & \textbf{90.0}$_{\pm30.0}$ & 1.85$_{\pm5.55}$ & \textbf{0.047}$_{\pm0.212}$ & 12.4$_{\pm89.4}$ & \textbf{85.0} & \textbf{85.0}$_{\pm35.7}$ & 2.75$_{\pm6.77}$ & \textbf{0.044}$_{\pm0.206}$ & 14.2$_{\pm102.5}$ & \textbf{75.0} & \textbf{75.0}$_{\pm43.3}$ & 3.60$_{\pm6.61}$ & \textbf{0.039}$_{\pm0.193}$ & 36.0$_{\pm118.0}$\\
\midrule
21 & Diff.\ Policy$^{\dagger}$ & 0.0 & 0.0$_{\pm0.0}$ & 20.41$_{\pm2.03}$ & 0.002$_{\pm0.010}$ & 0.3 & 0.0 & 0.0$_{\pm0.0}$ & 20.41$_{\pm2.03}$ & 0.002$_{\pm0.010}$ & 0.3 & 0.0 & 0.0$_{\pm0.0}$ & 20.41$_{\pm2.03}$ & 0.002$_{\pm0.010}$ & 0.3\\
  & CD$^{\dagger}$ & 0.0 & 0.0$_{\pm0.0}$ & 20.28$_{\pm2.30}$ & 0.011$_{\pm0.016}$ & 1.3$_{\pm0.1}$ & 0.0 & 0.0$_{\pm0.0}$ & 20.28$_{\pm2.30}$ & 0.011$_{\pm0.016}$ & 1.2$_{\pm0.0}$ & 0.0 & 0.0$_{\pm0.0}$ & 20.28$_{\pm2.30}$ & 0.011$_{\pm0.016}$ & 1.3$_{\pm0.0}$\\
  & MMD-CBS$^{\dagger}$ & 0.0 & 0.0$_{\pm0.0}$ & --- & --- & 1.7$_{\pm1.4}$ & 0.0 & 0.0$_{\pm0.0}$ & --- & --- & 1.7$_{\pm1.4}$ & 0.0 & 0.0$_{\pm0.0}$ & --- & --- & 1.7$_{\pm1.4}$\\
  & MDOC$^{\dagger}$ & 0.0 & --- & --- & 0.000$_{\pm0.000}$ & 182.3$_{\pm8.9}$ & 0.0 & --- & --- & 0.000$_{\pm0.000}$ & 181.9$_{\pm9.0}$ & 0.0 & --- & --- & 0.000$_{\pm0.000}$ & 163.8$_{\pm8.1}$\\
  & PCD-LB & 0.0 & 0.0$_{\pm0.0}$ & 53.35$_{\pm2.98}$ & 0.002$_{\pm0.003}$ & 4.7$_{\pm0.1}$ & 0.0 & 0.0$_{\pm0.0}$ & 40.89$_{\pm3.07}$ & 0.002$_{\pm0.003}$ & 4.8$_{\pm0.1}$ & 0.0 & 0.0$_{\pm0.0}$ & 26.92$_{\pm2.55}$ & 0.002$_{\pm0.003}$ & 4.9$_{\pm0.1}$\\
  & PCD-SHD & 0.0 & 0.0$_{\pm0.0}$ & 1.60$_{\pm0.65}$ & 0.006$_{\pm0.011}$ & 5.0$_{\pm0.1}$ & 0.0 & 0.0$_{\pm0.0}$ & 1.09$_{\pm0.49}$ & 0.006$_{\pm0.011}$ & 3.0$_{\pm1.0}$ & 0.0 & 0.0$_{\pm0.0}$ & 0.68$_{\pm0.32}$ & 0.006$_{\pm0.011}$ & 5.0$_{\pm0.1}$\\
  & Final Proj. & 0.0 & 0.0$_{\pm0.0}$ & 0.98$_{\pm0.65}$ & 0.003$_{\pm0.010}$ & 11.5$_{\pm0.9}$ & 0.0 & 0.0$_{\pm0.0}$ & 0.43$_{\pm0.32}$ & 0.003$_{\pm0.010}$ & 10.8$_{\pm1.3}$ & 0.0 & 0.0$_{\pm0.0}$ & 0.28$_{\pm0.19}$ & 0.003$_{\pm0.010}$ & 11.4$_{\pm0.7}$\\
  & DiRecT & 0.0 & 0.0$_{\pm0.0}$ & \textbf{0.20}$_{\pm0.14}$ & 0.006$_{\pm0.013}$ & 89.0$_{\pm12.7}$ & 0.0 & 0.0$_{\pm0.0}$ & \textbf{0.21}$_{\pm0.11}$ & 0.007$_{\pm0.015}$ & 84.7$_{\pm5.9}$ & 0.0 & 0.0$_{\pm0.0}$ & \textbf{0.22}$_{\pm0.13}$ & 0.007$_{\pm0.015}$ & 89.1$_{\pm11.6}$\\
  & GlobMP (Ours) & \textbf{80.0} & \textbf{80.0}$_{\pm40.0}$ & 5.00$_{\pm10.27}$ & \textbf{0.038}$_{\pm0.191}$ & 16.4$_{\pm112.4}$ & \textbf{70.0} & \textbf{70.0}$_{\pm45.8}$ & 6.30$_{\pm9.77}$ & \textbf{0.026}$_{\pm0.160}$ & 41.4$_{\pm119.1}$ & \textbf{65.0} & \textbf{65.0}$_{\pm47.7}$ & 5.95$_{\pm9.13}$ & \textbf{0.026}$_{\pm0.160}$ & 108.4$_{\pm116.6}$\\
\bottomrule
\end{tabular}}
\end{center}
\end{table}

\begin{table}[p]
\caption{Multi-robot planning performance on \texttt{Highways} for $N_a\in\{3,6,\dots,21\}$ under the Low, Medium and High velocity caps. \emph{SR} and \emph{CS} measure trajectory feasibility, \emph{Coll} reports inter-robot collisions, \emph{DA} adherence to the environment-specific motion pattern, and \emph{Time} the planning time per configuration: for \emph{GlobMP (Ours)} the median over the configurations with the standard deviation as subscript, for the baselines the mean, so the two are not directly comparable. $^{\dagger}$ marks a method that applies no velocity limit, so its three blocks report the same run; MMD-CBS also returns one plan per configuration rather than $64$ candidates. A dash under \emph{Coll} marks a count that is not commensurable with the others, as MDOC and MMD-CBS count colliding time steps where every other row counts agent pairs with any collision; MDOC's \emph{CS} is omitted because its plans that do not cover every agent were counted as feasible. MMD-CBS plans under $60$\,s per configuration and \emph{GlobMP (Ours)} under $300$\,s per team.}
\label{tab:hard-room}
\begin{center}\tiny\setlength{\tabcolsep}{2.2pt}
\resizebox{\linewidth}{!}{%
\begin{tabular}{ll ccccc ccccc ccccc}
\toprule
& & \multicolumn{5}{c}{Low ($v_{\max}=0.647$)} & \multicolumn{5}{c}{Medium ($v_{\max}=0.783$)} & \multicolumn{5}{c}{High ($v_{\max}=0.878$)}\\
\cmidrule(lr){3-7}\cmidrule(lr){8-12}\cmidrule(lr){13-17}
$N_a$ & Method & SR$\uparrow$ & CS$\uparrow$ & Coll$\downarrow$ & DA$\uparrow$ & Time$\downarrow$ & SR$\uparrow$ & CS$\uparrow$ & Coll$\downarrow$ & DA$\uparrow$ & Time$\downarrow$ & SR$\uparrow$ & CS$\uparrow$ & Coll$\downarrow$ & DA$\uparrow$ & Time$\downarrow$\\
\midrule
3 & Diff.\ Policy$^{\dagger}$ & 10.0 & 10.0$_{\pm30.0}$ & 0.15$_{\pm0.35}$ & 0.567$_{\pm0.298}$ & 0.3 & 10.0 & 10.0$_{\pm30.0}$ & 0.15$_{\pm0.35}$ & 0.567$_{\pm0.298}$ & 0.3 & 10.0 & 10.0$_{\pm30.0}$ & 0.15$_{\pm0.35}$ & 0.567$_{\pm0.298}$ & 0.3\\
  & CD$^{\dagger}$ & 100.0 & 0.0$_{\pm0.0}$ & 0.25$_{\pm0.32}$ & 0.564$_{\pm0.289}$ & 1.1$_{\pm0.1}$ & 100.0 & 0.0$_{\pm0.0}$ & 0.25$_{\pm0.32}$ & 0.564$_{\pm0.289}$ & 1.2$_{\pm0.1}$ & 100.0 & 0.0$_{\pm0.0}$ & 0.25$_{\pm0.32}$ & 0.564$_{\pm0.289}$ & 1.2$_{\pm0.1}$\\
  & MMD-CBS$^{\dagger}$ & 90.0 & 45.0$_{\pm49.7}$ & --- & 0.567$_{\pm0.318}$ & 3.8$_{\pm3.2}$ & 90.0 & 55.0$_{\pm49.7}$ & --- & 0.567$_{\pm0.318}$ & 3.8$_{\pm3.2}$ & 90.0 & 55.0$_{\pm49.7}$ & --- & 0.567$_{\pm0.318}$ & 3.8$_{\pm3.2}$\\
  & MDOC$^{\dagger}$ & 15.0 & --- & --- & 0.083$_{\pm0.233}$ & 41.3$_{\pm5.6}$ & 15.0 & --- & --- & 0.083$_{\pm0.233}$ & 37.2$_{\pm3.4}$ & 15.0 & --- & --- & 0.083$_{\pm0.233}$ & 36.8$_{\pm3.3}$\\
  & PCD-LB & 0.0 & 0.0$_{\pm0.0}$ & 0.02$_{\pm0.02}$ & 0.565$_{\pm0.293}$ & 3.5$_{\pm0.4}$ & 0.0 & 0.0$_{\pm0.0}$ & 0.05$_{\pm0.11}$ & 0.564$_{\pm0.292}$ & 3.0$_{\pm0.2}$ & 0.0 & 0.0$_{\pm0.0}$ & 0.05$_{\pm0.13}$ & 0.564$_{\pm0.292}$ & 3.0$_{\pm0.1}$\\
  & PCD-SHD & \textbf{100.0} & 89.8$_{\pm15.8}$ & 0.09$_{\pm0.17}$ & 0.561$_{\pm0.281}$ & 2.8$_{\pm0.1}$ & \textbf{100.0} & 90.9$_{\pm13.8}$ & 0.08$_{\pm0.14}$ & 0.562$_{\pm0.279}$ & 2.8$_{\pm0.1}$ & \textbf{100.0} & 90.4$_{\pm13.7}$ & 0.08$_{\pm0.14}$ & 0.562$_{\pm0.279}$ & 2.8$_{\pm0.1}$\\
  & Final Proj. & 95.0 & 85.9$_{\pm33.0}$ & 0.18$_{\pm0.47}$ & 0.567$_{\pm0.298}$ & 3.8$_{\pm0.6}$ & \textbf{100.0} & 86.1$_{\pm32.4}$ & 0.18$_{\pm0.46}$ & 0.567$_{\pm0.298}$ & 3.9$_{\pm0.7}$ & \textbf{100.0} & 84.6$_{\pm32.6}$ & 0.20$_{\pm0.46}$ & 0.567$_{\pm0.298}$ & 3.9$_{\pm0.6}$\\
  & DiRecT & \textbf{100.0} & \textbf{100.0}$_{\pm0.0}$ & \textbf{0.00}$_{\pm0.00}$ & 0.568$_{\pm0.294}$ & 26.4$_{\pm6.4}$ & \textbf{100.0} & \textbf{100.0}$_{\pm0.0}$ & \textbf{0.00}$_{\pm0.00}$ & \textbf{0.568}$_{\pm0.294}$ & 27.6$_{\pm7.1}$ & \textbf{100.0} & \textbf{100.0}$_{\pm0.0}$ & \textbf{0.00}$_{\pm0.00}$ & \textbf{0.568}$_{\pm0.294}$ & 29.3$_{\pm7.7}$\\
  & GlobMP (Ours) & 95.0 & 95.0$_{\pm21.8}$ & \textbf{0.00}$_{\pm0.00}$ & \textbf{0.583}$_{\pm0.493}$ & 0.2$_{\pm15.3}$ & \textbf{100.0} & \textbf{100.0}$_{\pm0.0}$ & \textbf{0.00}$_{\pm0.00}$ & 0.567$_{\pm0.496}$ & 0.3$_{\pm0.2}$ & \textbf{100.0} & \textbf{100.0}$_{\pm0.0}$ & \textbf{0.00}$_{\pm0.00}$ & 0.550$_{\pm0.497}$ & 0.2$_{\pm0.2}$\\
\midrule
6 & Diff.\ Policy$^{\dagger}$ & 0.0 & 0.0$_{\pm0.0}$ & 1.39$_{\pm0.78}$ & 0.582$_{\pm0.187}$ & 0.3 & 0.0 & 0.0$_{\pm0.0}$ & 1.39$_{\pm0.78}$ & 0.582$_{\pm0.187}$ & 0.3 & 0.0 & 0.0$_{\pm0.0}$ & 1.39$_{\pm0.78}$ & 0.582$_{\pm0.187}$ & 0.3\\
  & CD$^{\dagger}$ & 30.0 & 0.0$_{\pm0.0}$ & 2.61$_{\pm0.80}$ & 0.555$_{\pm0.192}$ & 1.2$_{\pm0.1}$ & 30.0 & 0.0$_{\pm0.0}$ & 2.61$_{\pm0.80}$ & 0.555$_{\pm0.192}$ & 1.2$_{\pm0.1}$ & 30.0 & 0.0$_{\pm0.0}$ & 2.61$_{\pm0.80}$ & 0.555$_{\pm0.192}$ & 0.5$_{\pm0.0}$\\
  & MMD-CBS$^{\dagger}$ & 75.0 & 0.0$_{\pm0.0}$ & --- & 0.558$_{\pm0.219}$ & 32.0$_{\pm16.7}$ & 75.0 & 0.0$_{\pm0.0}$ & --- & 0.558$_{\pm0.219}$ & 32.0$_{\pm16.7}$ & 75.0 & 0.0$_{\pm0.0}$ & --- & 0.558$_{\pm0.219}$ & 32.0$_{\pm16.7}$\\
  & MDOC$^{\dagger}$ & 0.0 & --- & --- & 0.000$_{\pm0.000}$ & 63.5$_{\pm3.2}$ & 0.0 & --- & --- & 0.000$_{\pm0.000}$ & 121.4$_{\pm150.4}$ & 0.0 & --- & --- & 0.000$_{\pm0.000}$ & 72.0$_{\pm6.6}$\\
  & PCD-LB & 0.0 & 0.0$_{\pm0.0}$ & 0.26$_{\pm0.26}$ & 0.539$_{\pm0.205}$ & 3.6$_{\pm0.5}$ & 0.0 & 0.0$_{\pm0.0}$ & 0.19$_{\pm0.16}$ & 0.540$_{\pm0.204}$ & 1.6$_{\pm0.1}$ & 0.0 & 0.0$_{\pm0.0}$ & 0.15$_{\pm0.15}$ & 0.540$_{\pm0.203}$ & 3.1$_{\pm0.1}$\\
  & PCD-SHD & \textbf{100.0} & 52.6$_{\pm20.0}$ & 0.59$_{\pm0.34}$ & 0.561$_{\pm0.180}$ & 3.0$_{\pm0.1}$ & \textbf{100.0} & 52.8$_{\pm21.6}$ & 0.59$_{\pm0.35}$ & 0.562$_{\pm0.182}$ & 2.6$_{\pm0.7}$ & \textbf{100.0} & 53.3$_{\pm21.7}$ & 0.57$_{\pm0.36}$ & 0.563$_{\pm0.183}$ & 2.9$_{\pm0.5}$\\
  & Final Proj. & \textbf{100.0} & 69.9$_{\pm39.6}$ & 0.53$_{\pm0.90}$ & \textbf{0.582}$_{\pm0.187}$ & 2.5$_{\pm1.2}$ & \textbf{100.0} & 75.9$_{\pm33.4}$ & 0.33$_{\pm0.56}$ & \textbf{0.582}$_{\pm0.187}$ & 4.1$_{\pm0.6}$ & \textbf{100.0} & 79.1$_{\pm31.5}$ & 0.29$_{\pm0.51}$ & \textbf{0.582}$_{\pm0.187}$ & 4.1$_{\pm0.7}$\\
  & DiRecT & \textbf{100.0} & \textbf{100.0}$_{\pm0.0}$ & \textbf{0.00}$_{\pm0.00}$ & 0.570$_{\pm0.187}$ & 32.0$_{\pm8.6}$ & \textbf{100.0} & \textbf{100.0}$_{\pm0.0}$ & \textbf{0.00}$_{\pm0.00}$ & 0.570$_{\pm0.187}$ & 32.8$_{\pm9.3}$ & \textbf{100.0} & \textbf{100.0}$_{\pm0.0}$ & \textbf{0.00}$_{\pm0.00}$ & 0.570$_{\pm0.187}$ & 34.9$_{\pm9.7}$\\
  & GlobMP (Ours) & 80.0 & 80.0$_{\pm40.0}$ & 0.10$_{\pm0.44}$ & 0.575$_{\pm0.494}$ & 0.7$_{\pm68.6}$ & \textbf{100.0} & \textbf{100.0}$_{\pm0.0}$ & \textbf{0.00}$_{\pm0.00}$ & 0.567$_{\pm0.496}$ & 0.6$_{\pm10.0}$ & \textbf{100.0} & \textbf{100.0}$_{\pm0.0}$ & \textbf{0.00}$_{\pm0.00}$ & 0.567$_{\pm0.496}$ & 0.7$_{\pm0.7}$\\
\midrule
9 & Diff.\ Policy$^{\dagger}$ & 0.0 & 0.0$_{\pm0.0}$ & 4.58$_{\pm1.91}$ & 0.450$_{\pm0.172}$ & 0.3 & 0.0 & 0.0$_{\pm0.0}$ & 4.58$_{\pm1.91}$ & 0.450$_{\pm0.172}$ & 0.3 & 0.0 & 0.0$_{\pm0.0}$ & 4.58$_{\pm1.91}$ & 0.450$_{\pm0.172}$ & 0.3\\
  & CD$^{\dagger}$ & 0.0 & 0.0$_{\pm0.0}$ & 7.04$_{\pm1.79}$ & 0.463$_{\pm0.170}$ & 1.2$_{\pm0.1}$ & 0.0 & 0.0$_{\pm0.0}$ & 7.04$_{\pm1.79}$ & 0.463$_{\pm0.170}$ & 1.2$_{\pm0.1}$ & 0.0 & 0.0$_{\pm0.0}$ & 7.04$_{\pm1.79}$ & 0.463$_{\pm0.170}$ & 1.2$_{\pm0.1}$\\
  & MMD-CBS$^{\dagger}$ & 5.0 & 0.0$_{\pm0.0}$ & --- & 0.478$_{\pm0.186}$ & 59.0$_{\pm9.8}$ & 5.0 & 0.0$_{\pm0.0}$ & --- & 0.478$_{\pm0.186}$ & 59.0$_{\pm9.8}$ & 5.0 & 0.0$_{\pm0.0}$ & --- & 0.478$_{\pm0.186}$ & 59.0$_{\pm9.8}$\\
  & MDOC$^{\dagger}$ & 0.0 & --- & --- & 0.000$_{\pm0.000}$ & 121.0$_{\pm15.7}$ & 0.0 & --- & --- & 0.000$_{\pm0.000}$ & 91.0$_{\pm13.2}$ & 0.0 & --- & --- & 0.000$_{\pm0.000}$ & 82.7$_{\pm3.8}$\\
  & PCD-LB & 0.0 & 0.0$_{\pm0.0}$ & 1.29$_{\pm0.58}$ & \textbf{0.469}$_{\pm0.162}$ & 4.4$_{\pm0.4}$ & 0.0 & 0.0$_{\pm0.0}$ & 1.26$_{\pm0.55}$ & \textbf{0.469}$_{\pm0.161}$ & 3.4$_{\pm0.5}$ & 0.0 & 0.0$_{\pm0.0}$ & 1.28$_{\pm0.56}$ & 0.469$_{\pm0.160}$ & 3.5$_{\pm0.1}$\\
  & PCD-SHD & 80.0 & 9.9$_{\pm14.2}$ & 2.60$_{\pm0.95}$ & 0.460$_{\pm0.166}$ & 3.3$_{\pm0.1}$ & 90.0 & 10.8$_{\pm15.0}$ & 2.38$_{\pm0.89}$ & 0.461$_{\pm0.166}$ & 3.4$_{\pm0.1}$ & 90.0 & 10.9$_{\pm13.2}$ & 2.33$_{\pm0.93}$ & 0.462$_{\pm0.166}$ & 3.4$_{\pm0.1}$\\
  & Final Proj. & 70.0 & 41.5$_{\pm40.8}$ & 1.60$_{\pm1.72}$ & 0.452$_{\pm0.174}$ & 5.5$_{\pm0.9}$ & 80.0 & 44.0$_{\pm39.7}$ & 1.23$_{\pm1.35}$ & 0.452$_{\pm0.174}$ & 5.6$_{\pm0.7}$ & 80.0 & 41.5$_{\pm37.4}$ & 1.12$_{\pm1.13}$ & 0.451$_{\pm0.174}$ & 5.6$_{\pm0.7}$\\
  & DiRecT & \textbf{100.0} & \textbf{100.0}$_{\pm0.0}$ & \textbf{0.00}$_{\pm0.00}$ & 0.454$_{\pm0.176}$ & 49.0$_{\pm13.7}$ & \textbf{100.0} & \textbf{100.0}$_{\pm0.0}$ & \textbf{0.00}$_{\pm0.00}$ & 0.454$_{\pm0.175}$ & 36.4$_{\pm7.2}$ & \textbf{100.0} & \textbf{100.0}$_{\pm0.0}$ & \textbf{0.00}$_{\pm0.00}$ & 0.454$_{\pm0.176}$ & 35.2$_{\pm1.3}$\\
  & GlobMP (Ours) & 65.0 & 65.0$_{\pm47.7}$ & 1.00$_{\pm1.76}$ & 0.450$_{\pm0.497}$ & 2.5$_{\pm127.9}$ & \textbf{100.0} & \textbf{100.0}$_{\pm0.0}$ & \textbf{0.00}$_{\pm0.00}$ & 0.467$_{\pm0.499}$ & 1.1$_{\pm5.4}$ & \textbf{100.0} & \textbf{100.0}$_{\pm0.0}$ & \textbf{0.00}$_{\pm0.00}$ & \textbf{0.483}$_{\pm0.500}$ & 1.2$_{\pm13.1}$\\
\midrule
12 & Diff.\ Policy$^{\dagger}$ & 0.0 & 0.0$_{\pm0.0}$ & 8.75$_{\pm3.11}$ & 0.464$_{\pm0.154}$ & 0.3 & 0.0 & 0.0$_{\pm0.0}$ & 8.75$_{\pm3.11}$ & 0.464$_{\pm0.154}$ & 0.3 & 0.0 & 0.0$_{\pm0.0}$ & 8.75$_{\pm3.11}$ & 0.464$_{\pm0.154}$ & 0.3\\
  & CD$^{\dagger}$ & 0.0 & 0.0$_{\pm0.0}$ & 15.44$_{\pm3.23}$ & 0.466$_{\pm0.145}$ & 1.2$_{\pm0.1}$ & 0.0 & 0.0$_{\pm0.0}$ & 15.44$_{\pm3.23}$ & 0.466$_{\pm0.145}$ & 1.2$_{\pm0.1}$ & 0.0 & 0.0$_{\pm0.0}$ & 15.44$_{\pm3.23}$ & 0.466$_{\pm0.145}$ & 0.6$_{\pm0.1}$\\
  & MMD-CBS$^{\dagger}$ & 0.0 & 0.0$_{\pm0.0}$ & --- & 0.450$_{\pm0.150}$ & 61.2$_{\pm0.9}$ & 0.0 & 0.0$_{\pm0.0}$ & --- & 0.450$_{\pm0.150}$ & 61.2$_{\pm0.9}$ & 0.0 & 0.0$_{\pm0.0}$ & --- & 0.450$_{\pm0.150}$ & 61.2$_{\pm0.9}$\\
  & MDOC$^{\dagger}$ & 0.0 & --- & --- & 0.000$_{\pm0.000}$ & 100.8$_{\pm7.3}$ & 0.0 & --- & --- & 0.000$_{\pm0.000}$ & 124.5$_{\pm6.1}$ & 0.0 & --- & --- & 0.000$_{\pm0.000}$ & 122.5$_{\pm5.6}$\\
  & PCD-LB & 0.0 & 0.0$_{\pm0.0}$ & 5.02$_{\pm0.76}$ & 0.456$_{\pm0.140}$ & 4.4$_{\pm0.4}$ & 0.0 & 0.0$_{\pm0.0}$ & 6.04$_{\pm0.83}$ & 0.457$_{\pm0.140}$ & 3.7$_{\pm0.2}$ & 0.0 & 0.0$_{\pm0.0}$ & 6.68$_{\pm0.77}$ & 0.459$_{\pm0.140}$ & 3.7$_{\pm0.1}$\\
  & PCD-SHD & 20.0 & 0.6$_{\pm1.3}$ & 5.75$_{\pm1.94}$ & \textbf{0.468}$_{\pm0.142}$ & 3.6$_{\pm0.1}$ & 20.0 & 0.5$_{\pm1.1}$ & 5.46$_{\pm1.97}$ & 0.469$_{\pm0.142}$ & 3.6$_{\pm0.1}$ & 20.0 & 0.9$_{\pm2.0}$ & 5.23$_{\pm1.77}$ & \textbf{0.469}$_{\pm0.142}$ & 3.7$_{\pm0.1}$\\
  & Final Proj. & 50.0 & 29.1$_{\pm39.5}$ & 3.49$_{\pm3.31}$ & 0.463$_{\pm0.154}$ & 6.3$_{\pm1.4}$ & 55.0 & 29.2$_{\pm40.0}$ & 2.76$_{\pm2.80}$ & 0.463$_{\pm0.154}$ & 6.8$_{\pm0.7}$ & 65.0 & 30.2$_{\pm39.5}$ & 2.23$_{\pm2.24}$ & 0.463$_{\pm0.154}$ & 6.8$_{\pm0.7}$\\
  & DiRecT & \textbf{100.0} & \textbf{100.0}$_{\pm0.0}$ & \textbf{0.00}$_{\pm0.00}$ & 0.467$_{\pm0.148}$ & 44.1$_{\pm2.2}$ & \textbf{100.0} & 99.9$_{\pm0.3}$ & 0.00$_{\pm0.01}$ & 0.466$_{\pm0.148}$ & 43.4$_{\pm1.8}$ & \textbf{100.0} & 99.9$_{\pm0.3}$ & 0.00$_{\pm0.00}$ & 0.466$_{\pm0.149}$ & 43.7$_{\pm1.7}$\\
  & GlobMP (Ours) & 60.0 & 60.0$_{\pm49.0}$ & 3.45$_{\pm4.99}$ & 0.467$_{\pm0.499}$ & 25.2$_{\pm102.3}$ & \textbf{100.0} & \textbf{100.0}$_{\pm0.0}$ & \textbf{0.00}$_{\pm0.00}$ & \textbf{0.492}$_{\pm0.500}$ & 2.1$_{\pm24.5}$ & \textbf{100.0} & \textbf{100.0}$_{\pm0.0}$ & \textbf{0.00}$_{\pm0.00}$ & 0.458$_{\pm0.498}$ & 2.2$_{\pm12.3}$\\
\midrule
15 & Diff.\ Policy$^{\dagger}$ & 0.0 & 0.0$_{\pm0.0}$ & 14.06$_{\pm4.00}$ & 0.484$_{\pm0.109}$ & 0.3 & 0.0 & 0.0$_{\pm0.0}$ & 14.06$_{\pm4.00}$ & 0.484$_{\pm0.109}$ & 0.3 & 0.0 & 0.0$_{\pm0.0}$ & 14.06$_{\pm4.00}$ & 0.484$_{\pm0.109}$ & 0.4\\
  & CD$^{\dagger}$ & 0.0 & 0.0$_{\pm0.0}$ & 26.51$_{\pm3.99}$ & 0.481$_{\pm0.107}$ & 1.2$_{\pm0.0}$ & 0.0 & 0.0$_{\pm0.0}$ & 26.51$_{\pm3.99}$ & 0.481$_{\pm0.107}$ & 1.3$_{\pm0.0}$ & 0.0 & 0.0$_{\pm0.0}$ & 26.51$_{\pm3.99}$ & 0.481$_{\pm0.107}$ & 1.2$_{\pm0.1}$\\
  & MMD-CBS$^{\dagger}$ & 0.0 & 0.0$_{\pm0.0}$ & --- & 0.483$_{\pm0.111}$ & 63.0$_{\pm2.1}$ & 0.0 & 0.0$_{\pm0.0}$ & --- & 0.483$_{\pm0.111}$ & 63.0$_{\pm2.1}$ & 0.0 & 0.0$_{\pm0.0}$ & --- & 0.483$_{\pm0.111}$ & 63.0$_{\pm2.1}$\\
  & MDOC$^{\dagger}$ & 0.0 & --- & --- & 0.000$_{\pm0.000}$ & 141.9$_{\pm12.5}$ & 0.0 & --- & --- & 0.000$_{\pm0.000}$ & 168.6$_{\pm16.0}$ & 0.0 & --- & --- & 0.000$_{\pm0.000}$ & 177.0$_{\pm31.2}$\\
  & PCD-LB & 0.0 & 0.0$_{\pm0.0}$ & 11.99$_{\pm1.72}$ & 0.485$_{\pm0.110}$ & 5.1$_{\pm0.6}$ & 0.0 & 0.0$_{\pm0.0}$ & 14.72$_{\pm1.73}$ & \textbf{0.485}$_{\pm0.109}$ & 4.2$_{\pm0.3}$ & 0.0 & 0.0$_{\pm0.0}$ & 16.38$_{\pm1.74}$ & \textbf{0.486}$_{\pm0.109}$ & 2.8$_{\pm1.0}$\\
  & PCD-SHD & 0.0 & 0.0$_{\pm0.0}$ & 10.01$_{\pm2.01}$ & 0.480$_{\pm0.106}$ & 3.3$_{\pm0.9}$ & 0.0 & 0.0$_{\pm0.0}$ & 9.49$_{\pm1.89}$ & 0.481$_{\pm0.106}$ & 4.0$_{\pm0.1}$ & 0.0 & 0.0$_{\pm0.0}$ & 9.22$_{\pm1.82}$ & 0.481$_{\pm0.105}$ & 4.1$_{\pm0.1}$\\
  & Final Proj. & 30.0 & 11.0$_{\pm26.4}$ & 5.93$_{\pm4.23}$ & 0.485$_{\pm0.109}$ & 6.7$_{\pm2.3}$ & 35.0 & 11.1$_{\pm23.1}$ & 4.71$_{\pm3.34}$ & 0.485$_{\pm0.109}$ & 9.1$_{\pm0.8}$ & 35.0 & 10.9$_{\pm23.2}$ & 4.14$_{\pm2.88}$ & 0.485$_{\pm0.109}$ & 9.1$_{\pm0.7}$\\
  & DiRecT & \textbf{100.0} & \textbf{100.0}$_{\pm0.0}$ & \textbf{0.00}$_{\pm0.00}$ & 0.482$_{\pm0.109}$ & 60.0$_{\pm2.7}$ & \textbf{100.0} & 99.9$_{\pm0.3}$ & 0.00$_{\pm0.01}$ & 0.482$_{\pm0.109}$ & 69.0$_{\pm10.6}$ & \textbf{100.0} & 99.7$_{\pm1.4}$ & 0.01$_{\pm0.03}$ & 0.482$_{\pm0.109}$ & 70.2$_{\pm10.7}$\\
  & GlobMP (Ours) & 30.0 & 30.0$_{\pm45.8}$ & 9.00$_{\pm7.40}$ & \textbf{0.487}$_{\pm0.500}$ & 300.1$_{\pm136.1}$ & \textbf{100.0} & \textbf{100.0}$_{\pm0.0}$ & \textbf{0.00}$_{\pm0.00}$ & 0.467$_{\pm0.499}$ & 2.1$_{\pm74.3}$ & \textbf{100.0} & \textbf{100.0}$_{\pm0.0}$ & \textbf{0.00}$_{\pm0.00}$ & 0.467$_{\pm0.499}$ & 2.6$_{\pm61.0}$\\
\midrule
18 & Diff.\ Policy$^{\dagger}$ & 0.0 & 0.0$_{\pm0.0}$ & 19.84$_{\pm5.64}$ & 0.504$_{\pm0.119}$ & 0.3 & 0.0 & 0.0$_{\pm0.0}$ & 19.84$_{\pm5.64}$ & 0.504$_{\pm0.119}$ & 0.3 & 0.0 & 0.0$_{\pm0.0}$ & 19.84$_{\pm5.64}$ & 0.504$_{\pm0.119}$ & 0.3\\
  & CD$^{\dagger}$ & 0.0 & 0.0$_{\pm0.0}$ & 39.00$_{\pm3.90}$ & 0.508$_{\pm0.116}$ & 1.3$_{\pm0.1}$ & 0.0 & 0.0$_{\pm0.0}$ & 39.00$_{\pm3.90}$ & 0.508$_{\pm0.116}$ & 1.3$_{\pm0.1}$ & 0.0 & 0.0$_{\pm0.0}$ & 39.00$_{\pm3.90}$ & 0.508$_{\pm0.116}$ & 1.3$_{\pm0.1}$\\
  & MMD-CBS$^{\dagger}$ & 0.0 & 0.0$_{\pm0.0}$ & --- & 0.503$_{\pm0.121}$ & 64.8$_{\pm2.5}$ & 0.0 & 0.0$_{\pm0.0}$ & --- & 0.503$_{\pm0.121}$ & 64.8$_{\pm2.5}$ & 0.0 & 0.0$_{\pm0.0}$ & --- & 0.503$_{\pm0.121}$ & 64.8$_{\pm2.5}$\\
  & MDOC$^{\dagger}$ & 0.0 & --- & --- & 0.000$_{\pm0.000}$ & 163.3$_{\pm8.1}$ & 0.0 & --- & --- & 0.000$_{\pm0.000}$ & 159.7$_{\pm7.1}$ & 0.0 & --- & --- & 0.000$_{\pm0.000}$ & 160.6$_{\pm7.0}$\\
  & PCD-LB & 0.0 & 0.0$_{\pm0.0}$ & 20.39$_{\pm2.78}$ & \textbf{0.511}$_{\pm0.110}$ & 5.0$_{\pm0.5}$ & 0.0 & 0.0$_{\pm0.0}$ & 25.36$_{\pm2.89}$ & 0.511$_{\pm0.109}$ & 4.2$_{\pm0.2}$ & 0.0 & 0.0$_{\pm0.0}$ & 28.30$_{\pm2.87}$ & 0.511$_{\pm0.108}$ & 4.1$_{\pm0.1}$\\
  & PCD-SHD & 0.0 & 0.0$_{\pm0.0}$ & 15.17$_{\pm2.97}$ & 0.508$_{\pm0.114}$ & 4.1$_{\pm0.1}$ & 0.0 & 0.0$_{\pm0.0}$ & 14.31$_{\pm2.75}$ & 0.509$_{\pm0.114}$ & 4.1$_{\pm0.1}$ & 0.0 & 0.0$_{\pm0.0}$ & 14.20$_{\pm2.61}$ & 0.509$_{\pm0.113}$ & 4.2$_{\pm0.1}$\\
  & Final Proj. & 15.0 & 3.1$_{\pm12.2}$ & 10.49$_{\pm5.78}$ & 0.504$_{\pm0.118}$ & 7.5$_{\pm2.3}$ & 25.0 & 4.7$_{\pm17.3}$ & 8.44$_{\pm5.26}$ & 0.504$_{\pm0.118}$ & 11.5$_{\pm0.7}$ & 35.0 & 4.8$_{\pm14.9}$ & 7.71$_{\pm4.86}$ & 0.504$_{\pm0.118}$ & 10.0$_{\pm2.6}$\\
  & DiRecT & \textbf{100.0} & \textbf{100.0}$_{\pm0.0}$ & \textbf{0.00}$_{\pm0.00}$ & 0.507$_{\pm0.118}$ & 87.5$_{\pm9.0}$ & \textbf{100.0} & \textbf{99.8}$_{\pm0.7}$ & \textbf{0.01}$_{\pm0.02}$ & 0.507$_{\pm0.118}$ & 88.4$_{\pm12.7}$ & \textbf{100.0} & \textbf{99.9}$_{\pm0.3}$ & \textbf{0.00}$_{\pm0.01}$ & 0.507$_{\pm0.118}$ & 77.5$_{\pm8.1}$\\
  & GlobMP (Ours) & 0.0 & 0.0$_{\pm0.0}$ & 19.10$_{\pm5.20}$ & 0.503$_{\pm0.500}$ & 300.1$_{\pm0.3}$ & 90.0 & 90.0$_{\pm30.0}$ & 2.85$_{\pm8.79}$ & \textbf{0.525}$_{\pm0.499}$ & 9.2$_{\pm101.2}$ & 90.0 & 90.0$_{\pm30.0}$ & 3.55$_{\pm10.66}$ & \textbf{0.519}$_{\pm0.500}$ & 4.3$_{\pm101.1}$\\
\midrule
21 & Diff.\ Policy$^{\dagger}$ & 0.0 & 0.0$_{\pm0.0}$ & 23.25$_{\pm7.73}$ & 0.500$_{\pm0.121}$ & 0.3 & 0.0 & 0.0$_{\pm0.0}$ & 23.25$_{\pm7.73}$ & 0.500$_{\pm0.121}$ & 0.3 & 0.0 & 0.0$_{\pm0.0}$ & 23.25$_{\pm7.73}$ & 0.500$_{\pm0.121}$ & 0.3\\
  & CD$^{\dagger}$ & 0.0 & 0.0$_{\pm0.0}$ & 52.34$_{\pm7.72}$ & 0.500$_{\pm0.118}$ & 0.7$_{\pm0.1}$ & 0.0 & 0.0$_{\pm0.0}$ & 52.34$_{\pm7.72}$ & 0.500$_{\pm0.118}$ & 1.3$_{\pm0.1}$ & 0.0 & 0.0$_{\pm0.0}$ & 52.34$_{\pm7.72}$ & 0.500$_{\pm0.118}$ & 1.3$_{\pm0.0}$\\
  & MMD-CBS$^{\dagger}$ & 0.0 & 0.0$_{\pm0.0}$ & --- & 0.502$_{\pm0.122}$ & 66.3$_{\pm4.0}$ & 0.0 & 0.0$_{\pm0.0}$ & --- & 0.502$_{\pm0.122}$ & 66.3$_{\pm4.0}$ & 0.0 & 0.0$_{\pm0.0}$ & --- & 0.502$_{\pm0.122}$ & 66.3$_{\pm4.0}$\\
  & MDOC$^{\dagger}$ & 0.0 & --- & --- & 0.000$_{\pm0.000}$ & 165.8$_{\pm8.8}$ & 0.0 & --- & --- & 0.000$_{\pm0.000}$ & 183.6$_{\pm8.1}$ & 0.0 & --- & --- & 0.000$_{\pm0.000}$ & 180.1$_{\pm8.2}$\\
  & PCD-LB & 0.0 & 0.0$_{\pm0.0}$ & 33.40$_{\pm2.61}$ & 0.502$_{\pm0.113}$ & 5.5$_{\pm0.8}$ & 0.0 & 0.0$_{\pm0.0}$ & 40.60$_{\pm3.30}$ & 0.502$_{\pm0.113}$ & 4.7$_{\pm0.3}$ & 0.0 & 0.0$_{\pm0.0}$ & 44.22$_{\pm3.72}$ & 0.502$_{\pm0.113}$ & 4.7$_{\pm0.1}$\\
  & PCD-SHD & 0.0 & 0.0$_{\pm0.0}$ & 19.55$_{\pm3.93}$ & 0.500$_{\pm0.115}$ & 4.5$_{\pm0.1}$ & 0.0 & 0.0$_{\pm0.0}$ & 18.93$_{\pm3.44}$ & 0.500$_{\pm0.115}$ & 4.6$_{\pm0.1}$ & 0.0 & 0.0$_{\pm0.0}$ & 18.51$_{\pm3.52}$ & 0.500$_{\pm0.115}$ & 4.7$_{\pm0.1}$\\
  & Final Proj. & 25.0 & 3.2$_{\pm8.2}$ & 7.22$_{\pm6.39}$ & 0.501$_{\pm0.122}$ & 14.9$_{\pm0.7}$ & 40.0 & 5.4$_{\pm11.2}$ & 5.81$_{\pm5.73}$ & 0.501$_{\pm0.122}$ & 14.9$_{\pm0.8}$ & 45.0 & 6.6$_{\pm14.1}$ & 5.29$_{\pm5.39}$ & 0.501$_{\pm0.122}$ & 10.2$_{\pm3.4}$\\
  & DiRecT & \textbf{100.0} & \textbf{99.9}$_{\pm0.3}$ & \textbf{0.00}$_{\pm0.00}$ & 0.501$_{\pm0.123}$ & 99.2$_{\pm1.5}$ & \textbf{100.0} & \textbf{99.8}$_{\pm0.5}$ & \textbf{0.00}$_{\pm0.01}$ & 0.501$_{\pm0.123}$ & 97.0$_{\pm8.7}$ & \textbf{100.0} & \textbf{100.0}$_{\pm0.0}$ & \textbf{0.00}$_{\pm0.00}$ & 0.501$_{\pm0.123}$ & 77.5$_{\pm0.7}$\\
  & GlobMP (Ours) & 15.0 & 15.0$_{\pm35.7}$ & 20.90$_{\pm11.92}$ & \textbf{0.502}$_{\pm0.500}$ & 300.1$_{\pm68.7}$ & 55.0 & 55.0$_{\pm49.7}$ & 11.35$_{\pm14.57}$ & \textbf{0.519}$_{\pm0.500}$ & 35.2$_{\pm143.5}$ & 65.0 & 65.0$_{\pm47.7}$ & 10.75$_{\pm15.56}$ & \textbf{0.507}$_{\pm0.500}$ & 19.1$_{\pm136.7}$\\
\bottomrule
\end{tabular}}
\end{center}
\end{table}

\begin{table}[p]
\caption{Multi-robot planning performance on \texttt{Drop-Region} for $N_a\in\{3,6,\dots,21\}$ under the Low, Medium and High velocity caps. \emph{SR} and \emph{CS} measure trajectory feasibility, \emph{Coll} reports inter-robot collisions, \emph{DA} adherence to the environment-specific motion pattern, and \emph{Time} the planning time per configuration: for \emph{GlobMP (Ours)} the median over the configurations with the standard deviation as subscript, for the baselines the mean, so the two are not directly comparable. $^{\dagger}$ marks a method that applies no velocity limit, so its three blocks report the same run; MMD-CBS also returns one plan per configuration rather than $64$ candidates. A dash under \emph{Coll} marks a count that is not commensurable with the others, as MDOC and MMD-CBS count colliding time steps where every other row counts agent pairs with any collision; MDOC's \emph{CS} is omitted because its plans that do not cover every agent were counted as feasible. MMD-CBS plans under $60$\,s per configuration and \emph{GlobMP (Ours)} under $300$\,s per team.}
\label{tab:hard-shelf}
\begin{center}\tiny\setlength{\tabcolsep}{2.2pt}
\resizebox{\linewidth}{!}{%
\begin{tabular}{ll ccccc ccccc ccccc}
\toprule
& & \multicolumn{5}{c}{Low ($v_{\max}=0.928$)} & \multicolumn{5}{c}{Medium ($v_{\max}=1.13$)} & \multicolumn{5}{c}{High ($v_{\max}=1.34$)}\\
\cmidrule(lr){3-7}\cmidrule(lr){8-12}\cmidrule(lr){13-17}
$N_a$ & Method & SR$\uparrow$ & CS$\uparrow$ & Coll$\downarrow$ & DA$\uparrow$ & Time$\downarrow$ & SR$\uparrow$ & CS$\uparrow$ & Coll$\downarrow$ & DA$\uparrow$ & Time$\downarrow$ & SR$\uparrow$ & CS$\uparrow$ & Coll$\downarrow$ & DA$\uparrow$ & Time$\downarrow$\\
\midrule
3 & Diff.\ Policy$^{\dagger}$ & 0.0 & 0.0$_{\pm0.0}$ & 0.13$_{\pm0.34}$ & 0.139$_{\pm0.175}$ & 0.3 & 0.0 & 0.0$_{\pm0.0}$ & 0.13$_{\pm0.34}$ & 0.139$_{\pm0.175}$ & 0.3 & 0.0 & 0.0$_{\pm0.0}$ & 0.13$_{\pm0.34}$ & 0.139$_{\pm0.175}$ & 0.3\\
  & CD$^{\dagger}$ & 45.0 & 5.2$_{\pm21.7}$ & 0.11$_{\pm0.32}$ & 0.000$_{\pm0.001}$ & 0.5$_{\pm0.0}$ & 45.0 & 11.9$_{\pm24.6}$ & 0.11$_{\pm0.32}$ & 0.000$_{\pm0.001}$ & 1.0$_{\pm0.2}$ & 45.0 & 22.2$_{\pm34.5}$ & 0.11$_{\pm0.32}$ & 0.000$_{\pm0.001}$ & 1.2$_{\pm0.0}$\\
  & MMD-CBS$^{\dagger}$ & 30.0 & 25.0$_{\pm43.3}$ & --- & 0.000$_{\pm0.000}$ & 2.2$_{\pm1.5}$ & 30.0 & 30.0$_{\pm45.8}$ & --- & 0.000$_{\pm0.000}$ & 2.2$_{\pm1.5}$ & 30.0 & 30.0$_{\pm45.8}$ & --- & 0.000$_{\pm0.000}$ & 2.2$_{\pm1.5}$\\
  & MDOC$^{\dagger}$ & 25.0 & --- & --- & 0.000$_{\pm0.000}$ & 42.7$_{\pm6.5}$ & 25.0 & --- & --- & 0.000$_{\pm0.000}$ & 33.2$_{\pm2.7}$ & 25.0 & --- & --- & 0.000$_{\pm0.000}$ & 38.6$_{\pm3.8}$\\
  & PCD-LB & 75.0 & 55.9$_{\pm44.1}$ & 0.00$_{\pm0.01}$ & 0.000$_{\pm0.000}$ & 3.1$_{\pm0.1}$ & 75.0 & 56.8$_{\pm44.0}$ & 0.01$_{\pm0.02}$ & 0.000$_{\pm0.000}$ & 2.7$_{\pm0.6}$ & 80.0 & 57.0$_{\pm43.5}$ & 0.01$_{\pm0.03}$ & 0.000$_{\pm0.000}$ & 3.0$_{\pm0.1}$\\
  & PCD-SHD & 30.0 & 11.3$_{\pm28.3}$ & 0.09$_{\pm0.23}$ & 0.047$_{\pm0.094}$ & 1.2$_{\pm0.1}$ & 20.0 & 11.2$_{\pm28.4}$ & 0.08$_{\pm0.21}$ & 0.049$_{\pm0.094}$ & 1.2$_{\pm0.1}$ & 20.0 & 11.3$_{\pm28.5}$ & 0.06$_{\pm0.15}$ & 0.049$_{\pm0.092}$ & 1.2$_{\pm0.1}$\\
  & Final Proj. & 95.0 & 86.2$_{\pm26.3}$ & 0.11$_{\pm0.28}$ & \textbf{0.194}$_{\pm0.211}$ & 3.4$_{\pm0.5}$ & 95.0 & 86.2$_{\pm26.4}$ & 0.11$_{\pm0.28}$ & \textbf{0.204}$_{\pm0.215}$ & 3.6$_{\pm0.6}$ & \textbf{100.0} & 96.1$_{\pm5.1}$ & 0.01$_{\pm0.03}$ & \textbf{0.234}$_{\pm0.215}$ & 3.6$_{\pm0.6}$\\
  & DiRecT & \textbf{100.0} & 97.5$_{\pm8.1}$ & 0.03$_{\pm0.09}$ & 0.114$_{\pm0.149}$ & 24.6$_{\pm6.3}$ & \textbf{100.0} & 96.3$_{\pm12.5}$ & 0.04$_{\pm0.13}$ & 0.115$_{\pm0.150}$ & 26.2$_{\pm6.6}$ & \textbf{100.0} & \textbf{100.0}$_{\pm0.0}$ & \textbf{0.00}$_{\pm0.00}$ & 0.115$_{\pm0.150}$ & 26.2$_{\pm6.9}$\\
  & GlobMP (Ours) & \textbf{100.0} & \textbf{100.0}$_{\pm0.0}$ & \textbf{0.00}$_{\pm0.00}$ & 0.083$_{\pm0.276}$ & 0.3$_{\pm0.1}$ & \textbf{100.0} & \textbf{100.0}$_{\pm0.0}$ & \textbf{0.00}$_{\pm0.00}$ & 0.083$_{\pm0.276}$ & 0.5$_{\pm0.2}$ & \textbf{100.0} & \textbf{100.0}$_{\pm0.0}$ & \textbf{0.00}$_{\pm0.00}$ & 0.083$_{\pm0.276}$ & 0.4$_{\pm0.2}$\\
\midrule
6 & Diff.\ Policy$^{\dagger}$ & 0.0 & 0.0$_{\pm0.0}$ & 1.16$_{\pm0.91}$ & 0.148$_{\pm0.107}$ & 0.3 & 0.0 & 0.0$_{\pm0.0}$ & 1.16$_{\pm0.91}$ & 0.148$_{\pm0.107}$ & 0.3 & 0.0 & 0.0$_{\pm0.0}$ & 1.16$_{\pm0.91}$ & 0.148$_{\pm0.107}$ & 0.3\\
  & CD$^{\dagger}$ & 15.0 & 0.0$_{\pm0.0}$ & 0.91$_{\pm0.78}$ & 0.001$_{\pm0.002}$ & 1.2$_{\pm0.0}$ & 15.0 & 0.0$_{\pm0.0}$ & 0.91$_{\pm0.78}$ & 0.001$_{\pm0.002}$ & 1.2$_{\pm0.1}$ & 15.0 & 0.4$_{\pm1.0}$ & 0.91$_{\pm0.78}$ & 0.001$_{\pm0.002}$ & 1.1$_{\pm0.1}$\\
  & MMD-CBS$^{\dagger}$ & 0.0 & 0.0$_{\pm0.0}$ & --- & --- & 2.2$_{\pm1.3}$ & 0.0 & 0.0$_{\pm0.0}$ & --- & --- & 2.2$_{\pm1.3}$ & 0.0 & 0.0$_{\pm0.0}$ & --- & --- & 2.2$_{\pm1.3}$\\
  & MDOC$^{\dagger}$ & 0.0 & --- & --- & 0.000$_{\pm0.000}$ & 62.9$_{\pm3.7}$ & 0.0 & --- & --- & 0.000$_{\pm0.000}$ & 83.6$_{\pm24.8}$ & 0.0 & --- & --- & 0.000$_{\pm0.000}$ & 83.0$_{\pm17.9}$\\
  & PCD-LB & 5.0 & 0.1$_{\pm0.3}$ & 1.32$_{\pm0.80}$ & 0.000$_{\pm0.000}$ & 3.2$_{\pm0.2}$ & 10.0 & 0.4$_{\pm1.4}$ & 1.36$_{\pm0.81}$ & 0.000$_{\pm0.000}$ & 3.1$_{\pm0.1}$ & 20.0 & 0.9$_{\pm2.3}$ & 1.31$_{\pm0.79}$ & 0.000$_{\pm0.000}$ & 3.1$_{\pm0.1}$\\
  & PCD-SHD & 5.0 & 0.1$_{\pm0.3}$ & 0.24$_{\pm0.35}$ & 0.027$_{\pm0.047}$ & 1.3$_{\pm0.1}$ & 0.0 & 0.0$_{\pm0.0}$ & 0.21$_{\pm0.30}$ & 0.028$_{\pm0.049}$ & 1.3$_{\pm0.1}$ & 0.0 & 0.0$_{\pm0.0}$ & 0.17$_{\pm0.23}$ & 0.027$_{\pm0.049}$ & 1.4$_{\pm0.1}$\\
  & Final Proj. & \textbf{100.0} & 47.8$_{\pm35.8}$ & 0.56$_{\pm0.55}$ & \textbf{0.161}$_{\pm0.127}$ & 3.4$_{\pm0.6}$ & 90.0 & 45.7$_{\pm37.1}$ & 0.57$_{\pm0.50}$ & \textbf{0.168}$_{\pm0.127}$ & 3.4$_{\pm0.6}$ & \textbf{100.0} & 62.5$_{\pm32.4}$ & 0.33$_{\pm0.40}$ & \textbf{0.212}$_{\pm0.137}$ & 3.4$_{\pm0.7}$\\
  & DiRecT & \textbf{100.0} & 98.2$_{\pm6.1}$ & 0.02$_{\pm0.06}$ & 0.073$_{\pm0.106}$ & 24.9$_{\pm6.8}$ & \textbf{100.0} & 97.8$_{\pm6.2}$ & 0.02$_{\pm0.06}$ & 0.075$_{\pm0.106}$ & 25.7$_{\pm6.8}$ & \textbf{100.0} & 99.8$_{\pm0.7}$ & 0.00$_{\pm0.01}$ & 0.076$_{\pm0.107}$ & 26.7$_{\pm7.4}$\\
  & GlobMP (Ours) & \textbf{100.0} & \textbf{100.0}$_{\pm0.0}$ & \textbf{0.00}$_{\pm0.00}$ & 0.117$_{\pm0.321}$ & 0.6$_{\pm0.3}$ & \textbf{100.0} & \textbf{100.0}$_{\pm0.0}$ & \textbf{0.00}$_{\pm0.00}$ & 0.125$_{\pm0.331}$ & 0.9$_{\pm1.1}$ & \textbf{100.0} & \textbf{100.0}$_{\pm0.0}$ & \textbf{0.00}$_{\pm0.00}$ & 0.133$_{\pm0.340}$ & 1.1$_{\pm3.6}$\\
\midrule
9 & Diff.\ Policy$^{\dagger}$ & 0.0 & 0.0$_{\pm0.0}$ & 2.92$_{\pm1.22}$ & 0.124$_{\pm0.096}$ & 0.3 & 0.0 & 0.0$_{\pm0.0}$ & 2.92$_{\pm1.22}$ & 0.124$_{\pm0.096}$ & 0.3 & 0.0 & 0.0$_{\pm0.0}$ & 2.92$_{\pm1.22}$ & 0.124$_{\pm0.096}$ & 0.3\\
  & CD$^{\dagger}$ & 0.0 & 0.0$_{\pm0.0}$ & 3.40$_{\pm1.07}$ & 0.001$_{\pm0.002}$ & 1.2$_{\pm0.1}$ & 0.0 & 0.0$_{\pm0.0}$ & 3.40$_{\pm1.07}$ & 0.001$_{\pm0.002}$ & 1.2$_{\pm0.0}$ & 0.0 & 0.0$_{\pm0.0}$ & 3.40$_{\pm1.07}$ & 0.001$_{\pm0.002}$ & 1.2$_{\pm0.1}$\\
  & MMD-CBS$^{\dagger}$ & 0.0 & 0.0$_{\pm0.0}$ & --- & --- & 2.3$_{\pm1.9}$ & 0.0 & 0.0$_{\pm0.0}$ & --- & --- & 2.3$_{\pm1.9}$ & 0.0 & 0.0$_{\pm0.0}$ & --- & --- & 2.3$_{\pm1.9}$\\
  & MDOC$^{\dagger}$ & 0.0 & --- & --- & 0.000$_{\pm0.000}$ & 108.4$_{\pm21.4}$ & 0.0 & --- & --- & 0.000$_{\pm0.000}$ & 95.4$_{\pm14.3}$ & 0.0 & --- & --- & 0.000$_{\pm0.000}$ & 83.7$_{\pm3.8}$\\
  & PCD-LB & 0.0 & 0.0$_{\pm0.0}$ & 4.77$_{\pm1.42}$ & 0.000$_{\pm0.000}$ & 3.5$_{\pm0.2}$ & 0.0 & 0.0$_{\pm0.0}$ & 5.00$_{\pm1.45}$ & 0.000$_{\pm0.000}$ & 3.4$_{\pm0.1}$ & 0.0 & 0.0$_{\pm0.0}$ & 4.58$_{\pm1.31}$ & 0.000$_{\pm0.000}$ & 3.5$_{\pm0.1}$\\
  & PCD-SHD & 0.0 & 0.0$_{\pm0.0}$ & 1.06$_{\pm0.75}$ & 0.031$_{\pm0.043}$ & 1.5$_{\pm0.1}$ & 0.0 & 0.0$_{\pm0.0}$ & 0.98$_{\pm0.67}$ & 0.034$_{\pm0.044}$ & 1.5$_{\pm0.1}$ & 0.0 & 0.0$_{\pm0.0}$ & 0.78$_{\pm0.56}$ & 0.035$_{\pm0.043}$ & 1.5$_{\pm0.1}$\\
  & Final Proj. & 50.0 & 17.3$_{\pm31.9}$ & 1.92$_{\pm1.82}$ & \textbf{0.180}$_{\pm0.109}$ & 4.3$_{\pm0.7}$ & 40.0 & 13.0$_{\pm26.5}$ & 2.00$_{\pm1.72}$ & \textbf{0.209}$_{\pm0.118}$ & 3.9$_{\pm1.1}$ & 80.0 & 32.2$_{\pm32.6}$ & 1.22$_{\pm1.33}$ & \textbf{0.244}$_{\pm0.123}$ & 4.4$_{\pm0.7}$\\
  & DiRecT & 90.0 & 74.3$_{\pm35.6}$ & 0.28$_{\pm0.40}$ & 0.080$_{\pm0.084}$ & 37.2$_{\pm9.7}$ & 95.0 & 69.5$_{\pm37.5}$ & 0.34$_{\pm0.44}$ & 0.081$_{\pm0.085}$ & 30.0$_{\pm6.7}$ & \textbf{100.0} & \textbf{99.6}$_{\pm1.4}$ & \textbf{0.00}$_{\pm0.01}$ & 0.082$_{\pm0.086}$ & 31.1$_{\pm7.7}$\\
  & GlobMP (Ours) & \textbf{100.0} & \textbf{100.0}$_{\pm0.0}$ & \textbf{0.00}$_{\pm0.00}$ & 0.061$_{\pm0.240}$ & 1.3$_{\pm0.4}$ & \textbf{100.0} & \textbf{100.0}$_{\pm0.0}$ & \textbf{0.00}$_{\pm0.00}$ & 0.067$_{\pm0.249}$ & 2.0$_{\pm2.7}$ & 95.0 & 95.0$_{\pm21.8}$ & 0.35$_{\pm1.53}$ & 0.067$_{\pm0.249}$ & 2.1$_{\pm66.9}$\\
\midrule
12 & Diff.\ Policy$^{\dagger}$ & 0.0 & 0.0$_{\pm0.0}$ & 6.19$_{\pm2.67}$ & 0.142$_{\pm0.077}$ & 0.3 & 0.0 & 0.0$_{\pm0.0}$ & 6.19$_{\pm2.67}$ & 0.142$_{\pm0.077}$ & 0.3 & 0.0 & 0.0$_{\pm0.0}$ & 6.19$_{\pm2.67}$ & 0.142$_{\pm0.077}$ & 0.3\\
  & CD$^{\dagger}$ & 0.0 & 0.0$_{\pm0.0}$ & 7.84$_{\pm1.65}$ & 0.002$_{\pm0.005}$ & 1.2$_{\pm0.1}$ & 0.0 & 0.0$_{\pm0.0}$ & 7.84$_{\pm1.65}$ & 0.002$_{\pm0.005}$ & 0.6$_{\pm0.1}$ & 0.0 & 0.0$_{\pm0.0}$ & 7.84$_{\pm1.65}$ & 0.002$_{\pm0.005}$ & 1.2$_{\pm0.1}$\\
  & MMD-CBS$^{\dagger}$ & 0.0 & 0.0$_{\pm0.0}$ & --- & --- & 1.7$_{\pm0.9}$ & 0.0 & 0.0$_{\pm0.0}$ & --- & --- & 1.7$_{\pm0.9}$ & 0.0 & 0.0$_{\pm0.0}$ & --- & --- & 1.7$_{\pm0.9}$\\
  & MDOC$^{\dagger}$ & 0.0 & --- & --- & 0.000$_{\pm0.000}$ & 110.6$_{\pm10.0}$ & 0.0 & --- & --- & 0.000$_{\pm0.000}$ & 121.3$_{\pm5.9}$ & 0.0 & --- & --- & 0.000$_{\pm0.000}$ & 107.6$_{\pm5.6}$\\
  & PCD-LB & 0.0 & 0.0$_{\pm0.0}$ & 11.55$_{\pm2.61}$ & 0.000$_{\pm0.002}$ & 1.7$_{\pm0.1}$ & 0.0 & 0.0$_{\pm0.0}$ & 11.87$_{\pm2.46}$ & 0.000$_{\pm0.001}$ & 3.7$_{\pm0.1}$ & 0.0 & 0.0$_{\pm0.0}$ & 11.07$_{\pm2.25}$ & 0.000$_{\pm0.002}$ & 3.8$_{\pm0.1}$\\
  & PCD-SHD & 0.0 & 0.0$_{\pm0.0}$ & 2.48$_{\pm1.14}$ & 0.021$_{\pm0.022}$ & 1.6$_{\pm0.1}$ & 0.0 & 0.0$_{\pm0.0}$ & 2.31$_{\pm1.05}$ & 0.022$_{\pm0.023}$ & 1.6$_{\pm0.1}$ & 0.0 & 0.0$_{\pm0.0}$ & 1.88$_{\pm0.92}$ & 0.024$_{\pm0.023}$ & 1.6$_{\pm0.1}$\\
  & Final Proj. & 10.0 & 0.8$_{\pm3.1}$ & 4.75$_{\pm2.81}$ & \textbf{0.175}$_{\pm0.078}$ & 5.4$_{\pm0.7}$ & 5.0 & 0.7$_{\pm3.1}$ & 4.67$_{\pm2.64}$ & \textbf{0.206}$_{\pm0.089}$ & 5.4$_{\pm0.7}$ & 30.0 & 10.5$_{\pm24.4}$ & 2.49$_{\pm1.41}$ & \textbf{0.255}$_{\pm0.096}$ & 5.4$_{\pm0.7}$\\
  & DiRecT & 55.0 & 32.3$_{\pm37.5}$ & 1.09$_{\pm0.78}$ & 0.054$_{\pm0.048}$ & 41.0$_{\pm10.3}$ & 55.0 & 30.4$_{\pm37.4}$ & 1.21$_{\pm0.85}$ & 0.056$_{\pm0.050}$ & 45.0$_{\pm12.0}$ & \textbf{100.0} & \textbf{98.4}$_{\pm2.7}$ & \textbf{0.02}$_{\pm0.03}$ & 0.057$_{\pm0.051}$ & 47.4$_{\pm12.7}$\\
  & GlobMP (Ours) & \textbf{100.0} & \textbf{100.0}$_{\pm0.0}$ & \textbf{0.00}$_{\pm0.00}$ & 0.067$_{\pm0.249}$ & 2.0$_{\pm8.3}$ & \textbf{100.0} & \textbf{100.0}$_{\pm0.0}$ & \textbf{0.00}$_{\pm0.00}$ & 0.071$_{\pm0.257}$ & 3.2$_{\pm1.6}$ & 95.0 & 95.0$_{\pm21.8}$ & 0.50$_{\pm2.18}$ & 0.071$_{\pm0.257}$ & 4.0$_{\pm78.9}$\\
\midrule
15 & Diff.\ Policy$^{\dagger}$ & 0.0 & 0.0$_{\pm0.0}$ & 8.53$_{\pm2.25}$ & 0.183$_{\pm0.062}$ & 0.3 & 0.0 & 0.0$_{\pm0.0}$ & 8.53$_{\pm2.25}$ & 0.183$_{\pm0.062}$ & 0.3 & 0.0 & 0.0$_{\pm0.0}$ & 8.53$_{\pm2.25}$ & 0.183$_{\pm0.062}$ & 0.3\\
  & CD$^{\dagger}$ & 0.0 & 0.0$_{\pm0.0}$ & 13.91$_{\pm2.49}$ & 0.001$_{\pm0.002}$ & 1.2$_{\pm0.1}$ & 0.0 & 0.0$_{\pm0.0}$ & 13.91$_{\pm2.49}$ & 0.001$_{\pm0.002}$ & 1.2$_{\pm0.0}$ & 0.0 & 0.0$_{\pm0.0}$ & 13.91$_{\pm2.49}$ & 0.001$_{\pm0.002}$ & 1.2$_{\pm0.1}$\\
  & MMD-CBS$^{\dagger}$ & 0.0 & 0.0$_{\pm0.0}$ & --- & --- & 1.2$_{\pm0.8}$ & 0.0 & 0.0$_{\pm0.0}$ & --- & --- & 1.2$_{\pm0.8}$ & 0.0 & 0.0$_{\pm0.0}$ & --- & --- & 1.2$_{\pm0.8}$\\
  & MDOC$^{\dagger}$ & 0.0 & --- & --- & 0.000$_{\pm0.000}$ & 158.3$_{\pm11.1}$ & 0.0 & --- & --- & 0.000$_{\pm0.000}$ & 175.3$_{\pm13.5}$ & 0.0 & --- & --- & 0.000$_{\pm0.000}$ & 167.0$_{\pm72.6}$\\
  & PCD-LB & 0.0 & 0.0$_{\pm0.0}$ & 22.65$_{\pm2.75}$ & 0.000$_{\pm0.000}$ & 4.1$_{\pm0.2}$ & 0.0 & 0.0$_{\pm0.0}$ & 23.62$_{\pm2.63}$ & 0.000$_{\pm0.000}$ & 4.1$_{\pm0.1}$ & 0.0 & 0.0$_{\pm0.0}$ & 22.71$_{\pm2.48}$ & 0.000$_{\pm0.000}$ & 4.1$_{\pm0.1}$\\
  & PCD-SHD & 0.0 & 0.0$_{\pm0.0}$ & 2.78$_{\pm1.30}$ & 0.026$_{\pm0.028}$ & 1.7$_{\pm0.1}$ & 0.0 & 0.0$_{\pm0.0}$ & 2.60$_{\pm1.24}$ & 0.025$_{\pm0.027}$ & 1.8$_{\pm0.1}$ & 0.0 & 0.0$_{\pm0.0}$ & 2.11$_{\pm1.00}$ & 0.025$_{\pm0.027}$ & 1.8$_{\pm0.1}$\\
  & Final Proj. & 15.0 & 3.2$_{\pm12.9}$ & 5.40$_{\pm2.82}$ & \textbf{0.211}$_{\pm0.071}$ & 7.0$_{\pm0.7}$ & 15.0 & 2.0$_{\pm7.8}$ & 5.43$_{\pm2.64}$ & \textbf{0.254}$_{\pm0.074}$ & 4.7$_{\pm1.8}$ & 50.0 & 7.0$_{\pm18.1}$ & 2.96$_{\pm1.97}$ & \textbf{0.285}$_{\pm0.076}$ & 7.0$_{\pm0.7}$\\
  & DiRecT & 70.0 & 41.2$_{\pm40.4}$ & 1.16$_{\pm1.06}$ & 0.083$_{\pm0.037}$ & 57.0$_{\pm14.4}$ & 65.0 & 38.3$_{\pm40.2}$ & 1.28$_{\pm1.12}$ & 0.085$_{\pm0.036}$ & 51.8$_{\pm7.9}$ & \textbf{100.0} & 97.5$_{\pm7.6}$ & 0.03$_{\pm0.08}$ & 0.087$_{\pm0.036}$ & 52.8$_{\pm7.9}$\\
  & GlobMP (Ours) & \textbf{100.0} & \textbf{100.0}$_{\pm0.0}$ & \textbf{0.00}$_{\pm0.00}$ & 0.047$_{\pm0.211}$ & 2.3$_{\pm2.3}$ & \textbf{100.0} & \textbf{100.0}$_{\pm0.0}$ & \textbf{0.00}$_{\pm0.00}$ & 0.063$_{\pm0.244}$ & 4.3$_{\pm3.1}$ & \textbf{100.0} & \textbf{100.0}$_{\pm0.0}$ & \textbf{0.00}$_{\pm0.00}$ & 0.067$_{\pm0.249}$ & 11.8$_{\pm18.1}$\\
\midrule
18 & Diff.\ Policy$^{\dagger}$ & 0.0 & 0.0$_{\pm0.0}$ & 14.42$_{\pm4.49}$ & 0.124$_{\pm0.051}$ & 0.3 & 0.0 & 0.0$_{\pm0.0}$ & 14.42$_{\pm4.49}$ & 0.124$_{\pm0.051}$ & 0.3 & 0.0 & 0.0$_{\pm0.0}$ & 14.42$_{\pm4.49}$ & 0.124$_{\pm0.051}$ & 0.3\\
  & CD$^{\dagger}$ & 0.0 & 0.0$_{\pm0.0}$ & 23.43$_{\pm2.96}$ & 0.002$_{\pm0.004}$ & 1.2$_{\pm0.1}$ & 0.0 & 0.0$_{\pm0.0}$ & 23.43$_{\pm2.96}$ & 0.002$_{\pm0.004}$ & 1.3$_{\pm0.1}$ & 0.0 & 0.0$_{\pm0.0}$ & 23.43$_{\pm2.96}$ & 0.002$_{\pm0.004}$ & 1.2$_{\pm0.1}$\\
  & MMD-CBS$^{\dagger}$ & 0.0 & 0.0$_{\pm0.0}$ & --- & --- & 2.1$_{\pm1.2}$ & 0.0 & 0.0$_{\pm0.0}$ & --- & --- & 2.1$_{\pm1.2}$ & 0.0 & 0.0$_{\pm0.0}$ & --- & --- & 2.1$_{\pm1.2}$\\
  & MDOC$^{\dagger}$ & 0.0 & --- & --- & 0.000$_{\pm0.000}$ & 166.5$_{\pm9.9}$ & 0.0 & --- & --- & 0.000$_{\pm0.000}$ & 159.0$_{\pm7.7}$ & 0.0 & --- & --- & 0.000$_{\pm0.000}$ & 142.2$_{\pm7.0}$\\
  & PCD-LB & 0.0 & 0.0$_{\pm0.0}$ & 37.15$_{\pm5.84}$ & 0.000$_{\pm0.000}$ & 4.1$_{\pm0.1}$ & 0.0 & 0.0$_{\pm0.0}$ & 38.50$_{\pm5.83}$ & 0.000$_{\pm0.000}$ & 4.2$_{\pm0.1}$ & 0.0 & 0.0$_{\pm0.0}$ & 36.52$_{\pm5.40}$ & 0.000$_{\pm0.000}$ & 4.0$_{\pm0.6}$\\
  & PCD-SHD & 0.0 & 0.0$_{\pm0.0}$ & 4.46$_{\pm2.06}$ & 0.020$_{\pm0.023}$ & 1.8$_{\pm0.1}$ & 0.0 & 0.0$_{\pm0.0}$ & 4.29$_{\pm1.94}$ & 0.022$_{\pm0.026}$ & 1.8$_{\pm0.1}$ & 0.0 & 0.0$_{\pm0.0}$ & 3.56$_{\pm1.70}$ & 0.023$_{\pm0.026}$ & 1.8$_{\pm0.1}$\\
  & Final Proj. & 10.0 & 1.5$_{\pm4.8}$ & 8.88$_{\pm5.02}$ & \textbf{0.140}$_{\pm0.049}$ & 8.6$_{\pm0.8}$ & 10.0 & 0.9$_{\pm2.6}$ & 8.67$_{\pm4.89}$ & \textbf{0.166}$_{\pm0.052}$ & 7.4$_{\pm1.7}$ & 10.0 & 0.7$_{\pm2.2}$ & 5.66$_{\pm2.88}$ & \textbf{0.199}$_{\pm0.066}$ & 8.6$_{\pm0.7}$\\
  & DiRecT & 60.0 & 31.9$_{\pm37.5}$ & 2.05$_{\pm2.15}$ & 0.059$_{\pm0.043}$ & 68.0$_{\pm9.8}$ & 55.0 & 30.5$_{\pm37.9}$ & 2.19$_{\pm2.27}$ & 0.060$_{\pm0.042}$ & 64.9$_{\pm9.0}$ & \textbf{100.0} & \textbf{91.6}$_{\pm15.7}$ & \textbf{0.11}$_{\pm0.22}$ & 0.062$_{\pm0.042}$ & 64.4$_{\pm8.0}$\\
  & GlobMP (Ours) & \textbf{100.0} & \textbf{100.0}$_{\pm0.0}$ & \textbf{0.00}$_{\pm0.00}$ & 0.056$_{\pm0.229}$ & 3.6$_{\pm9.4}$ & \textbf{100.0} & \textbf{100.0}$_{\pm0.0}$ & \textbf{0.00}$_{\pm0.00}$ & 0.056$_{\pm0.229}$ & 9.5$_{\pm7.6}$ & 90.0 & 90.0$_{\pm30.0}$ & 2.15$_{\pm6.45}$ & 0.067$_{\pm0.249}$ & 24.2$_{\pm86.0}$\\
\midrule
21 & Diff.\ Policy$^{\dagger}$ & 0.0 & 0.0$_{\pm0.0}$ & 18.84$_{\pm3.94}$ & 0.161$_{\pm0.078}$ & 0.4 & 0.0 & 0.0$_{\pm0.0}$ & 18.84$_{\pm3.94}$ & 0.161$_{\pm0.078}$ & 0.3 & 0.0 & 0.0$_{\pm0.0}$ & 18.84$_{\pm3.94}$ & 0.161$_{\pm0.078}$ & 0.4\\
  & CD$^{\dagger}$ & 0.0 & 0.0$_{\pm0.0}$ & 33.21$_{\pm2.01}$ & 0.002$_{\pm0.003}$ & 0.8$_{\pm0.1}$ & 0.0 & 0.0$_{\pm0.0}$ & 33.21$_{\pm2.01}$ & 0.002$_{\pm0.003}$ & 1.3$_{\pm0.1}$ & 0.0 & 0.0$_{\pm0.0}$ & 33.21$_{\pm2.01}$ & 0.002$_{\pm0.003}$ & 1.3$_{\pm0.1}$\\
  & MMD-CBS$^{\dagger}$ & 0.0 & 0.0$_{\pm0.0}$ & --- & --- & 2.6$_{\pm1.7}$ & 0.0 & 0.0$_{\pm0.0}$ & --- & --- & 2.6$_{\pm1.7}$ & 0.0 & 0.0$_{\pm0.0}$ & --- & --- & 2.6$_{\pm1.7}$\\
  & MDOC$^{\dagger}$ & 0.0 & --- & --- & 0.000$_{\pm0.000}$ & 185.4$_{\pm9.2}$ & 0.0 & --- & --- & 0.000$_{\pm0.000}$ & 181.0$_{\pm8.7}$ & 0.0 & --- & --- & 0.000$_{\pm0.000}$ & 178.6$_{\pm8.8}$\\
  & PCD-LB & 0.0 & 0.0$_{\pm0.0}$ & 53.13$_{\pm5.24}$ & 0.000$_{\pm0.000}$ & 4.6$_{\pm0.2}$ & 0.0 & 0.0$_{\pm0.0}$ & 54.89$_{\pm5.28}$ & 0.000$_{\pm0.000}$ & 4.6$_{\pm0.1}$ & 0.0 & 0.0$_{\pm0.0}$ & 51.21$_{\pm4.97}$ & 0.000$_{\pm0.000}$ & 4.6$_{\pm0.1}$\\
  & PCD-SHD & 0.0 & 0.0$_{\pm0.0}$ & 6.27$_{\pm1.83}$ & 0.027$_{\pm0.023}$ & 2.0$_{\pm0.1}$ & 0.0 & 0.0$_{\pm0.0}$ & 5.99$_{\pm1.81}$ & 0.029$_{\pm0.024}$ & 2.0$_{\pm0.1}$ & 0.0 & 0.0$_{\pm0.0}$ & 4.79$_{\pm1.53}$ & 0.030$_{\pm0.024}$ & 2.0$_{\pm0.1}$\\
  & Final Proj. & 0.0 & 0.0$_{\pm0.0}$ & 14.01$_{\pm6.39}$ & \textbf{0.160}$_{\pm0.079}$ & 8.4$_{\pm2.8}$ & 0.0 & 0.0$_{\pm0.0}$ & 13.54$_{\pm6.03}$ & \textbf{0.194}$_{\pm0.084}$ & 11.9$_{\pm0.8}$ & 10.0 & 0.2$_{\pm0.7}$ & 8.83$_{\pm4.51}$ & \textbf{0.232}$_{\pm0.086}$ & 11.8$_{\pm0.7}$\\
  & DiRecT & 20.0 & 12.2$_{\pm26.9}$ & \textbf{3.57}$_{\pm2.00}$ & 0.062$_{\pm0.041}$ & 89.9$_{\pm13.2}$ & 20.0 & 9.7$_{\pm23.9}$ & 3.73$_{\pm2.03}$ & 0.066$_{\pm0.043}$ & 78.6$_{\pm1.2}$ & \textbf{100.0} & 88.0$_{\pm10.0}$ & \textbf{0.14}$_{\pm0.12}$ & 0.070$_{\pm0.046}$ & 79.9$_{\pm1.2}$\\
  & GlobMP (Ours) & \textbf{65.0} & 65.0$_{\pm47.7}$ & 5.45$_{\pm7.65}$ & 0.095$_{\pm0.294}$ & 15.2$_{\pm136.9}$ & \textbf{95.0} & \textbf{95.0}$_{\pm21.8}$ & \textbf{1.05}$_{\pm4.58}$ & 0.086$_{\pm0.280}$ & 15.1$_{\pm64.4}$ & 85.0 & 85.0$_{\pm35.7}$ & 3.95$_{\pm9.98}$ & 0.093$_{\pm0.290}$ & 27.1$_{\pm97.5}$\\
\bottomrule
\end{tabular}}
\end{center}
\end{table}

\begin{table}[p]
\caption{Multi-robot planning performance on \texttt{Corridor-Bay} for $N_a\in\{3,6,\dots,21\}$ under the Low, Medium and High velocity caps. \emph{SR} and \emph{CS} measure trajectory feasibility, \emph{Coll} reports inter-robot collisions, \emph{DA} adherence to the environment-specific motion pattern, and \emph{Time} the planning time per configuration: for \emph{GlobMP (Ours)} the median over the configurations with the standard deviation as subscript, for the baselines the mean, so the two are not directly comparable. $^{\dagger}$ marks a method that applies no velocity limit, so its three blocks report the same run; MMD-CBS also returns one plan per configuration rather than $64$ candidates. A dash under \emph{Coll} marks a count that is not commensurable with the others, as MDOC and MMD-CBS count colliding time steps where every other row counts agent pairs with any collision; MDOC's \emph{CS} is omitted because its plans that do not cover every agent were counted as feasible. MMD-CBS plans under $60$\,s per configuration and \emph{GlobMP (Ours)} under $300$\,s per team.}
\label{tab:hard-corridorbay}
\begin{center}\tiny\setlength{\tabcolsep}{2.2pt}
\resizebox{\linewidth}{!}{%
\begin{tabular}{ll ccccc ccccc ccccc}
\toprule
& & \multicolumn{5}{c}{Low ($v_{\max}=1.21$)} & \multicolumn{5}{c}{Medium ($v_{\max}=1.46$)} & \multicolumn{5}{c}{High ($v_{\max}=1.76$)}\\
\cmidrule(lr){3-7}\cmidrule(lr){8-12}\cmidrule(lr){13-17}
$N_a$ & Method & SR$\uparrow$ & CS$\uparrow$ & Coll$\downarrow$ & DA$\uparrow$ & Time$\downarrow$ & SR$\uparrow$ & CS$\uparrow$ & Coll$\downarrow$ & DA$\uparrow$ & Time$\downarrow$ & SR$\uparrow$ & CS$\uparrow$ & Coll$\downarrow$ & DA$\uparrow$ & Time$\downarrow$\\
\midrule
3 & Diff.\ Policy$^{\dagger}$ & 0.0 & 0.0$_{\pm0.0}$ & 0.31$_{\pm0.53}$ & 0.997$_{\pm0.002}$ & 0.3 & 0.0 & 0.0$_{\pm0.0}$ & 0.31$_{\pm0.53}$ & 0.997$_{\pm0.002}$ & 0.3 & 0.0 & 0.0$_{\pm0.0}$ & 0.31$_{\pm0.53}$ & 0.997$_{\pm0.002}$ & 0.3\\
  & CD$^{\dagger}$ & 85.0 & 12.5$_{\pm20.0}$ & 0.12$_{\pm0.26}$ & 0.683$_{\pm0.079}$ & 1.2$_{\pm0.0}$ & 85.0 & 16.3$_{\pm21.3}$ & 0.12$_{\pm0.26}$ & 0.683$_{\pm0.079}$ & 1.1$_{\pm0.0}$ & 85.0 & 18.0$_{\pm21.5}$ & 0.12$_{\pm0.26}$ & 0.683$_{\pm0.079}$ & 1.2$_{\pm0.0}$\\
  & MMD-CBS$^{\dagger}$ & 45.0 & 45.0$_{\pm49.7}$ & --- & 0.890$_{\pm0.068}$ & 5.7$_{\pm6.5}$ & 45.0 & 45.0$_{\pm49.7}$ & --- & 0.890$_{\pm0.068}$ & 5.7$_{\pm6.5}$ & 45.0 & 45.0$_{\pm49.7}$ & --- & 0.890$_{\pm0.068}$ & 5.7$_{\pm6.5}$\\
  & MDOC$^{\dagger}$ & 10.0 & --- & --- & 0.072$_{\pm0.215}$ & 36.1$_{\pm6.6}$ & 10.0 & --- & --- & 0.072$_{\pm0.215}$ & 36.4$_{\pm2.9}$ & 10.0 & --- & --- & 0.072$_{\pm0.215}$ & 37.0$_{\pm3.6}$\\
  & PCD-LB & 10.0 & 0.5$_{\pm1.7}$ & 0.02$_{\pm0.06}$ & --- & 3.0$_{\pm0.0}$ & 10.0 & 0.5$_{\pm2.1}$ & 0.01$_{\pm0.05}$ & --- & 3.1$_{\pm0.1}$ & 20.0 & 1.5$_{\pm3.8}$ & 0.01$_{\pm0.03}$ & --- & 3.1$_{\pm0.1}$\\
  & PCD-SHD & 45.0 & 25.9$_{\pm36.9}$ & 0.06$_{\pm0.09}$ & --- & 1.9$_{\pm0.8}$ & 50.0 & 26.9$_{\pm38.0}$ & 0.07$_{\pm0.12}$ & --- & 3.1$_{\pm0.1}$ & 40.0 & 26.9$_{\pm38.5}$ & 0.06$_{\pm0.09}$ & --- & 3.1$_{\pm0.1}$\\
  & Final Proj. & 80.0 & 53.4$_{\pm42.9}$ & 0.00$_{\pm0.00}$ & 0.912$_{\pm0.039}$ & 2.3$_{\pm0.9}$ & 85.0 & 51.9$_{\pm43.4}$ & 0.02$_{\pm0.08}$ & 0.911$_{\pm0.040}$ & 3.7$_{\pm0.6}$ & 85.0 & 51.9$_{\pm43.4}$ & \textbf{0.00}$_{\pm0.00}$ & 0.914$_{\pm0.039}$ & 3.7$_{\pm0.6}$\\
  & DiRecT & \textbf{95.0} & 87.2$_{\pm24.1}$ & \textbf{0.00}$_{\pm0.00}$ & \textbf{0.920}$_{\pm0.056}$ & 18.0$_{\pm0.6}$ & \textbf{95.0} & 82.0$_{\pm25.8}$ & \textbf{0.00}$_{\pm0.00}$ & \textbf{0.920}$_{\pm0.056}$ & 18.2$_{\pm0.7}$ & \textbf{95.0} & 80.0$_{\pm27.4}$ & \textbf{0.00}$_{\pm0.00}$ & \textbf{0.921}$_{\pm0.054}$ & 18.2$_{\pm0.7}$\\
  & GlobMP (Ours) & \textbf{95.0} & \textbf{95.0}$_{\pm21.8}$ & 0.05$_{\pm0.22}$ & 0.914$_{\pm0.070}$ & 0.7$_{\pm65.2}$ & \textbf{95.0} & \textbf{95.0}$_{\pm21.8}$ & 0.05$_{\pm0.22}$ & 0.918$_{\pm0.069}$ & 0.9$_{\pm65.2}$ & \textbf{95.0} & \textbf{95.0}$_{\pm21.8}$ & 0.05$_{\pm0.22}$ & 0.920$_{\pm0.068}$ & 0.7$_{\pm65.2}$\\
\midrule
6 & Diff.\ Policy$^{\dagger}$ & 0.0 & 0.0$_{\pm0.0}$ & 1.27$_{\pm1.05}$ & 0.996$_{\pm0.002}$ & 0.3 & 0.0 & 0.0$_{\pm0.0}$ & 1.27$_{\pm1.05}$ & 0.996$_{\pm0.002}$ & 0.3 & 0.0 & 0.0$_{\pm0.0}$ & 1.27$_{\pm1.05}$ & 0.996$_{\pm0.002}$ & 0.3\\
  & CD$^{\dagger}$ & 40.0 & 3.8$_{\pm12.3}$ & 0.70$_{\pm0.78}$ & 0.669$_{\pm0.032}$ & 1.2$_{\pm0.1}$ & 40.0 & 4.8$_{\pm12.6}$ & 0.70$_{\pm0.78}$ & 0.669$_{\pm0.032}$ & 0.5$_{\pm0.0}$ & 40.0 & 5.8$_{\pm13.6}$ & 0.70$_{\pm0.78}$ & 0.669$_{\pm0.032}$ & 1.2$_{\pm0.1}$\\
  & MMD-CBS$^{\dagger}$ & 50.0 & 35.0$_{\pm47.7}$ & --- & 0.880$_{\pm0.034}$ & 13.5$_{\pm13.0}$ & 50.0 & 50.0$_{\pm50.0}$ & --- & 0.880$_{\pm0.034}$ & 13.5$_{\pm13.0}$ & 50.0 & 50.0$_{\pm50.0}$ & --- & 0.880$_{\pm0.034}$ & 13.5$_{\pm13.0}$\\
  & MDOC$^{\dagger}$ & 0.0 & --- & --- & 0.000$_{\pm0.000}$ & 55.8$_{\pm3.5}$ & 0.0 & --- & --- & 0.000$_{\pm0.000}$ & 74.4$_{\pm18.6}$ & 0.0 & --- & --- & 0.000$_{\pm0.000}$ & 92.2$_{\pm23.6}$\\
  & PCD-LB & 0.0 & 0.0$_{\pm0.0}$ & 0.84$_{\pm0.63}$ & --- & 1.4$_{\pm0.2}$ & 0.0 & 0.0$_{\pm0.0}$ & 0.66$_{\pm0.47}$ & --- & 3.3$_{\pm0.1}$ & 0.0 & 0.0$_{\pm0.0}$ & 0.39$_{\pm0.30}$ & --- & 3.4$_{\pm0.1}$\\
  & PCD-SHD & 40.0 & 18.4$_{\pm34.0}$ & 0.44$_{\pm0.47}$ & --- & 3.2$_{\pm0.1}$ & 40.0 & 19.1$_{\pm33.2}$ & 0.39$_{\pm0.39}$ & --- & 3.2$_{\pm0.1}$ & 50.0 & 19.2$_{\pm33.2}$ & 0.32$_{\pm0.28}$ & --- & 3.2$_{\pm0.1}$\\
  & Final Proj. & 60.0 & 24.5$_{\pm37.2}$ & 0.02$_{\pm0.07}$ & 0.897$_{\pm0.042}$ & 3.8$_{\pm0.6}$ & 60.0 & 23.2$_{\pm36.9}$ & 0.01$_{\pm0.03}$ & 0.899$_{\pm0.039}$ & 2.0$_{\pm0.6}$ & 60.0 & 23.1$_{\pm37.0}$ & 0.01$_{\pm0.03}$ & 0.903$_{\pm0.037}$ & 3.8$_{\pm0.7}$\\
  & DiRecT & \textbf{100.0} & 65.8$_{\pm36.9}$ & 0.00$_{\pm0.01}$ & \textbf{0.915}$_{\pm0.037}$ & 18.3$_{\pm0.7}$ & \textbf{100.0} & 56.8$_{\pm38.4}$ & 0.00$_{\pm0.00}$ & 0.915$_{\pm0.036}$ & 18.3$_{\pm0.7}$ & \textbf{95.0} & 51.8$_{\pm37.4}$ & \textbf{0.00}$_{\pm0.00}$ & \textbf{0.917}$_{\pm0.035}$ & 18.3$_{\pm0.7}$\\
  & GlobMP (Ours) & \textbf{100.0} & \textbf{100.0}$_{\pm0.0}$ & \textbf{0.00}$_{\pm0.00}$ & 0.915$_{\pm0.071}$ & 0.4$_{\pm2.6}$ & \textbf{100.0} & \textbf{100.0}$_{\pm0.0}$ & \textbf{0.00}$_{\pm0.00}$ & \textbf{0.917}$_{\pm0.069}$ & 0.6$_{\pm1.5}$ & 85.0 & \textbf{85.0}$_{\pm35.7}$ & 0.20$_{\pm0.51}$ & 0.917$_{\pm0.071}$ & 1.7$_{\pm106.5}$\\
\midrule
9 & Diff.\ Policy$^{\dagger}$ & 0.0 & 0.0$_{\pm0.0}$ & 3.48$_{\pm1.58}$ & 0.997$_{\pm0.001}$ & 0.3 & 0.0 & 0.0$_{\pm0.0}$ & 3.48$_{\pm1.58}$ & 0.997$_{\pm0.001}$ & 0.6 & 0.0 & 0.0$_{\pm0.0}$ & 3.48$_{\pm1.58}$ & 0.997$_{\pm0.001}$ & 0.3\\
  & CD$^{\dagger}$ & 0.0 & 0.0$_{\pm0.0}$ & 2.86$_{\pm1.16}$ & 0.698$_{\pm0.034}$ & 1.2$_{\pm0.1}$ & 0.0 & 0.0$_{\pm0.0}$ & 2.86$_{\pm1.16}$ & 0.698$_{\pm0.034}$ & 1.2$_{\pm0.1}$ & 0.0 & 0.0$_{\pm0.0}$ & 2.86$_{\pm1.16}$ & 0.698$_{\pm0.034}$ & 1.2$_{\pm0.1}$\\
  & MMD-CBS$^{\dagger}$ & 5.0 & 0.0$_{\pm0.0}$ & --- & 0.901$_{\pm0.039}$ & 22.1$_{\pm32.2}$ & 5.0 & 0.0$_{\pm0.0}$ & --- & 0.901$_{\pm0.039}$ & 22.1$_{\pm32.2}$ & 5.0 & 5.0$_{\pm21.8}$ & --- & 0.901$_{\pm0.039}$ & 22.1$_{\pm32.2}$\\
  & MDOC$^{\dagger}$ & 0.0 & --- & --- & 0.000$_{\pm0.000}$ & 117.4$_{\pm19.9}$ & 0.0 & --- & --- & 0.000$_{\pm0.000}$ & 98.4$_{\pm14.9}$ & 0.0 & --- & --- & 0.000$_{\pm0.000}$ & 82.5$_{\pm4.3}$\\
  & PCD-LB & 0.0 & 0.0$_{\pm0.0}$ & 2.66$_{\pm1.28}$ & --- & 3.6$_{\pm0.1}$ & 0.0 & 0.0$_{\pm0.0}$ & 2.05$_{\pm0.97}$ & --- & 3.6$_{\pm0.1}$ & 0.0 & 0.0$_{\pm0.0}$ & 1.13$_{\pm0.50}$ & --- & 3.7$_{\pm0.1}$\\
  & PCD-SHD & 5.0 & 0.5$_{\pm2.4}$ & 1.54$_{\pm0.58}$ & --- & 3.6$_{\pm0.1}$ & 10.0 & 0.9$_{\pm3.4}$ & 1.19$_{\pm0.42}$ & --- & 3.6$_{\pm0.1}$ & 10.0 & 1.0$_{\pm3.8}$ & 1.04$_{\pm0.34}$ & --- & 3.7$_{\pm0.1}$\\
  & Final Proj. & 75.0 & 22.0$_{\pm28.3}$ & 0.03$_{\pm0.03}$ & 0.886$_{\pm0.024}$ & 4.9$_{\pm0.6}$ & 70.0 & 19.5$_{\pm27.8}$ & 0.03$_{\pm0.09}$ & 0.889$_{\pm0.025}$ & 4.9$_{\pm0.7}$ & 70.0 & 19.5$_{\pm27.8}$ & 0.00$_{\pm0.01}$ & 0.894$_{\pm0.024}$ & 2.6$_{\pm0.6}$\\
  & DiRecT & \textbf{95.0} & 76.7$_{\pm36.3}$ & \textbf{0.00}$_{\pm0.00}$ & \textbf{0.919}$_{\pm0.024}$ & 25.0$_{\pm0.8}$ & \textbf{90.0} & 67.1$_{\pm34.6}$ & \textbf{0.00}$_{\pm0.00}$ & \textbf{0.920}$_{\pm0.024}$ & 25.0$_{\pm0.7}$ & \textbf{90.0} & 60.2$_{\pm34.5}$ & \textbf{0.00}$_{\pm0.00}$ & \textbf{0.922}$_{\pm0.023}$ & 25.0$_{\pm0.9}$\\
  & GlobMP (Ours) & \textbf{95.0} & \textbf{95.0}$_{\pm21.8}$ & 0.35$_{\pm1.53}$ & 0.911$_{\pm0.064}$ & 1.8$_{\pm69.7}$ & 75.0 & \textbf{75.0}$_{\pm43.3}$ & 1.10$_{\pm2.07}$ & 0.909$_{\pm0.066}$ & 2.4$_{\pm128.5}$ & 75.0 & \textbf{75.0}$_{\pm43.3}$ & 0.85$_{\pm1.53}$ & 0.912$_{\pm0.064}$ & 12.5$_{\pm126.4}$\\
\midrule
12 & Diff.\ Policy$^{\dagger}$ & 0.0 & 0.0$_{\pm0.0}$ & 5.98$_{\pm1.89}$ & 0.997$_{\pm0.001}$ & 0.6 & 0.0 & 0.0$_{\pm0.0}$ & 5.98$_{\pm1.89}$ & 0.997$_{\pm0.001}$ & 0.3 & 0.0 & 0.0$_{\pm0.0}$ & 5.98$_{\pm1.89}$ & 0.997$_{\pm0.001}$ & 0.3\\
  & CD$^{\dagger}$ & 0.0 & 0.0$_{\pm0.0}$ & 5.62$_{\pm1.27}$ & 0.694$_{\pm0.024}$ & 1.2$_{\pm0.1}$ & 0.0 & 0.0$_{\pm0.0}$ & 5.62$_{\pm1.27}$ & 0.694$_{\pm0.024}$ & 1.2$_{\pm0.1}$ & 0.0 & 0.0$_{\pm0.0}$ & 5.62$_{\pm1.27}$ & 0.694$_{\pm0.024}$ & 0.6$_{\pm0.1}$\\
  & MMD-CBS$^{\dagger}$ & 0.0 & 0.0$_{\pm0.0}$ & --- & 0.921$_{\pm0.000}$ & 5.1$_{\pm12.7}$ & 0.0 & 0.0$_{\pm0.0}$ & --- & 0.921$_{\pm0.000}$ & 5.1$_{\pm12.7}$ & 0.0 & 0.0$_{\pm0.0}$ & --- & 0.921$_{\pm0.000}$ & 5.1$_{\pm12.7}$\\
  & MDOC$^{\dagger}$ & 0.0 & --- & --- & 0.000$_{\pm0.000}$ & 108.7$_{\pm7.5}$ & 0.0 & --- & --- & 0.000$_{\pm0.000}$ & 124.3$_{\pm6.0}$ & 0.0 & --- & --- & 0.000$_{\pm0.000}$ & 128.6$_{\pm6.3}$\\
  & PCD-LB & 0.0 & 0.0$_{\pm0.0}$ & 7.83$_{\pm1.80}$ & --- & 3.8$_{\pm0.1}$ & 0.0 & 0.0$_{\pm0.0}$ & 5.87$_{\pm1.42}$ & --- & 4.0$_{\pm0.1}$ & 0.0 & 0.0$_{\pm0.0}$ & 3.64$_{\pm0.99}$ & --- & 1.9$_{\pm0.2}$\\
  & PCD-SHD & 5.0 & 0.1$_{\pm0.3}$ & 2.98$_{\pm1.29}$ & --- & 3.9$_{\pm0.1}$ & 5.0 & 0.1$_{\pm0.3}$ & 2.48$_{\pm1.12}$ & --- & 2.0$_{\pm0.1}$ & 0.0 & 0.0$_{\pm0.0}$ & 2.15$_{\pm0.97}$ & --- & 4.0$_{\pm0.1}$\\
  & Final Proj. & 45.0 & 16.4$_{\pm29.4}$ & 0.09$_{\pm0.10}$ & 0.878$_{\pm0.036}$ & 6.1$_{\pm0.7}$ & 40.0 & 15.5$_{\pm30.9}$ & 0.04$_{\pm0.08}$ & 0.882$_{\pm0.035}$ & 6.1$_{\pm0.6}$ & 40.0 & 15.4$_{\pm31.0}$ & 0.00$_{\pm0.00}$ & 0.887$_{\pm0.033}$ & 6.1$_{\pm0.7}$\\
  & DiRecT & \textbf{95.0} & 63.2$_{\pm37.9}$ & \textbf{0.00}$_{\pm0.01}$ & \textbf{0.921}$_{\pm0.027}$ & 31.1$_{\pm0.7}$ & \textbf{95.0} & 54.1$_{\pm38.4}$ & \textbf{0.00}$_{\pm0.00}$ & \textbf{0.923}$_{\pm0.026}$ & 31.1$_{\pm0.7}$ & \textbf{95.0} & 49.1$_{\pm39.8}$ & \textbf{0.00}$_{\pm0.00}$ & \textbf{0.925}$_{\pm0.024}$ & 31.1$_{\pm0.7}$\\
  & GlobMP (Ours) & 85.0 & \textbf{85.0}$_{\pm35.7}$ & 1.15$_{\pm2.74}$ & 0.903$_{\pm0.067}$ & 4.5$_{\pm105.6}$ & 75.0 & \textbf{75.0}$_{\pm43.3}$ & 1.75$_{\pm3.05}$ & 0.905$_{\pm0.068}$ & 8.4$_{\pm127.3}$ & 70.0 & \textbf{70.0}$_{\pm45.8}$ & 1.75$_{\pm2.77}$ & 0.908$_{\pm0.066}$ & 17.1$_{\pm130.8}$\\
\midrule
15 & Diff.\ Policy$^{\dagger}$ & 0.0 & 0.0$_{\pm0.0}$ & 10.78$_{\pm2.90}$ & 0.996$_{\pm0.001}$ & 0.3 & 0.0 & 0.0$_{\pm0.0}$ & 10.78$_{\pm2.90}$ & 0.996$_{\pm0.001}$ & 0.3 & 0.0 & 0.0$_{\pm0.0}$ & 10.78$_{\pm2.90}$ & 0.996$_{\pm0.001}$ & 0.3\\
  & CD$^{\dagger}$ & 0.0 & 0.0$_{\pm0.0}$ & 10.72$_{\pm2.10}$ & 0.686$_{\pm0.021}$ & 1.2$_{\pm0.0}$ & 0.0 & 0.0$_{\pm0.0}$ & 10.72$_{\pm2.10}$ & 0.686$_{\pm0.021}$ & 1.2$_{\pm0.1}$ & 0.0 & 0.0$_{\pm0.0}$ & 10.72$_{\pm2.10}$ & 0.686$_{\pm0.021}$ & 1.2$_{\pm0.1}$\\
  & MMD-CBS$^{\dagger}$ & 0.0 & 0.0$_{\pm0.0}$ & --- & 0.844$_{\pm0.000}$ & 8.1$_{\pm17.6}$ & 0.0 & 0.0$_{\pm0.0}$ & --- & 0.844$_{\pm0.000}$ & 8.1$_{\pm17.6}$ & 0.0 & 0.0$_{\pm0.0}$ & --- & 0.844$_{\pm0.000}$ & 8.1$_{\pm17.6}$\\
  & MDOC$^{\dagger}$ & 0.0 & --- & --- & 0.000$_{\pm0.000}$ & 164.4$_{\pm13.3}$ & 0.0 & --- & --- & 0.000$_{\pm0.000}$ & 175.2$_{\pm19.9}$ & 0.0 & --- & --- & 0.000$_{\pm0.000}$ & 171.8$_{\pm26.1}$\\
  & PCD-LB & 0.0 & 0.0$_{\pm0.0}$ & 18.53$_{\pm1.79}$ & --- & 4.3$_{\pm0.1}$ & 0.0 & 0.0$_{\pm0.0}$ & 15.04$_{\pm1.50}$ & --- & 4.4$_{\pm0.1}$ & 0.0 & 0.0$_{\pm0.0}$ & 10.75$_{\pm1.19}$ & --- & 4.4$_{\pm0.1}$\\
  & PCD-SHD & 0.0 & 0.0$_{\pm0.0}$ & 5.32$_{\pm1.80}$ & --- & 4.3$_{\pm0.1}$ & 0.0 & 0.0$_{\pm0.0}$ & 4.44$_{\pm1.50}$ & --- & 4.3$_{\pm0.1}$ & 0.0 & 0.0$_{\pm0.0}$ & 3.73$_{\pm1.21}$ & --- & 4.4$_{\pm0.1}$\\
  & Final Proj. & 40.0 & 5.2$_{\pm13.1}$ & 0.22$_{\pm0.24}$ & 0.851$_{\pm0.030}$ & 4.3$_{\pm1.0}$ & 30.0 & 6.7$_{\pm17.7}$ & 0.08$_{\pm0.13}$ & 0.858$_{\pm0.030}$ & 7.8$_{\pm0.7}$ & 30.0 & 7.0$_{\pm18.3}$ & 0.02$_{\pm0.04}$ & 0.863$_{\pm0.028}$ & 7.8$_{\pm0.7}$\\
  & DiRecT & \textbf{100.0} & 60.2$_{\pm30.4}$ & \textbf{0.00}$_{\pm0.01}$ & \textbf{0.904}$_{\pm0.018}$ & 40.4$_{\pm0.7}$ & \textbf{95.0} & 39.7$_{\pm33.0}$ & \textbf{0.00}$_{\pm0.00}$ & \textbf{0.906}$_{\pm0.017}$ & 40.4$_{\pm0.7}$ & \textbf{90.0} & 33.6$_{\pm32.8}$ & \textbf{0.00}$_{\pm0.00}$ & \textbf{0.908}$_{\pm0.017}$ & 40.5$_{\pm0.9}$\\
  & GlobMP (Ours) & 75.0 & \textbf{75.0}$_{\pm43.3}$ & 2.85$_{\pm4.99}$ & 0.892$_{\pm0.074}$ & 3.9$_{\pm127.2}$ & 65.0 & \textbf{65.0}$_{\pm47.7}$ & 3.70$_{\pm5.45}$ & 0.895$_{\pm0.072}$ & 36.9$_{\pm133.7}$ & 65.0 & \textbf{65.0}$_{\pm47.7}$ & 3.15$_{\pm4.63}$ & 0.899$_{\pm0.072}$ & 93.9$_{\pm129.5}$\\
\midrule
18 & Diff.\ Policy$^{\dagger}$ & 0.0 & 0.0$_{\pm0.0}$ & 15.87$_{\pm4.43}$ & 0.996$_{\pm0.001}$ & 0.3 & 0.0 & 0.0$_{\pm0.0}$ & 15.87$_{\pm4.43}$ & 0.996$_{\pm0.001}$ & 0.3 & 0.0 & 0.0$_{\pm0.0}$ & 15.87$_{\pm4.43}$ & 0.996$_{\pm0.001}$ & 0.3\\
  & CD$^{\dagger}$ & 0.0 & 0.0$_{\pm0.0}$ & 16.96$_{\pm3.44}$ & 0.684$_{\pm0.022}$ & 1.3$_{\pm0.1}$ & 0.0 & 0.0$_{\pm0.0}$ & 16.96$_{\pm3.44}$ & 0.684$_{\pm0.022}$ & 1.3$_{\pm0.1}$ & 0.0 & 0.0$_{\pm0.0}$ & 16.96$_{\pm3.44}$ & 0.684$_{\pm0.022}$ & 1.3$_{\pm0.1}$\\
  & MMD-CBS$^{\dagger}$ & 0.0 & 0.0$_{\pm0.0}$ & --- & 0.898$_{\pm0.016}$ & 8.5$_{\pm18.7}$ & 0.0 & 0.0$_{\pm0.0}$ & --- & 0.898$_{\pm0.016}$ & 8.5$_{\pm18.7}$ & 0.0 & 0.0$_{\pm0.0}$ & --- & 0.898$_{\pm0.016}$ & 8.5$_{\pm18.7}$\\
  & MDOC$^{\dagger}$ & 0.0 & --- & --- & 0.000$_{\pm0.000}$ & 147.2$_{\pm7.4}$ & 0.0 & --- & --- & 0.000$_{\pm0.000}$ & 157.6$_{\pm7.0}$ & 0.0 & --- & --- & 0.000$_{\pm0.000}$ & 159.9$_{\pm7.2}$\\
  & PCD-LB & 0.0 & 0.0$_{\pm0.0}$ & 30.77$_{\pm4.62}$ & --- & 4.4$_{\pm0.1}$ & 0.0 & 0.0$_{\pm0.0}$ & 24.93$_{\pm3.85}$ & --- & 4.5$_{\pm0.1}$ & 0.0 & 0.0$_{\pm0.0}$ & 18.60$_{\pm2.71}$ & --- & 4.6$_{\pm0.1}$\\
  & PCD-SHD & 0.0 & 0.0$_{\pm0.0}$ & 8.11$_{\pm2.72}$ & --- & 4.4$_{\pm0.1}$ & 0.0 & 0.0$_{\pm0.0}$ & 6.79$_{\pm2.30}$ & --- & 4.4$_{\pm0.1}$ & 0.0 & 0.0$_{\pm0.0}$ & 5.77$_{\pm1.97}$ & --- & 4.5$_{\pm0.1}$\\
  & Final Proj. & 20.0 & 6.6$_{\pm18.2}$ & 0.38$_{\pm0.36}$ & 0.837$_{\pm0.040}$ & 10.2$_{\pm0.7}$ & 20.0 & 6.6$_{\pm15.8}$ & 0.09$_{\pm0.09}$ & 0.846$_{\pm0.038}$ & 8.8$_{\pm2.0}$ & 20.0 & 5.5$_{\pm13.1}$ & 0.02$_{\pm0.03}$ & 0.853$_{\pm0.038}$ & 10.2$_{\pm0.8}$\\
  & DiRecT & \textbf{90.0} & 43.4$_{\pm34.4}$ & \textbf{0.03}$_{\pm0.04}$ & \textbf{0.902}$_{\pm0.026}$ & 53.3$_{\pm0.9}$ & \textbf{90.0} & 30.7$_{\pm30.5}$ & \textbf{0.00}$_{\pm0.00}$ & \textbf{0.905}$_{\pm0.025}$ & 53.3$_{\pm0.9}$ & \textbf{85.0} & 23.4$_{\pm28.2}$ & \textbf{0.00}$_{\pm0.00}$ & \textbf{0.908}$_{\pm0.025}$ & 53.1$_{\pm0.7}$\\
  & GlobMP (Ours) & 65.0 & \textbf{65.0}$_{\pm47.7}$ & 6.05$_{\pm8.70}$ & 0.883$_{\pm0.071}$ & 28.6$_{\pm135.8}$ & 60.0 & \textbf{60.0}$_{\pm49.0}$ & 7.85$_{\pm10.09}$ & 0.892$_{\pm0.070}$ & 11.2$_{\pm143.1}$ & 40.0 & \textbf{40.0}$_{\pm49.0}$ & 9.95$_{\pm8.68}$ & 0.894$_{\pm0.068}$ & 300.0$_{\pm137.3}$\\
\midrule
21 & Diff.\ Policy$^{\dagger}$ & 0.0 & 0.0$_{\pm0.0}$ & 20.72$_{\pm5.04}$ & 0.997$_{\pm0.001}$ & 0.3 & 0.0 & 0.0$_{\pm0.0}$ & 20.72$_{\pm5.04}$ & 0.997$_{\pm0.001}$ & 0.3 & 0.0 & 0.0$_{\pm0.0}$ & 20.72$_{\pm5.04}$ & 0.997$_{\pm0.001}$ & 0.3\\
  & CD$^{\dagger}$ & 0.0 & 0.0$_{\pm0.0}$ & 24.06$_{\pm3.00}$ & 0.692$_{\pm0.016}$ & 1.3$_{\pm0.0}$ & 0.0 & 0.0$_{\pm0.0}$ & 24.06$_{\pm3.00}$ & 0.692$_{\pm0.016}$ & 0.8$_{\pm0.2}$ & 0.0 & 0.0$_{\pm0.0}$ & 24.06$_{\pm3.00}$ & 0.692$_{\pm0.016}$ & 1.3$_{\pm0.0}$\\
  & MMD-CBS$^{\dagger}$ & 0.0 & 0.0$_{\pm0.0}$ & --- & 0.954$_{\pm0.015}$ & 19.8$_{\pm41.4}$ & 0.0 & 0.0$_{\pm0.0}$ & --- & 0.954$_{\pm0.015}$ & 19.8$_{\pm41.4}$ & 0.0 & 0.0$_{\pm0.0}$ & --- & 0.954$_{\pm0.015}$ & 19.8$_{\pm41.4}$\\
  & MDOC$^{\dagger}$ & 0.0 & --- & --- & 0.000$_{\pm0.000}$ & 183.7$_{\pm8.7}$ & 0.0 & --- & --- & 0.000$_{\pm0.000}$ & 180.8$_{\pm8.7}$ & 0.0 & --- & --- & 0.000$_{\pm0.000}$ & 179.5$_{\pm8.6}$\\
  & PCD-LB & 0.0 & 0.0$_{\pm0.0}$ & 42.63$_{\pm4.11}$ & --- & 5.0$_{\pm0.2}$ & 0.0 & 0.0$_{\pm0.0}$ & 34.30$_{\pm3.31}$ & --- & 5.1$_{\pm0.2}$ & 0.0 & 0.0$_{\pm0.0}$ & 26.27$_{\pm2.54}$ & --- & 4.4$_{\pm1.1}$\\
  & PCD-SHD & 0.0 & 0.0$_{\pm0.0}$ & 10.66$_{\pm3.55}$ & --- & 4.8$_{\pm0.1}$ & 0.0 & 0.0$_{\pm0.0}$ & 9.02$_{\pm2.99}$ & --- & 4.9$_{\pm0.1}$ & 0.0 & 0.0$_{\pm0.0}$ & 7.59$_{\pm2.61}$ & --- & 2.5$_{\pm0.2}$\\
  & Final Proj. & 25.0 & 8.2$_{\pm21.8}$ & 0.70$_{\pm0.81}$ & 0.832$_{\pm0.033}$ & 13.0$_{\pm0.8}$ & 30.0 & 6.7$_{\pm19.6}$ & 0.23$_{\pm0.30}$ & 0.840$_{\pm0.032}$ & 9.6$_{\pm2.9}$ & 35.0 & 6.7$_{\pm19.5}$ & 0.07$_{\pm0.12}$ & 0.846$_{\pm0.032}$ & 13.1$_{\pm0.8}$\\
  & DiRecT & \textbf{100.0} & \textbf{52.2}$_{\pm29.1}$ & \textbf{0.04}$_{\pm0.06}$ & \textbf{0.906}$_{\pm0.024}$ & 68.5$_{\pm0.7}$ & \textbf{100.0} & 35.5$_{\pm29.5}$ & \textbf{0.00}$_{\pm0.00}$ & \textbf{0.910}$_{\pm0.022}$ & 68.5$_{\pm0.7}$ & \textbf{85.0} & 26.8$_{\pm29.7}$ & \textbf{0.00}$_{\pm0.00}$ & \textbf{0.913}$_{\pm0.021}$ & 68.5$_{\pm0.7}$\\
  & GlobMP (Ours) & 45.0 & 45.0$_{\pm49.7}$ & 12.90$_{\pm12.22}$ & 0.886$_{\pm0.077}$ & 300.0$_{\pm141.1}$ & 60.0 & \textbf{60.0}$_{\pm49.0}$ & 8.45$_{\pm10.77}$ & 0.900$_{\pm0.068}$ & 33.5$_{\pm137.9}$ & 70.0 & \textbf{70.0}$_{\pm45.8}$ & 5.60$_{\pm8.79}$ & 0.899$_{\pm0.071}$ & 83.2$_{\pm116.2}$\\
\bottomrule
\end{tabular}}
\end{center}
\end{table}

\begin{table}[p]
\caption{Multi-robot planning performance on \texttt{Cross-Room} for $N_a\in\{3,6,\dots,21\}$ under the Low, Medium and High velocity caps. \emph{SR} and \emph{CS} measure trajectory feasibility, \emph{Coll} reports inter-robot collisions, \emph{DA} adherence to the environment-specific motion pattern, and \emph{Time} the planning time per configuration: for \emph{GlobMP (Ours)} the median over the configurations with the standard deviation as subscript, for the baselines the mean, so the two are not directly comparable. $^{\dagger}$ marks a method that applies no velocity limit, so its three blocks report the same run; MMD-CBS also returns one plan per configuration rather than $64$ candidates. A dash under \emph{Coll} marks a count that is not commensurable with the others, as MDOC and MMD-CBS count colliding time steps where every other row counts agent pairs with any collision; MDOC's \emph{CS} is omitted because its plans that do not cover every agent were counted as feasible. MMD-CBS plans under $60$\,s per configuration and \emph{GlobMP (Ours)} under $300$\,s per team.}
\label{tab:hard-crossroom}
\begin{center}\tiny\setlength{\tabcolsep}{2.2pt}
\resizebox{\linewidth}{!}{%
\begin{tabular}{ll ccccc ccccc ccccc}
\toprule
& & \multicolumn{5}{c}{Low ($v_{\max}=0.928$)} & \multicolumn{5}{c}{Medium ($v_{\max}=1.13$)} & \multicolumn{5}{c}{High ($v_{\max}=1.34$)}\\
\cmidrule(lr){3-7}\cmidrule(lr){8-12}\cmidrule(lr){13-17}
$N_a$ & Method & SR$\uparrow$ & CS$\uparrow$ & Coll$\downarrow$ & DA$\uparrow$ & Time$\downarrow$ & SR$\uparrow$ & CS$\uparrow$ & Coll$\downarrow$ & DA$\uparrow$ & Time$\downarrow$ & SR$\uparrow$ & CS$\uparrow$ & Coll$\downarrow$ & DA$\uparrow$ & Time$\downarrow$\\
\midrule
3 & Diff.\ Policy$^{\dagger}$ & 0.0 & 0.0$_{\pm0.0}$ & 0.30$_{\pm0.55}$ & 0.985$_{\pm0.007}$ & 0.6 & 0.0 & 0.0$_{\pm0.0}$ & 0.30$_{\pm0.55}$ & 0.985$_{\pm0.007}$ & 0.5 & 0.0 & 0.0$_{\pm0.0}$ & 0.30$_{\pm0.55}$ & 0.985$_{\pm0.007}$ & 0.5\\
  & CD$^{\dagger}$ & 20.0 & 5.6$_{\pm15.6}$ & 0.02$_{\pm0.05}$ & 0.662$_{\pm0.070}$ & 1.1$_{\pm0.0}$ & 20.0 & 9.3$_{\pm25.5}$ & 0.02$_{\pm0.05}$ & 0.662$_{\pm0.070}$ & 1.1$_{\pm0.1}$ & 20.0 & 10.1$_{\pm27.3}$ & 0.02$_{\pm0.05}$ & 0.662$_{\pm0.070}$ & 1.1$_{\pm0.0}$\\
  & MMD-CBS$^{\dagger}$ & 10.0 & 10.0$_{\pm30.0}$ & --- & 0.887$_{\pm0.049}$ & 0.7$_{\pm0.4}$ & 10.0 & 10.0$_{\pm30.0}$ & --- & 0.887$_{\pm0.049}$ & 0.7$_{\pm0.4}$ & 10.0 & 10.0$_{\pm30.0}$ & --- & 0.887$_{\pm0.049}$ & 0.7$_{\pm0.4}$\\
  & MDOC$^{\dagger}$ & 0.0 & --- & --- & 0.000$_{\pm0.000}$ & 37.9$_{\pm2.4}$ & 0.0 & --- & --- & 0.000$_{\pm0.000}$ & 36.2$_{\pm2.5}$ & 0.0 & --- & --- & 0.000$_{\pm0.000}$ & 35.4$_{\pm2.6}$\\
  & PCD-LB & 55.0 & 16.6$_{\pm30.6}$ & 0.05$_{\pm0.17}$ & --- & 2.7$_{\pm0.2}$ & 45.0 & 15.6$_{\pm30.8}$ & 0.05$_{\pm0.15}$ & --- & 1.5$_{\pm0.1}$ & 45.0 & 16.3$_{\pm30.6}$ & 0.04$_{\pm0.14}$ & --- & 1.5$_{\pm0.1}$\\
  & PCD-SHD & 10.0 & 4.0$_{\pm16.3}$ & 0.00$_{\pm0.01}$ & --- & 2.0$_{\pm0.6}$ & 10.0 & 3.8$_{\pm15.3}$ & 0.00$_{\pm0.01}$ & --- & 1.6$_{\pm0.4}$ & 10.0 & 4.0$_{\pm16.3}$ & 0.00$_{\pm0.01}$ & --- & 1.5$_{\pm0.1}$\\
  & Final Proj. & 5.0 & 2.7$_{\pm11.9}$ & 0.01$_{\pm0.02}$ & \textbf{0.910}$_{\pm0.035}$ & 4.4$_{\pm0.9}$ & 5.0 & 2.7$_{\pm11.9}$ & 0.01$_{\pm0.01}$ & \textbf{0.907}$_{\pm0.035}$ & 4.4$_{\pm0.9}$ & 5.0 & 2.7$_{\pm11.9}$ & 0.00$_{\pm0.01}$ & \textbf{0.909}$_{\pm0.035}$ & 2.4$_{\pm1.0}$\\
  & DiRecT & 10.0 & 5.0$_{\pm21.1}$ & 0.03$_{\pm0.08}$ & 0.899$_{\pm0.038}$ & 33.1$_{\pm6.8}$ & 10.0 & 5.0$_{\pm21.1}$ & 0.00$_{\pm0.01}$ & 0.902$_{\pm0.038}$ & 30.4$_{\pm8.1}$ & 10.0 & 4.8$_{\pm20.8}$ & 0.00$_{\pm0.00}$ & 0.904$_{\pm0.038}$ & 29.1$_{\pm8.4}$\\
  & GlobMP (Ours) & \textbf{100.0} & \textbf{100.0}$_{\pm0.0}$ & \textbf{0.00}$_{\pm0.00}$ & 0.804$_{\pm0.142}$ & 0.4$_{\pm0.3}$ & \textbf{100.0} & \textbf{100.0}$_{\pm0.0}$ & \textbf{0.00}$_{\pm0.00}$ & 0.802$_{\pm0.133}$ & 0.4$_{\pm0.3}$ & \textbf{100.0} & \textbf{100.0}$_{\pm0.0}$ & \textbf{0.00}$_{\pm0.00}$ & 0.809$_{\pm0.143}$ & 0.6$_{\pm0.7}$\\
\midrule
6 & Diff.\ Policy$^{\dagger}$ & 0.0 & 0.0$_{\pm0.0}$ & 1.59$_{\pm0.98}$ & 0.985$_{\pm0.004}$ & 0.6 & 0.0 & 0.0$_{\pm0.0}$ & 1.59$_{\pm0.98}$ & 0.985$_{\pm0.004}$ & 0.6 & 0.0 & 0.0$_{\pm0.0}$ & 1.59$_{\pm0.98}$ & 0.985$_{\pm0.004}$ & 0.6\\
  & CD$^{\dagger}$ & 10.0 & 1.2$_{\pm5.4}$ & 0.40$_{\pm0.41}$ & 0.671$_{\pm0.041}$ & 1.1$_{\pm0.1}$ & 10.0 & 3.2$_{\pm14.0}$ & 0.40$_{\pm0.41}$ & 0.671$_{\pm0.041}$ & 0.5$_{\pm0.1}$ & 10.0 & 3.7$_{\pm16.0}$ & 0.40$_{\pm0.41}$ & 0.671$_{\pm0.041}$ & 1.1$_{\pm0.0}$\\
  & MMD-CBS$^{\dagger}$ & 5.0 & 0.0$_{\pm0.0}$ & --- & 0.849$_{\pm0.000}$ & 1.0$_{\pm1.1}$ & 5.0 & 0.0$_{\pm0.0}$ & --- & 0.849$_{\pm0.000}$ & 1.0$_{\pm1.1}$ & 5.0 & 0.0$_{\pm0.0}$ & --- & 0.849$_{\pm0.000}$ & 1.0$_{\pm1.1}$\\
  & MDOC$^{\dagger}$ & 0.0 & --- & --- & 0.000$_{\pm0.000}$ & 66.3$_{\pm3.8}$ & 0.0 & --- & --- & 0.000$_{\pm0.000}$ & 64.4$_{\pm6.6}$ & 0.0 & --- & --- & 0.000$_{\pm0.000}$ & 61.7$_{\pm6.2}$\\
  & PCD-LB & 25.0 & 8.6$_{\pm22.8}$ & 1.01$_{\pm0.48}$ & --- & 2.7$_{\pm0.7}$ & 20.0 & 7.7$_{\pm22.1}$ & 1.07$_{\pm0.46}$ & --- & 2.5$_{\pm0.5}$ & 40.0 & 7.7$_{\pm21.9}$ & 1.01$_{\pm0.46}$ & --- & 1.9$_{\pm0.1}$\\
  & PCD-SHD & 0.0 & 0.0$_{\pm0.0}$ & 0.17$_{\pm0.26}$ & --- & 2.1$_{\pm0.6}$ & 0.0 & 0.0$_{\pm0.0}$ & 0.14$_{\pm0.20}$ & --- & 2.1$_{\pm0.7}$ & 0.0 & 0.0$_{\pm0.0}$ & 0.12$_{\pm0.16}$ & --- & 1.8$_{\pm0.1}$\\
  & Final Proj. & 0.0 & 0.0$_{\pm0.0}$ & 0.08$_{\pm0.12}$ & 0.906$_{\pm0.029}$ & 3.9$_{\pm0.9}$ & 0.0 & 0.0$_{\pm0.0}$ & 0.07$_{\pm0.11}$ & 0.902$_{\pm0.030}$ & 3.9$_{\pm0.9}$ & 0.0 & 0.0$_{\pm0.0}$ & 0.06$_{\pm0.09}$ & 0.903$_{\pm0.031}$ & 3.9$_{\pm0.8}$\\
  & DiRecT & 0.0 & 0.0$_{\pm0.0}$ & 0.07$_{\pm0.14}$ & \textbf{0.908}$_{\pm0.031}$ & 24.5$_{\pm7.3}$ & 0.0 & 0.0$_{\pm0.0}$ & 0.03$_{\pm0.11}$ & \textbf{0.908}$_{\pm0.033}$ & 24.4$_{\pm7.3}$ & 0.0 & 0.0$_{\pm0.0}$ & 0.01$_{\pm0.02}$ & \textbf{0.909}$_{\pm0.034}$ & 27.1$_{\pm7.2}$\\
  & GlobMP (Ours) & \textbf{100.0} & \textbf{100.0}$_{\pm0.0}$ & \textbf{0.00}$_{\pm0.00}$ & 0.783$_{\pm0.133}$ & 1.2$_{\pm52.1}$ & \textbf{100.0} & \textbf{100.0}$_{\pm0.0}$ & \textbf{0.00}$_{\pm0.00}$ & 0.786$_{\pm0.136}$ & 1.5$_{\pm1.3}$ & \textbf{100.0} & \textbf{100.0}$_{\pm0.0}$ & \textbf{0.00}$_{\pm0.00}$ & 0.787$_{\pm0.133}$ & 2.7$_{\pm2.3}$\\
\midrule
9 & Diff.\ Policy$^{\dagger}$ & 0.0 & 0.0$_{\pm0.0}$ & 3.23$_{\pm1.34}$ & 0.985$_{\pm0.003}$ & 0.6 & 0.0 & 0.0$_{\pm0.0}$ & 3.23$_{\pm1.34}$ & 0.985$_{\pm0.003}$ & 0.6 & 0.0 & 0.0$_{\pm0.0}$ & 3.23$_{\pm1.34}$ & 0.985$_{\pm0.003}$ & 0.3\\
  & CD$^{\dagger}$ & 0.0 & 0.0$_{\pm0.0}$ & 1.42$_{\pm0.71}$ & 0.659$_{\pm0.030}$ & 1.1$_{\pm0.1}$ & 0.0 & 0.0$_{\pm0.0}$ & 1.42$_{\pm0.71}$ & 0.659$_{\pm0.030}$ & 1.1$_{\pm0.1}$ & 0.0 & 0.0$_{\pm0.0}$ & 1.42$_{\pm0.71}$ & 0.659$_{\pm0.030}$ & 1.1$_{\pm0.1}$\\
  & MMD-CBS$^{\dagger}$ & 0.0 & 0.0$_{\pm0.0}$ & --- & --- & 1.0$_{\pm0.8}$ & 0.0 & 0.0$_{\pm0.0}$ & --- & --- & 1.0$_{\pm0.8}$ & 0.0 & 0.0$_{\pm0.0}$ & --- & --- & 1.0$_{\pm0.8}$\\
  & MDOC$^{\dagger}$ & 0.0 & --- & --- & 0.000$_{\pm0.000}$ & 94.2$_{\pm4.1}$ & 0.0 & --- & --- & 0.000$_{\pm0.000}$ & 92.9$_{\pm3.9}$ & 0.0 & --- & --- & 0.000$_{\pm0.000}$ & 93.0$_{\pm6.6}$\\
  & PCD-LB & 0.0 & 0.0$_{\pm0.0}$ & 3.96$_{\pm1.41}$ & --- & 2.1$_{\pm0.4}$ & 0.0 & 0.0$_{\pm0.0}$ & 4.11$_{\pm1.38}$ & --- & 3.1$_{\pm0.7}$ & 0.0 & 0.0$_{\pm0.0}$ & 3.68$_{\pm1.18}$ & --- & 2.2$_{\pm0.1}$\\
  & PCD-SHD & 0.0 & 0.0$_{\pm0.0}$ & 0.28$_{\pm0.27}$ & --- & 2.0$_{\pm0.1}$ & 0.0 & 0.0$_{\pm0.0}$ & 0.26$_{\pm0.22}$ & --- & 2.1$_{\pm0.1}$ & 0.0 & 0.0$_{\pm0.0}$ & 0.22$_{\pm0.16}$ & --- & 2.0$_{\pm0.1}$\\
  & Final Proj. & 0.0 & 0.0$_{\pm0.0}$ & 0.14$_{\pm0.17}$ & 0.895$_{\pm0.032}$ & 2.8$_{\pm0.8}$ & 0.0 & 0.0$_{\pm0.0}$ & 0.11$_{\pm0.13}$ & 0.891$_{\pm0.034}$ & 5.1$_{\pm0.9}$ & 0.0 & 0.0$_{\pm0.0}$ & 0.08$_{\pm0.10}$ & 0.893$_{\pm0.034}$ & 5.1$_{\pm1.0}$\\
  & DiRecT & 0.0 & 0.0$_{\pm0.0}$ & 0.20$_{\pm0.26}$ & \textbf{0.902}$_{\pm0.027}$ & 35.4$_{\pm9.8}$ & 0.0 & 0.0$_{\pm0.0}$ & 0.06$_{\pm0.08}$ & \textbf{0.906}$_{\pm0.028}$ & 36.8$_{\pm9.8}$ & 0.0 & 0.0$_{\pm0.0}$ & 0.02$_{\pm0.03}$ & \textbf{0.908}$_{\pm0.027}$ & 38.5$_{\pm9.6}$\\
  & GlobMP (Ours) & \textbf{100.0} & \textbf{100.0}$_{\pm0.0}$ & \textbf{0.00}$_{\pm0.00}$ & 0.782$_{\pm0.133}$ & 2.3$_{\pm1.1}$ & \textbf{100.0} & \textbf{100.0}$_{\pm0.0}$ & \textbf{0.00}$_{\pm0.00}$ & 0.783$_{\pm0.124}$ & 3.3$_{\pm3.0}$ & \textbf{100.0} & \textbf{100.0}$_{\pm0.0}$ & \textbf{0.00}$_{\pm0.00}$ & 0.782$_{\pm0.132}$ & 7.3$_{\pm20.9}$\\
\midrule
12 & Diff.\ Policy$^{\dagger}$ & 0.0 & 0.0$_{\pm0.0}$ & 7.06$_{\pm2.70}$ & 0.985$_{\pm0.003}$ & 0.3 & 0.0 & 0.0$_{\pm0.0}$ & 7.06$_{\pm2.70}$ & 0.985$_{\pm0.003}$ & 0.6 & 0.0 & 0.0$_{\pm0.0}$ & 7.06$_{\pm2.70}$ & 0.985$_{\pm0.003}$ & 0.6\\
  & CD$^{\dagger}$ & 0.0 & 0.0$_{\pm0.0}$ & 4.08$_{\pm1.26}$ & 0.663$_{\pm0.020}$ & 1.2$_{\pm0.1}$ & 0.0 & 0.0$_{\pm0.0}$ & 4.08$_{\pm1.26}$ & 0.663$_{\pm0.020}$ & 1.2$_{\pm0.1}$ & 0.0 & 0.0$_{\pm0.0}$ & 4.08$_{\pm1.26}$ & 0.663$_{\pm0.020}$ & 0.6$_{\pm0.1}$\\
  & MMD-CBS$^{\dagger}$ & 0.0 & 0.0$_{\pm0.0}$ & --- & --- & 1.1$_{\pm1.2}$ & 0.0 & 0.0$_{\pm0.0}$ & --- & --- & 1.1$_{\pm1.2}$ & 0.0 & 0.0$_{\pm0.0}$ & --- & --- & 1.1$_{\pm1.2}$\\
  & MDOC$^{\dagger}$ & 0.0 & --- & --- & 0.000$_{\pm0.000}$ & 122.0$_{\pm5.3}$ & 0.0 & --- & --- & 0.000$_{\pm0.000}$ & 110.7$_{\pm5.9}$ & 0.0 & --- & --- & 0.000$_{\pm0.000}$ & 108.8$_{\pm4.8}$\\
  & PCD-LB & 0.0 & 0.0$_{\pm0.0}$ & 9.93$_{\pm1.54}$ & --- & 3.4$_{\pm0.9}$ & 0.0 & 0.0$_{\pm0.0}$ & 10.31$_{\pm1.59}$ & --- & 2.4$_{\pm0.1}$ & 0.0 & 0.0$_{\pm0.0}$ & 9.16$_{\pm1.46}$ & --- & 4.5$_{\pm0.1}$\\
  & PCD-SHD & 0.0 & 0.0$_{\pm0.0}$ & 0.60$_{\pm0.36}$ & --- & 4.2$_{\pm0.1}$ & 0.0 & 0.0$_{\pm0.0}$ & 0.53$_{\pm0.26}$ & --- & 2.4$_{\pm0.3}$ & 0.0 & 0.0$_{\pm0.0}$ & 0.46$_{\pm0.26}$ & --- & 4.1$_{\pm0.8}$\\
  & Final Proj. & 0.0 & 0.0$_{\pm0.0}$ & 0.41$_{\pm0.42}$ & 0.881$_{\pm0.026}$ & 5.4$_{\pm2.1}$ & 0.0 & 0.0$_{\pm0.0}$ & 0.33$_{\pm0.32}$ & 0.876$_{\pm0.028}$ & 7.5$_{\pm1.0}$ & 0.0 & 0.0$_{\pm0.0}$ & 0.33$_{\pm0.28}$ & 0.877$_{\pm0.029}$ & 7.5$_{\pm0.9}$\\
  & DiRecT & 0.0 & 0.0$_{\pm0.0}$ & 0.44$_{\pm0.48}$ & \textbf{0.898}$_{\pm0.020}$ & 49.1$_{\pm12.6}$ & 0.0 & 0.0$_{\pm0.0}$ & 0.10$_{\pm0.14}$ & \textbf{0.900}$_{\pm0.020}$ & 49.2$_{\pm12.6}$ & 0.0 & 0.0$_{\pm0.0}$ & 0.04$_{\pm0.05}$ & \textbf{0.902}$_{\pm0.020}$ & 49.8$_{\pm12.5}$\\
  & GlobMP (Ours) & \textbf{100.0} & \textbf{100.0}$_{\pm0.0}$ & \textbf{0.00}$_{\pm0.00}$ & 0.773$_{\pm0.140}$ & 3.1$_{\pm57.3}$ & \textbf{100.0} & \textbf{100.0}$_{\pm0.0}$ & \textbf{0.00}$_{\pm0.00}$ & 0.778$_{\pm0.135}$ & 5.8$_{\pm11.5}$ & \textbf{100.0} & \textbf{100.0}$_{\pm0.0}$ & \textbf{0.00}$_{\pm0.00}$ & 0.778$_{\pm0.140}$ & 9.8$_{\pm42.3}$\\
\midrule
15 & Diff.\ Policy$^{\dagger}$ & 0.0 & 0.0$_{\pm0.0}$ & 9.58$_{\pm2.09}$ & 0.985$_{\pm0.003}$ & 0.6 & 0.0 & 0.0$_{\pm0.0}$ & 9.58$_{\pm2.09}$ & 0.985$_{\pm0.003}$ & 0.6 & 0.0 & 0.0$_{\pm0.0}$ & 9.58$_{\pm2.09}$ & 0.985$_{\pm0.003}$ & 0.6\\
  & CD$^{\dagger}$ & 0.0 & 0.0$_{\pm0.0}$ & 6.69$_{\pm1.73}$ & 0.662$_{\pm0.016}$ & 1.2$_{\pm0.1}$ & 0.0 & 0.0$_{\pm0.0}$ & 6.69$_{\pm1.73}$ & 0.662$_{\pm0.016}$ & 1.2$_{\pm0.1}$ & 0.0 & 0.0$_{\pm0.0}$ & 6.69$_{\pm1.73}$ & 0.662$_{\pm0.016}$ & 1.2$_{\pm0.1}$\\
  & MMD-CBS$^{\dagger}$ & 0.0 & 0.0$_{\pm0.0}$ & --- & --- & 1.0$_{\pm0.9}$ & 0.0 & 0.0$_{\pm0.0}$ & --- & --- & 1.0$_{\pm0.9}$ & 0.0 & 0.0$_{\pm0.0}$ & --- & --- & 1.0$_{\pm0.9}$\\
  & MDOC$^{\dagger}$ & 0.0 & --- & --- & 0.000$_{\pm0.000}$ & 140.7$_{\pm6.7}$ & 0.0 & --- & --- & 0.000$_{\pm0.000}$ & 134.0$_{\pm6.8}$ & 0.0 & --- & --- & 0.000$_{\pm0.000}$ & 134.2$_{\pm6.1}$\\
  & PCD-LB & 0.0 & 0.0$_{\pm0.0}$ & 20.03$_{\pm2.29}$ & --- & 3.0$_{\pm0.6}$ & 0.0 & 0.0$_{\pm0.0}$ & 20.97$_{\pm2.25}$ & --- & 2.9$_{\pm0.1}$ & 0.0 & 0.0$_{\pm0.0}$ & 19.26$_{\pm1.97}$ & --- & 3.0$_{\pm0.4}$\\
  & PCD-SHD & 0.0 & 0.0$_{\pm0.0}$ & 0.84$_{\pm0.37}$ & --- & 3.9$_{\pm1.1}$ & 0.0 & 0.0$_{\pm0.0}$ & 0.78$_{\pm0.35}$ & --- & 2.4$_{\pm0.1}$ & 0.0 & 0.0$_{\pm0.0}$ & 0.68$_{\pm0.31}$ & --- & 2.4$_{\pm0.1}$\\
  & Final Proj. & 0.0 & 0.0$_{\pm0.0}$ & 0.59$_{\pm0.46}$ & 0.872$_{\pm0.019}$ & 7.9$_{\pm2.4}$ & 0.0 & 0.0$_{\pm0.0}$ & 0.51$_{\pm0.37}$ & 0.867$_{\pm0.020}$ & 11.2$_{\pm0.9}$ & 0.0 & 0.0$_{\pm0.0}$ & 0.44$_{\pm0.31}$ & 0.868$_{\pm0.020}$ & 8.8$_{\pm2.8}$\\
  & DiRecT & 0.0 & 0.0$_{\pm0.0}$ & 0.64$_{\pm0.50}$ & \textbf{0.892}$_{\pm0.014}$ & 68.2$_{\pm15.4}$ & 0.0 & 0.0$_{\pm0.0}$ & 0.20$_{\pm0.24}$ & \textbf{0.894}$_{\pm0.014}$ & 69.0$_{\pm15.5}$ & 0.0 & 0.0$_{\pm0.0}$ & \textbf{0.07}$_{\pm0.08}$ & \textbf{0.896}$_{\pm0.014}$ & 75.8$_{\pm12.1}$\\
  & GlobMP (Ours) & \textbf{100.0} & \textbf{100.0}$_{\pm0.0}$ & \textbf{0.00}$_{\pm0.00}$ & 0.774$_{\pm0.142}$ & 3.9$_{\pm23.9}$ & \textbf{100.0} & \textbf{100.0}$_{\pm0.0}$ & \textbf{0.00}$_{\pm0.00}$ & 0.772$_{\pm0.134}$ & 9.2$_{\pm44.7}$ & \textbf{95.0} & \textbf{95.0}$_{\pm21.8}$ & 0.80$_{\pm3.49}$ & 0.775$_{\pm0.139}$ & 21.9$_{\pm62.5}$\\
\midrule
18 & Diff.\ Policy$^{\dagger}$ & 0.0 & 0.0$_{\pm0.0}$ & 13.49$_{\pm2.49}$ & 0.985$_{\pm0.002}$ & 0.3 & 0.0 & 0.0$_{\pm0.0}$ & 13.49$_{\pm2.49}$ & 0.985$_{\pm0.002}$ & 0.6 & 0.0 & 0.0$_{\pm0.0}$ & 13.49$_{\pm2.49}$ & 0.985$_{\pm0.002}$ & 0.3\\
  & CD$^{\dagger}$ & 0.0 & 0.0$_{\pm0.0}$ & 11.98$_{\pm2.21}$ & 0.665$_{\pm0.016}$ & 1.2$_{\pm0.1}$ & 0.0 & 0.0$_{\pm0.0}$ & 11.98$_{\pm2.21}$ & 0.665$_{\pm0.016}$ & 1.2$_{\pm0.1}$ & 0.0 & 0.0$_{\pm0.0}$ & 11.98$_{\pm2.21}$ & 0.665$_{\pm0.016}$ & 1.2$_{\pm0.1}$\\
  & MMD-CBS$^{\dagger}$ & 0.0 & 0.0$_{\pm0.0}$ & --- & --- & 0.9$_{\pm0.6}$ & 0.0 & 0.0$_{\pm0.0}$ & --- & --- & 0.9$_{\pm0.6}$ & 0.0 & 0.0$_{\pm0.0}$ & --- & --- & 0.9$_{\pm0.6}$\\
  & MDOC$^{\dagger}$ & 0.0 & --- & --- & 0.000$_{\pm0.000}$ & 161.4$_{\pm7.5}$ & 0.0 & --- & --- & 0.000$_{\pm0.000}$ & 199.2$_{\pm11.4}$ & 0.0 & --- & --- & 0.000$_{\pm0.000}$ & 173.8$_{\pm8.0}$\\
  & PCD-LB & 0.0 & 0.0$_{\pm0.0}$ & 32.89$_{\pm2.64}$ & --- & 3.1$_{\pm0.2}$ & 0.0 & 0.0$_{\pm0.0}$ & 34.26$_{\pm2.78}$ & --- & 5.7$_{\pm0.5}$ & 0.0 & 0.0$_{\pm0.0}$ & 31.50$_{\pm2.87}$ & --- & 3.4$_{\pm0.1}$\\
  & PCD-SHD & 0.0 & 0.0$_{\pm0.0}$ & 1.07$_{\pm0.63}$ & --- & 3.2$_{\pm0.1}$ & 0.0 & 0.0$_{\pm0.0}$ & 0.99$_{\pm0.57}$ & --- & 2.9$_{\pm0.2}$ & 0.0 & 0.0$_{\pm0.0}$ & 0.88$_{\pm0.43}$ & --- & 3.3$_{\pm0.1}$\\
  & Final Proj. & 0.0 & 0.0$_{\pm0.0}$ & \textbf{0.56}$_{\pm0.38}$ & 0.869$_{\pm0.016}$ & 14.9$_{\pm0.9}$ & 0.0 & 0.0$_{\pm0.0}$ & 0.45$_{\pm0.33}$ & 0.865$_{\pm0.017}$ & 11.6$_{\pm3.7}$ & 0.0 & 0.0$_{\pm0.0}$ & 0.38$_{\pm0.26}$ & 0.866$_{\pm0.018}$ & 14.9$_{\pm0.9}$\\
  & DiRecT & 0.0 & 0.0$_{\pm0.0}$ & 1.02$_{\pm0.65}$ & \textbf{0.900}$_{\pm0.012}$ & 89.6$_{\pm24.7}$ & 0.0 & 0.0$_{\pm0.0}$ & \textbf{0.27}$_{\pm0.21}$ & \textbf{0.903}$_{\pm0.012}$ & 104.4$_{\pm3.8}$ & 0.0 & 0.0$_{\pm0.0}$ & 0.09$_{\pm0.06}$ & \textbf{0.905}$_{\pm0.011}$ & 58.6$_{\pm25.2}$\\
  & GlobMP (Ours) & \textbf{95.0} & \textbf{95.0}$_{\pm21.8}$ & 1.00$_{\pm4.36}$ & 0.789$_{\pm0.134}$ & 8.1$_{\pm96.3}$ & \textbf{95.0} & \textbf{95.0}$_{\pm21.8}$ & 1.00$_{\pm4.36}$ & 0.791$_{\pm0.130}$ & 11.1$_{\pm61.9}$ & \textbf{100.0} & \textbf{100.0}$_{\pm0.0}$ & \textbf{0.00}$_{\pm0.00}$ & 0.779$_{\pm0.130}$ & 60.5$_{\pm84.3}$\\
\midrule
21 & Diff.\ Policy$^{\dagger}$ & 0.0 & 0.0$_{\pm0.0}$ & 20.13$_{\pm3.78}$ & 0.985$_{\pm0.002}$ & 0.7 & 0.0 & 0.0$_{\pm0.0}$ & 20.13$_{\pm3.78}$ & 0.985$_{\pm0.002}$ & 0.7 & 0.0 & 0.0$_{\pm0.0}$ & 20.13$_{\pm3.78}$ & 0.985$_{\pm0.002}$ & 0.7\\
  & CD$^{\dagger}$ & 0.0 & 0.0$_{\pm0.0}$ & 18.02$_{\pm3.46}$ & 0.659$_{\pm0.017}$ & 1.2$_{\pm0.0}$ & 0.0 & 0.0$_{\pm0.0}$ & 18.02$_{\pm3.46}$ & 0.659$_{\pm0.017}$ & 0.6$_{\pm0.1}$ & 0.0 & 0.0$_{\pm0.0}$ & 18.02$_{\pm3.46}$ & 0.659$_{\pm0.017}$ & 1.3$_{\pm0.0}$\\
  & MMD-CBS$^{\dagger}$ & 0.0 & 0.0$_{\pm0.0}$ & --- & --- & 1.1$_{\pm0.9}$ & 0.0 & 0.0$_{\pm0.0}$ & --- & --- & 1.1$_{\pm0.9}$ & 0.0 & 0.0$_{\pm0.0}$ & --- & --- & 1.1$_{\pm0.9}$\\
  & MDOC$^{\dagger}$ & 0.0 & --- & --- & 0.000$_{\pm0.000}$ & 214.2$_{\pm10.8}$ & 0.0 & --- & --- & 0.000$_{\pm0.000}$ & 229.0$_{\pm40.8}$ & 0.0 & --- & --- & 0.000$_{\pm0.000}$ & 188.8$_{\pm17.4}$\\
  & PCD-LB & 0.0 & 0.0$_{\pm0.0}$ & 47.76$_{\pm4.94}$ & --- & 3.8$_{\pm0.3}$ & 0.0 & 0.0$_{\pm0.0}$ & 50.44$_{\pm4.95}$ & --- & 4.0$_{\pm0.4}$ & 0.0 & 0.0$_{\pm0.0}$ & 46.76$_{\pm4.28}$ & --- & 6.1$_{\pm1.6}$\\
  & PCD-SHD & 0.0 & 0.0$_{\pm0.0}$ & 2.04$_{\pm0.52}$ & --- & 3.9$_{\pm0.1}$ & 0.0 & 0.0$_{\pm0.0}$ & 1.97$_{\pm0.41}$ & --- & 3.3$_{\pm0.1}$ & 0.0 & 0.0$_{\pm0.0}$ & 1.75$_{\pm0.36}$ & --- & 3.4$_{\pm0.2}$\\
  & Final Proj. & 0.0 & 0.0$_{\pm0.0}$ & \textbf{1.34}$_{\pm1.05}$ & 0.842$_{\pm0.021}$ & 17.5$_{\pm4.4}$ & 0.0 & 0.0$_{\pm0.0}$ & 1.08$_{\pm0.87}$ & 0.837$_{\pm0.022}$ & 20.4$_{\pm1.0}$ & 0.0 & 0.0$_{\pm0.0}$ & 0.90$_{\pm0.71}$ & 0.837$_{\pm0.024}$ & 16.0$_{\pm4.8}$\\
  & DiRecT & 0.0 & 0.0$_{\pm0.0}$ & 1.52$_{\pm0.87}$ & \textbf{0.883}$_{\pm0.018}$ & 80.9$_{\pm35.1}$ & 0.0 & 0.0$_{\pm0.0}$ & \textbf{0.37}$_{\pm0.35}$ & \textbf{0.886}$_{\pm0.018}$ & 57.5$_{\pm0.7}$ & 0.0 & 0.0$_{\pm0.0}$ & \textbf{0.15}$_{\pm0.12}$ & \textbf{0.889}$_{\pm0.017}$ & 57.5$_{\pm0.7}$\\
  & GlobMP (Ours) & \textbf{85.0} & \textbf{85.0}$_{\pm35.7}$ & 5.30$_{\pm13.46}$ & 0.765$_{\pm0.146}$ & 14.4$_{\pm116.6}$ & \textbf{75.0} & \textbf{75.0}$_{\pm43.3}$ & 9.80$_{\pm17.31}$ & 0.753$_{\pm0.139}$ & 197.1$_{\pm116.2}$ & \textbf{90.0} & \textbf{90.0}$_{\pm30.0}$ & 4.35$_{\pm13.07}$ & 0.761$_{\pm0.138}$ & 98.2$_{\pm103.0}$\\
\bottomrule
\end{tabular}}
\end{center}
\end{table}

Figures~\ref{fig:hardsuite-best64}--\ref{fig:hardsuite-scaling} and Tables~\ref{tab:hard-empty}--\ref{tab:hard-crossroom} summarize the results. We interpret \emph{DA} together with feasibility, since adherence to the demonstrated motion pattern is not meaningful if the generated trajectory violates the constraints, consistent with the evaluation of \citet{direct2026}. At the Medium velocity cap, averaged across all maps and team sizes, \emph{GlobMP (Ours)} attains a mean \emph{SR} of $0.941$, ahead of DiRecT at $0.601$, Final~Proj.\ at $0.419$, PCD-SHD at $0.211$, CD at $0.132$, MMD-CBS at $0.121$, PCD-LB at $0.063$, MDOC at $0.025$ and Diffusion Policy at $0.020$.

Performance generally decreases as the number of robots increases, although the effect depends strongly on the environment. At the Medium cap, DiRecT maintains \emph{SR}${}=1.00$ on \texttt{Empty} and \texttt{Highways} at every team size and at least $0.90$ on \texttt{Basic}, but at most $0.50$ on \texttt{Dense}. \texttt{Conveyor} and \texttt{Cross-Room} separate the methods most sharply: the best baseline is PCD-LB, at $0.65$ and $0.40$ on \texttt{Conveyor} and $0.45$ and $0.20$ on \texttt{Cross-Room} for $N_a=3$ and $6$, and from $N_a=9$ no baseline exceeds $0.10$, while \emph{GlobMP (Ours)} stays at $1.00$ through $N_a=15$ on \texttt{Cross-Room} and ranges from $1.00$ to $0.70$ on \texttt{Conveyor}. \texttt{Drop-Region} and \texttt{Corridor-Bay} provide intermediate difficulty, with DiRecT attaining mean \emph{SR} values of $0.70$ and $0.95$, respectively; DiRecT exceeds GlobMP on \texttt{Corridor-Bay} from $N_a=9$ (at least $0.90$ against $0.60$ to $0.75$) and on \texttt{Highways} at $N_a\ge18$ ($1.00$ against $0.90$ and $0.55$), and edges it on \texttt{Empty} at $N_a=21$ ($1.00$ against $0.95$). Changing the velocity cap has a comparatively small effect (four of the nine rows, marked $^{\dagger}$, ignore it), except for DiRecT on \texttt{Drop-Region}, which reaches $1.00$ at every team size only under the High cap, and for GlobMP on \texttt{Highways}, whose Low-cap SR falls from $0.80$ at $N_a=6$ to $0.60$ at $N_a=12$ and whose cells from $N_a=15$ hit the planning budget ($0.30$, $0$ and $0.15$), suggesting that environment geometry and coupled collision avoidance dominate the difficulty of the harder instances.

The \emph{GlobMP (Ours)} rows use the same shared instances across all eight maps, all velocity caps, and all $168$ experimental cells, with a $300$\,s planning budget per team; MMD-CBS has $60$\,s per configuration. Planning times are measured with one thread per job, pinned to four CPU cores. Cells that reach the full $300$\,s budget were repeated on an idle machine and showed the same behaviour, so these cases reflect planning difficulty, not resource contention: on \texttt{Highways} at the Low cap for $N_a\ge15$ and on \texttt{Corridor-Bay} at $N_a=18$ (High) and $N_a=21$ (Low).

The suite's success rate is a best-of-$64$ quantity: an instance counts as solved when any of the
$64$ candidates a baseline samples is collision-free, whereas our method returns one trajectory.
Figure~\ref{fig:hardsuite-scaling} therefore scores every method with per-candidate data on the success of a single
returned plan, and Figure~\ref{fig:hardsuite-best64} shows the best-of-$64$ success rate the tables report.

\endgroup
\FloatBarrier

\subsection{Implementation details}
\label{app:impl}
This subsection lists how GlobMP is run in the experiments: the curve class
of each experiment, the settings of Algorithm~\ref{alg:proj}, the cone solve, and the checks of the implementation.

\paragraph{Curve class.}
Table~\ref{tab:curveparams} gives the degree, the cost order $k$ and the
boundary order $\ell$ of each experiment; the synthetic study uses $7$
pieces ($4$ in the scaling study). The synthetic
study uses the minimum-snap class of the baselines it is compared against
($k=\ell=4$); the multi-robot tasks prescribe only where each robot starts and
stops, so they fix position and velocity ($\ell=1$) and minimize
acceleration ($k=2$); Maze2D uses $k=\ell=2$. In every row $2(\ell+1)\ge k$, so
Assumption~\ref{assum:curv} holds, and every degree is at least $\max\{2k-1,\ell\}$, so the synthetic class is that of Theorem~\ref{thm:globalinf}, with rate $N^{-1}$ ($\ell=k$); Maze2D joins its pieces $C^{\ell}$, a subclass. Every row starts from heuristic seeds rather than the grid start and restores with Algorithm~\ref{alg:proj} rather than the exact repair \eqref{eq:inner-step}; the multi-robot rows add a speed limit, and Maze2D adds corridor blocks, each box binding one piece, and keep-out blocks outside \eqref{eq:Pinf}, so the theorem certifies no row's cost as optimal. No steering experiment (Section~\ref{sec:exp-tactile}, Appendix~\ref{app:avoiding}) is in the class either.

\begin{table}[h]
\caption{Curve class of each planning experiment.}
\label{tab:curveparams}
\begin{center}
\small
\begin{tabular}{llccc}
\toprule
Experiment & Section & degree $d$ & $k$ & $\ell$ \\
\midrule
Synthetic study & \S\ref{sec:exp-synthetic} & $9$ & $4$ & $4$ \\
Multi-robot suite & \S\ref{sec:exp-multirobot} & $9$ & $2$ & $1$ \\
Robotarium & \S\ref{sec:exp-mrmp} & $12$ & $2$ & $1$ \\
Maze2D & \S\ref{sec:exp-maze} & $9$ & $2$ & $2$ \\
\bottomrule
\end{tabular}
\end{center}
\end{table}

\paragraph{Piecewise curves.}
\label{app:piecewise}
With knots $0=t_0<\cdots<t_N=1$, $\Delta_i=t_i-t_{i-1}$ and
$u=(t-t_{i-1})/\Delta_i$ on piece $i$, the curve is
$\g\gamma(t)=\Gm^{(i)}\m b_d(u)$ and
$\g\gamma^{(s)}(t)=\Delta_i^{-s}\Gm^{(i)}\m b_d^{(s)}(u)$. The endpoint
conditions of order $s\le\ell$ and the $C^{k-1}$ (or smoother) continuity conditions at the $N-1$ junctions are linear in the stacked control points, so
$\Gm\m A=\m B$ gives the same orthonormal chart as for one piece, with
$d_f=N(d+1)-\operatorname{rank}(\m A)$. The cost matrix is
$\operatorname{blkdiag}(\Delta_i^{1-2k}\m G^{(k)})$ (with the equal knots used here, the implementation drops the constant factor), and the constraint
blocks are built piece by piece with the derivative scaling $\Delta_i^{-s}$. With these junctions the
curves lie in $\trajspace^k$, and a zero-cost perturbation is a polynomial of
degree below $k$, zero when $k\le2(\ell+1)$.

\paragraph{Restoration (Algorithm~\ref{alg:proj}).}
The provable step $1/L_{\mathrm{proj}}$ (Appendix~\ref{app:stepdisc}) is
too small to be practical, so the implementation calibrates the step: it
starts at $10^{-3}$, caps it at $2V/\|\nabla V\|^2$ for the current merit value $V$, halves it until the merit
\eqref{eq:restoration-merit} decreases, and then grows it by a factor $1.3$
(on Maze2D and the Robotarium, Barzilai--Borwein steps, halved until the merit is below its largest value over the last ten steps); blocks within their tolerance are left out of the merit and its gradient, so $V$ and $\nabla V$ above are over the remaining blocks and the bound of Appendix~\ref{app:stepdisc} does not cover these steps. In the implementation, line~\ref{alg1:stop} of Algorithm~\ref{alg:proj} uses one tolerance per block: a curve is accepted when every block satisfies
$\|\m e_j\|^2\le\mathrm{tol}_j$, with $\mathrm{tol}_j=(2r_j\delta+\delta^2)^2$ for a
ball, pair or moving-ball block whose radius is inflated by $\delta$, $\delta^2$ for a box, and $10^{-12}$ otherwise.\label{app:tol}

\paragraph{Cone solve.}
\label{app:cone}
Each projection onto $\mathcal K_j$ uses Moreau's decomposition and fully
corrective nonnegative matching pursuit (Algorithm~\ref{alg:pk}), with the
block rescaled to unit norm. The implementation maximizes the unnormalized
correlation $\langle\m v,\m b_{m_j}(t)\rangle$ (Algorithm~\ref{alg:pk} normalizes it, and the rate of Appendix~\ref{app:innerrate} is for that version), a polynomial in $t$, at the
real roots of its derivative and the two endpoints (by Bernstein branch-and-bound above degree $24$), then refits the
nonnegative weights and drops atoms whose weight reaches zero; it stops at the relative threshold $\varepsilon=10^{-6}$. The active-set weight solve keeps linearly independent atoms, so at most $m_j+1$ ($2d+1$ for a collision block of degree $2d$); the solve also stops after $2(m_j+1)+10$ rounds.

\paragraph{Finishing step.}
In the synthetic study, the minimum-snap curvature ratio \eqref{eq:albeta}, in the chart of $\mathcal S_7$ of Proposition~\ref{prop:globalC},
$\beta/\alpha\approx4\times10^7$ at $N=7$ makes first-order descent slow along flat
directions, so each main-comparison run is finished by Newton's method on the first-order
conditions and the active contact equations. An interior contact at a time $t_\ast$
contributes $g_j^{\Gm}(t_\ast)=0$ and $\partial_tg_j^{\Gm}(t_\ast)=0$; the unknowns are the
free control points, the contact times and the multipliers; contacts with negative multipliers are dropped and the system is solved again. This
step lowered the cost by a further factor of $2$ to $3$; the reported
curves pass the dense recheck, but the step is numerical and is not covered
by the theory. As in Maze2D (\S\ref{sec:exp-maze}), the step follows an SQP polish of the output of Algorithm~\ref{alg:pgd}, also outside the theory.

\paragraph{Several robots.}
\label{app:multiagent}
The free control points of all robots are stacked; the robots share $d$,
$k$ and $\ell$, and hence $\m A$, $\m N$ and the cost matrix, and each has
its own boundary data. There is one collision block per robot and
obstacle and one per pair of robots, the latter the obstacle block applied
to the difference curve $\Gm_a-\Gm_b$ at the separation radius, so
Algorithms~\ref{alg:pgd} and~\ref{alg:proj} are unchanged. Each block reads the control points of one or two robots only, so the gradient of Algorithm~\ref{alg:proj} costs a fixed amount per block.

\paragraph{Workspace bounds.}
\label{app:box}
A bound $\underline x_i\le\gamma_i(t)\le\overline x_i$ is two
nonnegativity conditions with Bernstein coefficients
$\Gm_{i,:}^{\!\top}-\underline x_i\m u$ and $\overline x_i\m u-\Gm_{i,:}^{\!\top}$. Since
$\m u^{\!\top}\m b_d\equiv1$, a degree-$d$ coefficient vector $\g\zeta$ is
raised to degree $2d$ by $S_d\big(\tfrac12(\g\zeta\m u^{\!\top}+\m u\g\zeta^{\!\top})\big)$,
so these blocks share the collision blocks' cone and are affine
in $\Gm$.

\paragraph{Checks of the implementation.}
Each of the following holds to machine precision on the benchmark
instances: $S_d$ reproduces the Bernstein coefficients of
$\m b_d^{\!\top}\m X\m b_d$; the chart satisfies $\Gm\m A=\m B$ and
$\m N^{\!\top}\m A=\m 0$; $\g\pi_j$ reproduces $\|\g\gamma(t)-\m c_j\|^2-r_j^2$;
the cost matches $\int_0^1\|\g\gamma^{(k)}\|^2dt$ up to the dropped factor; and the cone test is exact
in both directions, returning zero for collision-free curves and a positive
value, increasing with the penetration depth, for colliding ones.

\begingroup
\renewcommand{\floatpagefraction}{0.75}
\renewcommand{\textfraction}{0.01}

\FloatBarrier

\begin{figure}[p]
\centering
\includegraphics[width=\linewidth]{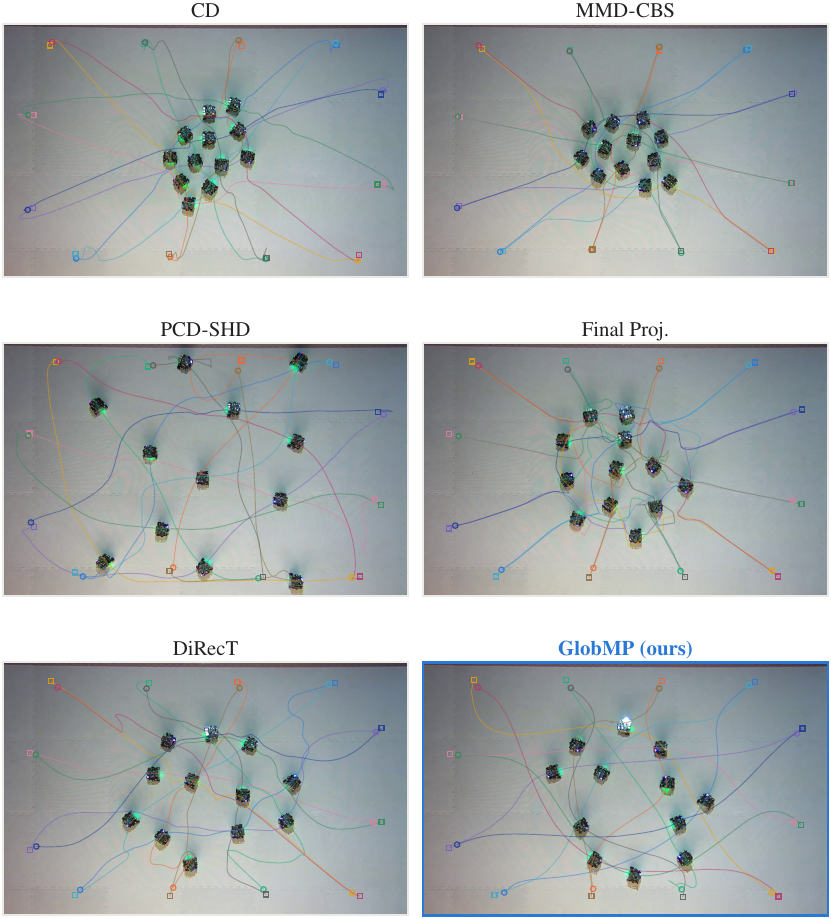}
\caption{Hardware execution on \texttt{Empty}: every method except PCD-LB, twelve GRITSBots on the
Georgia Tech Robotarium. The tracking controller and the virtual obstacles are those of
Appendix~\ref{app:expdetail}. Each panel reports one
execution of the ring instance by the method named above it: a single overhead
frame, with the trajectories recovered from the video drawn over it, each marked by a
circle at its start and a square at its terminal state. A robot is counted as being in
collision when its footprint overlaps an obstacle region. The frame shown is the midpoint of the
tracked window for an execution in which no robot overlaps a region at any time, and
otherwise the frame at which the number of overlapping robots is maximal; only the
midpoints are therefore proportional to the duration of an execution. \texttt{Empty}
carries no obstacles, so no execution can overlap a region and the
panels differ only in the routing of the exchange, with midpoints at $25$\,s for GlobMP,
$27$\,s for PCD-SHD, $28$\,s for Final~Proj., $29$\,s for DiRecT, $34$\,s for CD and
$57$\,s for MMD-CBS.}
\label{fig:rg-empty}
\end{figure}
\begin{figure}[p]
\centering
\includegraphics[width=\linewidth]{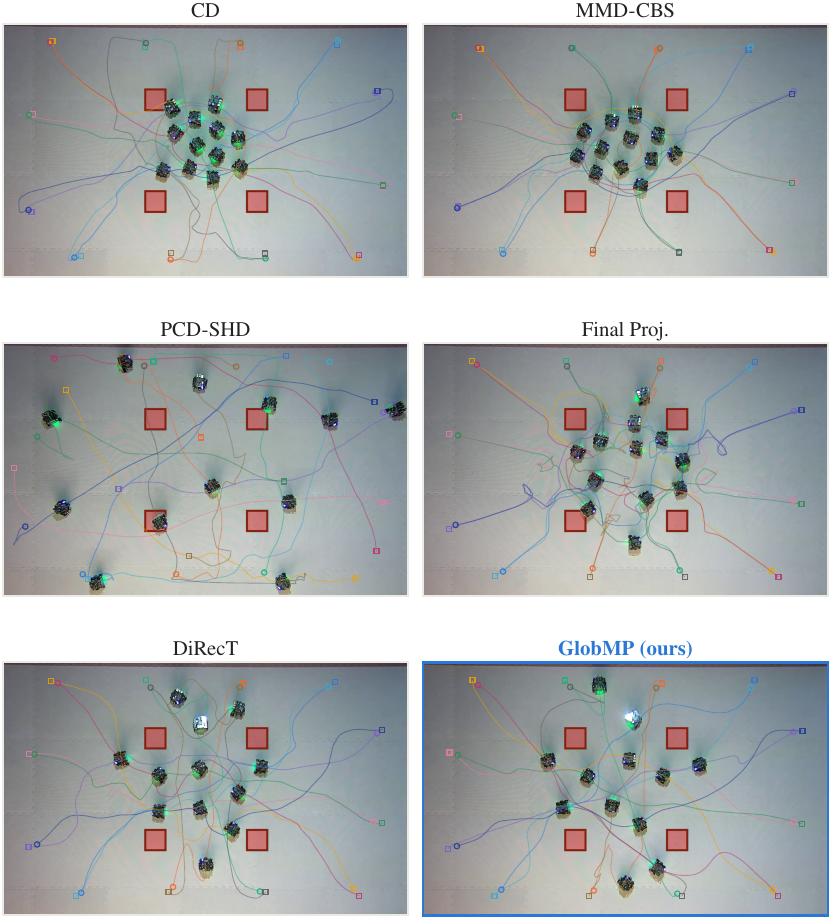}
\caption{Hardware execution on \texttt{Drop-Region}: the same methods, with the protocol of
Figure~\ref{fig:rg-empty}. Three of the six methods complete the exchange with no footprint
overlap: MMD-CBS (midpoint frame at $61$\,s), DiRecT ($31$\,s) and GlobMP ($31$\,s). For
the remaining three the panel shows the frame of maximal overlap: two of the twelve robots
overlap a keep-out region under Final~Proj.\ ($40$\,s), three under CD ($43$\,s) and five
under PCD-SHD ($23$\,s). PCD-SHD is additionally the only execution on this map whose
robots do not reach their goals; its terminal states lie well away from the goal ring. This map discriminates least
among the four maps with obstacles, and the discrimination that remains lies in when the
overlaps occur rather than in how many robots they involve.}
\label{fig:rg-shelf}
\end{figure}
\begin{figure}[p]
\centering
\includegraphics[width=\linewidth]{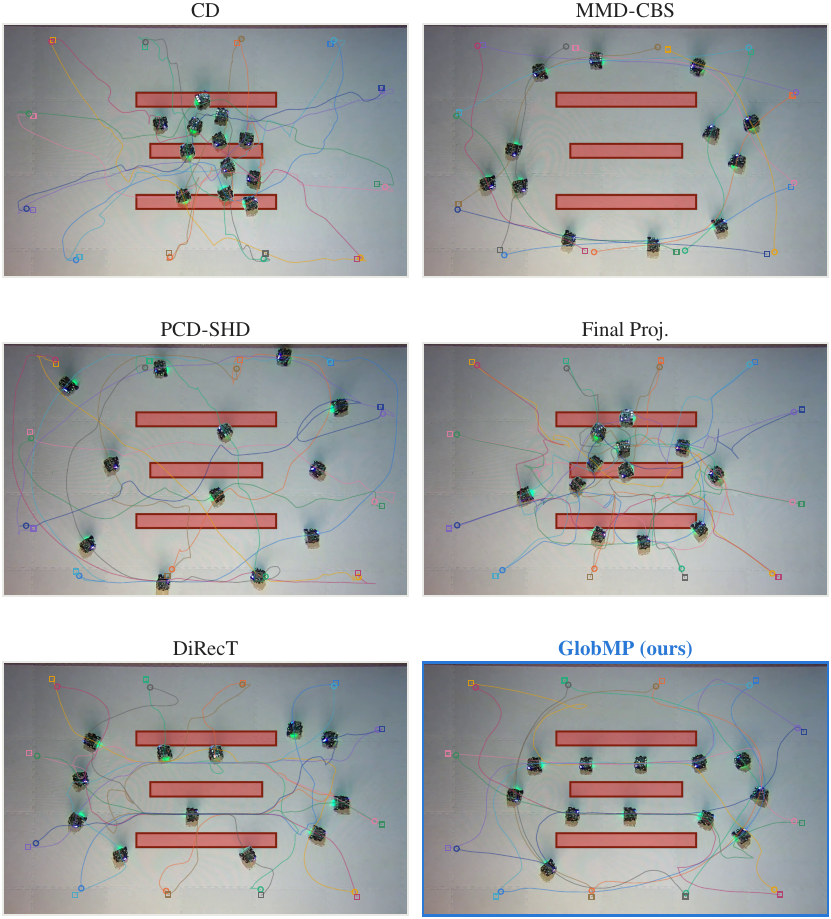}
\caption{Hardware execution on \texttt{Conveyor}: the same methods, with the protocol of
Figure~\ref{fig:rg-empty}. The three belts sit in the middle of the arena and leave two gaps between them,
so the exchange requires an ordering of the team as well as a route; the executions
differ principally in that ordering. GlobMP completes the instance with no footprint overlap
(midpoint frame at $37$\,s). PCD-SHD and DiRecT each leave five of the twelve robots
overlapping a belt at their frames of maximal overlap, $44$\,s and $65$\,s respectively,
the latter being the latest frame on this map. CD and Final~Proj.\ leave all twelve
overlapping, at $40$\,s and $48$\,s: both traverse the belts as a single block, so the
entire team is inside a region at one instant. MMD-CBS returned no plan on this map; its
robots therefore remain on the start ring, and its panel is marked rather than scored,
since counting overlaps for a team that never moved would record a success where no
trajectory was produced.}
\label{fig:rg-corridor}
\end{figure}
\begin{figure}[p]
\centering
\includegraphics[width=\linewidth]{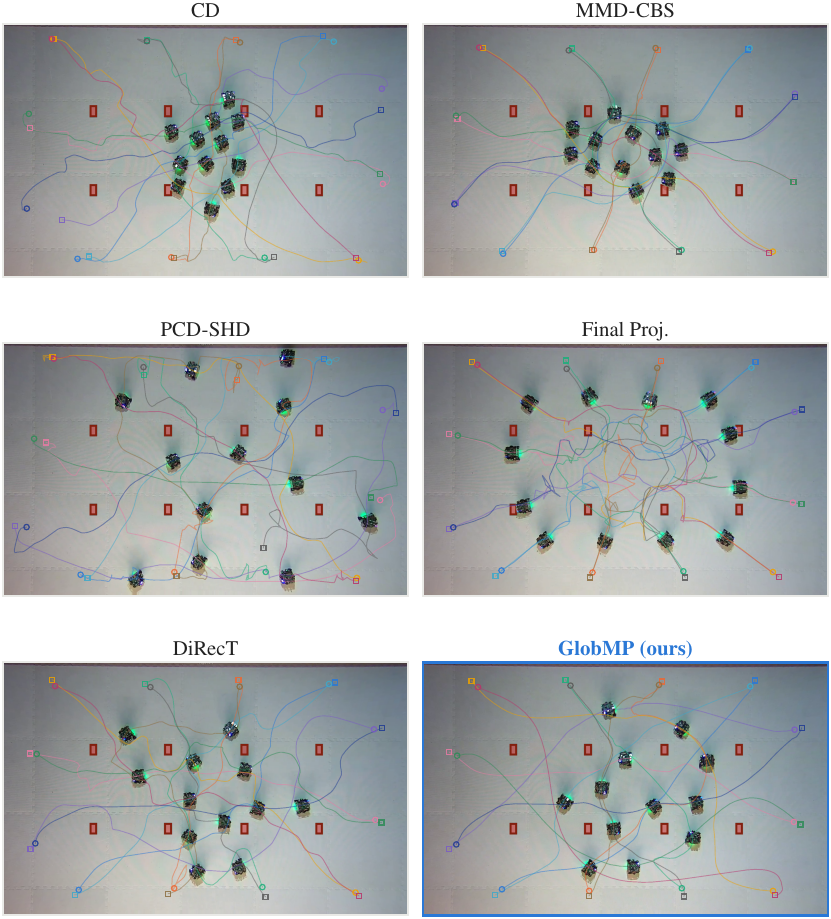}
\caption{Hardware execution on \texttt{Corridor-Bay}: the same methods, with the protocol of
Figure~\ref{fig:rg-empty}. As on \texttt{Drop-Region}, three methods complete the exchange with no footprint overlap:
PCD-SHD (midpoint frame at $45$\,s), DiRecT ($32$\,s) and GlobMP ($26$\,s). The remaining
three overlap the keep-out regions at their frames of maximal overlap: three of the twelve
robots under MMD-CBS ($42$\,s), four under Final~Proj.\ ($78$\,s), the latest frame of
maximal overlap in this gallery, and eight under CD ($49$\,s). Final~Proj.\ remains clear for the first minute and overlaps only as
the team disperses towards its goals, whereas CD overlaps from early in the execution.
MMD-CBS, which returns no plan on two of the other maps, produces one here and brings nine
of the twelve robots through without overlap.}
\label{fig:rg-corridorbay}
\end{figure}
\begin{figure}[p]
\centering
\includegraphics[width=\linewidth]{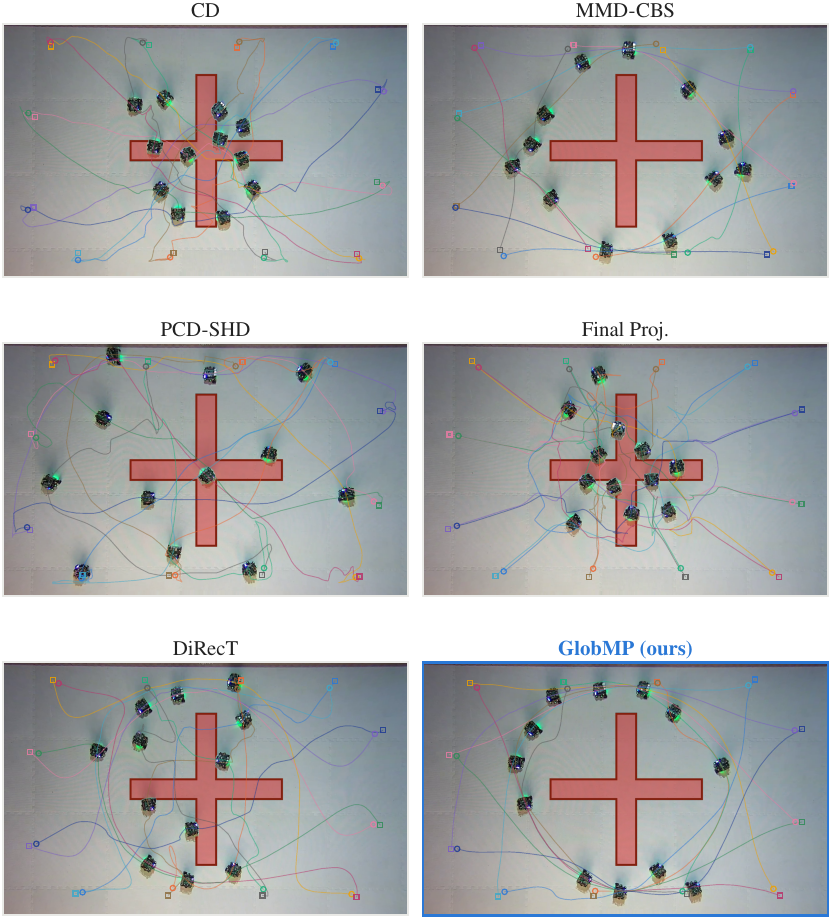}
\caption{Hardware execution on \texttt{Cross-Room}: the same methods, with the protocol of
Figure~\ref{fig:rg-empty}. The cross occupies the centre of the arena and leaves four
corner regions connected only along the walls, so the exchange must be routed around the
arms; a team that advances directly towards its goals enters one of them. GlobMP is the
only method that completes the instance with no footprint overlap (midpoint frame at
$26$\,s). At their frames of maximal overlap, six of the twelve robots overlap the cross
under DiRecT ($37$\,s) and seven under PCD-SHD ($41$\,s); both route part of the team
around the arms and lose the remainder. CD and Final~Proj.\ leave all twelve overlapping,
at $38$\,s and $34$\,s. MMD-CBS returned no plan on this map as well, so its panel again
shows the untouched start ring and is marked rather than scored. With \texttt{Conveyor},
this is the map on which the margin between GlobMP and the baselines is largest under
hardware execution.}
\label{fig:rg-crossroom}
\end{figure}
\endgroup

\end{document}